\pdfoutput=1
\documentclass[11pt]{article}

\usepackage[letterpaper,margin=1in]{geometry}
\usepackage[T1]{fontenc}
\usepackage{lmodern}
\usepackage[numbers,sort&compress]{natbib}

\usepackage[utf8]{inputenc}
\usepackage{hyperref}
\usepackage{url}
\usepackage{booktabs}
\usepackage{amsfonts}
\usepackage{amssymb}
\usepackage{mathtools}
\usepackage{amsthm}
\usepackage{microtype}
\usepackage{xcolor}
\usepackage{graphicx}
\usepackage{adjustbox}
\usepackage{algorithm}
\usepackage{algorithmic}
\usepackage{float}
\usepackage{cleveref}
\hypersetup{
  hidelinks,
  hypertexnames=false,
  pdftitle={NBCLT: Parameter-Agnostic Norm-Ball Clipped Local Training},
  pdfauthor={Linxuan Pan, Shucheng Yang, Junchi Yang}
}
\allowdisplaybreaks[1]

\newtheorem{theorem}{Theorem}
\newtheorem{lemma}[theorem]{Lemma}
\newtheorem{assumption}[theorem]{Assumption}
\newtheorem{proposition}[theorem]{Proposition}

\newcommand{\grad}{\nabla}

\newcommand{\Proj}{\Pi_{\mathcal{D}}}
\newcommand{\vecop}{\operatorname{vec}}
\newcommand{\unvecop}{\operatorname{unvec}}

\DeclareMathOperator*{\argmin}{arg\,min}

\title{\LARGE\bfseries NBCLT: Parameter-Agnostic Norm-Ball\\Clipped Local Training}
\author{%
  \begin{tabular}{ccc}
    Linxuan Pan & Shucheng Yang & Junchi Yang \\[3pt]
    \small\href{mailto:panlinxuan@link.cuhk.edu.cn}{\texttt{panlinxuan@link.cuhk.edu.cn}} &
    \small\href{mailto:yangsc@mail.sim.ac.cn}{\texttt{yangsc@mail.sim.ac.cn}} &
    \small\href{mailto:yangjunchi@cuhk.edu.cn}{\texttt{yangjunchi@cuhk.edu.cn}}
  \end{tabular}%
}
\date{}

\begin{document}
\maketitle

\begin{abstract}%
Distributed learning with multiple local steps reduces communication but aggravates drift, and its practical performance often depends on carefully tuned hyperparameters under heterogeneous data and communication constraints. At the same time, recent Muon-type optimizers suggest that constraining updates within suitable norm balls provides a simple way to encode geometric priors into optimization. In this work, we propose NBCLT (\textbf{N}orm-\textbf{B}all \textbf{C}lipped \textbf{L}ocal \textbf{T}raining), a unified local-training framework for both federated learning (FL) and decentralized learning (DL). NBCLT combines a predict-and-correct local update rule with Euclidean projections onto flexible norm balls, thereby performing norm-ball clipping under flexible update geometries while retaining a simple implementation. In FL, its problem-parameter-free regime achieves $\mathcal{O}(\epsilon^{-2})$ communication complexity and $\mathcal{O}(\epsilon^{-4})$ total stochastic-gradient complexity without assuming bounded heterogeneity, when the per-round local-computation budget is scaled appropriately with the horizon. In DL, NBCLT is compatible with both static and time-varying mixing and preserves these accuracy exponents under the analogous computation scaling, up to fixed network constants and an additive initialization cost.
\end{abstract}

\section{Introduction}
\label{sec:intro}
Machine learning has achieved significant progress in domains such as computer vision, natural language processing, and scientific discovery~\citep{vaswani2017attention,brown2020language,ramesh2022hierarchical,rombach2022high,jumper2021highly}. To address requirements arising from data privacy, system topology, and communication constraints, several distributed learning paradigms have been developed. Federated Learning (FL) studies training over decentralized and typically non-IID data under a central coordinator~\citep{konevcny2016federated,mcmahan2017communication}, while Decentralized Learning (DL) removes the coordinator and relies on peer-to-peer communication over a network~\citep{lian2017can}. Throughout this paper, we use \emph{distributed learning} to encompass both settings.

Formally, a distributed system comprises $M$ nodes (e.g., clients, workers, or organizations). Node $i\in[M]$ holds a local objective $f_i(x)=\mathbb E_{\xi_i\sim\mathcal D_i}[\ell(x;\xi_i)]$, and the global model is trained by solving
\begin{equation}
\label{eq:obj}
\min_{x\in\mathcal X}\left[f(x):=\frac{1}{M}\sum_{i=1}^M f_i(x)\right].
\end{equation}

In distributed optimization, two challenges are fundamental. First, communication is often the dominant systems bottleneck, which makes multiple local updates between synchronization rounds highly desirable in both FL~\citep{mcmahan2017communication,stich2018local} and DL~\citep{hashemi2020benefits}. Second, local training amplifies drift caused by data heterogeneity or imperfect consensus. To reduce this drift, FL algorithms use control variates or momentum corrections, as in SCAFFOLD~\citep{karimireddy2020scaffold} and momentum-based FL methods~\citep{cheng2023momentum}, while DL methods rely on mechanisms such as gradient tracking or Exact Diffusion~\citep{alghunaim2024local,liu2025decentralized}. Yet their guarantees typically require well-tuned hyperparameters that depend on inaccessible quantities such as smoothness, noise, or heterogeneity levels.

\begin{table}[tbp]
\centering
\caption{Comparison in FL (nonconvex). Communication complexity counts server-client rounds; sample complexity counts total stochastic gradients across participating clients. The NBCLT entry uses $NK=\Theta(T)$ with $NK\le T$ and fixed problem/system constants, not fixed $N,K$ as accuracy increases.}
\label{tab:fl_comparison}
\small
\setlength{\tabcolsep}{4pt}
\renewcommand{\arraystretch}{1.2}
\begin{adjustbox}{max width=\textwidth}
\begin{tabular}{lccccc}
\toprule
Method & Unbounded het. & Comm. compl. & Sample compl. & Prob.-param.-free & Local steps \\
\midrule
\textbf{NBCLT-FL (ours)}
& $\checkmark$
& $\mathcal{O}(\epsilon^{-2})$
& $\mathcal{O}(\epsilon^{-4})$
& $\checkmark$ & $\checkmark$ \\
PAdaMFed~\citep{yan2025problem}
& $\checkmark$
& $\mathcal{O}(\epsilon^{-3})$
& $\mathcal{O}(\epsilon^{-4})$
& $\checkmark$ & $\checkmark$ \\
AdaMasFL~\citep{yanmomentum}
& $\checkmark$
& $\mathcal{O}(\epsilon^{-8/3})$
& $\mathcal{O}(\epsilon^{-4})$
& $\checkmark$ & $\checkmark$ \\
FedMuon~\citep{takezawa2025fedmuon}
& $\checkmark$
& $\mathcal{O}(\epsilon^{-3})$
& $\mathcal{O}(\epsilon^{-4})$
& $\times$ & $\checkmark$ \\
FedMuon~\citep{liu2025fedmuon}
& $\times$
& $\mathcal{O}(\epsilon^{-3})$
& $\mathcal{O}(\epsilon^{-4})$
& $\checkmark$ & $\checkmark$ \\
Diloco~\citep{douillard2024diloco}
& --
& -- (empirical)
& -- (empirical)
& $\times$ & $\checkmark$ \\
\bottomrule
\end{tabular}
\end{adjustbox}
\end{table}

\begin{table}[tbp]
\centering
\caption{Comparison in DL (nonconvex). ``Mixing'' indicates whether the theory covers static doubly-stochastic matrices or time-varying mixing matrices. The NBCLT entry uses $MK=\Theta(T)$ with $MK\le T$ and fixed problem/network constants; exact initialization has an additional network-dependent communication cost.}
\label{tab:dl_comparison}
\small
\setlength{\tabcolsep}{3pt}
\renewcommand{\arraystretch}{1.2}
\begin{adjustbox}{max width=\textwidth}
\begin{tabular}{lcccccc}
\toprule
Method & Mixing & Unbounded het. & Comm. compl. & Sample compl. & Prob.-param.-free & Local steps \\
\midrule
\textbf{NBCLT-DL (ours)} & Both & $\checkmark$ & $\mathcal{O}(\epsilon^{-2})$ & $\mathcal{O}(\epsilon^{-4})$ & $\checkmark$ & $\checkmark$ \\
D-NASA~\citep{li2024problem} & static & $\checkmark$ & $\mathcal{O}(\epsilon^{-4})$ & $\mathcal{O}(\epsilon^{-4})$ & $\checkmark$ & $\times$ \\
LED~\citep{alghunaim2024local} & static & $\checkmark$ & $\mathcal{O}(\epsilon^{-2})$ & $\mathcal{O}(\epsilon^{-4})$ & $\times$ & $\checkmark$ \\
GT-FT~\citep{nguyen2025graphs} & dynamic & $\checkmark$ & $\mathcal{O}(\epsilon^{-4})$ & $\mathcal{O}(\epsilon^{-4})$ & $\times$ & $\times$ \\
\bottomrule
\end{tabular}
\end{adjustbox}
\end{table}

Recent progress on problem-parameter-free optimization has reduced the burden of manual tuning, but a gap remains in both settings. In FL, methods such as PAdaMFed~\citep{yan2025problem} and AdaMasFL~\citep{yanmomentum} avoid prior knowledge of problem constants, yet their communication guarantees still fall short of the $\mathcal{O}(\epsilon^{-2})$ benchmark achieved by well-tuned local training methods. In DL, the problem-parameter-free method D-NASA~\citep{li2024problem} does not support multiple local updates, which likewise leads to suboptimal communication efficiency. This raises a first question:
\begin{quote}
\textit{Can a problem-parameter-free local-training algorithm match the $\mathcal{O}(1/\epsilon^2)$ communication complexity of well-tuned methods in both FL and DL, without sacrificing the benefits of multiple local updates?}
\end{quote}

In addition to tuning the update scale, local training also depends on the geometry used to constrain local updates. We view a local update as lying in a norm budget $\mathcal D=\{\Delta:\|\Delta\|^g\le \eta\}$, where the radius $\eta$ sets the overall scale of motion and the norm $\|\cdot\|^g$ determines the shape of admissible updates. This separates how far the algorithm moves from the geometry under which admissible updates are measured; Section~\ref{sec:prelim} makes this distinction concrete through a quadratic example.

This viewpoint is particularly relevant in distributed learning, where many clients may have highly heterogeneous local loss landscapes. Recent Muon-type optimizers make this geometric perspective explicit by constraining updates in carefully chosen norm balls~\citep{jordan2024muon,pethick2025training,boreiko2025towards,liu2025muon,shah2025practical}. Several recent works have begun integrating Muon into federated learning, but the current analyses remain limited to $\mathcal{O}(\epsilon^{-3})$ communication complexity~\citep{zhang2025provable,takezawa2025fedmuon,liu2025fedmuon}. This leads to a second question:
\begin{quote}
\textit{Can we design a distributed local-training algorithm that exploits flexible update geometry while retaining the communication and sample-complexity guarantees expected from efficient local methods?}
\end{quote}

Motivated by these two questions, we propose NBCLT (\textbf{N}orm-\textbf{B}all \textbf{C}lipped \textbf{L}ocal \textbf{T}raining). Our main contributions are as follows:
\begin{itemize}
    \item NBCLT provides a unified framework for both federated learning (FL) and decentralized learning (DL). It introduces two key local-update components: (i) Euclidean projection of tentative update variables onto flexible norm balls, i.e., norm-ball clipping under the chosen update geometry, and (ii) a novel \emph{predict-and-correct} mechanism, conceptually adjacent to extragradient and optimistic ideas~\citep{nemirovski2004prox,rakhlin2013optimization}, that stabilizes clipped local training. For spectral-norm matrix blocks, we instantiate this oracle by PolyGram-RCClip, a Gram-polynomial clipping projector.

    \item In FL, we prove a rate of order $T^{-1/2}+(NKT)^{-1/4}$ plus lower-order terms without assuming bounded heterogeneity. With $NK=\Theta(T)$ and $NK\le T$, this yields $\mathcal{O}(\epsilon^{-2})$ communication and $\mathcal{O}(\epsilon^{-4})$ total stochastic-gradient complexity for fixed problem/system constants.

    \item In DL, to the best of our knowledge, NBCLT is the first problem-parameter-free algorithm that supports multiple local updates while remaining compatible with both static and time-varying gossip matrices. Under the analogous scaling $MK=\Theta(T)$ with $MK\le T$, its accuracy exponents match those of well-tuned local-update baselines, up to fixed network constants and exact-initialization overhead.
\end{itemize}

\section{Preliminaries: Update Geometry under a Norm Budget}
\label{sec:prelim}
This section isolates the update-geometry viewpoint used throughout the paper. A norm budget $\mathcal D=\{\Delta:\|\Delta\|^g\le\eta\}$ contains two distinct design choices: the radius $\eta$, which sets the overall scale of motion, and the norm $\|\cdot\|^g$, which shapes the admissible update directions. We first illustrate this distinction on a quadratic model and then connect it to Muon-type norm-ball updates, which motivate the projection-based clipping used by NBCLT.

\subsection{A Quadratic Example}
\label{subsec:quadratic_example}
Consider the local quadratic model
\[
f(x+\Delta)-f(x)\approx \langle \nabla f(x),\Delta\rangle + \frac12\langle \Delta,\mathbf H(x)\Delta\rangle.
\]
Writing $\Delta=\eta u$ with $\|u\|^g\le 1$ separates the scale $\eta$ from the shape of the unit-budget set. To see why the shape matters, consider $q(z)=\frac12(\lambda_1 z_1^2+\lambda_2 z_2^2)$ with $\lambda_1\gg\lambda_2$ and Hessian $\mathbf H=\mathrm{diag}(\lambda_1,\lambda_2)$. Under the Euclidean budget $\|u\|_2\le 1$, $\sup_{\|u\|_2\le 1} u^\top \mathbf H u = \lambda_1$, so stability of the local quadratic model is dictated by the stiffest direction and forces the radius to scale with $1/\lambda_1$ in the worst case. In contrast, under the weighted norm $\|u\|^g:=\sqrt{\lambda_1 u_1^2+\lambda_2 u_2^2}$, we have
\[
\begin{aligned}
\sup_{\|u\|^g\le 1} u^\top \mathbf H u &= 1,\\
\sup_{\|u\|^g\le 1} -\langle g,u\rangle
&= \|g\|^{g,*}
= \sqrt{\frac{g_1^2}{\lambda_1}+\frac{g_2^2}{\lambda_2}}.
\end{aligned}
\]
Thus the norm absorbs the anisotropy: high-curvature directions become expensive, flatter directions become relatively affordable, and a single radius $\eta$ need not be dictated solely by the worst direction.

\subsection{Norm-Ball Update Geometry and Muon-type Motivation}
\label{subsec:ball}
Muon-type optimizers instantiate the same norm-budget viewpoint by selecting directions through a Linear Minimization Oracle (LMO), $\Delta^t \in \argmin_{\Delta\in\mathcal D}\langle m^t,\Delta\rangle$, over a norm ball $\mathcal D=\{\Delta:\|\Delta\|^g\le\eta\}$~\citep{jordan2024muon,pethick2025training,boreiko2025towards}. For example, the Euclidean ball yields normalized momentum directions, while a spectral-norm matrix ball yields the orthogonalized direction used in Muon~\citep{jordan2024muon}. The full uSCG template is recalled in Appendix~\ref{app:uscg_template}~\citep{pethick2025training}; here it serves only as motivation for norm balls as update-geometry priors, rather than as the algorithmic primitive of NBCLT.

For deep models, this viewpoint is naturally applied block-wise. If $\mathbf X=(\mathbf X^{(1)},\ldots,\mathbf X^{(L)})$ and each block carries a norm $\|\cdot\|_{(l)}$, define
\begin{equation}
\begin{aligned}
\|\mathbf X\|^{g} &:= \max_{l=1,\ldots,L}\|\mathbf X^{(l)}\|_{(l)},\\
\mathcal D&=\{\Delta\in\mathcal V:\|\Delta\|^g\le\eta\}.
\end{aligned}
\label{eq:blockwise_norm_budget}
\end{equation}
This covers layer-wise spectral constraints as in Muon~\citep{jordan2024muon}, heterogeneous layer geometries as in Scion~\citep{pethick2025training}, and finer matrix-block partitions such as MuonSBW~\citep{boreiko2025towards}.

NBCLT differs from LMO-based Muon-type methods. It applies Euclidean projection onto the same norm balls, so a tentative local update is modified only when it leaves the prescribed budget set. Thus the norm determines the admissible update geometry, while the algorithm implements this geometry as clipping rather than normalization.

\section{NBCLT Framework}
\label{sec:framework}
We now instantiate the norm-budget viewpoint in the distributed objective~\eqref{eq:obj}. The system contains $M$ nodes. In FL, a server broadcasts the current model and aggregates updates from participating clients; in DL, nodes replace the server step by peer-to-peer mixing. In both cases, NBCLT performs $K$ local predict-and-correct steps between communication phases, while constraining each local update variable to the norm budget $\mathcal D$.

\subsection{Local Update Design}
\label{subsec:pac}

The core NBCLT update is shared by the FL and DL variants. At communication round $t$, local step $k$, and node $i$, let $g_i^{t,k}$ denote the corrected gradient surrogate. NBCLT maintains an auxiliary update variable $\hat{\Delta}_i^{t,k}$ and applies
\begin{align*}
\hat{\Delta}^{t,k-1}_{i}
&= \Proj\!\left( \hat{\Delta}^{t,k-2}_{i} - g_i^{t,k} \right), \\
\Delta^{t,k-1}_{i}
&= \Proj\!\left( \hat{\Delta}^{t,k-1}_{i} - g_i^{t,k} \right), \\
x^{t,k}_{i}
&= x^{t,k-1}_{i} + \Delta^{t,k-1}_{i}.
\end{align*}
The two components of this update are described below.

\paragraph{Local predict-and-correct.}
The auxiliary variable $\hat{\Delta}_i^{t,k-1}$ is obtained by taking a projected step from the previous auxiliary update and can be viewed as a prediction of the next update. The actual update $\Delta_i^{t,k-1}$ then corrects this prediction by applying the same gradient surrogate once more before updating the model. This two-step mechanism is the main device that stabilizes clipped local training with multiple local steps. Equivalently, $\hat{\Delta}_i^{t,k-1}$ acts as a momentum-like update variable that accumulates past gradient information, while $\Delta_i^{t,k-1}$ combines this history with the current corrected gradient surrogate.

\paragraph{Projection oracle.}
NBCLT applies Euclidean projection $\Proj(z):=\argmin_{u\in\mathcal D}\frac12\|u-z\|_2^2$ to tentative update variables, where $\mathcal D=\{\Delta:\|\Delta\|^g\le \eta\}$. Operationally, this is clipping the update under the chosen norm. For the Euclidean ball, the projection has the familiar form
\[
\Proj(z)
=
\begin{cases}
z, & \|z\|_2\le \eta,\\[3pt]
\displaystyle \eta\,z/\|z\|_2, & \|z\|_2>\eta.
\end{cases}
\]

Matrix geometries fit the same formulation through vectorization. If $Z\in\mathbb R^{m\times n}$ and $v=\vecop(Z)$, define the induced vector norm $\|v\|_{\vecop(\mathrm{op})}:=\|\unvecop(v)\|_{\mathrm{op}}$, where $\|\cdot\|_{\mathrm{op}}$ denotes the matrix spectral norm. Since vectorization preserves Frobenius/Euclidean distances, Euclidean projection of $v$ onto the induced vector norm ball is equivalent to Frobenius projection of $Z$ onto $\mathcal D_{\mathrm{op}}=\{Y\in\mathbb R^{m\times n}:\|Y\|_{\mathrm{op}}\le \eta\}$. If $Z=U\operatorname{diag}(\sigma_1,\ldots,\sigma_r)V^\top$, then
\[
\Pi_{\mathcal D_{\mathrm{op}}}(Z)
=
U\operatorname{diag}\bigl(
\min\{\sigma_1,\eta\},\ldots,\min\{\sigma_r,\eta\}
\bigr)V^\top.
\]
This differs from LMO-based Muon-type updates~\citep{jordan2024muon,pethick2025training}, which select an update by solving a linear minimization problem over the norm ball. NBCLT instead projects a tentative update and changes it only when it leaves the prescribed budget. Plain stochastic gradient descent with gradient clipping can fail to converge without a careful coupling of the clipping threshold and the stepsize~\citep{koloskova2023revisiting}; the predict-and-correct mechanism above yields the convergence guarantees proved in Section~\ref{sec:convergence}.

\subsection{NBCLT-FL}
\label{subsec:NBCLT_fl}
NBCLT-FL is designed for federated learning and is summarized in Algorithm~\ref{alg:NBCLT-FL}. The method supports client sampling: at each communication round $t$, the server samples a subset of clients $S_t$ and broadcasts the current global model $x^t$ to the selected clients. Each participating client then performs $K$ local update steps using the predict-and-correct mechanism described above. After completing the local updates, clients transmit their accumulated model updates $\Delta_i^t$ to the server, which aggregates them to form the next global model $x^{t+1}$.

To mitigate client drift, NBCLT-FL incorporates control variates in the spirit of SCAFFOLD~\citep{karimireddy2020scaffold}. Specifically, each client maintains local control variates $\hat{c}_i^t$ and $c_i^t$, while the server maintains a global control variate $m^t$. We introduce two key modifications to the standard SCAFFOLD design: a shrinkage parameter $\beta_1$ in the calibrated gradient surrogate and an exponential moving average with parameter $\beta_2$ for the local control variate,
\[
\begin{aligned}
g^k &\leftarrow (1-\beta_1)\big(\nabla f(x^{t,k-1}_{i}, \xi^{t,k-1}_{i}) - c^{t}_{i}\big) + m^{t},\\
c^{t+1}_{i} &\leftarrow \beta_2 c^{t}_{i} + (1-\beta_2)\hat{c}^{t}_{i}.
\end{aligned}
\]
\begin{algorithm}[H]
\caption{NBCLT-FL}
\label{alg:NBCLT-FL}
\begin{algorithmic}[1]
    \STATE \textbf{Input:} Rounds $T$, clients $M$, local steps $K$, update-budget radius $\eta$, subset size $N=|S_t|$, domain $\mathcal{D}$, hyperparameters $\beta_1, \beta_2$.
    \STATE \textbf{Initialize:} $X^1 \in \mathbb{R}^d$, $\{c_i^1=\nabla f_i( X^1,\xi_i^{-1})\}_{i=1}^M$, $m^1 = \frac{1}{M} \sum_{i=1}^{M}c_i^1$.
    \FOR{$t = 1, \dots, T$}
        \STATE Server selects subset $S_t \subset [M]$ uniformly at random.
        \STATE Server broadcasts $x^{t}$ and $m^{t}$ to clients in $S_t$.
        \STATE \textbf{Client Execution (in parallel for $\boldsymbol{i \in S_t}$):}
         \STATE Initialize $\hat{\Delta}^{t,-1}_{i} \leftarrow 0$, $x^{t,0}_{i}\leftarrow x^{t}$.
        \FOR{$k = 1, \dots, K$}
            \STATE $g^{k} \leftarrow (1-\beta_1)(\nabla f(x^{t,k-1}_{i}, \xi^{t,k-1}_{i})-c^{t}_{i}) + m^{t} $
            \STATE $\hat{\Delta}^{t,k-1}_{i} \leftarrow \Proj \big( \hat{\Delta}^{t,k-2}_{i} - g^{k} \big)$
            \STATE $\Delta^{t,k-1}_{i} \leftarrow \Proj \big( \hat{\Delta}^{t,k-1}_{i} -  g^{k} \big)$
            \STATE $x^{t,k}_{i} \leftarrow x^{t,k-1}_{i} + \Delta^{t,k-1}_{i}$
        \ENDFOR
        \STATE $\hat{c}^{t}_{i} \leftarrow \frac{1}{K}\sum_{k=1}^{K}\grad f_i(x^{t,k-1}_{i}, \xi_{i}^{t,k-1})$
        \STATE $c^{t+1}_{i} \leftarrow \beta_2 c^{t}_{i}+(1-\beta_2)\hat{c}^{t}_{i}$
        \STATE Send $\Delta_{i}^{t} = x^{t,K}_{i} - x^{t,0}_{i}$ and $c^{t+1}_{i}$ to server.
        \STATE \textbf{For each client} $\boldsymbol{i \notin S_t}$: \quad  $c^{t+1}_{i} \leftarrow c^{t}_{i}$.
        \STATE \textbf{Server Aggregation:}
        \STATE Average local models: $x^{t+1} \leftarrow x^{t} + \frac{1}{N}\sum_{i\in S_t } \Delta_{i}^{t}$.
        \STATE Update global control: $m^{t+1} \leftarrow \frac{1}{M}\sum_{i=1}^M c^{t+1}_i$.
    \ENDFOR
\end{algorithmic}
\end{algorithm}

\subsection{NBCLT-DL}
\label{subsec:NBCLT_dl}

NBCLT-DL is designed for decentralized learning over peer-to-peer networks, where nodes communicate directly with their neighbors, and is summarized in Algorithm~\ref{alg:NBCLT-DL}. Unlike classical gradient-tracking methods~\citep{xu2020accelerated}, NBCLT-DL follows a design philosophy similar to NBCLT-FL: each node maintains local control variates $\hat{c}_i^t$ and $c_i^t$, along with a local estimate of the global control variate, denoted by $m_i^t$.

At each communication round, every node first performs $K$ local update steps. This is followed by a communication phase. In the absence of a central coordinator, local models and control variates cannot be directly averaged. Instead, nodes exchange information with their neighbors according to a mixing matrix $\mathbf{W}^t = [w_{ij}^t]$, which encodes the network topology. The local models are updated via neighbor averaging,
\(
x^{t+1}_{i} \leftarrow \sum_{j=1}^{M} w_{ij}^{t} x^{t,K}_{j}.
\)
For the update of the local estimate $m_i^t$, each node first incorporates its newly updated local control variate and then performs neighbor mixing, $m^{t+1}_{i} \leftarrow \sum_{j=1}^{M} w_{ij}^{t} \big(m^{t}_{j} + c^{t+1}_{j} - c^{t}_{j}\big)$. This ensures that the local estimates $m_i^t$ progressively approximate the global gradient, thereby reducing discrepancies among local models over time.




\begin{algorithm}[ht]
\caption{NBCLT-DL}
\label{alg:NBCLT-DL}
\begin{algorithmic}[1]
    \STATE \textbf{Input:} Rounds $T$, clients $M$, local steps $K$, mixing matrix $W$, domain $\mathcal{D}$, hyperparameters $ \beta_1, \beta_2$.
    \STATE \textbf{Initialize:} $x_i^1=x_{\text{init}}$ and $c_i^1=\nabla f_i(x_{\text{init}},\xi_i^{-1})$, using one stochastic-gradient query per node.
    \STATE Exactly average the $c_i^1$'s using the initialization in Appendix~\ref{app:dl-initialization}, and set $m_i^1=M^{-1}\sum_{j=1}^M c_j^1$ at every node.
    \FOR{$t = 1, \dots, T$}
        \FORALL{node $i \in [M]$ in parallel}
            \STATE Initialize $\hat{\Delta}^{t,-1}_{i} \leftarrow 0$.
            \FOR{$k = 1, \dots, K$}
                \STATE $g^{k}_i\leftarrow (1-\beta_1)(\nabla f(x^{t,k-1}_{i},\xi^{t,k-1}_{i})- c^{t}_{i}) + m^{t}_i$
                \STATE $\hat{\Delta}^{t,k-1}_{i} \leftarrow \Proj \big( \hat{\Delta}^{t,k-2}_{i} -  g^{k}_i \big)$
                \STATE $\Delta^{t,k-1}_{i} \leftarrow \Proj \big( \hat{\Delta}^{t,k-1}_{i} -  g^{k}_i \big)$
                \STATE $x^{t,k}_{i} \leftarrow x^{t,k-1}_{i} + \Delta^{t,k-1}_{i}$
            \ENDFOR
            \STATE $\hat{c}^{t}_{i} \leftarrow \frac{1}{K}\sum_{k=1}^{K}\grad f_i(x^{t,k-1}_{i}, \xi_{i}^{t,k-1})$
            \STATE $c^{t+1}_{i}\leftarrow \beta_2 c^{t}_{i}+(1-\beta_2)\hat{c}^{t}_{i}$
            \STATE \textbf{Communication:}
            \STATE Mix models: $x^{t+1}_{i} \leftarrow \sum_{j=1}^{M} w_{ij}^{t} x^{t,K}_{j}$
            \STATE Update tracker: $m^{t+1}_{i} \leftarrow \sum_{j=1}^{M} w_{ij}^{t} (m^{t}_{j} + c^{t+1}_{j} - c^{t}_{j})$
        \ENDFOR
    \ENDFOR
\end{algorithmic}
\end{algorithm}

\section{Convergence Guarantees}
\label{sec:convergence}
Before presenting the convergence for NBCLT, we first state the underlying assumptions. We assume that each local objective function $f_i$ is Lipschitz smooth, but not necessarily convex. For each $f_i$, we have access to an unbiased stochastic gradient oracle $\nabla f_i(\cdot, \xi)$.
\begin{assumption}[Smoothness]\label{assump:smooth}
Each $f_i$ is $L$-smooth: $\|\nabla f_i(x)-\nabla f_i(y)\|_2 \le L\|x-y\|_2$ for all $x,y \in \mathbb{R}^d$.
\end{assumption}

\begin{assumption}[Stochastic gradients]\label{assump:stoch}
For each client $i$, the stochastic gradient $\nabla f_i(x,\xi)$ is conditionally unbiased and has bounded
conditional variance:
\begin{align*}
\mathbb E[\nabla f_i(x,\xi)\mid x]&=\nabla f_i(x), \\
\mathbb E[\|\nabla f_i(x,\xi)-\nabla f_i(x)\|_2^2\mid x]&\le \sigma^2.
\end{align*}
Moreover, conditioned on the past, sampling noises are independent across gradient queries.
\end{assumption}

\begin{assumption}[Lower boundedness]\label{assump:lower}
The global objective $F(x):=\frac1M\sum_{i=1}^M f_i(x)$ is lower bounded:
$\forall x \in \mathbb{R}^d, F(x)\ge F^\star.$
\end{assumption}

\begin{assumption}[Norm equivalence and dual norm]\label{assump:norm}
The projection domain is a norm ball $\mathcal D=\{\Delta:\|\Delta\|^{g}\le\eta\}$ with respect to some norm \(\|\cdot\|^g\). There exist constants $\rho,\bar\rho\ge 1$ such that $\|v\|^{g}\le \rho\|v\|_2$ and $\|v\|_2\le \bar\rho\,\|v\|^{g}$ for all $v\in \mathbb{R}^d$.
The dual norm is \(\|v\|^{g,*}:=\sup_{\| u\|^{g}\le 1}\langle v,u\rangle\). Consequently, $\|\cdot\|^{g,*}$ and $\|\cdot\|_2$ are comparable up to constants: $\rho^{-1}\|v\|_2 \le \|v\|^{g,*} \le \bar\rho\,\|v\|_2$ for all $v$.
\end{assumption}
In practice, we adopt the composite norm described in Section~\ref{sec:prelim}. This assumption on the constraint set $\mathcal{D}$ can be relaxed to a closed and convex set that contains a norm ball.

\subsection{Convergence of NBCLT-FL}
\label{subsec:fl_conv}

The FL analysis admits two complementary regimes.
The main text focuses on the problem-parameter-free instantiation, where all algorithmic parameters are chosen from known system quantities such as $(N,K,T)$.
This is a fixed-horizon guarantee: $T$ is specified in advance, while unknown problem constants are not needed.
When problem-dependent quantities such as $\Delta_F$, $L$, and $\sigma$ are available, the same algorithm also admits a well-tuned guarantee with a fixed smoothing parameter and no coupling condition among $N$, $K$, and $T$; we defer this complementary result to Appendix~\ref{app:fl_welltuned}.
We now state the main problem-parameter-free guarantee.

\begin{theorem}[Problem-parameter-free convergence of NBCLT-FL]
\label{thm:fl_final}
Suppose Assumptions~\ref{assump:smooth} -- \ref{assump:norm} hold.
Run Algorithm~\ref{alg:NBCLT-FL} for $T$ communication rounds, with $K$ local updates per round and subset size $N$ (out of $M$ total clients).
Choose the hyperparameters as $\eta := N^{1/4}/(K^{3/4}T^{3/4})$, $1-\beta_1 := \eta$, and $1-\beta_2 := \sqrt{NK/(T+K)}$. Define $\Delta_F := F(x^1)-F^\star$ and $p:=N/M$.
Then for all $T\ge K$ such that $NK\le T$, we have
\begin{equation} \label{eq:rate_final}
\frac{1}{T}\sum_{t=1}^T \mathbb E\!\left[\|\nabla F(x^t)\|^{g,*}\right]
\le\;
\frac{C_1}{\sqrt{T}}
+
\frac{C_2}{(NKT)^{1/4}}
+\;C_0\,R_{\mathrm{lo}}(T,K),
\end{equation}
where
\[
C_1:=\frac{5L\bar\rho^2}{2},
\qquad
C_2:=\Delta_F+\frac{64L\bar\rho^2}{p}+32\sigma\bar\rho .
\]
The lower-order remainder term is
\begin{equation*}
\begin{aligned}
R_{\mathrm{lo}}(T,K)
={}&
\frac{N^{1/4}}{K^{3/4}T^{3/4}}
+\frac{N^{1/2}}{K^{1/2}T^{3/2}}\\
&+\frac{N^{3/4}}{K^{9/4}T^{9/4}}
+\frac{1}{N^{1/4}K^{5/4}T^{3/4}}\\
&+\frac{1}{\sqrt{NKT}},
\end{aligned}
\end{equation*}
and the constant $C_0$ depends only on fixed problem parameters (e.g., $L,\sigma,\bar\rho,p$), but not on $(T,K)$.
\end{theorem}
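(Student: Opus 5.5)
We would prove the theorem through a one-round descent inequality measured in the dual norm, followed by recursive control of the error terms, telescoping, and substitution of the chosen parameters.

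\emph{Step 1: per-step descent in the dual norm.} The per-round model change is $\frac1N\sum_{i\in S_t}\sum_k \Delta_i^{t,k-1}$, with $\|\Delta_i^{t,k-1}\|^g\le\eta$. This gives $\|x^{t,k}_i-x^t\|_2\le \bar\rho k\eta$ and $\|x^{t+1}-x^t\|_2\le\bar\rho K\eta$. By $L$-smoothness,
\[
F(x^{t+1})\le F(x^t)+\langle\nabla F(x^t),x^{t+1}-x^t\rangle+\tfrac{L\bar\rho^2K^2\eta^2}{2}.
\]
The main step is to show that the predict-and-correct projection yields, for each local step, $\langle g,\Delta\rangle\le -\eta\|g\|^{g,*}+(\text{error})$. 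We write $\hat\Delta^{t,k-1}=\Proj(\hat\Delta^{t,k-2}-g^k)$ and $\Delta^{t,k-1}=\Proj(\hat\Delta^{t,k-1}-g^k)$. The variational inequality of the projection, tested against $u^\star\in\argmin_{u\in\mathcal D}\langle g^k,u\rangle$ (so $\langle g^k,u^\star\rangle=-\eta\|g^k\|^{g,*}$), gives
\[
\langle g^k,\Delta^{t,k-1}-u^\star\rangle\le \langle\hat\Delta^{t,k-1}-\Delta^{t,k-1},\Delta^{t,k-1}-u^\star\rangle.
\]
The analogous inequality holds for $\hat\Delta$. We combine the two in optimistic-mirror-descent fashion, with potential $\|\hat\Delta^{t,k-1}-u^\star\|_2^2$. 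Telescoping over $k$ inside a round, with $\hat\Delta^{t,-1}=0$ and $\mathcal D$ of Euclidean diameter $\le 2\bar\rho\eta$, leaves a boundary cost $O(\bar\rho^2\eta^2)$ per round. It also leaves drift terms $\|g^{k+1}-g^k\|_2$ that come from $u^\star$ changing with $g^k$.

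Two scale issues must be handled. First, the projected steps have unit stepsize, so $\hat\Delta$ saturates the ball quickly; we would exploit this to argue that $\Delta\approx\eta\cdot(\text{LMO direction})$, up to the drift. Second, $g^k$ is close to $\nabla F(x^t)$ up to a controlled error. We then replace $\|g^k\|^{g,*}$ by $\|\nabla F(x^t)\|^{g,*}$ using $\|a\|^{g,*}\ge\|b\|^{g,*}-\bar\rho\|a-b\|_2$, which gives
\[
\mathbb E F(x^{t+1})\le \mathbb E F(x^t)-K\eta\,\mathbb E\|\nabla F(x^t)\|^{g,*}+2\bar\rho K\eta\,\mathbb E\|e^t\|_2+O(L\bar\rho^2K^2\eta^2).
\]
Here $e^t$ is the averaged gradient-surrogate error.

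\emph{Step 2: error decomposition and recursion.} With $1-\beta_1=\eta$ we have $g_i^k=m^t+\eta(\nabla f_i(x_i^{t,k-1},\xi)-c_i^t)$. Hence
\[
e\le \|m^t-\nabla F(x^t)\|_2+\eta\bigl(\|\nabla f_i(x_i^{t,k-1})-c_i^t\|_2+\text{noise}\bigr).
\]
The $\eta$-factor keeps the heterogeneity contribution small without any bounded-heterogeneity assumption, since $\|\nabla f_i-c_i\|$ is itself tracked. We then build recursions for $A^t:=\frac1M\sum_i\mathbb E\|c_i^t-\nabla f_i(x^t)\|_2$ and $\|m^t-\nabla F(x^t)\|$. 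For each client, $c_i$ is an EMA with weight $1-\beta_2$, updated only with probability $p$, so the effective contraction is $p(1-\beta_2)$. The per-round perturbation is $L\bar\rho K\eta$ from the movement of $x^t$ and the local drift. The variance injected is $(1-\beta_2)\sigma/\sqrt{K}$, and averaging over $M$ independent clients gives a further $1/\sqrt{N}$ in $m$. We will bound the variance of the EMA in squared norm, which gives $\sqrt{(1-\beta_2)/(NK)}\,\sigma$, and use Jensen for first moments. Summing the geometric recursions yields
\[
\tfrac1T\sum_t \mathbb E\|e^t\|\lesssim \frac{A^1}{pT(1-\beta_2)}+\frac{L\bar\rho K\eta}{p(1-\beta_2)}+\sigma\sqrt{\frac{1-\beta_2}{NK}}+\eta(\cdots).
\]
The initialization term uses the single-sample $c_i^1$, so $A^1\le\sigma$.

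\emph{Step 3: telescoping and parameter substitution.} We sum Step 1 over $t$ and divide by $TK\eta$:
\[
\text{avg}\le \frac{\Delta_F}{TK\eta}+2\bar\rho\,\overline{\mathbb E\|e\|}+O(L\bar\rho^2K\eta)+\frac{O(\bar\rho^2\eta)}{K}.
\]
Substituting $\eta=N^{1/4}(KT)^{-3/4}$ gives $\Delta_F/(TK\eta)=(NKT)^{-1/4}$. Substituting $1-\beta_2=\sqrt{NK/(T+K)}$, together with $T\ge K$ and $NK\le T$ (so $1-\beta_2\le1$), gives $\frac{LK\eta}{p(1-\beta_2)}\approx \frac{L}{p}(NKT)^{-1/4}$ and $\sigma\sqrt{(1-\beta_2)/(NK)}\approx\sigma(NKT)^{-1/4}$. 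The remaining pieces produce the $T^{-1/2}$ term with constant $\propto L\bar\rho^2$, together with the listed remainder $R_{\mathrm{lo}}$. For example, $\eta$ itself gives $N^{1/4}K^{-3/4}T^{-3/4}$, and the initialization term $\sigma/(pT(1-\beta_2))$ gives $(NKT)^{-1/2}$-type terms.

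The main obstacle is Step 1: extracting an $\eta\|\cdot\|^{g,*}$ decrease from Euclidean projections with unit step. Clipping does not directly produce the LMO direction, so the argument must show that the predict step $\hat\Delta$ absorbs the bias. The first attempt will be the optimistic-gradient telescoping described above; the fallback is a case split on whether $\|\hat\Delta-g\|^g$ exceeds $\eta$, which covers the saturated versus unsaturated regimes.
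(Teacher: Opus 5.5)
Your overall architecture matches the paper's: smoothness descent on $x^t$, comparison with an LMO point so that $\langle\cdot,u\rangle=-\eta\|\cdot\|^{g,*}$, splitting $g_i^{t,k}-\nabla F(x^t)$ into the tracking error $m^t-\nabla F(x^t)$ plus an $\eta$-scaled local term, EMA recursions with contraction $p(1-\beta_2)$ and drift $LK\bar\rho\eta$, and the parameter substitution. The gap is Step 1. You flag it as the main obstacle and leave it open, and neither mechanism you propose works as stated.

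\emph{Moving-comparator telescoping.} Telescoping against $u^\star_k\in\argmin_{u\in\mathcal D}\langle g^k,u\rangle$ creates potential mismatches governed by $\|u^\star_{k+1}-u^\star_k\|_2$. The LMO is not Lipschitz in $g$. For the Euclidean ball $u^\star=-\eta g/\|g\|_2$ jumps near $g=0$; for $\ell_\infty$ or spectral balls it jumps whenever a sign or singular direction flips. So these terms are not controlled by $\|g^{k+1}-g^k\|_2$. The honest bound is the diameter, which gives $O(\bar\rho^2\eta^2)$ per step and $O(K\bar\rho^2\eta^2)$ per round, not $O(\bar\rho^2\eta^2)$ per round.

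\emph{Saturation.} The heuristic $\Delta\approx\eta\cdot(\text{LMO direction})$ is false for non-Euclidean balls. Euclidean clipping onto an $\ell_\infty$ or spectral ball saturates only the coordinates or singular values above the clipping level and leaves the rest untouched. The LMO point, by contrast, puts all of them at magnitude $\eta$.

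The missing idea is that you only need near-optimality of the linear function in value, not in direction. A unit-step projection from a point of $\mathcal D$ gives this for free, and your own variational inequality already closes the step. By Cauchy--Schwarz and the diameter of $\mathcal D$,
\[
\langle g_i^{t,k},\Delta_i^{t,k-1}-u\rangle\le\|\hat\Delta_i^{t,k-1}-\Delta_i^{t,k-1}\|_2\,\|\Delta_i^{t,k-1}-u\|_2\le 4\bar\rho^2\eta^2
\qquad\text{for every }u\in\mathcal D.
\]
The three-point form in Lemma~\ref{lem:proj_app_new} gives the sharper $\tfrac12\|u-\hat\Delta_i^{t,k-1}\|_2^2\le 2\bar\rho^2\eta^2$. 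No telescoping, gradient variation, or saturation is needed. Summed over the round this costs $O(K\bar\rho^2\eta^2)$, i.e.\ $O(\bar\rho^2\eta)$ after dividing by $K\eta$, rather than your $O(\bar\rho^2\eta)/K$. That is still harmless, since $\eta$ is the first term of $R_{\mathrm{lo}}$.

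This is how the paper proceeds. Lemma~\ref{lem:pc_app_new} also bounds all distance terms by the diameter. It is applied with the fixed comparator $u_t\in\argmin_{u\in\mathcal D}\langle\nabla F(x^t),u\rangle$, and the mismatch $\nabla F(x^t)-g_i^{t,k}$ is moved into $E_t^{(b)}$ and $E_t^{(c)}$ by Cauchy--Schwarz with $\|\Delta_i^{t,k-1}-u_t\|_2\le2\bar\rho\eta$. Your per-step comparator, combined with $\|a\|^{g,*}\ge\|b\|^{g,*}-\bar\rho\|a-b\|_2$, gives the same $2\bar\rho K\eta\,\mathbb E\|e^t\|_2$ error once this bound is in hand.

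After that repair, Steps 2--3 follow the paper's route, with two remaining points. First, the client-averaging gain for the noise in $m^t-\nabla F(x^t)$ requires splitting each client's control error into bias and noise parts before averaging. This is the paper's $r_i^t,s_i^t$ split, with virtual trajectories for inactive clients, and the participation fluctuation must also be controlled. Second, reaching the explicit $C_1$ and $C_2$ requires tracking constants.
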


Theorem~\ref{thm:fl_final} is problem-parameter-free: the update-budget radius and smoothing parameters depend only on known system quantities $(N,K,T)$, not on $L$, $\sigma$, or $\Delta_F$.
Its leading terms are $\mathcal O(T^{-1/2})$ and $\mathcal O((NKT)^{-1/4})$, while $R_{\mathrm{lo}}$ is lower order.
Thus, taking $T=\Theta(\epsilon^{-2})$ and choosing $NK=\Theta(T)$ within $NK\le T$ yields $\mathcal O(\epsilon^{-2})$ communication rounds and $\mathcal O(\epsilon^{-4})$ total stochastic-gradient queries, including the additive $M$ initialization queries for a fixed client population. This communication guarantee trades additional local computation for fewer synchronization rounds; $NK\le T$ alone does not imply it. With $N,K$ fixed, the bound instead gives $\mathcal O(\epsilon^{-4})$ rounds. Constants, including their dependence on $p$, are held fixed in these accuracy exponents.
The participation size $N$ improves the second leading term through both the denominator $(NKT)^{1/4}$ and the participation ratio $p=N/M$ appearing in $C_2$; in the full-participation case $(N=M)$, the leading stochastic term scales as $\Theta((MKT)^{-1/4})$ up to problem-dependent constants.

\subsection{Effect of Inexact Projection}
\label{subsec:inexact_projectors}

The preceding FL guarantee assumes exact Euclidean projection onto $\mathcal D$.
The following theorem shows that NBCLT-FL retains its convergence rates with feasible inexact projections, provided the accumulated projection residual is sufficiently small.

\begin{theorem}[Convergence of NBCLT-FL with inexact projections]
\label{thm:fl_inexact_projection}
Suppose Assumptions~\ref{assump:smooth}--\ref{assump:norm} hold.
Run NBCLT-FL with the same hyperparameter choices as in either Theorem~\ref{thm:fl_final} or the well-tuned FL guarantee in Theorem~\ref{thm:fl_welltuned}, but replace the exact Euclidean projection calls in Algorithm~\ref{alg:NBCLT-FL} by feasible inexact projection calls.
For a projection subproblem with input $z$ and feasible output $y\in\mathcal D$, define the single-call projection residual
\begin{align}
\operatorname{res}_{\mathcal D}(z,y)
&:=
\left[
\sup_{u\in\mathcal D}\langle z-y,u-y\rangle
\right]_+,
\label{eq:main_single_call_residual}\\
&=
\left[
\eta\|z-y\|^{g,*}-\langle z-y,y\rangle
\right]_+.
\label{eq:main_single_call_residual_closed}
\end{align}
Here $\mathcal D=\{u:\|u\|^g\le \eta\}$.
For the prediction and correction projection calls at round $t$, client $i$, and local step $k$, let
\begin{align*}
\hat z_i^{t,k}&:=\hat\Delta_i^{t,k-2}-g_i^{t,k},\\
z_i^{t,k}&:=\hat\Delta_i^{t,k-1}-g_i^{t,k},
\end{align*}
and let $\hat\Delta_i^{t,k-1},\Delta_i^{t,k-1}\in\mathcal D$ be the corresponding feasible outputs.
Write $\hat r_i^{t,k}:=\operatorname{res}_{\mathcal D}(\hat z_i^{t,k},\hat\Delta_i^{t,k-1})$
and $r_i^{t,k}:=\operatorname{res}_{\mathcal D}(z_i^{t,k},\Delta_i^{t,k-1})$.
Define the averaged accumulated projection residual
\begin{equation}
\overline{\mathfrak R}_T :=
\frac1T\sum_{t=1}^T
\mathbb E\!\left[
\frac1N\sum_{i\in S_t}\sum_{k=1}^K
\big(\hat r_i^{t,k}+r_i^{t,k}\big)
\right].
\label{eq:avg_projection_residual}
\end{equation}
Then the corresponding exact-projection FL convergence bounds remain valid, up to enlarged numerical constants with the same dependence on problem parameters, with only an additional perturbation term $\mathcal O\!\left(\overline{\mathfrak R}_T/(\eta K)\right)$ on the right-hand side.
Consequently, if $\overline{\mathfrak R}_T=\mathcal O(T^{-1})$, then the inexact-projection implementation preserves the communication and sample-complexity exponents of both the problem-parameter-free and well-tuned FL regimes.
When this residual budget is distributed uniformly over the $K$ local steps, the natural single-call target is of order $1/(KT)$.
\end{theorem}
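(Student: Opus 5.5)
The plan is to revisit the exact-projection analysis behind Theorem~\ref{thm:fl_final} (and Theorem~\ref{thm:fl_welltuned}) and find the places where the projection is actually used. Everything there relies on two properties of $\Proj$. The first is feasibility, $\Delta\in\mathcal D$, which gives $\|\Delta\|^g\le\eta$ and therefore $\|\Delta\|_2\le\bar\rho\eta$. These bounds control the drift terms $\|x_i^{t,k}-x^t\|_2\le k\bar\rho\eta$ and the smoothness remainder $\frac L2\|x^{t+1}-x^t\|_2^2$. The second is the variational inequality $\langle z-\Proj(z),u-\Proj(z)\rangle\le 0$ for all $u\in\mathcal D$, used in the one-step descent inequality for the predict-and-correct pair.

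Feasibility is assumed for the inexact outputs. So every drift, variance and control-variate bound (the $\beta_1,\beta_2$ tracking recursions, and the client-sampling terms producing $1/p$) carries over verbatim. Only the variational-inequality step changes. By definition \eqref{eq:main_single_call_residual}, a feasible $y$ satisfies
\[
\langle z-y,u-y\rangle\le \operatorname{res}_{\mathcal D}(z,y)\quad\text{for all }u\in\mathcal D .
\]
The equality with \eqref{eq:main_single_call_residual_closed} follows from $\sup_{u\in\mathcal D}\langle w,u\rangle=\eta\|w\|^{g,*}$.

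Next, I would redo the per-step three-point argument of the exact proof. That argument writes the optimistic/extragradient-type inequality with comparison point $u=-\eta\, d$, where $d$ is a dual-norm maximizer of $\nabla F(x^t)$. It applies the VI twice, once to $\hat\Delta_i^{t,k-1}$ with input $\hat z_i^{t,k}$ and once to $\Delta_i^{t,k-1}$ with input $z_i^{t,k}$, and combines them using $\|\cdot\|_2$ expansions. Each use now adds $+\hat r_i^{t,k}$ or $+r_i^{t,k}$ to the right-hand side. The correction step may also use the VI with $u=\hat\Delta_i^{t,k}$, which is feasible, so the same residual bound applies. Summing over $k\le K$, averaging over $i\in S_t$ and taking expectations gives the exact descent inequality for $\mathbb E[F(x^{t+1})]$. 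That inequality has the form $-\eta K\,\mathbb E\|\nabla F(x^t)\|^{g,*}+(\text{error terms})$, now plus $\mathbb E[\frac1N\sum_{i\in S_t}\sum_k(\hat r_i^{t,k}+r_i^{t,k})]$ times a constant.

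Telescoping over $t$ and dividing by $\eta K T$ reproduces the exact bound plus $\mathcal O(\overline{\mathfrak R}_T/(\eta K))$. Any residual constant is absorbed into enlarged numerical constants. In the parameter-free choice $\eta K=N^{1/4}K^{1/4}T^{-3/4}$, and $\overline{\mathfrak R}_T=\mathcal O(T^{-1})$ gives the extra term $\mathcal O((NKT)^{-1/4})$. This matches the second leading term of \eqref{eq:rate_final}, so the exponents are preserved. The well-tuned case is checked the same way, since there $\eta K$ is also of order $T^{-1/2}$ times constants, or larger. Spreading a total budget of $1/T$ evenly over $2K$ calls gives a per-call target of order $1/(KT)$.

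The main obstacle is the bookkeeping inside the exact proof. Some steps may use projection properties beyond the VI, most notably nonexpansiveness $\|\Proj(a)-\Proj(b)\|_2\le\|a-b\|_2$, for instance to compare $\hat\Delta$ and $\Delta$. That property fails for inexact projections. I would try to replace each such use by a VI-based derivation. Where this is impossible, I would bound the difference by feasibility alone, $\|\hat\Delta-\Delta\|_2\le 2\bar\rho\eta$, and check that this cruder bound only affects terms already of order $\eta^2$. Those terms are absorbed into $C_1$ and $R_{\mathrm{lo}}$.
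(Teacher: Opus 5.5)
Your proposal is correct and follows the paper's skeleton. Feasibility of the inexact outputs leaves the smoothness step, the bounds on $E_t^{(b)}$ and $E_t^{(c)}$, and all control-variate recursions untouched. The residual turns the three-point inequality into the same inequality plus $\varepsilon$. Summing over $k$, averaging over $S_t$ and dividing by $T\eta K$ then gives the extra $\mathcal O(\overline{\mathfrak R}_T/(\eta K))$, which you specialize to both regimes as the paper does.

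The difference lies in the one step you flagged. The exact lemma does not run a two-VI optimistic argument. It applies the three-point inequality once per step, to the correction call, and uses nonexpansiveness to get $\|\hat\Delta_i^{t,k-1}-\Delta_i^{t,k-2}\|_2\le\|g_i^{t,k}-g_i^{t,k-1}\|_2$.

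The paper keeps this structure because nonexpansiveness does not simply fail for inexact outputs. Testing the residual inequality at $u=\Proj(z)$ and the exact VI at $u=y$ gives $\|y-\Proj(z)\|_2^2\le\varepsilon$. Hence $\|y_a-y_b\|_2\le\|z_a-z_b\|_2+\sqrt{\varepsilon_a}+\sqrt{\varepsilon_b}$, which after squaring costs $\mathcal O(\hat\varepsilon_i^{t,k}+\varepsilon_i^{t,k-1})$ and an enlarged gradient-variation coefficient.

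Your feasibility fallback is more elementary and is enough on its own. You do not even need to compare with $\Delta_i^{t,k-2}$: since $u_t$ and $\hat\Delta_i^{t,k-1}$ both lie in $\mathcal D$, the residual-augmented correction inequality gives directly $\langle g_i^{t,k},\Delta_i^{t,k-1}-u_t\rangle\le\frac12\|u_t-\hat\Delta_i^{t,k-1}\|_2^2+r_i^{t,k}\le 2\bar\rho^2\eta^2+r_i^{t,k}$. The exact proof ends up bounding these distance terms by the diameter anyway.

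This route gives you three things:
\begin{itemize}
\item It removes the gradient-variation terms entirely.
\item It shows that only the correction residuals $r_i^{t,k}$ are needed, a slightly stronger conclusion.
\item It changes only the coefficient of the $\bar\rho^2\eta$ term. That term is absorbed into $C_0R_{\mathrm{lo}}$ (whose first term is exactly $\eta$) in the parameter-free regime, and into $A_{\mathrm{wt}}(\gamma)$ in the well-tuned regime, not into $C_1$.
\end{itemize}
What the paper's route buys in exchange is that it keeps the gradient-variation form of the exact predict-and-correct lemma, which your route discards.
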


Theorem~\ref{thm:fl_inexact_projection} turns exact projection into a residual-budget condition rather than a mandatory oracle. For spectral-norm matrix blocks, this motivates an SVD-free clipping oracle for $\mathcal B_\tau=\{Y:\|Y\|_2\le\tau\}$. Let $Z\in\mathbb R^{m\times n}$ be a matrix update block and assume $m\ge n$ for notation. Choose any safe scale $s\ge\|Z\|_2$ and block radius $\tau$; this scale is easy to obtain, since $s=\|Z\|_F$ already suffices, while our certified implementation uses a deterministic tighter upper bound described in Appendix~\ref{app:polygram_table_oracle}. Set $G:=Z^\top Z/s^2$ and $\alpha_{\rm clip}:=\tau/s$. In the nontrivial case $\tau<\|Z\|_2\le s$, PolyGram-RCClip outputs
\[
Y_{d,s}:=Zp_{d,\alpha_{\rm clip}}(G),
\]
where the degree-$d$ transfer $p_{d,\alpha_{\rm clip}}$ is selected by the clipping ratio; the case $m<n$ uses the left Gram matrix and is given in Algorithm~\ref{alg:polygram_rcclip}. Appendix~\ref{app:polygram_existence} proves that for every $\alpha_{\rm clip}\in(0,1)$ there exists a feasible degree-$d$ transfer such that, for all sufficiently large $d$,
\[
\operatorname{res}_{\mathcal B_\tau}(Z,Y_{d,s})
\le
\frac{C_J}{\alpha_{\rm clip}^2 d}
\bigl(\tau\|Z\|_*+2\|Z\|_F^2\bigr),
\]
where $C_J$ is a universal Jackson-type approximation constant. This bound exposes an accuracy--compute tradeoff: for fixed $\alpha_{\rm clip}$, increasing $d$ lowers the certified residual upper bound as $1/d$, while Algorithm~\ref{alg:polygram_rcclip} uses exactly $d$ sequential multiplications by the smaller Gram matrix after it is formed. For fixed $d$ and $\alpha_{\rm clip}$, Appendix~\ref{app:polygram_lp} formulates coefficient search as a semi-infinite LP; after normalization, it depends only on $\alpha_{\rm clip}$, not on $Z,\tau,s$ separately. Hence the runtime oracle uses an offline one-dimensional table: for $d=20$ and $1000$ grid points, it stores only $21{,}000$ coefficients, below $0.2$ MB in float64. Each call computes $s$, forms $\alpha_{\rm clip}$, looks up coefficients, and evaluates the Gram polynomial. Newton--Schulz-5 Muon post-processing instead uses three matrix products per iteration and five iterations in the reference implementation, and it orthogonalizes rather than clips~\citep{jordan2024muon}.

The degree--residual bound is an existence guarantee for a sufficiently accurate, feasible polynomial on each call. Our fixed-degree table is a practical approximation: its safe scale does not by itself establish the accumulated residual budget required by Theorem~\ref{thm:fl_inexact_projection}. Appendix~\ref{app:polygram_table_oracle} separates these requirements from the experimental implementation.

\subsection{Convergence of NBCLT-DL}
\label{subsec:dl_conv}

We analyze NBCLT-DL under two network models: a static symmetric doubly stochastic mixing matrix, controlled by its spectral gap, and time-varying doubly stochastic mixing matrices satisfying finite-time consensus (FTC).
For both topologies, we use the same problem-parameter-free choice $\eta=M^{1/4}/(K^{3/4}T^{3/4})$, $1-\beta_1=\eta$, and $1-\beta_2=\sqrt{MK/(T+K)}$. The formal statements are given in Theorems~\ref{thm:dl_final_gap} and~\ref{thm:dl_final_FTC} in Appendix~\ref{app:dl-formal-guarantees}; the proofs appear later in Appendix~\ref{app:dl-analysis}.
For $T\ge MK$, both results imply
\begin{equation}
\begin{aligned}
\frac1T\sum_{t=1}^T \mathbb E\|\nabla F(\bar x^t)\|^{g,*}
\le {}&\mathcal{O}(T^{-1/2})+
\mathcal{O}\bigl((MKT)^{-1/4}\bigr)\\
&+\text{lower-order terms},
\end{aligned}
\label{eq:dl_summary_rate}
\end{equation}
where the hidden constants may depend on the network parameter and fixed problem constants, but not on $(T,K)$.
Consequently, choosing $T=\Theta(\epsilon^{-2})$ and $MK=\Theta(T)$ within $MK\le T$ gives $\mathcal O(\epsilon^{-2})$ communication rounds and $\mathcal O(\epsilon^{-4})$ total stochastic-gradient queries for fixed problem and network constants. For fixed $M,K$, the bound instead gives $\mathcal O(\epsilon^{-4})$ rounds. Exact tracker initialization adds $2h$ rounds on a precomputed static spanning tree of depth $h$, or $\tau$ rounds under FTC; the total oracle count is $M(KT+1)$, as detailed in Appendix~\ref{app:dl-initialization}.

\subsubsection{Static doubly stochastic mixing}
\label{subsec:dl_static}
Assume the mixing matrix $\mathbf{W}\in\mathbb{R}^{M\times M}$ is fixed, symmetric, and doubly stochastic: $\mathbf{W}\mathbf{1}=\mathbf{1}$ and $\mathbf{1}^\top\mathbf{W}=\mathbf{1}^\top$.
Let $\mathbf{J}:=M^{-1}\mathbf 1\mathbf 1^\top$ and define $\lambda:=\|\mathbf W-\mathbf J\|_2\in[0,1)$, equivalently the second-largest eigenvalue magnitude of $\mathbf W$.
The spectral gap $1-\lambda$ measures the contraction of disagreement under one mixing step.
Theorem~\ref{thm:dl_final_gap} gives the complete static-gossip guarantee, including the explicit constants and lower-order terms; the constants depend on the static-network parameter through $1/(1-\lambda)$, but the polynomial dependence on $(T,K,M)$ is the one summarized in~\eqref{eq:dl_summary_rate}.

\subsubsection{Time-varying mixing with finite-time consensus}
\label{subsec:dl_ftc}
Static sparse topologies can have deteriorating spectral gaps as the network grows; for example, a ring has contraction factor $1-\mathcal O(M^{-2})$, equivalently spectral gap $\mathcal O(M^{-2})$~\citep{takezawa2023beyond}, while faster static constructions such as the exponential graph improve the rate at the cost of increasing node degree~\citep{ying2021exponential}.
Time-varying communication schemes avoid part of this tradeoff by scheduling sparse mixing patterns over time while keeping the per-round communication degree small~\citep{ying2021exponential,nguyen2025graphs}.
We assume the FTC property: there exists an integer $\tau\ge1$ such that, for every $t$,
\begin{equation}
\mathbf{W}^{t+\tau-1}\mathbf{W}^{t+\tau-2}\cdots\mathbf{W}^{t}=\mathbf J,
\qquad
\mathbf J:=M^{-1}\mathbf 1\mathbf 1^\top .
\label{eq:ftc}
\end{equation}
We refer to $\tau$ as the FTC horizon.
Theorem~\ref{thm:dl_final_FTC} gives the complete FTC guarantee, including the initialization condition used to control the initial tracker error.
Its constants depend on $\tau$, while the communication and sample-complexity exponents remain those in~\eqref{eq:dl_summary_rate}.

\section{Experiments}
\label{sec:experiments}

\paragraph{Setup.}
We evaluate NBCLT-FL on CIFAR-10~\citep{krizhevsky2009learning} under Dirichlet label-skew non-IID partitions.
The 50k training examples are split across $M=100$ clients with approximately balanced client sizes, and the heterogeneity level is controlled by $\alpha_{\rm dir}\in\{0.1,0.3,0.5\}$, where smaller values indicate stronger label skew.
Each round samples $N=10$ clients, each selected client performs $K=5$ local steps with mini-batch size 50, and training runs for $T=1000$ communication rounds.
We use a CIFAR-style ResNet-18 with GELU activations and GroupNorm.
We compare against FedAvg~\citep{mcmahan2017communication}, SCAFFOLD-M~\citep{cheng2023momentum}, FedAMS~\citep{wang2022communication}, FedMuon~\citep{takezawa2025fedmuon}, and PAdaMFed~\citep{yan2025problem}, each tuned over its own hyperparameter grid. For FedMuon, we use exact orthogonalization for the LMO-based layers rather than a Newton--Schulz approximation.

We report four NBCLT variants: exact-SVD tuned/PF (SVD-T/SVD-PF) and PolyGram-RCClip tuned/PF (Poly-T/Poly-PF).
For PF, the theorem-suggested choice gives $\eta\approx0.00299$, $\beta_1\approx0.997$, and $\beta_2\approx0.777$ for $N=10,K=5,T=1000$.
Additional implementation details, hyperparameter grids, and the full nine-curve comparison are deferred to Appendix~\ref{app:experiments}.

\begin{figure}[tbp]
\centering
\includegraphics[width=\textwidth]{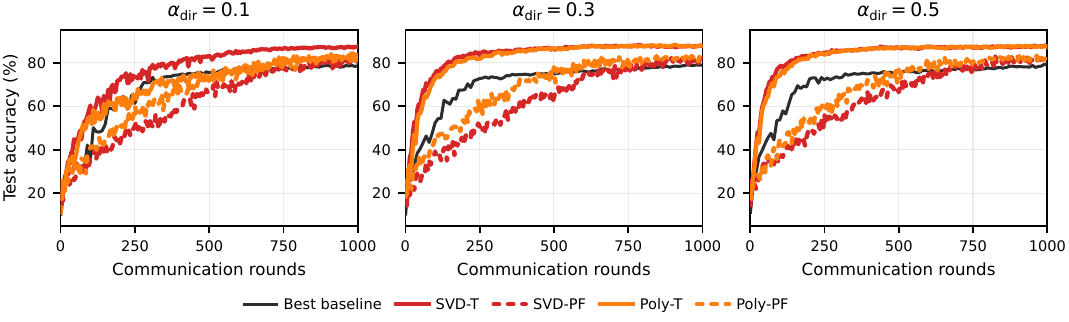}
\caption{Main CIFAR-10 FL comparison. Best baseline is the best tuned run among the five baselines, selected by average test accuracy over the last 100 rounds; the full nine-curve comparison appears in Appendix~\ref{app:experiments}.}
\label{fig:cifar10-fl}
\end{figure}

Figure~\ref{fig:cifar10-fl} shows that tuned NBCLT-FL improves over the strongest tuned baseline across all three heterogeneity levels.
For $\alpha_{\rm dir}=0.3$ and $0.5$, the degree-20 PolyGram-RCClip curves remain close to their exact-SVD counterparts in both the tuned and PF regimes.

The most heterogeneous split, $\alpha_{\rm dir}=0.1$, is more sensitive to the projector approximation, where the gap between Poly-T and SVD-T is larger.
Figure~\ref{fig:cifar10-degree-ablation} is an empirical degree ablation: increasing the polynomial degree reduces the observed gap to the exact-clipping reference in this setting. It does not measure the projection residual or verify the $1/d$ residual bound.

\begin{figure}[H]
\centering
\includegraphics[width=0.62\textwidth]{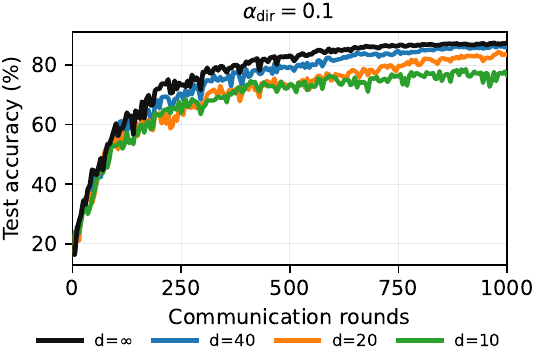}
\caption{Projector-degree ablation on $\alpha_{\rm dir}=0.1$; $d=\infty$ denotes exact spectral clipping.}
\label{fig:cifar10-degree-ablation}
\end{figure}

\section*{AI Use Statement}
Generative AI tools were used to assist with manuscript editing, literature
searches, and discussion and revision of mathematical arguments. The authors
remain responsible for all claims, proofs, experiments, and citations.

\bibliographystyle{plainnat}
\begingroup
\interlinepenalty=10000
\bibliography{references}
\endgroup
\clearpage
\appendix
\numberwithin{theorem}{section}
\numberwithin{equation}{section}
\numberwithin{algorithm}{section}
\section{Federated-learning analysis}
\label{app:fl-analysis}
\label{app:proof_fl_final_new}

\subsection{Well-tuned FL guarantee}
\label{app:fl_welltuned}

\begin{theorem}[Well-tuned convergence of NBCLT-FL]
\label{thm:fl_welltuned}
Suppose Assumptions~\ref{assump:smooth}--\ref{assump:norm} hold.
Run Algorithm~\ref{alg:NBCLT-FL} for $T$ communication rounds, with $K$ local updates per round and subset size $N$ (out of $M$ total clients).
Define $\Delta_F:=F(x^1)-F^\star$ and $p:=N/M$.
Fix any constant $\gamma\in(0,1]$ and choose
\begin{align*}
1-\beta_2 := \gamma,
\qquad
1-\beta_1 := \eta.
\end{align*}
Let
\begin{align*}
A_{\mathrm{wt}}(\gamma)
:=&\;\left(\frac{11}{2}+\frac{32\sqrt{2}}{p\gamma}\right)L\bar\rho^2 K
+\left(\frac{5}{2}+L^2\right)\bar\rho^2
+2\sigma^2
+2\bar\rho\sigma
+\frac{8\sqrt{2}\,\bar\rho\sigma\sqrt{\gamma}}{\sqrt{Kp}}.
\end{align*}
Choose the update-budget radius as
\begin{align*}
\eta := \min\left\{1,\sqrt{\frac{\Delta_F}{A_{\mathrm{wt}}(\gamma)KT}}\right\}.
\end{align*}
Then for any $T\ge 1$, we have
\begin{align}
\frac{1}{T}\sum_{t=1}^T \mathbb E\!\left[\|\nabla F(x^t)\|^{g,*}\right]
\;&\le\;
2\sqrt{\frac{A_{\mathrm{wt}}(\gamma)\Delta_F}{KT}}
+\frac{\Delta_F}{KT}
+\frac{16\bar\rho\sigma}{p\gamma T}
+\frac{8\bar\rho\sigma}{\sqrt{N\gamma T}}
+\frac{8\bar\rho\sigma\sqrt{\gamma}}{\sqrt{KM}}.
\label{eq:rate_fl_welltuned}
\end{align}
\end{theorem}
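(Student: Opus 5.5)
The plan is a one-step descent inequality for $F$ along the global iterates. The resulting per-round error terms are then controlled by a drift bound on the local iterates and a Lyapunov-type recursion for the control-variate error.

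\textbf{Step 1: one local step.} Smoothness gives
\[
F(x^{t,k}_i)\le F(x^{t,k-1}_i)+\langle\nabla F(x^{t,k-1}_i),\Delta^{t,k-1}_i\rangle+\tfrac{L\bar\rho^2}{2}\eta^2 ,
\]
since $\|\Delta\|_2\le\bar\rho\eta$. For the linear term, I use the variational inequality of the Euclidean projection for both predict and correct steps. With $u=-\eta\, v/\|v\|^{g,*}$-type feasible comparators this yields
\[
\langle g,\Delta\rangle\le-\eta\|g\|^{g,*}+\tfrac12\bigl(\|\hat\Delta^{k-2}-\Delta^{k-1}\|^2-\dots\bigr)+\|\hat\Delta^{k-1}-\Delta^{k-1}\|\cdot\|\text{change in }g\| .
\]
The predict-and-correct structure lets the quadratic terms telescope across $k$, which is the standard optimistic/extragradient trick. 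This produces a per-step decrease of $-\eta\|\nabla F\|^{g,*}$ plus $2\eta\bar\rho\|g-\nabla F\|_2$ plus $O(L\bar\rho^2\eta^2)$ plus boundary terms of size $O(\eta^2)$ per round.

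\textbf{Step 2: pass to the global model.} Because $x^{t+1}$ is an average of client endpoints, smoothness and the convexity of $\|\cdot\|^{g,*}$ transfer the per-client decrease to $F(x^{t+1})-F(x^t)$. Local drift is handled using $\|x^{t,k}_i-x^t\|_2\le k\bar\rho\eta$, which gives $\|\nabla F(x_i^{t,k})-\nabla F(x^t)\|^{g,*}\le L\bar\rho^2 K\eta$. This is the source of the $L\bar\rho^2K$ term in $A_{\mathrm{wt}}$. Client sampling adds a variance term, handled by unbiasedness of $\frac1N\sum_{i\in S_t}$.

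\textbf{Step 3: bound the surrogate error $g_i^{t,k}-\nabla F$.} Write it as
\[
(1-\beta_1)\bigl(\nabla f_i(x,\xi)-\nabla f_i(x)\bigr)+(1-\beta_1)\bigl(\nabla f_i(x)-c_i^t\bigr)+\beta_1\bigl(m^t-\nabla F\bigr)+\dots .
\]
Since $1-\beta_1=\eta$, the terms weighted by $1-\beta_1$ contribute only $O(\eta\cdot\text{stuff})$ in the descent inequality. The dominant error is $\|m^t-\nabla F(x^t)\|$. I track $e_i^t:=c_i^t-\nabla f_i(x^t)$. The EMA with $1-\beta_2=\gamma$ and the participation probability $p$ give, in expectation, a recursion
\[
e_i^{t+1}\approx(1-p\gamma)e_i^t+p\gamma\,(\text{noise averaged over }K\text{ samples})+(\text{movement }\|x^{t+1}-x^t\|\le\bar\rho K\eta)\cdot L .
\]
Unrolling this yields
\[
\frac1T\sum_t\mathbb E\|m^t-\nabla F(x^t)\|\lesssim\frac{\|e^1\|}{p\gamma T}+\frac{\sigma\sqrt{\gamma}}{\sqrt{KM}}+\frac{\sigma}{\sqrt{N\gamma T}}\text{-type terms}+\frac{LK\eta\bar\rho}{p\gamma} .
\]
The variance part is bounded by applying martingale $\ell_2$ bounds before Jensen to get first moments. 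This is the step I expect to be the main obstacle: getting the $\sqrt\gamma/\sqrt{KM}$ and $1/(p\gamma T)$ terms with the averaging across the $M$ clients, rather than per client, requires carefully exploiting independence across clients and rounds in the unrolled sum. The initialization $c_i^1=\nabla f_i(x^1,\xi)$ gives $\mathbb E\|e^1\|\le\sigma$.

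\textbf{Step 4: sum and tune.} Summing over $t,k$, telescoping $F$, and dividing by $\eta KT$ gives
\[
\frac1T\sum\mathbb E\|\nabla F(x^t)\|^{g,*}\le\frac{\Delta_F}{\eta KT}+A_{\mathrm{wt}}(\gamma)\,\eta+(\text{the }\eta\text{-free noise terms of Step 3}) .
\]
With $\eta=\min\{1,\sqrt{\Delta_F/(A_{\mathrm{wt}}KT)}\}$ there are two cases. If the minimum is the square root, the first two terms balance to $2\sqrt{A_{\mathrm{wt}}\Delta_F/(KT)}$. If $\eta=1$, then $A_{\mathrm{wt}}\le\Delta_F/(KT)$ and the bound is at most $2\Delta_F/(KT)$, covered by the sum of the listed terms. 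This yields \eqref{eq:rate_fl_welltuned}.
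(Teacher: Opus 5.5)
Your outer skeleton matches the paper's: a descent inequality, a recursion for the control-variate error, then balancing $\Delta_F/(\eta KT)$ against $A_{\mathrm{wt}}(\gamma)\eta$. The gap is Step 3, which you flag as the main obstacle and leave open; it is exactly where the stated noise terms come from. The mismatch $m^t-\nabla F$ enters the normalized bound with coefficient $2\bar\rho$ and \emph{no} factor $\eta$. So obtaining $\frac{8\bar\rho\sigma}{\sqrt{N\gamma T}}+\frac{8\bar\rho\sigma\sqrt\gamma}{\sqrt{KM}}$ requires averaging over all $M$ clients, and the mean-field recursion $e_i^{t+1}\approx(1-p\gamma)e_i^t+\cdots$ does not provide it. The true dynamics are $e_i^{t+1}=(1-\gamma I_i^t)e_i^t+\gamma I_i^t(b_i^t+n_i^t)-d_i^t$ with $I_i^t=\mathbf 1\{i\in S_t\}$. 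Here the local-trajectory bias $b_i^t$ and the server-step gradient change $d_i^t$ are not zero-mean and never average out, and averaging over $i$ creates the participation fluctuation $\frac{\gamma}{M}\sum_i(I_i^t-p)e_i^t$. Bounding $\mathbb E\|m^t-\nabla F\|_2$ through per-client second moments (i.e.\ by $\sqrt{B_t}$) only gives a floor of order $\sigma\sqrt\gamma/\sqrt{Kp}$. That is the order of the entry $\frac{8\sqrt2\bar\rho\sigma\sqrt\gamma}{\sqrt{Kp}}$ of $A_{\mathrm{wt}}$, which is harmless there only because it is multiplied by $\eta$.

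The missing idea is the paper's split $e_i^t=r_i^t+s_i^t$, on an enlarged probability space with virtual trajectories for inactive clients.
\begin{itemize}
\item The bias part $r_i^t$ is driven only by $b_i^t,d_i^t$ and has $r_i^1=0$. It is controlled client-wise by a recursion with contraction $p\gamma/4$, giving $\frac1T\sum_t\sqrt{R_t}\le\frac{8\sqrt2LK\bar\rho\eta}{p\gamma}$.
\item The noise part has per-client second moment satisfying $Q_{t+1}\le(1-p\gamma)Q_t+p\gamma^2\sigma^2/K$. Its client average obeys $v^{t+1}=(1-p\gamma)v^t+\delta_t+\xi_t$, with $\mathbb E\|\delta_t\|_2^2\le\frac{p\gamma^2}{M}Q_t$ (nonpositive covariances under sampling without replacement) and $\mathbb E\|\xi_t\|_2^2\le\frac{p\gamma^2\sigma^2}{KM}$ (independence across clients).
\item Summing this recursion with $V_1\le\sigma^2/M$ and applying Jensen to $\frac1T\sum_tV_t$ yields $\frac1T\sum_t\sqrt{V_t}\le\frac{4\sigma}{\sqrt{N\gamma T}}+\frac{4\sigma\sqrt\gamma}{\sqrt{KM}}$.
\item You still need the separate per-client bound on $\frac1T\sum_t\sqrt{B_t}$ for the $\eta$-weighted error $\|\nabla f_i(x_i^{t,k-1})-c_i^t\|_2$. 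Its transient $\frac{8\sigma}{p\gamma T}$, times $2\bar\rho\eta\le2\bar\rho$, is the source of $\frac{16\bar\rho\sigma}{p\gamma T}$. Its other pieces give the $\frac{8\sqrt2\bar\rho\sigma\sqrt\gamma}{\sqrt{Kp}}$ entry and half of the $\frac{32\sqrt2}{p\gamma}L\bar\rho^2K$ entry of $A_{\mathrm{wt}}$.
\end{itemize}
Without this bookkeeping the proposal cannot certify the explicit constants in the statement.

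Two smaller points.
\begin{itemize}
\item In Step 1, $-\eta v/\|v\|^{g,*}$ is not in $\mathcal D$ in general. Take $u\in\arg\min_{u\in\mathcal D}\langle v,u\rangle$ instead, so that $\langle v,u\rangle=-\eta\|v\|^{g,*}$. With that fix Steps 1--2 are workable. The three-point inequality for the correction call, with comparator $\hat\Delta_i^{t,k-1}$, gives $\langle g_i^{t,k},\Delta_i^{t,k-1}\rangle\le\langle g_i^{t,k},\hat\Delta_i^{t,k-1}\rangle$. The inequality for the prediction call then bounds each step's error by $2\bar\rho^2\eta^2$ (telescoping to $\frac12\|u\|_2^2$ per round for fixed $u$), which suffices after dividing by $\eta K$. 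The paper avoids per-step smoothness and the nonconvex averaging transfer altogether by linearizing $F$ once at $x^t$, with a comparator $u_t$ fixed over the round.
\item In Step 4, the case $\eta=1$ is wrong as written. $2\Delta_F/(KT)$ is not dominated by $\Delta_F/(KT)+2\sqrt{A_{\mathrm{wt}}\Delta_F/(KT)}$ when $\Delta_F/(KT)>4A_{\mathrm{wt}}$ and $\sigma=0$ (so the remaining listed terms vanish). Use instead that $\eta=1$ means $A_{\mathrm{wt}}\le\Delta_F/(KT)$, hence $A_{\mathrm{wt}}\le\sqrt{A_{\mathrm{wt}}\Delta_F/(KT)}$ and $\frac{\Delta_F}{KT}+A_{\mathrm{wt}}\le\frac{\Delta_F}{KT}+2\sqrt{A_{\mathrm{wt}}\Delta_F/(KT)}$.
\end{itemize}
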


This result is not problem-parameter-free, since its update-budget radius depends on $\Delta_F$, $L$, $\sigma$, and $p$.
Its role is complementary: when these quantities are available, NBCLT-FL admits a well-tuned regime with no auxiliary condition on the relative sizes of $N$, $K$, and $T$.
If $\gamma$ is restricted to a fixed interval $[\underline\gamma,1]\subset(0,1]$, then all $\gamma$-dependence is absorbed into constants, yielding the usual $\mathcal O(\epsilon^{-2})$ communication and $\mathcal O(\epsilon^{-4})$ sample-complexity exponents under the standard tuned choice of $(T,K)$.

\subsection{Notation and preliminaries}
Let $F(x)=\frac{1}{M}\sum_{i=1}^M f_i(x)$ and $p:=N/M$.
In round $t$, the server samples $S_t\subset[M]$ uniformly at random with $|S_t|=N$.

\paragraph{Local iterates.}
For each active client $i\in S_t$, let $x_i^{t,0}=x^t$.
Define local increments
\[
\Delta_i^{t,k-1}:=x_i^{t,k}-x_i^{t,k-1},
\qquad
\Delta_i^t:=\sum_{k=1}^K\Delta_i^{t,k-1}.
\]
The server update is
\begin{equation}
 x^{t+1}-x^t = \frac{1}{N}\sum_{i\in S_t}\Delta_i^t.
\label{eq:server_update_app_new}
\end{equation}

\paragraph{Projection operator.}
Throughout this appendix, $\Proj(\cdot)$ denotes the \emph{Euclidean projection} onto $\mathcal D$:
\[
\Proj(z):=\arg\min_{u\in\mathcal D}\|u-z\|_2.
\]
Recall $\mathcal D=\{\Delta:\|\Delta\|^{g}\le \eta\}$.

\paragraph{Norm equivalence.}
By Assumption~\ref{assump:norm},
in finite-dimensional spaces, for any $\Delta\in\mathcal D$,
\begin{equation}
\|\Delta\|_2 \le \bar\rho\,\|\Delta\|^{g}\le \bar\rho\eta.
\label{eq:D_bound_new}
\end{equation}
In particular, $\|\Delta_i^{t,k-1}\|_2\le \bar\rho\eta$ and
\begin{equation}
\|x^{t+1}-x^t\|_2
\le
\frac{1}{N}\sum_{i\in S_t}\sum_{k=1}^K\|\Delta_i^{t,k-1}\|_2
\le K\bar\rho\eta.
\label{eq:move_round_new}
\end{equation}

\paragraph{Stochastic gradients.}
We write the stochastic gradient queried at client $i$ as $\nabla f_i(X,\xi)$, where $\xi$ denotes the (mini-)batch sampling randomness.
We assume \emph{conditional unbiasedness} and \emph{bounded conditional variance}:
for all $x$ and all $i\in[M]$,
\begin{equation}
\mathbb E\!\left[\nabla f_i(x,\xi)\mid x\right]=\nabla f_i(x),
\qquad
\mathbb E\!\left[\big\|\nabla f_i(x,\xi)-\nabla f_i(x)\big\|_2^2 \mid x\right]\le \sigma^2.
\label{eq:bv_new}
\end{equation}

\paragraph{Filtration.}
Let $\{\mathcal F_{t,k}\}$ be the natural filtration generated by all randomness revealed
up to (right before) drawing the mini-batch $\xi_i^{t,k-1}$ used to query
$\nabla f_i(x_i^{t,k-1},\xi_i^{t,k-1})$ in round $t$.
Conditioned on $\mathcal F_{t,k}$, the fresh sample $\xi_i^{t,k-1}$ is independent of the past.

\subsection{Auxiliary lemmas}

\begin{lemma}[$L$-smoothness descent]
\label{lem:smooth_app_new}
If $F$ is $L$-smooth, then for all $t$,
\begin{align}
F(x^{t+1})
&\le
F(x^t)
+\langle \nabla F(x^t), x^{t+1}-x^t\rangle
+\frac{L}{2}\|x^{t+1}-x^t\|_2^2 \notag\\
&\le
F(x^t)
+\langle \nabla F(x^t), x^{t+1}-x^t\rangle
+\frac{L}{2}(K\bar\rho\eta)^2.
\label{eq:smooth_app_new}
\end{align}
\end{lemma}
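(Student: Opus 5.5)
The first inequality is the standard quadratic upper bound for $L$-smooth functions, applied to the global objective $F$ at $x^t$ and $x^{t+1}$. The second inequality follows by bounding the per-round displacement with \eqref{eq:move_round_new}.

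\textbf{Smoothness of $F$.} By Assumption~\ref{assump:smooth}, each $f_i$ is $L$-smooth in the Euclidean norm. The triangle inequality then gives
\[
\|\nabla F(x)-\nabla F(y)\|_2\le \frac1M\sum_{i=1}^M\|\nabla f_i(x)-\nabla f_i(y)\|_2\le L\|x-y\|_2,
\]
so $F$ is $L$-smooth as well.

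\textbf{First inequality.} Set $d:=x^{t+1}-x^t$. By the fundamental theorem of calculus along the segment,
\[
F(x^{t+1})-F(x^t)-\langle\nabla F(x^t),d\rangle=\int_0^1\langle \nabla F(x^t+sd)-\nabla F(x^t),d\rangle\,ds.
\]
Cauchy--Schwarz and the Lipschitz gradient bound the integrand by $sL\|d\|_2^2$. Integrating over $s\in[0,1]$ gives $\frac L2\|d\|_2^2$, which is the first line of \eqref{eq:smooth_app_new}.

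\textbf{Second inequality.} Substitute the bound $\|d\|_2\le K\bar\rho\eta$ from \eqref{eq:move_round_new}. That bound holds deterministically because of two facts. First, every local increment $\Delta_i^{t,k-1}$ is an output of $\Proj$ and therefore lies in $\mathcal D$, so \eqref{eq:D_bound_new} gives $\|\Delta_i^{t,k-1}\|_2\le\bar\rho\eta$. Second, the server step \eqref{eq:server_update_app_new} averages $N$ sums of $K$ such increments, so the triangle inequality applies. Since the bound holds pathwise, no expectation is needed, and squaring it finishes the proof.

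\textbf{Main point to check.} This is a routine lemma. The only step needing care is that $\Delta_i^{t,k-1}$, defined in the appendix as $x_i^{t,k}-x_i^{t,k-1}$, coincides with the projected output in Algorithm~\ref{alg:NBCLT-FL}, which guarantees feasibility in $\mathcal D$. This holds by line~12 of the algorithm, $x^{t,k}_i\leftarrow x^{t,k-1}_i+\Delta^{t,k-1}_i$, together with $x_i^{t,0}=x^t$.
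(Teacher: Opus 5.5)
Your proposal is correct and takes the same route as the paper: the standard quadratic upper bound for the $L$-smooth $F$, followed by the pathwise round-displacement bound \eqref{eq:move_round_new}, which rests on every projected increment lying in $\mathcal D$ and on the norm equivalence \eqref{eq:D_bound_new}. The paper states the lemma without proof, so your integral argument and your check that $F$ inherits $L$-smoothness from the $f_i$ just supply the routine details.
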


\begin{lemma}[Support function of $\mathcal D$]
\label{lem:support_app_new}
Let $\|\cdot\|^{g,*}$ be the dual norm induced by $\|\cdot\|^{g}$, and define
\[
 u_t\in\arg\min_{u\in\mathcal D}\langle \nabla F(x^t), u\rangle.
\]
Then
\begin{equation}
\langle \nabla F(x^t), u_t\rangle = -\eta\|\nabla F(x^t)\|^{g,*}.
\label{eq:support_app_new}
\end{equation}
\end{lemma}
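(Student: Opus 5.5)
The argument reduces to the definition of the dual norm together with a rescaling of the ball $\mathcal D$. Fix $t$ and write $v:=\nabla F(x^t)$. Since $\mathcal D=\{u:\|u\|^g\le\eta\}$, the substitution $u=\eta w$ gives a bijection between $\mathcal D$ and the unit ball $\{w:\|w\|^g\le1\}$. Hence
\[
\min_{u\in\mathcal D}\langle v,u\rangle=\eta\min_{\|w\|^g\le1}\langle v,w\rangle .
\]
The unit ball is symmetric under $w\mapsto -w$, so
\[
\min_{\|w\|^g\le1}\langle v,w\rangle=-\sup_{\|w\|^g\le1}\langle v,w\rangle=-\|v\|^{g,*},
\]
using the definition of the dual norm in Assumption~\ref{assump:norm}. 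Multiplying by $\eta$ yields the value $-\eta\|v\|^{g,*}$.

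It remains to check that the minimizer $u_t$ exists, so that $\langle v,u_t\rangle$ actually equals this infimum. The set $\mathcal D$ is closed because $\|\cdot\|^g$ is a norm on $\mathbb R^d$ and hence continuous. It is bounded by the equivalence $\|u\|_2\le\bar\rho\|u\|^g\le\bar\rho\eta$ from Assumption~\ref{assump:norm}, so $\mathcal D$ is compact. The map $u\mapsto\langle v,u\rangle$ is continuous, so the argmin is nonempty, and any element $u_t$ attains the minimum value. This gives \eqref{eq:support_app_new}.

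The only point needing any care is attainment. Without compactness, "$\arg\min$" might be empty, and the identity would hold only as an infimum. Compactness, supplied by the norm equivalence, removes this issue. The degenerate case $v=0$ is consistent with the formula, since then every $u\in\mathcal D$ is a minimizer and both sides equal $0$.
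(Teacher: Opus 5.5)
Your proof is correct. The rescaling $u=\eta w$, the symmetry of the unit ball, and the definition of $\|\cdot\|^{g,*}$ give the value $-\eta\|\nabla F(x^t)\|^{g,*}$, and compactness of $\mathcal D$ (from the norm equivalence in Assumption~\ref{assump:norm}) ensures a minimizer $u_t$ exists. The paper states this lemma without proof and treats it as the standard support-function identity, so your argument supplies exactly that omitted step, including the attainment detail the paper leaves implicit.
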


\begin{lemma}[Projection three-point inequality]
\label{lem:proj_app_new}
Let $\Proj(\cdot)$ be the Euclidean projection onto a nonempty closed convex set $\mathcal D$.
Let $y=\Proj(x-g)$. Then for any $u\in\mathcal D$,
\begin{equation}
\langle g, y-u\rangle
\le
\frac{1}{2}\Big(\|u-x\|_2^2-\|u-y\|_2^2-\|y-x\|_2^2\Big).
\label{eq:proj_three_app_new}
\end{equation}
\end{lemma}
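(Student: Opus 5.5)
The plan is to derive the three-point inequality from the variational characterization of Euclidean projection onto a closed convex set. For $y=\Proj(x-g)$, the first-order optimality condition of the strongly convex problem $\min_{u\in\mathcal D}\frac12\|u-(x-g)\|_2^2$ states that, for every $u\in\mathcal D$,
\[
\langle y-(x-g),\,u-y\rangle\ge 0,
\qquad\text{equivalently}\qquad
\langle g,\,y-u\rangle\le\langle y-x,\,u-y\rangle .
\]
I would justify this obstuse-angle condition in the standard way. Take $u\in\mathcal D$ and set $y_s:=y+s(u-y)\in\mathcal D$ for $s\in[0,1]$; membership uses convexity of $\mathcal D$. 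Minimality of $y$ gives $\|y_s-(x-g)\|_2^2\ge\|y-(x-g)\|_2^2$. Expanding, dividing by $s>0$, and letting $s\downarrow0$ yields the inequality. Existence and uniqueness of $y$ follow from $\mathcal D$ being nonempty, closed and convex.

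Next I would rewrite the right-hand side with the polarization identity. With $a:=y-x$ and $b:=u-y$, we have $a+b=u-x$, so
\[
2\langle a,b\rangle=\|a+b\|_2^2-\|a\|_2^2-\|b\|_2^2 .
\]
This gives
\[
\langle y-x,\,u-y\rangle=\tfrac12\bigl(\|u-x\|_2^2-\|y-x\|_2^2-\|u-y\|_2^2\bigr).
\]
Combining it with the optimality inequality gives exactly \eqref{eq:proj_three_app_new}.

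No step is a genuine obstacle. The only point needing care is the sign bookkeeping in the optimality condition: one must write $\langle y-(x-g),u-y\rangle=\langle y-x,u-y\rangle+\langle g,u-y\rangle$ and move the $g$-term to the other side, which flips $u-y$ into $y-u$. In our application, $\mathcal D$ is a norm ball, hence nonempty, closed and convex, so the hypotheses hold.
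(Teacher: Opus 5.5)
Your proof is correct and follows the same route as the paper: the variational inequality for Euclidean projection, followed by the polarization identity. Your intermediate signs are actually the right ones. The paper's written proof states the rearranged bound as $\langle g,y-u\rangle\le\langle x-y,u-y\rangle$ and the identity as $2\langle x-y,u-y\rangle=\|u-x\|_2^2-\|u-y\|_2^2-\|y-x\|_2^2$; these two sign slips cancel. Your version with $\langle y-x,u-y\rangle$ is correct at every step.
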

\begin{proof}
Optimality of Euclidean projection implies
$\langle (x-g)-y,\,u-y\rangle\le 0$ for all $u\in\mathcal D$.
Rearranging gives
$\langle g, y-u\rangle \le \langle x-y, u-y\rangle$.
Finally use
$2\langle x-y, u-y\rangle
=\|u-x\|_2^2-\|u-y\|_2^2-\|y-x\|_2^2$.
\end{proof}

\begin{lemma}[One-gradient predict-and-correct inequality]
\label{lem:pc_app_new}
Fix $(t,i)$ and set $\hat\Delta_i^{t,-1}=0$.
For $k=1,\dots,K$, Algorithm~\ref{alg:NBCLT-FL} generates
\[
\hat\Delta_i^{t,k-1}=\Proj\!\big(\hat\Delta_i^{t,k-2}-g_i^{t,k}\big),
\qquad
\Delta_i^{t,k-1}=\Proj\!\big(\hat\Delta_i^{t,k-1}-g_i^{t,k}\big).
\]
Then for any $u\in\mathcal D$,
\begin{align}
\sum_{k=1}^K \langle g_i^{t,k}, \Delta_i^{t,k-1}-u\rangle
\le
\frac{\|u\|_2^2}{2}
+\frac{1}{2}\sum_{k=2}^K \|g_i^{t,k}-g_i^{t,k-1}\|_2^2
+2K(\bar\rho\eta)^2.
\label{eq:pc_key_new}
\end{align}
\end{lemma}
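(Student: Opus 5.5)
The plan is to split each summand at the auxiliary (prediction) variable and apply the projection three-point inequality, Lemma~\ref{lem:proj_app_new}, to the prediction and correction projections separately. To lighten notation, fix $(t,i)$ and write $h_k:=\hat\Delta_i^{t,k-1}$ for $k=0,1,\dots,K$, so that $h_0=\hat\Delta_i^{t,-1}=0$. Write also $d_k:=\Delta_i^{t,k-1}$ and $g_k:=g_i^{t,k}$. The recursion then reads $h_k=\Proj(h_{k-1}-g_k)$ and $d_k=\Proj(h_k-g_k)$, and every $h_k,d_k$ lies in $\mathcal D$. For each $k$ I would write
\[
\langle g_k, d_k-u\rangle=\langle g_k, h_k-u\rangle+\langle g_k, d_k-h_k\rangle
\]
and bound the two pieces separately.

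\emph{Prediction piece.} Apply Lemma~\ref{lem:proj_app_new} with $x=h_{k-1}$, $g=g_k$ and $y=h_k$, at the comparator $u\in\mathcal D$. This gives
\[
\langle g_k,h_k-u\rangle\le\tfrac12\bigl(\|u-h_{k-1}\|_2^2-\|u-h_k\|_2^2-\|h_k-h_{k-1}\|_2^2\bigr).
\]
Summing over $k=1,\dots,K$, the right-hand side telescopes to at most $\tfrac12\|u-h_0\|_2^2=\tfrac12\|u\|_2^2$, because $h_0=0$ and the remaining terms are nonpositive.

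\emph{Correction piece.} Apply Lemma~\ref{lem:proj_app_new} again, now with $x=h_k$, $g=g_k$ and $y=d_k$, and take as comparator the feasible point $h_k\in\mathcal D$ itself. The $\|u-x\|_2^2$ term then vanishes, and we get
\[
\langle g_k,d_k-h_k\rangle\le\tfrac12\bigl(0-\|h_k-d_k\|_2^2-\|d_k-h_k\|_2^2\bigr)\le 0.
\]
This is the key observation. Both projections use the same gradient surrogate, and the correction starts from a point that is already feasible. The correction step therefore never increases the linearized value relative to the prediction.

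Adding the two pieces gives $\sum_{k=1}^K\langle g_k,d_k-u\rangle\le\tfrac12\|u\|_2^2$. This is stronger than \eqref{eq:pc_key_new}, since the extra terms $\tfrac12\sum_{k\ge2}\|g_k-g_{k-1}\|_2^2$ and $2K(\bar\rho\eta)^2$ are nonnegative. Presumably they are stated to match a more general optimistic-type template used later, or to leave slack for the inexact-projection variant. I do not expect a real obstacle here. The only points to check are that $h_k\in\mathcal D$, so it is an admissible comparator in Lemma~\ref{lem:proj_app_new}, and that the telescoping starts correctly from $h_0=0$ given the indexing convention $\hat\Delta_i^{t,-1}=0$. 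If one prefers to exhibit the stated form directly, one can keep the nonpositive residual terms and use $\|h_k\|_2,\|d_k\|_2\le\bar\rho\eta$ from \eqref{eq:D_bound_new}. Those bounds absorb any leftover cross terms into $2K(\bar\rho\eta)^2$, and the gradient-difference term is simply added as slack.
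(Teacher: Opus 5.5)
Your proof is correct, and it gives a stronger bound than the lemma states. It does so by a different decomposition from the paper's.

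\textbf{The paper's route.} The paper applies Lemma~\ref{lem:proj_app_new} only to the correction projection, with anchor $\hat\Delta_i^{t,k-1}$. It then relates that anchor to the previous correction output $\Delta_i^{t,k-2}$. Both are projections of $\hat\Delta_i^{t,k-2}$ shifted by $g_i^{t,k}$ and $g_i^{t,k-1}$ respectively, so non-expansiveness of $\Proj$ gives $\|\hat\Delta_i^{t,k-1}-\Delta_i^{t,k-2}\|_2\le\|g_i^{t,k}-g_i^{t,k-1}\|_2$. Inserting this requires a Young step that doubles the distance terms. They then no longer telescope exactly, and the leftovers are bounded by the diameter of $\mathcal D$. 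This is where both the gradient-variation sum and $2K(\bar\rho\eta)^2$ come from. As written, that Young step actually yields coefficient $1$ rather than $\tfrac12$ on the gradient-variation sum; your bound makes this discrepancy moot.

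\textbf{Your route.} You split at the prediction variable instead. The update $h_k=\Proj(h_{k-1}-g_k)$ is an implicit, proximal-point-type step scored at the point it produces. Its three-point inequality therefore telescopes exactly from $h_0=0$. The correction piece is nonpositive against the feasible comparator $h_k$. This uses the fact that $g_i^{t,k}$ is already computed when $\hat\Delta_i^{t,k-1}$ is formed, which the paper's optimistic-style argument does not exploit. A variation term is the price one pays when a step is committed before the current gradient is known, and that is not the case here.

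\textbf{What your route buys.}
\begin{itemize}
\item You get the deterministic bound $\tfrac12\|u\|_2^2$. Downstream this gives $E_t^{(a)}\le(\bar\rho\eta)^2/2$ without invoking Lemma~\ref{lem:gvar_app_new}, tightening only constants and lower-order terms.
\item It extends cleanly to inexact projections. Applying \eqref{eq:single_call_approx_three_point} in both pieces gives $\tfrac12\|u\|_2^2+\sum_{k=1}^K(\hat\varepsilon_i^{t,k}+\varepsilon_i^{t,k})$ directly. This avoids the approximate non-expansiveness step and the constants $2$ and $5$ of Lemma~\ref{lem:inexact_pc_from_single_call}.
\end{itemize}

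\textbf{What the paper's route buys.} Mainly alignment with the optimistic template used to motivate predict-and-correct. Note that your argument shows the inequality would hold even with $h_k$ in place of $\Delta_i^{t,k-1}$. So the correction projection is not what drives this particular lemma.
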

\begin{proof}
Apply Lemma~\ref{lem:proj_app_new} to
$\Delta_i^{t,k-1}=\Proj(\hat\Delta_i^{t,k-1}-g_i^{t,k})$ with $x=\hat\Delta_i^{t,k-1}$:
\[
\langle g_i^{t,k}, \Delta_i^{t,k-1}-u\rangle
\le
\frac{1}{2}\Big(\|u-\hat\Delta_i^{t,k-1}\|_2^2-\|u-\Delta_i^{t,k-1}\|_2^2\Big).
\]
For $k=1$, $\hat\Delta_i^{t,0}=\Proj(-g_i^{t,1})\in\mathcal D$, hence
$\|u-\hat\Delta_i^{t,0}\|_2^2\le (2\bar\rho\eta)^2$.

For $k\ge 2$, by non-expansiveness of Euclidean projection,
\[
\|\hat\Delta_i^{t,k-1}-\Delta_i^{t,k-2}\|_2
=
\Big\|\Proj(\hat\Delta_i^{t,k-2}-g_i^{t,k})
-\Proj(\hat\Delta_i^{t,k-2}-g_i^{t,k-1})\Big\|_2
\le
\|g_i^{t,k}-g_i^{t,k-1}\|_2.
\]
Thus
\begin{equation}
\begin{aligned}
\|u-\hat\Delta_i^{t,k-1}\|_2^2
&\le 2\|u-\Delta_i^{t,k-2}\|_2^2 + 2\|\hat\Delta_i^{t,k-1}-\Delta_i^{t,k-2}\|_2^2 \\
&\le 2\|u-\Delta_i^{t,k-2}\|_2^2 + 2\|g_i^{t,k}-g_i^{t,k-1}\|_2^2.
\end{aligned}
\label{eq:bound_u_delta}
\end{equation}
Summing the inequality \eqref{eq:bound_u_delta} over $k=1,\dots,K$, and treating the $k=1$ case separately, we have:
\begin{align}
\sum_{k=1}^K \langle g_i^{t,k}, \Delta_i^{t,k-1}-u\rangle
&\le
\underbrace{\frac{1}{2}\|u-\hat\Delta_i^{t,0}\|_2^2 - \frac{1}{2}\|u-\Delta_i^{t,0}\|_2^2}_{k=1}
+ \sum_{k=2}^K \frac{1}{2}\Big(\underbrace{\|u-\hat\Delta_i^{t,k-1}\|_2^2}_{\text{substitute } \eqref{eq:bound_u_delta}} - \|u-\Delta_i^{t,k-1}\|_2^2 \Big) \notag \\
&\le
\frac{1}{2}\|u-\hat\Delta_i^{t,0}\|_2^2 - \frac{1}{2}\|u-\Delta_i^{t,0}\|_2^2 \notag \\
&\quad + \sum_{k=2}^K \Big( \|u-\Delta_i^{t,k-2}\|_2^2 + \|g_i^{t,k}-g_i^{t,k-1}\|_2^2 - \frac{1}{2}\|u-\Delta_i^{t,k-1}\|_2^2 \Big). \label{eq:sum_expansion}
\end{align}
Rearranging the distance terms in \eqref{eq:sum_expansion} reveals a telescoping sum:
\begin{align*}
&\quad -\frac{1}{2}\|u-\Delta_i^{t,0}\|_2^2 + \sum_{k=2}^K \|u-\Delta_i^{t,k-2}\|_2^2 - \sum_{k=2}^K \frac{1}{2}\|u-\Delta_i^{t,k-1}\|_2^2 \\
&= -\frac{1}{2}\|u-\Delta_i^{t,0}\|_2^2 + \Big(\|u-\Delta_i^{t,0}\|_2^2 + \sum_{j=1}^{K-2}\|u-\Delta_i^{t,j}\|_2^2 \Big) - \Big(\sum_{j=1}^{K-2}\frac{1}{2}\|u-\Delta_i^{t,j}\|_2^2 + \frac{1}{2}\|u-\Delta_i^{t,K-1}\|_2^2 \Big) \\
&= \frac{1}{2}\|u-\Delta_i^{t,0}\|_2^2 + \sum_{j=1}^{K-2}\frac{1}{2}\|u-\Delta_i^{t,j}\|_2^2 - \frac{1}{2}\|u-\Delta_i^{t,K-1}\|_2^2 \\
&\le \sum_{j=0}^{K-2} \frac{1}{2}(2\bar\rho\eta)^2 \le 2K(\bar\rho\eta)^2,
\end{align*}
where we used the diameter bound $\|u-\Delta_i^{t,j}\|_2 \le 2\bar\rho\eta$ and dropped the final negative term.
Substituting this back into \eqref{eq:sum_expansion}, and using the bound $\|u-\hat\Delta_i^{t,0}\|_2^2 \le \|u\|_2^2 + (2\bar\rho\eta)^2$, we obtain \eqref{eq:pc_key_new}.
\end{proof}

\begin{lemma}[Single-call approximate projection certificate]
\label{lem:single_call_projection_residual}
Let $\mathcal D=\{u:\|u\|^g\le \eta\}$ and let $z=x-g$.
Suppose an inexact projection call returns $y\in\mathcal D$ with residual
\begin{align}
\operatorname{res}_{\mathcal D}(z,y)
:=
\left[\sup_{u\in\mathcal D}\langle z-y,u-y\rangle\right]_+
\le \varepsilon.
\label{eq:single_call_residual_def}
\end{align}
Then, for every $u\in\mathcal D$,
\begin{align}
\langle g,y-u\rangle
\le
\frac12\Big(\|u-x\|_2^2-\|u-y\|_2^2-\|y-x\|_2^2\Big)
+\varepsilon.
\label{eq:single_call_approx_three_point}
\end{align}
Moreover, if $p=\Proj(z)$ is the exact Euclidean projection of $z$ onto $\mathcal D$, then
\begin{align}
\|y-p\|_2^2\le \varepsilon.
\label{eq:single_call_close_exact_projection}
\end{align}
Consequently, if $y_a,y_b$ are inexact projections of $z_a,z_b$ with residuals at most $\varepsilon_a,\varepsilon_b$, respectively, then
\begin{align}
\|y_a-y_b\|_2
\le
\|z_a-z_b\|_2+\sqrt{\varepsilon_a}+\sqrt{\varepsilon_b}.
\label{eq:approx_nonexpansive}
\end{align}
For the norm ball $\mathcal D=\{u:\|u\|^g\le \eta\}$, the residual in \eqref{eq:single_call_residual_def} can be evaluated as
\begin{align}
\operatorname{res}_{\mathcal D}(z,y)
=
\left[
\eta\|z-y\|^{g,*}-\langle z-y,y\rangle
\right]_+.
\label{eq:single_call_residual_closed_form}
\end{align}
\end{lemma}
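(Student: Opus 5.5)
The plan is to treat each of the four claims of Lemma~\ref{lem:single_call_projection_residual} as a perturbation of the exact variational inequality $\langle z-y,u-y\rangle\le 0$ used in Lemma~\ref{lem:proj_app_new}. The residual $\varepsilon$ measures exactly how far that inequality fails.

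\emph{Approximate three-point inequality.} By definition of the residual, $\langle z-y,u-y\rangle\le\varepsilon$ for every $u\in\mathcal D$. Substituting $z=x-g$ gives $\langle g,y-u\rangle\le\langle x-y,u-y\rangle+\varepsilon$. The polarization identity $2\langle x-y,u-y\rangle=\|u-x\|_2^2-\|u-y\|_2^2-\|y-x\|_2^2$, as in Lemma~\ref{lem:proj_app_new}, then yields \eqref{eq:single_call_approx_three_point}.

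\emph{Closeness to the exact projection.} Write $p=\Proj(z)$; it is feasible. Apply the residual bound with $u=p$ to get $\langle z-y,p-y\rangle\le\varepsilon$. Apply the exact optimality condition at $p$ with test point $y\in\mathcal D$ to get $\langle z-p,y-p\rangle\le0$, equivalently $\langle p-z,p-y\rangle\le 0$. Adding the two inequalities gives $\langle p-y,p-y\rangle\le\varepsilon$, which is \eqref{eq:single_call_close_exact_projection}. Claim \eqref{eq:approx_nonexpansive} then follows from the triangle inequality $\|y_a-y_b\|_2\le\|y_a-p_a\|_2+\|p_a-p_b\|_2+\|p_b-y_b\|_2$, combined with non-expansiveness of the exact Euclidean projection.

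\emph{Closed form for norm balls.} Expand $\sup_{u\in\mathcal D}\langle z-y,u-y\rangle=\sup_{\|u\|^g\le\eta}\langle z-y,u\rangle-\langle z-y,y\rangle$. By the definition of the dual norm and homogeneity, the supremum equals $\eta\|z-y\|^{g,*}$, the same support-function computation as in Lemma~\ref{lem:support_app_new}. Taking the positive part gives \eqref{eq:single_call_residual_closed_form}.

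None of these steps is a real obstacle. The only point needing care is the sign bookkeeping in the second step, where the two optimality inequalities must be combined so that the cross terms cancel to a squared norm. The positive part in the definition of $\operatorname{res}_{\mathcal D}$ is harmless: it only guarantees $\varepsilon\ge0$, so every $\sqrt{\varepsilon}$ is well defined.
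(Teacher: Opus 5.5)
Your argument follows the paper's proof step for step in all four parts. The closeness bound \eqref{eq:single_call_close_exact_projection}, the approximate non-expansiveness, and the closed form for norm balls are correct as you wrote them. Your second step adds the two optimality inequalities where the paper subtracts them, which is the same computation.

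The first step contains two sign errors that cancel. The paper's own proof, and its Lemma~\ref{lem:proj_app_new}, make the same slip. From $\langle (x-y)-g,u-y\rangle\le\varepsilon$ you get $\langle x-y,u-y\rangle+\langle g,y-u\rangle\le\varepsilon$, that is,
\[
\langle g,y-u\rangle\le\langle x-y,y-u\rangle+\varepsilon ,
\]
and not $\langle x-y,u-y\rangle+\varepsilon$. The true polarization identity is $2\langle x-y,u-y\rangle=\|y-x\|_2^2+\|u-y\|_2^2-\|u-x\|_2^2$, which is the negative of what you wrote.

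Both of your intermediate claims are false in general. Take one dimension with $\mathcal D=[-1,1]$, $x=0$, $g=1/2$, so that $y=z=-1/2$ and $\varepsilon=0$, and take $u=-1$. Then $\langle g,y-u\rangle=1/4$, while $\langle x-y,u-y\rangle=-1/4$.

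The fix is to carry $\langle x-y,y-u\rangle$ and use $2\langle x-y,y-u\rangle=\|u-x\|_2^2-\|u-y\|_2^2-\|y-x\|_2^2$. This gives \eqref{eq:single_call_approx_three_point} directly, and nothing else in your proposal needs to change.
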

\begin{proof}
The residual condition gives
$\langle z-y,u-y\rangle\le \varepsilon$ for all $u\in\mathcal D$.
Since $z=x-g$, this implies
\[
\langle (x-y)-g,u-y\rangle\le \varepsilon,
\]
and hence
\[
\langle g,y-u\rangle
\le
\langle x-y,u-y\rangle+\varepsilon.
\]
Using the identity
$2\langle x-y,u-y\rangle=\|u-x\|_2^2-\|u-y\|_2^2-\|y-x\|_2^2$
gives \eqref{eq:single_call_approx_three_point}.

Let $p=\Proj(z)$.
The exact projection optimality condition gives
$\langle z-p,y-p\rangle\le 0$ because $y\in\mathcal D$.
Taking $u=p$ in the approximate condition gives
$\langle z-y,p-y\rangle\le \varepsilon$, or equivalently
$\langle z-y,y-p\rangle\ge -\varepsilon$.
Therefore,
\[
\|y-p\|_2^2
=
\langle y-p,y-p\rangle
=
\langle z-p,y-p\rangle-
\langle z-y,y-p\rangle
\le \varepsilon,
\]
which proves \eqref{eq:single_call_close_exact_projection}.
The approximate non-expansiveness bound \eqref{eq:approx_nonexpansive} follows by the triangle inequality and the non-expansiveness of the exact projection:
\[
\|y_a-y_b\|_2
\le
\|y_a-\Proj(z_a)\|_2
+
\|\Proj(z_a)-\Proj(z_b)\|_2
+
\|\Proj(z_b)-y_b\|_2
\le
\sqrt{\varepsilon_a}+\|z_a-z_b\|_2+
\sqrt{\varepsilon_b}.
\]
Finally,
\[
\sup_{u\in\mathcal D}\langle z-y,u-y\rangle
=
\sup_{\|u\|^g\le \eta}\langle z-y,u\rangle-
\langle z-y,y\rangle
=
\eta\|z-y\|^{g,*}-\langle z-y,y\rangle,
\]
which proves \eqref{eq:single_call_residual_closed_form} after applying the positive part.
\end{proof}

\begin{lemma}[Inexact predict-and-correct from single-call residuals]
\label{lem:inexact_pc_from_single_call}
Fix $(t,i)$ and set $\hat\Delta_i^{t,-1}=0$.
Suppose the prediction and correction calls in the local trajectory return feasible points
$\hat\Delta_i^{t,k-1},\Delta_i^{t,k-1}\in\mathcal D$ for $k=1,\ldots,K$.
Let
\begin{align*}
\hat z_i^{t,k}
:=
\hat\Delta_i^{t,k-2}-g_i^{t,k},
\qquad
z_i^{t,k}
:=
\hat\Delta_i^{t,k-1}-g_i^{t,k},
\end{align*}
and define
\begin{align*}
\hat\varepsilon_i^{t,k}
:=
\left[
\sup_{v\in\mathcal D}
\left\langle \hat z_i^{t,k}-\hat\Delta_i^{t,k-1},v-\hat\Delta_i^{t,k-1}\right\rangle
\right]_+,
\qquad
\varepsilon_i^{t,k}
:=
\left[
\sup_{v\in\mathcal D}
\left\langle z_i^{t,k}-\Delta_i^{t,k-1},v-\Delta_i^{t,k-1}\right\rangle
\right]_+.
\end{align*}
Then for every $u\in\mathcal D$,
\begin{align}
\sum_{k=1}^K
\left\langle g_i^{t,k},\Delta_i^{t,k-1}-u\right\rangle
\le\;&
\frac{\|u\|_2^2}{2}
+2\sum_{k=2}^K
\|g_i^{t,k}-g_i^{t,k-1}\|_2^2
+2K(\bar\rho\eta)^2
+5\sum_{k=1}^K
\left(\hat\varepsilon_i^{t,k}+\varepsilon_i^{t,k}\right).
\label{eq:inexact_pc_single_call_bound}
\end{align}
\end{lemma}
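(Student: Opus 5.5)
The plan is to rerun the proof of Lemma~\ref{lem:pc_app_new}, making two substitutions. First, the exact three-point inequality (Lemma~\ref{lem:proj_app_new}) is replaced by its approximate version \eqref{eq:single_call_approx_three_point} from Lemma~\ref{lem:single_call_projection_residual}. Second, non-expansiveness is replaced by the approximate non-expansiveness \eqref{eq:approx_nonexpansive}.

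For each $k$, we apply \eqref{eq:single_call_approx_three_point} to the correction call, with $x=\hat\Delta_i^{t,k-1}$, $g=g_i^{t,k}$, $y=\Delta_i^{t,k-1}$ and residual $\varepsilon_i^{t,k}$. Dropping $-\|y-x\|_2^2$, this gives
\[
\langle g_i^{t,k},\Delta_i^{t,k-1}-u\rangle\le \tfrac12\bigl(\|u-\hat\Delta_i^{t,k-1}\|_2^2-\|u-\Delta_i^{t,k-1}\|_2^2\bigr)+\varepsilon_i^{t,k}.
\]
For $k=1$ we bound $\|u-\hat\Delta_i^{t,0}\|_2^2$ exactly as before, since $\hat\Delta_i^{t,0}\in\mathcal D$ by feasibility. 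This yields at most $\|u\|_2^2+(2\bar\rho\eta)^2$ after the same (crude) accounting used in Lemma~\ref{lem:pc_app_new}.

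For $k\ge2$ we compare $\hat\Delta_i^{t,k-1}$ with $\Delta_i^{t,k-2}$. These are inexact projections of $\hat z_i^{t,k}=\hat\Delta_i^{t,k-2}-g_i^{t,k}$ and $z_i^{t,k-1}=\hat\Delta_i^{t,k-2}-g_i^{t,k-1}$, respectively, so \eqref{eq:approx_nonexpansive} gives
\[
\|\hat\Delta_i^{t,k-1}-\Delta_i^{t,k-2}\|_2\le\|g_i^{t,k}-g_i^{t,k-1}\|_2+\sqrt{\hat\varepsilon_i^{t,k}}+\sqrt{\varepsilon_i^{t,k-1}}.
\]
We then square using $(a+b+c)^2\le 2a^2+4b^2+4c^2$:
\[
\|\hat\Delta_i^{t,k-1}-\Delta_i^{t,k-2}\|_2^2\le 2\|g_i^{t,k}-g_i^{t,k-1}\|_2^2+4\hat\varepsilon_i^{t,k}+4\varepsilon_i^{t,k-1}.
\]
Inserting this into $\|u-\hat\Delta_i^{t,k-1}\|_2^2\le 2\|u-\Delta_i^{t,k-2}\|_2^2+2\|\hat\Delta_i^{t,k-1}-\Delta_i^{t,k-2}\|_2^2$ and multiplying by $\tfrac12$, each step $k\ge2$ contributes three things:
\begin{itemize}
\item the distance term $\|u-\Delta_i^{t,k-2}\|_2^2$;
\item the gradient-difference term $2\|g_i^{t,k}-g_i^{t,k-1}\|_2^2$;
\item the residual term $4\hat\varepsilon_i^{t,k}+4\varepsilon_i^{t,k-1}$.
\end{itemize}
The constant $2$ on the gradient-difference term explains the factor $2$ in \eqref{eq:inexact_pc_single_call_bound}, in place of the $\tfrac12$ in Lemma~\ref{lem:pc_app_new}.

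The distance terms telescope exactly as in the proof of Lemma~\ref{lem:pc_app_new}, since that bookkeeping used only feasibility of all iterates and the diameter bound $\|u-\Delta_i^{t,j}\|_2\le2\bar\rho\eta$. They therefore still total at most $2K(\bar\rho\eta)^2$.

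It remains to collect the residuals. Each $\varepsilon_i^{t,k}$ appears once directly from the three-point step, and once more with coefficient $4$ when it plays the role of $\varepsilon_i^{t,k-1}$ at the next step. Each $\hat\varepsilon_i^{t,k}$ appears with coefficient $4$. Hence the total is at most $5\sum_k(\hat\varepsilon_i^{t,k}+\varepsilon_i^{t,k})$, matching the stated constant.

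The main obstacle is this coefficient bookkeeping. The squaring constants must be chosen so that the gradient-difference coefficient stays at $2$ and every residual picks up a coefficient of at most $5$ after the index shift. If the $\{2,4,4\}$ split fails to close somewhere, I would instead use the weighted Young inequality $(a+b)^2\le(1+\theta)a^2+(1+\theta^{-1})b^2$ with a tuned $\theta$. Everything else is a line-by-line reuse of the exact proof.
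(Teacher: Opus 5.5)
Your proposal is correct and is essentially the paper's own proof. It uses the approximate three-point inequality on each correction call and approximate non-expansiveness between the step-$k$ prediction and step-$(k-1)$ correction calls. It then uses the split $2\|g_i^{t,k}-g_i^{t,k-1}\|_2^2+4\hat\varepsilon_i^{t,k}+4\varepsilon_i^{t,k-1}$, which the paper obtains identically from $(a+b)^2\le 2a^2+2b^2$ and $(\sqrt a+\sqrt b)^2\le 2(a+b)$, and finishes with the same index-shifted residual count giving the factor $5$.

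The only bookkeeping to keep tidy is the distance terms. The leftover terms are $K-1$ diameter terms, so they total at most $2(K-1)(\bar\rho\eta)^2$. Together with $\tfrac12\|u-\hat\Delta_i^{t,0}\|_2^2\le 2(\bar\rho\eta)^2$, this gives exactly $2K(\bar\rho\eta)^2$, with $\|u\|_2^2/2$ as pure slack.
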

\begin{proof}
For the correction call at step $k$, Lemma~\ref{lem:single_call_projection_residual} gives, with $x=\hat\Delta_i^{t,k-1}$ and $y=\Delta_i^{t,k-1}$,
\begin{align}
\left\langle g_i^{t,k},\Delta_i^{t,k-1}-u\right\rangle
\le
\frac12\left(
\|u-\hat\Delta_i^{t,k-1}\|_2^2
-
\|u-\Delta_i^{t,k-1}\|_2^2
\right)
+
\varepsilon_i^{t,k},
\label{eq:inexact_correction_three_point}
\end{align}
where we dropped the nonpositive term $-\|\Delta_i^{t,k-1}-\hat\Delta_i^{t,k-1}\|_2^2/2$.
For $k=1$, feasibility gives $\|u-\hat\Delta_i^{t,0}\|_2\le 2\bar\rho\eta$.
For $k\ge 2$, apply the approximate non-expansiveness bound in Lemma~\ref{lem:single_call_projection_residual} to the prediction call at step $k$ and the correction call at step $k-1$.
Their inputs differ by $g_i^{t,k}-g_i^{t,k-1}$, and hence
\begin{align}
\|\hat\Delta_i^{t,k-1}-\Delta_i^{t,k-2}\|_2
\le
\|g_i^{t,k}-g_i^{t,k-1}\|_2
+
\sqrt{\hat\varepsilon_i^{t,k}}
+
\sqrt{\varepsilon_i^{t,k-1}}.
\label{eq:inexact_pc_approx_nonexp_step}
\end{align}
Using $(a+b)^2\le 2a^2+2b^2$ and $(\sqrt a+\sqrt b)^2\le 2(a+b)$, we get
\begin{align}
\|u-\hat\Delta_i^{t,k-1}\|_2^2
&\le
2\|u-\Delta_i^{t,k-2}\|_2^2
+2\|\hat\Delta_i^{t,k-1}-\Delta_i^{t,k-2}\|_2^2
\notag\\
&\le
2\|u-\Delta_i^{t,k-2}\|_2^2
+4\|g_i^{t,k}-g_i^{t,k-1}\|_2^2
+8\left(\hat\varepsilon_i^{t,k}+\varepsilon_i^{t,k-1}\right).
\label{eq:inexact_u_hat_delta_bound}
\end{align}
Substituting \eqref{eq:inexact_u_hat_delta_bound} into \eqref{eq:inexact_correction_three_point} for $k\ge 2$, and treating $k=1$ separately, yields the same distance telescoping as in Lemma~\ref{lem:pc_app_new}, except that the gradient-variation term is multiplied by a universal constant and the residual terms are added.
The distance terms are bounded by $2K(\bar\rho\eta)^2$ using the diameter of $\mathcal D$.
The residual contribution is bounded by
\[
\sum_{k=1}^K\varepsilon_i^{t,k}
+4\sum_{k=2}^K\left(\hat\varepsilon_i^{t,k}+\varepsilon_i^{t,k-1}\right)
\le
5\sum_{k=1}^K\left(\hat\varepsilon_i^{t,k}+\varepsilon_i^{t,k}\right),
\]
which proves \eqref{eq:inexact_pc_single_call_bound}.
\end{proof}

\begin{lemma}[Bounding gradient variation]

\label{lem:gvar_app_new}
In Algorithm~\ref{alg:NBCLT-FL}, for any $(t,i,k\ge 2)$,
\begin{align}
\mathbb E\Big[\|g_i^{t,k}-g_i^{t,k-1}\|_2^2\Big]
\le
2(1-\beta_1)^2 L^2 \,\mathbb E\|x_i^{t,k-1}-x_i^{t,k-2}\|_2^2
+4(1-\beta_1)^2\sigma^2.
\label{eq:gvar_app_new}
\end{align}
\end{lemma}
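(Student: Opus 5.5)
We expand the difference of the two consecutive gradient surrogates from the definition of $g$ in Algorithm~\ref{alg:NBCLT-FL}. At round $t$ the control variates $c_i^t$ and $m^t$ stay fixed during the local loop, so they cancel:
\[
g_i^{t,k}-g_i^{t,k-1}=(1-\beta_1)\Big(\nabla f_i(x_i^{t,k-1},\xi_i^{t,k-1})-\nabla f_i(x_i^{t,k-2},\xi_i^{t,k-2})\Big).
\]
Thus only the difference of two stochastic gradients at consecutive local iterates remains, scaled by $(1-\beta_1)$.

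Next we insert the true gradients and split into a deterministic part and noise parts. Write
\[
\nabla f_i(x_i^{t,k-1},\xi_i^{t,k-1})-\nabla f_i(x_i^{t,k-2},\xi_i^{t,k-2})=\big(\nabla f_i(x_i^{t,k-1})-\nabla f_i(x_i^{t,k-2})\big)+\big(e^{k-1}-e^{k-2}\big),
\]
where $e^{j}:=\nabla f_i(x_i^{t,j},\xi_i^{t,j})-\nabla f_i(x_i^{t,j})$. Applying $\|a+b\|_2^2\le 2\|a\|_2^2+2\|b\|_2^2$ gives two terms.
\begin{itemize}
\item The deterministic term is bounded via Assumption~\ref{assump:smooth} by $2L^2\|x_i^{t,k-1}-x_i^{t,k-2}\|_2^2$.
\item The noise term is bounded by $2\|e^{k-1}-e^{k-2}\|_2^2\le 4\|e^{k-1}\|_2^2+4\|e^{k-2}\|_2^2$.
\end{itemize}
Taking expectations and using the tower property with the filtration $\mathcal F_{t,k}$ and the conditional variance bound \eqref{eq:bv_new}, each of $\mathbb E\|e^{j}\|_2^2$ is at most $\sigma^2$. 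This crude route yields $8\sigma^2$.

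To obtain the stated constant $4\sigma^2$, we handle the noise more carefully. We compute the whole expression $\mathbb E\|D+e^{k-1}-e^{k-2}\|_2^2$, with $D$ the deterministic difference, instead of splitting naively. The key point is that $e^{k-1}$ is conditionally mean-zero given $\mathcal F_{t,k}$, and $\mathcal F_{t,k}$ determines $x_i^{t,k-1}$, $x_i^{t,k-2}$ and $e^{k-2}$. Hence the cross term $\mathbb E\langle e^{k-1},D-e^{k-2}\rangle$ vanishes, and
\[
\mathbb E\|D+e^{k-1}-e^{k-2}\|_2^2=\mathbb E\|D-e^{k-2}\|_2^2+\mathbb E\|e^{k-1}\|_2^2.
\]
We then bound $\mathbb E\|D-e^{k-2}\|_2^2\le 2\mathbb E\|D\|_2^2+2\sigma^2$. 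This is needed because $e^{k-2}$ is correlated with $D$ through $x_i^{t,k-1}$. Adding the remaining $\sigma^2$ gives $2L^2\mathbb E\|x_i^{t,k-1}-x_i^{t,k-2}\|_2^2+3\sigma^2$, which is within the claimed bound.

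Multiplying by $(1-\beta_1)^2$ gives \eqref{eq:gvar_app_new}. The only delicate point is the measurability bookkeeping: $e^{k-2}$ and $x_i^{t,k-1}$ must be $\mathcal F_{t,k}$-measurable while $\xi_i^{t,k-1}$ is fresh. This holds by the definition of the filtration and the independence in Assumption~\ref{assump:stoch}.
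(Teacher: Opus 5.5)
Your proof is correct, and it differs from the paper's in how the noise is handled. The paper cancels $c_i^t$ and $m^t$ exactly as you do. It then applies $\|a+b\|_2^2\le 2\|a\|_2^2+2\|b\|_2^2$ to separate $D$ from $\zeta_i^{t,k-1}-\zeta_i^{t,k-2}$. Next it bounds $\mathbb E\|\zeta_i^{t,k-1}-\zeta_i^{t,k-2}\|_2^2\le 4\sigma^2$ by a second Young step, using the filtration only to control the conditional second moment of the fresh noise.

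That is essentially your ``crude route''. Multiplied by the leading factor $2$, it gives $8(1-\beta_1)^2\sigma^2$, so the paper's chain as written does not actually reach the stated constant $4$.

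Your argument instead uses the conditional mean-zero property of $\zeta_i^{t,k-1}$ given $\mathcal F_{t,k}$. The paper also assumes this and uses it elsewhere, e.g.\ for $E_t^{(b)}$ and $Q_t$. With it you kill the cross term against the $\mathcal F_{t,k}$-measurable quantity $D-e^{k-2}$. You also correctly keep the $\langle D,e^{k-2}\rangle$ interaction, since $x_i^{t,k-1}$ depends on $\xi_i^{t,k-2}$. This yields $3(1-\beta_1)^2\sigma^2$ and so genuinely proves the lemma as stated.

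The paper's route is marginally simpler, needing second moments only, but it loses a factor of $2$. That loss affects only numerical constants downstream, such as the $2\sigma^2$ in $A_{\mathrm{wt}}(\gamma)$. The minimal repair of the paper's chain is the same orthogonality applied inside the noise difference: $\mathbb E\|\zeta_i^{t,k-1}-\zeta_i^{t,k-2}\|_2^2\le 2\sigma^2$, which gives exactly $4(1-\beta_1)^2\sigma^2$.
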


\begin{proof}
Fix $(t,i,k\ge 2)$. Recall the client-side gradient signal in Algorithm~\ref{alg:NBCLT-FL}:
\[
 g_i^{t,k}
=(1-\beta_1)\big(\nabla f_i(x_i^{t,k-1},\xi_i^{t,k-1})-c_i^t\big)+m^t.
\]
Since $c_i^t$ and $m^t$ are fixed within round $t$, we have
\[
 g_i^{t,k}-g_i^{t,k-1}
=(1-\beta_1)\Big(\nabla f_i(x_i^{t,k-1},\xi_i^{t,k-1})
-\nabla f_i(x_i^{t,k-2},\xi_i^{t,k-2})\Big).
\]

Define the stochastic gradient noise
\[
\zeta_i^{t,k-1}
:=\nabla f_i(x_i^{t,k-1},\xi_i^{t,k-1})-\nabla f_i(x_i^{t,k-1}),
\qquad
\zeta_i^{t,k-2}
:=\nabla f_i(x_i^{t,k-2},\xi_i^{t,k-2})-\nabla f_i(x_i^{t,k-2}).
\]
Then
\[
\nabla f_i(x_i^{t,k-1},\xi_i^{t,k-1})
-\nabla f_i(x_i^{t,k-2},\xi_i^{t,k-2})
=
\big(\nabla f_i(x_i^{t,k-1})-\nabla f_i(x_i^{t,k-2})\big)
+\big(\zeta_i^{t,k-1}-\zeta_i^{t,k-2}\big).
\]
Using $\|a+b\|^2\le 2\|a\|^2+2\|b\|^2$, we obtain
\begin{align*}
\|g_i^{t,k}-g_i^{t,k-1}\|_2^2
&=(1-\beta_1)^2\Big\|
\big(\nabla f_i(x_i^{t,k-1})-\nabla f_i(x_i^{t,k-2})\big)
+\big(\zeta_i^{t,k-1}-\zeta_i^{t,k-2}\big)
\Big\|_2^2\\
&\le
2(1-\beta_1)^2\|\nabla f_i(x_i^{t,k-1})-\nabla f_i(x_i^{t,k-2})\|_2^2
+2(1-\beta_1)^2\|\zeta_i^{t,k-1}-\zeta_i^{t,k-2}\|_2^2.
\end{align*}

For the first term, by $L$-smoothness of $f_i$,
\[
\|\nabla f_i(x_i^{t,k-1})-\nabla f_i(x_i^{t,k-2})\|_2
\le
L\|x_i^{t,k-1}-x_i^{t,k-2}\|_2.
\]

For the noise term, conditioned on $\mathcal F_{t,k}$ (right before drawing $\xi_i^{t,k-1}$),
$\zeta_i^{t,k-1}$ is independent of the past and satisfies
$\mathbb E[\|\zeta_i^{t,k-1}\|_2^2\mid\mathcal F_{t,k}]\le \sigma^2$ by \eqref{eq:bv_new},
while $\zeta_i^{t,k-2}$ is $\mathcal F_{t,k}$-measurable. Hence
\begin{align*}
\mathbb E\big[\|\zeta_i^{t,k-1}-\zeta_i^{t,k-2}\|_2^2 \mid \mathcal F_{t,k}\big]
&\le
2\mathbb E\big[\|\zeta_i^{t,k-1}\|_2^2 \mid \mathcal F_{t,k}\big]
+2\|\zeta_i^{t,k-2}\|_2^2\\
&\le 2\sigma^2+2\|\zeta_i^{t,k-2}\|_2^2.
\end{align*}
Taking expectation again and using $\mathbb E\|\zeta_i^{t,k-2}\|_2^2\le \sigma^2$ yields
\[
\mathbb E\big[\|\zeta_i^{t,k-1}-\zeta_i^{t,k-2}\|_2^2\big]\le 4\sigma^2.
\]

Combining the above bounds gives \eqref{eq:gvar_app_new}.
\end{proof}

\subsection{Descent decomposition}

Define the ideal direction $u_t\in\arg\min_{u\in\mathcal D}\langle \nabla F(x^t), u\rangle$.
Using Lemma~\ref{lem:smooth_app_new} and the server update \eqref{eq:server_update_app_new},
we have
\begin{align}
F(x^{t+1})
&\le
F(x^t)
+\frac{1}{N}\sum_{i\in S_t}\sum_{k=1}^K \langle \nabla F(x^t), \Delta_i^{t,k-1}\rangle
+\frac{L}{2}(K\bar\rho\eta)^2 \notag\\
&=
F(x^t)
+K\langle \nabla F(x^t), u_t\rangle
+\underbrace{\frac{1}{N}\sum_{i\in S_t}\sum_{k=1}^K \langle \nabla F(x^t), \Delta_i^{t,k-1}-u_t\rangle}_{=:E_t}
+\frac{L}{2}(K\bar\rho\eta)^2.
\label{eq:descent_start_new}
\end{align}
By Lemma~\ref{lem:support_app_new}, $K\langle \nabla F(x^t), u_t\rangle=-\eta K\|\nabla F(x^t)\|^{g,*}$.
Rearranging \eqref{eq:descent_start_new} yields
\begin{equation}
\eta K\|\nabla F(x^t)\|^{g,*}
\le
F(x^t)-F(x^{t+1})
+\frac{L}{2}(K\bar\rho\eta)^2
+E_t.
\label{eq:descent_core_new}
\end{equation}

\subsection{Bounding \texorpdfstring{$E_t$}{E\_t} via predict-and-correct}
For convenience, set $x^0:=x^1$, and define the tracking error
\[
B_t := \frac{1}{M}\sum_{i=1}^M \mathbb E\big[\|c_i^t-\nabla f_i(x^{t-1})\|_2^2\big].
\]
Recall the client-side gradient signal in Algorithm~\ref{alg:NBCLT-FL}:
\[
 g_i^{t,k}
=(1-\beta_1)\big(\nabla f_i(x_i^{t,k-1},\xi_i^{t,k-1})-c_i^t\big)+m^t.
\]
Using $m^t=\frac1M\sum_{j=1}^M c_j^t$ and the identity
\[
\nabla F(x^t)
= m^t
+\frac1M\sum_{j=1}^M\big(\nabla f_j(x^t)-c_j^t\big),
\]
we decompose, for each $(i,k)$,
\begin{align*}
\langle \nabla F(x^t),\Delta_i^{t,k-1}-u_t\rangle
&=
\langle m^t,\Delta_i^{t,k-1}-u_t\rangle
+\Big\langle \frac1M\sum_{j=1}^M(\nabla f_j(x^t)-c_j^t),\,\Delta_i^{t,k-1}-u_t\Big\rangle\\
&=
\langle g_i^{t,k},\Delta_i^{t,k-1}-u_t\rangle
-(1-\beta_1)\langle \nabla f_i(x_i^{t,k-1},\xi_i^{t,k-1})-c_i^t,\,\Delta_i^{t,k-1}-u_t\rangle\\
&\quad
+\Big\langle \frac1M\sum_{j=1}^M(\nabla f_j(x^t)-c_j^t),\,\Delta_i^{t,k-1}-u_t\Big\rangle.
\end{align*}
Summing over $i\in S_t,k\in[K]$ and dividing by $N$, we obtain
\begin{equation}
E_t \;=\; E_t^{(a)} + E_t^{(b)} + E_t^{(c)},
\label{eq:Et_split_new}
\end{equation}
where
\begin{align}
E_t^{(a)}
&:=\frac{1}{N}\sum_{i\in S_t}\sum_{k=1}^K \langle g_i^{t,k},\Delta_i^{t,k-1}-u_t\rangle, \label{eq:Et_a_new}\\
E_t^{(b)}
&:=-(1-\beta_1)\cdot \frac{1}{N}\sum_{i\in S_t}\sum_{k=1}^K
\langle \nabla f_i(x_i^{t,k-1},\xi_i^{t,k-1})-c_i^t,\,\Delta_i^{t,k-1}-u_t\rangle,\label{eq:Et_b_new}\\
E_t^{(c)}
&:=\frac{1}{N}\sum_{i\in S_t}\sum_{k=1}^K
\Big\langle \frac1M\sum_{j=1}^M(\nabla f_j(x^t)-c_j^t),\,\Delta_i^{t,k-1}-u_t\Big\rangle. \label{eq:Et_c_new}
\end{align}

\paragraph{Step 1: bound $E_t^{(a)}$ (predict-and-correct).}
Applying Lemma~\ref{lem:pc_app_new} with $u=u_t$ and using $\|u_t\|_2\le \bar\rho\eta$, we get
\begin{equation}
E_t^{(a)}
\le
\frac{(\bar\rho\eta)^2}{2}
+\frac{1}{2N}\sum_{i\in S_t}\sum_{k=2}^K \|g_i^{t,k}-g_i^{t,k-1}\|_2^2
+2K(\bar\rho\eta)^2.
\label{eq:Et_a_bound_new}
\end{equation}
Taking expectation and using Lemma~\ref{lem:gvar_app_new} together with
$\|x_i^{t,k-1}-x_i^{t,k-2}\|_2=\|\Delta_i^{t,k-2}\|_2\le \bar\rho\eta$ yields
\begin{align}
\mathbb E[E_t^{(a)}]
\le
\Big(\frac{1}{2}+2K\Big)(\bar\rho\eta)^2
+\frac{K-1}{2}\Big(2(1-\beta_1)^2 L^2 (\bar\rho\eta)^2 +4(1-\beta_1)^2\sigma^2\Big).
\label{eq:Et_a_bound2_new}
\end{align}
Dividing by $\eta K$ gives
\begin{align}
\frac{\mathbb E[E_t^{(a)}]}{\eta K}
&\le
\Big(2+\frac{1}{2K}\Big)\bar\rho^2\eta
+(1-\beta_1)^2L^2\bar\rho^2\eta
+\frac{2(1-\beta_1)^2\sigma^2}{\eta}.
\label{eq:Et_a_over_etaK_unit}
\end{align}

\paragraph{Step 2: bound $E_t^{(b)}$ (tracking + noise).}
Define the stochastic gradient noise for each query $(t,i,k)$:
\[
\zeta_i^{t,k-1}
:=\nabla f_i(x_i^{t,k-1},\xi_i^{t,k-1})-\nabla f_i(x_i^{t,k-1}),
\qquad
\mathbb E[\zeta_i^{t,k-1}\mid \mathcal F_{t,k}]=0,\ 
\mathbb E[\|\zeta_i^{t,k-1}\|_2^2\mid \mathcal F_{t,k}]\le \sigma^2.
\]
Then
\[
\nabla f_i(x_i^{t,k-1},\xi_i^{t,k-1})-c_i^t
=
\big(\nabla f_i(x_i^{t,k-1})-c_i^t\big)+\zeta_i^{t,k-1}.
\]
Since $\Delta_i^{t,k-1}\in\mathcal D$ and $u_t\in\mathcal D$, by \eqref{eq:D_bound_new},
\begin{equation}
\|\Delta_i^{t,k-1}-u_t\|_2\le \|\Delta_i^{t,k-1}\|_2+\|u_t\|_2\le 2\bar\rho\eta.
\label{eq:Du_bound_step2_new}
\end{equation}
Using Cauchy--Schwarz, \eqref{eq:Et_b_new}, and \eqref{eq:Du_bound_step2_new}, we obtain
\begin{align}
\mathbb E[E_t^{(b)}]
&\le
(1-\beta_1)\cdot \frac{2\bar\rho\eta}{N}\sum_{i\in S_t}\sum_{k=1}^K
\mathbb E\big[\|\nabla f_i(x_i^{t,k-1})-c_i^t\|_2\big]
\;+
(1-\beta_1)\cdot \frac{2\bar\rho\eta}{N}
\sum_{i\in S_t}\sum_{k=1}^K\mathbb E\|\zeta_i^{t,k-1}\|_2.
\label{eq:Etb_split_new}
\end{align}

\paragraph{Noise term in \eqref{eq:Etb_split_new}.}
The projected multiplier $\Delta_i^{t,k-1}-u_t$ depends on the current query noise, so martingale cancellation cannot be applied to these weighted inner products. Instead, apply Cauchy--Schwarz separately to each query. Conditional Jensen and Assumption~\ref{assump:stoch} give $\mathbb E\|\zeta_i^{t,k-1}\|_2\le\sigma$. Together with the pathwise diameter bound, this yields
\[
-\frac{1-\beta_1}{N}\sum_{i\in S_t}\sum_{k=1}^K
\mathbb E\langle\zeta_i^{t,k-1},\Delta_i^{t,k-1}-u_t\rangle
\le 2\bar\rho\eta(1-\beta_1)K\sigma.
\]
Thus \eqref{eq:Etb_split_new} has the normalized noise contribution
\begin{equation}
\frac{2\bar\rho\eta(1-\beta_1)K\sigma}{\eta K}
=
2\bar\rho(1-\beta_1)\sigma.
\label{eq:Etb_noise_norm_new}
\end{equation}
This argument uses only feasibility and the conditional second-moment bound; it does not require the projected multiplier to be independent of the current noise.

\paragraph{Tracking + drift term in \eqref{eq:Etb_split_new}.}
For the first term in \eqref{eq:Etb_split_new}, use triangle inequality and smoothness:
\[
\|\nabla f_i(x_i^{t,k-1})-c_i^t\|_2
\le
\|\nabla f_i(x^{t-1})-c_i^t\|_2
+
L\|x_i^{t,k-1}-x^{t-1}\|_2.
\]
Moreover, by \eqref{eq:move_round_new},
\[
\|x_i^{t,k-1}-x^{t-1}\|_2
\le
\|x^t-x^{t-1}\|_2+\|x_i^{t,k-1}-x^t\|_2
\le
K\bar\rho\eta+(k-1)\bar\rho\eta
=
(K+k-1)\bar\rho\eta.
\]
Hence
\[
\sum_{k=1}^K \|x_i^{t,k-1}-x^{t-1}\|_2
\le
\bar\rho\eta\sum_{k=1}^K(K+k-1)
=
\bar\rho\eta\cdot \frac{K(3K-1)}{2}.
\]
Therefore,
\begin{align}
\frac{1}{N}\sum_{i\in S_t}\sum_{k=1}^K
\mathbb E\big[\|\nabla f_i(x_i^{t,k-1})-c_i^t\|_2\big]
&\le
K\cdot \mathbb E\Big[\frac{1}{N}\sum_{i\in S_t}\|\nabla f_i(x^{t-1})-c_i^t\|_2\Big]
+\frac{L\bar\rho\eta}{2}\,K(3K-1).
\label{eq:Etb_track_reduce_new}
\end{align}
Finally, by Jensen, Cauchy--Schwarz, and the definition of $B_t$,
\begin{align}
\mathbb E\Big[\frac{1}{N}\sum_{i\in S_t}\|\nabla f_i(x^{t-1})-c_i^t\|_2\Big]
&\le
\sqrt{\mathbb E\Big[\Big(\frac{1}{N}\sum_{i\in S_t}\|\nabla f_i(x^{t-1})-c_i^t\|_2\Big)^2\Big]} \notag\\
&\le
\sqrt{\mathbb E\Big[\frac{1}{N}\sum_{i\in S_t}\|\nabla f_i(x^{t-1})-c_i^t\|_2^2\Big]}
=
\sqrt{B_t}.
\label{eq:Etb_sample_to_Bt_new}
\end{align}
Combining \eqref{eq:Etb_split_new}--\eqref{eq:Etb_sample_to_Bt_new} and dividing by $\eta K$, we conclude
\begin{equation}
\frac{\mathbb E[E_t^{(b)}]}{\eta K}
\le
2\bar\rho(1-\beta_1)\sqrt{B_t}
\;+
\Big(3-\frac1K\Big)\bar\rho^2L(1-\beta_1)(K\eta)
\;+
2\bar\rho(1-\beta_1)\sigma.
\label{eq:Etb_over_etaK_new}
\end{equation}

\paragraph{Step 3: bound $E_t^{(c)}$ (global control mismatch).}
Decompose
\[
\frac1M\sum_{j=1}^M(\nabla f_j(x^t)-c_j^t)
=
\frac1M\sum_{j=1}^M(\nabla f_j(x^t)-\nabla f_j(x^{t-1}))
+
\frac1M\sum_{j=1}^M(\nabla f_j(x^{t-1})-c_j^t).
\]
Accordingly write $E_t^{(c)}=E_{t,1}^{(c)}+E_{t,2}^{(c)}$, where
\begin{align*}
E_{t,1}^{(c)}
&:=
\frac{1}{N}\sum_{i\in S_t}\sum_{k=1}^K
\Big\langle \frac1M\sum_{j=1}^M(\nabla f_j(x^t)-\nabla f_j(x^{t-1})),\,\Delta_i^{t,k-1}-u_t\Big\rangle,\\
E_{t,2}^{(c)}
&:=
\frac{1}{N}\sum_{i\in S_t}\sum_{k=1}^K
\Big\langle \frac1M\sum_{j=1}^M(\nabla f_j(x^{t-1})-c_j^t),\,\Delta_i^{t,k-1}-u_t\Big\rangle.
\end{align*}

For $E_{t,1}^{(c)}$, by $L$-smoothness, \eqref{eq:move_round_new}, and \eqref{eq:Du_bound_step2_new},
\[
\Big\|\frac1M\sum_{j=1}^M(\nabla f_j(x^t)-\nabla f_j(x^{t-1}))\Big\|_2
\le
L\|x^t-x^{t-1}\|_2
\le
LK\bar\rho\eta.
\]
Therefore,
\[
|E_{t,1}^{(c)}|
\le
\frac{1}{N}\sum_{i\in S_t}\sum_{k=1}^K
LK\bar\rho\eta \cdot \|\Delta_i^{t,k-1}-u_t\|_2
\le
2L\bar\rho^2K^2\eta^2,
\]
and hence
\begin{equation}
\frac{\mathbb E[E_{t,1}^{(c)}]}{\eta K}
\le
2L\bar\rho^2(K\eta).
\label{eq:Etc1_over_etaK_new}
\end{equation}

To sharpen the stochastic part of $E_{t,2}^{(c)}$, we split the control error into a bias part and a noise part.
Let $\gamma:=1-\beta_2$ and define, on an enlarged probability space,
\[
h_i^t:=\nabla f_i(x^{t-1}),
\qquad
e_i^t:=c_i^t-h_i^t,
\qquad
d_i^t:=\nabla f_i(x^t)-\nabla f_i(x^{t-1}).
\]
For every client $i\in[M]$, let
\[
b_i^t:=\frac1K\sum_{k=1}^K\big(\nabla f_i(x_i^{t,k-1})-\nabla f_i(x^{t-1})\big),
\qquad
n_i^t:=\frac1K\sum_{k=1}^K\zeta_i^{t,k-1},
\]
where inactive clients are equipped with virtual local trajectories and virtual stochastic gradients.
Now set
\[
r_i^1:=0,
\qquad
s_i^1:=c_i^1-\nabla f_i(x^0),
\]
and for $t\ge 1$,
\begin{align}
r_i^{t+1}
&:=(1-\gamma I_i^t)r_i^t+\gamma I_i^t b_i^t-d_i^t,
\label{eq:r_rec_new}\\
s_i^{t+1}
&:=(1-\gamma I_i^t)s_i^t+\gamma I_i^t n_i^t,
\label{eq:s_rec_new}
\end{align}
where $I_i^t:=\mathbf 1\{i\in S_t\}$. By construction,
\[
e_i^t=r_i^t+s_i^t
\qquad\text{for all } i,t.
\]
Define the averaged bias and averaged noise as
\[
r^t:=\frac1M\sum_{i=1}^M r_i^t,
\qquad
v^t:=\frac1M\sum_{i=1}^M s_i^t,
\]
and their second moments as
\[
R_t:=\frac1M\sum_{i=1}^M \mathbb E\|r_i^t\|_2^2,
\qquad
V_t:=\mathbb E\|v^t\|_2^2.
\]
Then
\[
\frac1M\sum_{j=1}^M(\nabla f_j(x^{t-1})-c_j^t)
=-\frac1M\sum_{j=1}^M e_j^t
=-(r^t+v^t).
\]
Hence
\[
E_{t,2}^{(c)}
=
-\frac{1}{N}\sum_{i\in S_t}\sum_{k=1}^K
\langle r^t+v^t,\Delta_i^{t,k-1}-u_t\rangle.
\]
Using Cauchy--Schwarz and \eqref{eq:Du_bound_step2_new},
\[
|E_{t,2}^{(c)}|
\le
\frac{1}{N}\sum_{i\in S_t}\sum_{k=1}^K
\big(\|r^t\|_2+\|v^t\|_2\big)\cdot \|\Delta_i^{t,k-1}-u_t\|_2
\le
2\bar\rho\eta K\big(\|r^t\|_2+\|v^t\|_2\big).
\]
Dividing by $\eta K$ and taking expectation gives
\begin{equation}
\frac{\mathbb E[E_{t,2}^{(c)}]}{\eta K}
\le
2\bar\rho\,\mathbb E\|r^t\|_2
+2\bar\rho\,\mathbb E\|v^t\|_2
\le
2\bar\rho\sqrt{R_t}
+2\bar\rho\sqrt{V_t},
\label{eq:Etc2_over_etaK_new}
\end{equation}
where we used
$\|r^t\|_2^2\le \frac1M\sum_i\|r_i^t\|_2^2$ and Jensen's inequality.
Combining \eqref{eq:Etc1_over_etaK_new} and \eqref{eq:Etc2_over_etaK_new}, we conclude
\begin{equation}
\frac{\mathbb E[E_t^{(c)}]}{\eta K}
\le
2L\bar\rho^2(K\eta) + 2\bar\rho\sqrt{R_t} + 2\bar\rho\sqrt{V_t}.
\label{eq:Etc_total_over_etaK_new}
\end{equation}

\subsection{Control-variates recursion and averaging bounds}
We initialize the client control variates using a single stochastic gradient query per client:
for each $i\in[M]$, draw an independent mini-batch $\xi_i^{0}$ and set
\[
 c_i^{1} := \nabla f_i(x^1,\xi_i^{0}),
\qquad
 m^{1}:=\frac{1}{M}\sum_{i=1}^M c_i^{1}.
\]
This initialization requires only one mini-batch per client and will be sufficient for our rates.

\begin{lemma}[One-batch initialization bound]
\label{lem:B1_init}
Under \eqref{eq:bv_new}, the above initialization satisfies
\[
B_1 := \frac{1}{M}\sum_{i=1}^M \mathbb E\big[\|c_i^1-\nabla f_i(x^0)\|_2^2\big]
\le \sigma^2.
\]
\end{lemma}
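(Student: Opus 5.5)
The bound follows directly from the initialization convention and the variance assumption. By the convention $x^0:=x^1$ introduced before the definition of $B_t$, the tracking error at $t=1$ is
\[
B_1=\frac1M\sum_{i=1}^M \mathbb E\big[\|c_i^1-\nabla f_i(x^1)\|_2^2\big].
\]
Since $c_i^1=\nabla f_i(x^1,\xi_i^0)$, each summand is the mean-squared error of one stochastic gradient evaluated at the initial point $x^1$. So the plan is to bound each client's term by $\sigma^2$ and then average.

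For a fixed client $i$, I condition on $x^1$. The initial point is either deterministic or drawn independently of the fresh mini-batch $\xi_i^0$. The bounded conditional variance in \eqref{eq:bv_new}, applied at $x=x^1$, then gives
\[
\mathbb E\big[\|\nabla f_i(x^1,\xi_i^0)-\nabla f_i(x^1)\|_2^2 \,\big|\, x^1\big]\le \sigma^2.
\]
Taking total expectation via the tower property yields $\mathbb E\|c_i^1-\nabla f_i(x^0)\|_2^2\le\sigma^2$ for every $i\in[M]$. Averaging over $i$ preserves the bound, so $B_1\le\sigma^2$.

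No step is genuinely hard. The only point needing care is confirming that the identification $x^0=x^1$ is exactly what makes the comparison point $\nabla f_i(x^0)$ coincide with the mean of the queried stochastic gradient. It is also worth noting that unbiasedness is not even needed here, only the conditional second-moment bound, together with the independence of $\xi_i^0$ from $x^1$ guaranteed by Assumption~\ref{assump:stoch}.
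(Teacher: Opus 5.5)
Your proposal is correct and follows essentially the same route as the paper: use $x^0=x^1$ to rewrite $c_i^1-\nabla f_i(x^0)$ as $\nabla f_i(x^1,\xi_i^0)-\nabla f_i(x^1)$, apply the conditional second-moment bound in \eqref{eq:bv_new}, take expectations, and average over $i$. Your remarks on conditioning on $x^1$ and on unbiasedness being unnecessary are accurate, though the paper does not spell them out.
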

\begin{proof}
Since $x^0=x^1$ by convention,
\[
 c_i^1-\nabla f_i(x^0)
=\nabla f_i(x^1,\xi_i^{0})-\nabla f_i(x^1).
\]
Applying the bounded conditional variance in \eqref{eq:bv_new} and taking expectation yields
$\mathbb E\|c_i^1-\nabla f_i(x^0)\|_2^2\le \sigma^2$ for each $i$.
Averaging over $i=1,\dots,M$ gives $B_1\le \sigma^2$.
\end{proof}

\begin{lemma}[Clientwise control recursion and average $\sqrt{B_t}$ bound]
\label{lem:Bt_app_new}
Assume each $f_i$ is $L$-smooth and \eqref{eq:bv_new} holds.
Let $p:=\frac{N}{M}$ and
\[
B_t:=\frac1M\sum_{i=1}^M \mathbb E\big\|c_i^t-\nabla f_i(x^{t-1})\big\|_2^2.
\]
Then for all $t\ge 1$,
\begin{equation}
B_{t+1}
\le
\Big(1-\frac{p(1-\beta_2)}{4}\Big)B_t
\;+
\frac{32L^2(K\bar\rho\eta)^2}{p(1-\beta_2)}
\;+
\frac{8(1-\beta_2)^2\sigma^2}{K}.
\label{eq:Bt_rec_new}
\end{equation}
Consequently, for any $T\ge 1$,
\begin{equation}
\frac{1}{T}\sum_{t=1}^T \sqrt{B_t}
\le
 \frac{8\sqrt{2}LK\bar\rho\eta}{p(1-\beta_2)}
 + \frac{4\sqrt{2}\sigma\sqrt{1-\beta_2}}{\sqrt{Kp}}
 + \frac{8\sigma}{p(1-\beta_2)T}.
\label{eq:avg_sqrtBt_new}
\end{equation}
\end{lemma}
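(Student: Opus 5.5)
The plan is to derive a one-step recursion for each client's tracking error $e_i^{t+1}:=c_i^{t+1}-\nabla f_i(x^t)$ and then average over clients and over the client sampling. Write $\gamma:=1-\beta_2$, $I_i^t:=\mathbf 1\{i\in S_t\}$, $h_i^t=\nabla f_i(x^{t-1})$, $d_i^t=\nabla f_i(x^t)-\nabla f_i(x^{t-1})$, and use $b_i^t,n_i^t$ as defined before the statement, with virtual trajectories for inactive clients. From the update rule $c_i^{t+1}=\beta_2c_i^t+(1-\beta_2)\hat c_i^t$ for active clients, $c_i^{t+1}=c_i^t$ otherwise, and from $\hat c_i^t-h_i^t=b_i^t+n_i^t$, we get the exact identity
\[
e_i^{t+1}=\bigl[(1-\gamma I_i^t)e_i^t+\gamma I_i^t n_i^t\bigr]+\bigl[\gamma I_i^t b_i^t-d_i^t\bigr].
\]
By the pathwise movement bounds \eqref{eq:move_round_new} and $\|x_i^{t,k-1}-x^{t}\|_2\le K\bar\rho\eta$, smoothness gives $\|b_i^t\|_2\le LK\bar\rho\eta$ and $\|d_i^t\|_2\le LK\bar\rho\eta$. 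Moreover $n_i^t$ is an average of $K$ martingale differences, so $\mathbb E\|n_i^t\|_2^2\le\sigma^2/K$.

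Next I apply Young's inequality with parameter $a:=p\gamma/2$:
\[
\|e_i^{t+1}\|^2\le(1+a)\bigl\|(1-\gamma I_i^t)e_i^t+\gamma I_i^tn_i^t\bigr\|^2+(1+1/a)\bigl(2\gamma^2\|b_i^t\|^2+2\|d_i^t\|^2\bigr).
\]
The first bracket needs the one delicate step. $e_i^t$ and $I_i^t$ are determined before the round's local noise is drawn, and $I_i^t$ is independent of the past because sampling is uniform. Hence $\mathbb E\langle e_i^t,n_i^t\rangle=0$ by the tower property over the filtration $\mathcal F_{t,k}$. Then $\mathbb E\|\cdot\|^2=\mathbb E[(1-\gamma I)^2\|e\|^2]+\gamma^2\mathbb E[I\|n\|^2]$. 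The cross term is deliberately kept away from $b_i^t$ and $d_i^t$, since those depend on the current round's noise. Taking expectation over $I_i^t$ with $\Pr(I_i^t=1)=p$, the coefficient of $\mathbb E\|e_i^t\|^2$ is
\[
(1+p\gamma/2)\bigl(1-p\gamma(2-\gamma)\bigr)\le(1+p\gamma/2)(1-p\gamma)\le 1-p\gamma/2\le 1-p\gamma/4 .
\]
The noise term is at most $(1+a)p\gamma^2\sigma^2/K\le 8\gamma^2\sigma^2/K$. The drift term is at most $(1+2/(p\gamma))\cdot 4L^2(K\bar\rho\eta)^2\le 32L^2(K\bar\rho\eta)^2/(p\gamma)$, using $p\gamma\le1$. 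Averaging over $i\in[M]$ yields \eqref{eq:Bt_rec_new}. I expect this martingale cross-term bookkeeping to be the main obstacle; the rest is arithmetic.

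For \eqref{eq:avg_sqrtBt_new}, set $q:=p\gamma/4$ and let $A$ be the sum of the two additive terms. Unrolling the recursion gives $B_t\le(1-q)^{t-1}B_1+A/q$. Using $B_1\le\sigma^2$ from Lemma~\ref{lem:B1_init} and subadditivity of the square root,
\[
\sqrt{B_t}\le(1-q)^{(t-1)/2}\sigma+\sqrt{A/q}.
\]
The geometric sum satisfies $\sum_t(1-q)^{(t-1)/2}\le 1/(1-\sqrt{1-q})\le 2/q=8/(p\gamma)$, which gives the term $8\sigma/(p\gamma T)$. Splitting $\sqrt{A/q}$ by subadditivity gives $\sqrt{128}\,LK\bar\rho\eta/(p\gamma)=8\sqrt2\,LK\bar\rho\eta/(p\gamma)$ and $\sqrt{32\gamma/(Kp)}\,\sigma=4\sqrt2\,\sigma\sqrt\gamma/\sqrt{Kp}$. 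These match the stated bound.
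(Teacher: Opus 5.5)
Your proof is correct and gives the stated constants. It uses the same ingredients as the paper: the error identity, conditional-mean-zero noise, Young's inequality, averaging over client sampling, and the geometric unrolling, which is identical to the paper's Step 6. What differs is how you organize the one-step recursion.

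The paper splits on $I_i^t$ and only then mixes the two cases with weight $p$:
\begin{itemize}
\item for an active client it derives a contraction $(1-\gamma/2)$, with weights $3\gamma$ on $\|b_i^t\|_2^2$, $2\gamma^2$ on $\|n_i^t\|_2^2$ and $3/\gamma$ on $\|d_i^t\|_2^2$;
\item for an inactive client it applies Young with parameter $p\gamma/4$ to $e_i^t-d_i^t$.
\end{itemize}

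You keep a single identity and group the pair $(1-\gamma I_i^t)e_i^t+\gamma I_i^t n_i^t$ against the drift $\gamma I_i^t b_i^t-d_i^t$. You then use one Young split with $a=p\gamma/2$ and read the contraction off $\mathbb E(1-\gamma I_i^t)^2=1-p\gamma(2-\gamma)$.

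What this buys:
\begin{itemize}
\item The only cross term needing conditional mean zero is $(1-\gamma I_i^t)I_i^t\langle e_i^t,n_i^t\rangle$, whose multiplier is fixed before the round's noise is drawn.
\item Every term involving $b_i^t$, which does depend on that noise through the local trajectory, is absorbed pathwise by Young. In the paper's active-case bound the $\langle b_i^t,n_i^t\rangle$ cross term has to be absorbed separately; the noise weight $2\gamma^2$ reflects this.
\item Your intermediate contraction is sharper, $1-p\gamma/2$ rather than $1-p\gamma/4$.
\end{itemize}
The paper's case split, in turn, makes the inactive dynamics $e_i^{t+1}=e_i^t-d_i^t$ explicit, and it is the template reused for the $R_t$ and $Q_t$ recursions. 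A small inaccuracy in your justification: $d_i^t$ is $\mathcal F_{t,1}$-measurable, so it does not depend on the current noise, though treating it by Young costs nothing.

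One slip needs fixing. $b_i^t$ is centered at $\nabla f_i(x^{t-1})$, so the within-round bound $\|x_i^{t,k-1}-x^t\|_2\le K\bar\rho\eta$ alone does not give $\|b_i^t\|_2\le LK\bar\rho\eta$. You need $\|x_i^{t,k-1}-x^{t-1}\|_2\le \|x^t-x^{t-1}\|_2+\|x_i^{t,k-1}-x^t\|_2\le 2K\bar\rho\eta$, hence $\|b_i^t\|_2\le 2LK\bar\rho\eta$, as in the paper.

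This does not break anything. Your drift term becomes at most $(1+2/(p\gamma))(8\gamma^2+2)L^2(K\bar\rho\eta)^2\le 30L^2(K\bar\rho\eta)^2/(p\gamma)$, which is still within the stated constant $32$. Alternatively, write $\gamma I_i^t b_i^t-d_i^t=\gamma I_i^t\tilde b_i^t-(1-\gamma I_i^t)d_i^t$ with $\tilde b_i^t:=\frac1K\sum_{k=1}^K(\nabla f_i(x_i^{t,k-1})-\nabla f_i(x^t))$. Then $\|\tilde b_i^t\|_2\le LK\bar\rho\eta$ holds and your original computation goes through verbatim.
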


\begin{proof}
Fix $t\ge 1$ and let $\gamma:=1-\beta_2$.
Define
\[
h_i^t:=\nabla f_i(x^{t-1}),
\qquad
e_i^t:=c_i^t-h_i^t,
\qquad
d_i^t:=\nabla f_i(x^t)-\nabla f_i(x^{t-1}),
\]
so that $B_t=\frac1M\sum_{i=1}^M \mathbb E\|e_i^t\|_2^2$.
Let $I_i^t:=\mathbf 1\{i\in S_t\}$, so $\mathbb E[I_i^t]=p$ and $(I_i^t)^2=I_i^t$.

\paragraph{Step 1: error dynamics.}
The control update is
\[
c_i^{t+1}=c_i^t + I_i^t \gamma\big(\hat c_i^t-c_i^t\big),
\qquad
\hat c_i^t=\frac1K\sum_{k=1}^K \nabla f_i(x_i^{t,k-1},\xi_i^{t,k-1}).
\]
Subtracting $\nabla f_i(x^t)=h_i^t+d_i^t$, we obtain
\begin{equation}
e_i^{t+1}
=
\big(1-\gamma I_i^t\big)e_i^t
+
\gamma I_i^t\big(\hat c_i^t-h_i^t\big)
-d_i^t.
\label{eq:error_dyn_pf_fix}
\end{equation}

\paragraph{Step 2: decompose $\hat c_i^t-h_i^t$.}
On the event $\{I_i^t=1\}$, let
\[
\zeta_i^{t,k-1}:=\nabla f_i(x_i^{t,k-1},\xi_i^{t,k-1})-\nabla f_i(x_i^{t,k-1}),
\qquad
n_i^t:=\frac1K\sum_{k=1}^K \zeta_i^{t,k-1},
\]
and
\[
b_i^t:=\frac1K\sum_{k=1}^K\big(\nabla f_i(x_i^{t,k-1})-\nabla f_i(x^{t-1})\big).
\]
Then
\[
\hat c_i^t-h_i^t = b_i^t+n_i^t.
\]
By the martingale-difference property of $\{\zeta_i^{t,k-1}\}_{k=1}^K$,
\[
\mathbb E[n_i^t\mid \mathcal F_{t,1}, I_i^t=1]=0,
\qquad
\mathbb E[\|n_i^t\|_2^2\mid I_i^t=1]\le \frac{\sigma^2}{K}.
\]
Moreover, by $L$-smoothness and
\[
\|x_i^{t,k-1}-x^{t-1}\|_2
\le
\|x^t-x^{t-1}\|_2+\|x_i^{t,k-1}-x^t\|_2
\le
(K+k-1)\bar\rho\eta
\le
2K\bar\rho\eta,
\]
we have
\begin{equation}
\|b_i^t\|_2^2
\le
\frac1K\sum_{k=1}^K L^2\|x_i^{t,k-1}-x^{t-1}\|_2^2
\le
4L^2(K\bar\rho\eta)^2.
\label{eq:bi_bound_pf_fix}
\end{equation}

\paragraph{Step 3: active case $I_i^t=1$.}
Condition on $\mathcal F_{t,1}$ and the event $\{I_i^t=1\}$.
Then, by \eqref{eq:error_dyn_pf_fix},
\[
e_i^{t+1}
=
\beta_2 e_i^t+\gamma b_i^t+\gamma n_i^t-d_i^t.
\]
Since $e_i^t$ and $d_i^t$ are $\mathcal F_{t,1}$-measurable and $\mathbb E[n_i^t\mid\mathcal F_{t,1}]=0$,
\begin{align}
\mathbb E\!\left[\|e_i^{t+1}\|_2^2 \mid \mathcal F_{t,1}, I_i^t=1\right]
&\le
\Big(1-\frac{\gamma}{2}\Big)\|e_i^t\|_2^2
+3\gamma\,\mathbb E\!\left[\|b_i^t\|_2^2 \mid \mathcal F_{t,1}, I_i^t=1\right]
\notag\\
&\quad
+2\gamma^2\,\mathbb E\!\left[\|n_i^t\|_2^2 \mid \mathcal F_{t,1}, I_i^t=1\right]
+\frac{3}{\gamma}\|d_i^t\|_2^2.
\label{eq:case1_bound}
\end{align}

\paragraph{Step 4: inactive case $I_i^t=0$, then average over $I_i^t$.}
If $I_i^t=0$, then $c_i^{t+1}=c_i^t$, so by \eqref{eq:error_dyn_pf_fix},
\[
e_i^{t+1}=e_i^t-d_i^t.
\]
Using Young's inequality with parameter $p\gamma/4$, we get
\begin{equation}
\|e_i^{t+1}\|_2^2
=
\|e_i^t-d_i^t\|_2^2
\le
\Big(1+\frac{p\gamma}{4}\Big)\|e_i^t\|_2^2
+\Big(1+\frac{4}{p\gamma}\Big)\|d_i^t\|_2^2.
\label{eq:case0_bound}
\end{equation}

Now take expectation over $I_i^t$ and use $\mathbb P(I_i^t=1)=p$.
Applying \eqref{eq:case1_bound}, \eqref{eq:case0_bound}, and
$\mathbb E\|n_i^t\|_2^2\le \sigma^2/K$, we obtain
\begin{align}
\mathbb E\!\left[\|e_i^{t+1}\|_2^2\right]
&\le
\Big(1-\frac{p\gamma}{4}\Big)\mathbb E\|e_i^t\|_2^2
+3\,\mathbb E[\|b_i^t\|_2^2\mid I_i^t=1]
+\frac{8\gamma^2\sigma^2}{K}
+\frac{8}{p\gamma}\,\mathbb E\|d_i^t\|_2^2,
\label{eq:pre_additive_pf_fix}
\end{align}
where we used $p\gamma\le 1$.

\paragraph{Step 5: bound $b_i^t$ and $d_i^t$, and average over $i$.}
By \eqref{eq:bi_bound_pf_fix},
\[
\mathbb E[\|b_i^t\|_2^2\mid I_i^t=1]\le 4L^2(K\bar\rho\eta)^2.
\]
Also by $L$-smoothness and \eqref{eq:move_round_new},
\[
\|d_i^t\|_2^2
=
\|\nabla f_i(x^t)-\nabla f_i(x^{t-1})\|_2^2
\le
L^2\|x^t-x^{t-1}\|_2^2
\le
L^2(K\bar\rho\eta)^2.
\]
Substituting these bounds into \eqref{eq:pre_additive_pf_fix} and using $p\gamma\le 1$ yields
\[
\mathbb E\|e_i^{t+1}\|_2^2
\le
\Big(1-\frac{p\gamma}{4}\Big)\mathbb E\|e_i^t\|_2^2
+\frac{32L^2(K\bar\rho\eta)^2}{p\gamma}
+\frac{8\gamma^2\sigma^2}{K}.
\]
Averaging over $i=1,\dots,M$ gives \eqref{eq:Bt_rec_new}.

\paragraph{Step 6: average $\sqrt{B_t}$.}
Let $\alpha:=\frac{p\gamma}{4}$ and
\[
C:=\frac{32L^2(K\bar\rho\eta)^2}{p\gamma}+\frac{8\gamma^2\sigma^2}{K}.
\]
Then $B_{t+1}\le (1-\alpha)B_t+C$ implies
$B_t\le (1-\alpha)^{t-1}B_1+\frac{C}{\alpha}$.
Thus $\sqrt{B_t}\le (1-\alpha)^{\frac{t-1}{2}}\sqrt{B_1}+\sqrt{\frac{C}{\alpha}}$ and
\[
\frac1T\sum_{t=1}^T\sqrt{B_t}
\le
\sqrt{\frac{C}{\alpha}}+\frac{2\sqrt{B_1}}{\alpha T}.
\]
Using $B_1\le \sigma^2$ (one-batch init) gives $\sqrt{B_1}\le \sigma$.
Moreover,
\[
\sqrt{\frac{C}{\alpha}}
\le
\sqrt{\frac{32L^2(K\bar\rho\eta)^2}{p\gamma\cdot \alpha}}
+\sqrt{\frac{8\gamma^2\sigma^2/K}{\alpha}}
=
\frac{8\sqrt{2}LK\bar\rho\eta}{p\gamma}
+\frac{4\sqrt{2}\sigma\sqrt{\gamma}}{\sqrt{Kp}}.
\]
Finally, $\frac{2\sqrt{B_1}}{\alpha T}\le \frac{8\sigma}{p\gamma T}$.
Recalling $\gamma=1-\beta_2$ proves \eqref{eq:avg_sqrtBt_new}.
\end{proof}

\begin{lemma}[Bias recursion and average $\sqrt{R_t}$ bound]
\label{lem:Rt_app_new}
Assume each $f_i$ is $L$-smooth.
Let $R_t:=\frac1M\sum_{i=1}^M \mathbb E\|r_i^t\|_2^2$, where $\{r_i^t\}$ is defined in \eqref{eq:r_rec_new}.
Then for all $t\ge 1$,
\begin{equation}
R_{t+1}
\le
\Big(1-\frac{p(1-\beta_2)}{4}\Big)R_t
+\frac{32L^2(K\bar\rho\eta)^2}{p(1-\beta_2)}.
\label{eq:Rt_rec_new}
\end{equation}
Consequently,
\begin{equation}
\frac{1}{T}\sum_{t=1}^T \sqrt{R_t}
\le
\frac{8\sqrt{2}LK\bar\rho\eta}{p(1-\beta_2)}.
\label{eq:avg_sqrtRt_new}
\end{equation}
\end{lemma}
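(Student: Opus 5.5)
The proof mirrors that of Lemma~\ref{lem:Bt_app_new}. The only change is that the recursion \eqref{eq:r_rec_new} for $r_i^t$ contains no noise term, so the $\sigma^2/K$ contribution disappears. Also, $r_i^1=0$, so the transient term vanishes as well. Throughout, write $\gamma:=1-\beta_2$ and fix a client $i\in[M]$. Inactive clients carry virtual trajectories, so $b_i^t$ is defined for every $i$. The same geometric argument as in \eqref{eq:bi_bound_pf_fix} gives $\|b_i^t\|_2^2\le 4L^2(K\bar\rho\eta)^2$ pathwise, and \eqref{eq:move_round_new} gives $\|d_i^t\|_2^2\le L^2(K\bar\rho\eta)^2$.

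\emph{Active case} $I_i^t=1$. Here $r_i^{t+1}=(1-\gamma)r_i^t+\gamma b_i^t-d_i^t$. I split off the leading term with Young's inequality in the form $\|(1-\gamma)a+w\|^2\le(1-\gamma/2)\|a\|^2+\frac{3}{\gamma}\|w\|^2$, valid for $\gamma\le1$. Then I bound the remainder by $\|w\|^2\le 2\gamma^2\|b_i^t\|^2+2\|d_i^t\|^2$. This yields
\[
\|r_i^{t+1}\|_2^2\le(1-\tfrac{\gamma}{2})\|r_i^t\|_2^2+6\gamma\|b_i^t\|_2^2+\tfrac{6}{\gamma}\|d_i^t\|_2^2,
\]
which has the same structure as \eqref{eq:case1_bound}, with a slightly different numerical constant that I will track.

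\emph{Inactive case} $I_i^t=0$. Here $r_i^{t+1}=r_i^t-d_i^t$, and I use exactly \eqref{eq:case0_bound} with Young parameter $p\gamma/4$.

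The one genuinely delicate step is the averaging over $I_i^t$. Since $S_t$ is drawn independently of $\mathcal F_{t,1}$, while $r_i^t$ and $d_i^t$ are determined by $x^{t-1},x^t$, the indicator $I_i^t$ is independent of $r_i^t$ and $d_i^t$. However, $b_i^t$ depends on the local trajectory. This causes no difficulty because its bound is deterministic. Weighting the two cases by $p$ and $1-p$, the coefficient of $\mathbb E\|r_i^t\|^2$ becomes
\[
p(1-\gamma/2)+(1-p)(1+p\gamma/4)\le 1-p\gamma/4.
\]
The additive term is at most
\[
6p\gamma\cdot4L^2(K\bar\rho\eta)^2+\Big(\tfrac{6p}{\gamma}+1+\tfrac{4}{p\gamma}\Big)L^2(K\bar\rho\eta)^2.
\]
Using $p\gamma\le1$, $p\le1$ and $\gamma\le1$, every piece is bounded by a multiple of $L^2(K\bar\rho\eta)^2/(p\gamma)$. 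I expect the constant to fit under $32$ after tightening the Young splits, for instance by weighting the $b$-term with $\gamma^2$ rather than $\gamma$ as in the proof of Lemma~\ref{lem:Bt_app_new}. Averaging over $i$ then gives \eqref{eq:Rt_rec_new}.

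For \eqref{eq:avg_sqrtRt_new}, set $\alpha=p\gamma/4$ and $C=32L^2(K\bar\rho\eta)^2/(p\gamma)$. Unrolling the recursion from $R_1=0$ gives $R_t\le C/\alpha$ for all $t$. Hence
\[
\sqrt{R_t}\le\sqrt{\frac{128L^2(K\bar\rho\eta)^2}{p^2\gamma^2}}=\frac{8\sqrt2LK\bar\rho\eta}{p\gamma},
\]
and the average over $t$ inherits this bound. The main obstacle is only the bookkeeping of the numerical constant in the additive term. Structurally the argument is the noise-free specialization of Lemma~\ref{lem:Bt_app_new}.
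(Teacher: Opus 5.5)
Your proposal is correct and follows the paper's own argument. The paper likewise treats this lemma as the noise-free specialization of Lemma~\ref{lem:Bt_app_new}: an active/inactive Young split, averaging over $I_i^t$ using its independence from $r_i^t$ and $d_i^t$, then unrolling from $R_1=0$.

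The constant you left open does close. Keep the factor $(1-p)$ on the inactive-case $d$-term instead of dropping it; then the additive coefficient multiplied by $p\gamma$ is $24p^2\gamma^2+6p^2+(1-p)(p\gamma+4)\le 29p^2-3p+4\le 30$. The additive term is therefore at most $30L^2(K\bar\rho\eta)^2/(p\gamma)$, which is below the stated $32L^2(K\bar\rho\eta)^2/(p\gamma)$.
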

\begin{proof}
Fix $t\ge 1$ and let $\gamma:=1-\beta_2$.
From \eqref{eq:r_rec_new}, on the event $\{I_i^t=1\}$ we have
\[
r_i^{t+1}=\beta_2 r_i^t+\gamma b_i^t-d_i^t,
\]
while on the event $\{I_i^t=0\}$ we have
\[
r_i^{t+1}=r_i^t-d_i^t.
\]
The proof now follows exactly the same steps as in Lemma~\ref{lem:Bt_app_new}, except that the stochastic term $n_i^t$ is absent.
Using \eqref{eq:bi_bound_pf_fix} and the bound
$\|d_i^t\|_2^2\le L^2(K\bar\rho\eta)^2$, one obtains
\[
\mathbb E\|r_i^{t+1}\|_2^2
\le
\Big(1-\frac{p\gamma}{4}\Big)\mathbb E\|r_i^t\|_2^2
+\frac{32L^2(K\bar\rho\eta)^2}{p\gamma}.
\]
Averaging over $i$ proves \eqref{eq:Rt_rec_new}.
Since $r_i^1=0$ for all $i$, we have $R_1=0$.
Applying the same geometric-series argument as in Step 6 of Lemma~\ref{lem:Bt_app_new} yields
\eqref{eq:avg_sqrtRt_new}.
\end{proof}

\begin{lemma}[Server-average noise recursion and average $\sqrt{V_t}$ bound]
\label{lem:Vt_app_new}
Assume \eqref{eq:bv_new} holds.
Let
\[
Q_t:=\frac1M\sum_{i=1}^M \mathbb E\|s_i^t\|_2^2,
\qquad
v^t:=\frac1M\sum_{i=1}^M s_i^t,
\qquad
V_t:=\mathbb E\|v^t\|_2^2,
\]
where $\{s_i^t\}$ is defined in \eqref{eq:s_rec_new}.
Then, for all $t\ge 1$,
\begin{align}
Q_{t+1}
&\le
\big(1-p(1-\beta_2)\big)Q_t
+\frac{p(1-\beta_2)^2\sigma^2}{K},
\label{eq:St_rec_new}\\
V_{t+1}
&\le
\Big(1-\frac{p(1-\beta_2)}{2}\Big)V_t
+\frac{p(1-\beta_2)^2}{M}Q_t
+\frac{p(1-\beta_2)^2\sigma^2}{KM}.
\label{eq:Vt_rec_new}
\end{align}
Moreover,
\begin{equation}
V_1
=\mathbb E\Big\|\frac1M\sum_{i=1}^M \big(c_i^1-\nabla f_i(x^0)\big)\Big\|_2^2
\le \frac{\sigma^2}{M}.
\label{eq:V1_init_new}
\end{equation}
Consequently, for any $T\ge 1$,
\begin{equation}
\frac{1}{T}\sum_{t=1}^T \sqrt{V_t}
\le
\frac{4\sigma}{\sqrt{N(1-\beta_2)T}}
+\frac{4\sigma\sqrt{1-\beta_2}}{\sqrt{KM}}.
\label{eq:avg_sqrtVt_new}
\end{equation}
\end{lemma}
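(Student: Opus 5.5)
The plan is to work directly from the recursion \eqref{eq:s_rec_new}, with $\gamma:=1-\beta_2$ and $I_i^t=\mathbf 1\{i\in S_t\}$. Two facts drive everything. First, conditioned on $\mathcal F_{t,1}$ and on $S_t$, the noise $n_i^t$ has mean zero and $\mathbb E\|n_i^t\|_2^2\le\sigma^2/K$; this follows from the martingale-difference structure of $\{\zeta_i^{t,k-1}\}_k$, including for virtual trajectories of inactive clients. Second, the $n_i^t$ are independent across distinct clients given the past. We will also use that $S_t$ is drawn independently of $\mathcal F_{t,1}$, so $\mathbb E[I_i^t]=p$. Note that $s_i^t$ and $v^t$ are $\mathcal F_{t,1}$-measurable, so cross terms with $n_i^t$ vanish.

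\textbf{Recursion for $Q_t$.} Square $s_i^{t+1}=(1-\gamma I_i^t)s_i^t+\gamma I_i^t n_i^t$ and take conditional expectation. The cross term drops, leaving
\[
\mathbb E\|s_i^{t+1}\|_2^2=\mathbb E\big[(1-\gamma I_i^t)^2\big]\,\mathbb E\|s_i^t\|_2^2+\gamma^2p\,\mathbb E\|n_i^t\|_2^2 .
\]
Since $(1-\gamma I)^2\le 1-\gamma I$ for $\gamma\le1$, the first factor is at most $1-p\gamma$. Averaging over $i$ gives \eqref{eq:St_rec_new}.

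\textbf{Recursion for $V_t$.} Write $v^{t+1}=v^t-\frac{\gamma}{M}\sum_i I_i^t s_i^t+\frac{\gamma}{M}\sum_i I_i^t n_i^t$. The noise sum is orthogonal to everything else. By cross-client independence its second moment is $\frac{\gamma^2}{M^2}\sum_i p\,\sigma^2/K=p\gamma^2\sigma^2/(KM)$. For the deterministic-given-past part $a:=v^t-\frac{\gamma}{M}\sum_i I_i^ts_i^t$, split $I_i^t=p+(I_i^t-p)$. The mean part is $(1-p\gamma)v^t$. The fluctuation $\frac{\gamma}{M}\sum_i(I_i^t-p)s_i^t$ has mean zero given $\mathcal F_{t,1}$, so its cross term with $(1-p\gamma)v^t$ vanishes. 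Its variance under sampling without replacement is at most $\frac{\gamma^2}{M^2}\sum_i p(1-p)\|s_i^t\|_2^2$, because the indicator covariances are nonpositive. This is bounded by $\frac{p\gamma^2}{M}\cdot\frac1M\sum_i\|s_i^t\|^2$. Combining with $(1-p\gamma)^2\le 1-p\gamma/2$ yields \eqref{eq:Vt_rec_new}; the factor $\frac12$ is slack we keep. The initialization bound \eqref{eq:V1_init_new} follows because $c_i^1-\nabla f_i(x^0)$ are independent mean-zero vectors with variance at most $\sigma^2$.

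\textbf{Averaging.} From \eqref{eq:St_rec_new} and $Q_1\le\sigma^2$, we get $Q_t\le(1-p\gamma)^{t-1}\sigma^2+\gamma\sigma^2/K$. Unrolling \eqref{eq:Vt_rec_new} with $\alpha=p\gamma/2$ gives $V_t\le(1-\alpha)^{t-1}V_1+\sum_{s<t}(1-\alpha)^{t-1-s}\big[\frac{p\gamma^2}{M}Q_s+\frac{p\gamma^2\sigma^2}{KM}\big]$. The stationary part contributes at most $\frac{2}{p\gamma}\cdot\frac{2p\gamma^2\sigma^2}{KM}=\frac{4\gamma\sigma^2}{KM}$, whose square root is $2\sigma\sqrt{\gamma}/\sqrt{KM}$.

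The transient parts are the main obstacle. These are $(1-\alpha)^{t-1}\sigma^2/M$ together with the convolution $\frac{p\gamma^2\sigma^2}{M}\sum_s(1-\alpha)^{t-1-s}(1-p\gamma)^{s-1}\le\frac{p\gamma^2\sigma^2}{M}(t-1)(1-\alpha)^{t-2}$. Both are bounded by $C(1-\alpha)^{t-2}\sigma^2/M$ using $\gamma t(1-\alpha)^{t/2}\lesssim 1/p$. Then we take square roots termwise, via $\sqrt{a+b}\le\sqrt a+\sqrt b$, and sum the geometric series $\sum_t(1-\alpha)^{(t-2)/2}\lesssim 1/(p\gamma)$. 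This gives $\frac{1}{T}\cdot\frac{\sigma}{\sqrt M\,p\gamma}$. That quantity is not yet of the target form $\sigma/\sqrt{N\gamma T}$.

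To reach the target I would instead apply Jensen to the average: $\frac1T\sum_t\sqrt{V_t^{\rm trans}}\le\sqrt{\frac1T\sum_tV_t^{\rm trans}}$. The sum of the transient terms is $\lesssim\frac{\sigma^2}{Mp\gamma}=\frac{\sigma^2}{N\gamma}$, since the convolution sum $\sum_t p\gamma^2 t(1-\alpha)^t\lesssim 1/p$ is also of order $\frac{1}{p}\cdot\frac{1}{M}\cdot M$ after accounting constants. This yields $\sqrt{\sigma^2/(N\gamma T)}$ up to constants. Tracking the constants to land on exactly $4$ is routine and is done by the same geometric-series bookkeeping as in Step 6 of Lemma~\ref{lem:Bt_app_new}.
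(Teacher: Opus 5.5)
Your proof is correct. The $Q_t$- and $V_t$-recursions and the $V_1$ bound follow the paper's argument essentially verbatim: you use the same split of $v^{t+1}$ into $(1-p\gamma)v^t$, a sampling fluctuation and a noise term, and the same inequality $(1-p\gamma)^2\le 1-p\gamma/2$. The difference is in the averaging step.

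The paper never unrolls. It sums the $Q$-recursion to get $\frac1T\sum_tQ_t\le\frac{\sigma^2}{p\gamma T}+\frac{\gamma\sigma^2}{K}$. It then sums the $V$-recursion, telescoping $V_{t+1}-V_t$, to get $\frac1T\sum_tV_t\le\frac{4\sigma^2}{N\gamma T}+\frac{4\gamma\sigma^2}{KM}$, and applies Jensen once to the whole average.

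Your stationary/transient split reaches the same place, but it does not use the termwise bookkeeping of Step~6 in Lemma~\ref{lem:Bt_app_new}. As you noticed, termwise square roots give, up to constants, the target $\sigma/\sqrt{N\gamma T}$ multiplied by $(p\gamma T)^{-1/2}$. That is sharper when $p\gamma T\ge1$, but it cannot give the stated bound uniformly in $T$. What you actually need is to exchange the order of summation in the convolution:
\[
\sum_t\sum_{s<t}(1-\alpha)^{t-1-s}(1-p\gamma)^{s-1}\le\frac{2}{p\gamma}\cdot\frac{1}{p\gamma}.
\]
Then the transient terms sum to at most $\frac{2\sigma^2}{N\gamma}+\frac{2\sigma^2}{N}$. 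Combined with your stationary bound, this yields the lemma with $2$ in place of both constants $4$. The paper's telescoping performs this exchange implicitly, which is why it is shorter.

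One justification needs repair, and the paper's proof relies on the same one. Nonpositive indicator covariances do not by themselves control $\sum_{i\ne j}\mathrm{Cov}(I_i^t,I_j^t)\langle s_i^t,s_j^t\rangle$, because the inner products may be negative. In fact your intermediate bound $\frac{\gamma^2}{M^2}\sum_ip(1-p)\|s_i^t\|_2^2$ is false when, e.g., $\sum_is_i^t=0$ and $N<M$. With the exact value $\mathrm{Cov}(I_i^t,I_j^t)=-p(1-p)/(M-1)$ one gets
\[
\mathbb E\Big[\Big\|\sum_i(I_i^t-p)s_i^t\Big\|_2^2\,\Big|\,\mathcal F_{t,1}\Big]=\frac{p(M-N)}{M-1}\Big(\sum_i\|s_i^t\|_2^2-\frac1M\Big\|\sum_is_i^t\Big\|_2^2\Big)\le p\sum_i\|s_i^t\|_2^2 .
\]
The last inequality holds because $N\ge1$, and it is exactly the bound you use afterwards, so your conclusion is unaffected.
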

\begin{proof}
Fix $t\ge 1$ and let $\gamma:=1-\beta_2$.

\paragraph{Step 1: recursion for $Q_t$.}
From \eqref{eq:s_rec_new}, on the event $\{I_i^t=1\}$ we have
\[
s_i^{t+1}=\beta_2 s_i^t+\gamma n_i^t,
\]
while on the event $\{I_i^t=0\}$ we have $s_i^{t+1}=s_i^t$.
Conditioned on $\mathcal F_{t,1}$ and $I_i^t=1$, the cross term vanishes because
$\mathbb E[n_i^t\mid \mathcal F_{t,1}, I_i^t=1]=0$. Hence
\[
\mathbb E\big[\|s_i^{t+1}\|_2^2\mid \mathcal F_{t,1}, I_i^t=1\big]
=\beta_2^2\|s_i^t\|_2^2+\gamma^2\mathbb E\big[\|n_i^t\|_2^2\mid I_i^t=1\big]
\le (1-\gamma)\|s_i^t\|_2^2+\frac{\gamma^2\sigma^2}{K},
\]
where we used $\beta_2^2\le \beta_2=1-\gamma$.
Taking expectation over $I_i^t$ yields
\[
\mathbb E\|s_i^{t+1}\|_2^2
\le
\big(1-p\gamma\big)\mathbb E\|s_i^t\|_2^2
+\frac{p\gamma^2\sigma^2}{K}.
\]
Averaging over $i=1,\dots,M$ proves \eqref{eq:St_rec_new}.

\paragraph{Step 2: recursion for $V_t$.}
Averaging \eqref{eq:s_rec_new} over $i$ gives
\[
v^{t+1}
=
(1-p\gamma)v^t
+\underbrace{\frac{\gamma}{M}\sum_{i=1}^M (I_i^t-p)s_i^t}_{=:~\delta_t}
+\underbrace{\frac{\gamma}{M}\sum_{i=1}^M I_i^t n_i^t}_{=:~\xi_t}.
\]
Conditioned on the past, $\mathbb E[\delta_t\mid \mathcal F_{t,1}]=0$ and
$\mathbb E[\xi_t\mid \mathcal F_{t,1},\{I_i^t\}_{i=1}^M]=0$.
Moreover, the cross terms vanish after taking conditional expectation.
Therefore,
\begin{align}
\mathbb E\big[\|v^{t+1}\|_2^2\mid \mathcal F_{t,1}\big]
&=
(1-p\gamma)^2\|v^t\|_2^2
+\mathbb E\big[\|\delta_t\|_2^2\mid \mathcal F_{t,1}\big]
+\mathbb E\big[\|\xi_t\|_2^2\mid \mathcal F_{t,1}\big].
\label{eq:vt_expand_new}
\end{align}
For the sampling fluctuation term, using the fact that for uniform sampling without replacement the cross-covariances satisfy
$\mathbb E[(I_i^t-p)(I_j^t-p)]\le 0$ for $i\neq j$, we have
\begin{align}
\mathbb E\big[\|\delta_t\|_2^2\mid \mathcal F_{t,1}\big]
&\le
\frac{\gamma^2}{M^2}\sum_{i=1}^M \mathbb E\big[(I_i^t-p)^2\big]\,\|s_i^t\|_2^2
\le
\frac{p\gamma^2}{M}\cdot \frac1M\sum_{i=1}^M \|s_i^t\|_2^2.
\label{eq:delta_var_new}
\end{align}
For the averaged stochastic noise term, conditioned on the sampled set and the past,
$\{n_i^t\}_{i\in S_t}$ are zero-mean and independent across clients. Hence
\begin{align}
\mathbb E\big[\|\xi_t\|_2^2\mid \mathcal F_{t,1}\big]
&\le
\frac{\gamma^2}{M^2}\sum_{i=1}^M \mathbb E\big[I_i^t\|n_i^t\|_2^2\mid \mathcal F_{t,1}\big]
\le
\frac{p\gamma^2\sigma^2}{KM}.
\label{eq:xi_var_new}
\end{align}
Substituting \eqref{eq:delta_var_new} and \eqref{eq:xi_var_new} into \eqref{eq:vt_expand_new}, then taking expectation, yields
\[
V_{t+1}
\le
(1-p\gamma)^2V_t
+\frac{p\gamma^2}{M}Q_t
+\frac{p\gamma^2\sigma^2}{KM}.
\]
Using $(1-a)^2\le 1-a/2$ for $a\in[0,1]$ gives \eqref{eq:Vt_rec_new}.

\paragraph{Step 3: average $\sqrt{V_t}$.}
From \eqref{eq:St_rec_new}, summing the recursion for $Q_t$ over $t$ gives
\[
\frac{1}{T}\sum_{t=1}^T Q_t
\le
\frac{Q_1}{p\gamma T} + \frac{\gamma\sigma^2}{K}.
\]
Since $Q_1\le \sigma^2$ by the one-batch initialization, we obtain
\begin{equation}
\frac{1}{T}\sum_{t=1}^T Q_t
\le
\frac{\sigma^2}{p\gamma T} + \frac{\gamma\sigma^2}{K}.
\label{eq:avg_St_new}
\end{equation}
Now sum \eqref{eq:Vt_rec_new} over $t=1,\dots,T$. Since
$V_{t+1}-V_t\le -\frac{p\gamma}{2}V_t + \frac{p\gamma^2}{M}Q_t + \frac{p\gamma^2\sigma^2}{KM}$,
we get
\[
\frac{p\gamma}{2}\sum_{t=1}^T V_t
\le
V_1 + \frac{p\gamma^2}{M}\sum_{t=1}^T Q_t + \frac{Tp\gamma^2\sigma^2}{KM}.
\]
Dividing by $T$ and using \eqref{eq:avg_St_new} yields
\begin{align*}
\frac{1}{T}\sum_{t=1}^T V_t
&\le
\frac{2V_1}{p\gamma T}
+\frac{2\gamma}{M}\Big(\frac{\sigma^2}{p\gamma T} + \frac{\gamma\sigma^2}{K}\Big)
+\frac{2\gamma\sigma^2}{KM}\\
&\le
\frac{2\sigma^2/M}{p\gamma T}
+\frac{2\sigma^2}{MpT}
+\frac{2\gamma^2\sigma^2}{KM}
+\frac{2\gamma\sigma^2}{KM}\\
&\le
\frac{4\sigma^2}{N\gamma T}
+\frac{4\gamma\sigma^2}{KM},
\end{align*}
where we used $Mp=N$, $\gamma\le 1$, and \eqref{eq:V1_init_new}.
Finally, Jensen's inequality gives
\[
\frac{1}{T}\sum_{t=1}^T \sqrt{V_t}
\le
\sqrt{\frac{1}{T}\sum_{t=1}^T V_t}
\le
\frac{4\sigma}{\sqrt{N\gamma T}} + \frac{4\sigma\sqrt{\gamma}}{\sqrt{KM}},
\]
which proves \eqref{eq:avg_sqrtVt_new}.
\end{proof}

\subsection{A generic FL bound shared by both instantiations}
\label{subsec:fl_generic_bound}

Let
\[
\mathcal G_T:=\frac{1}{T}\sum_{t=1}^T \mathbb E\big[\|\nabla F(x^t)\|^{g,*}\big].
\]
Assume in addition that $1-\beta_1=\eta$.
Summing \eqref{eq:descent_core_new} over $t=1,\dots,T$, taking expectation, and telescoping yields
\[
\mathcal G_T
\le
\frac{\Delta_F}{T\eta K}
+\frac{L\bar\rho^2}{2}(K\eta)
+\frac{1}{T}\sum_{t=1}^T \frac{\mathbb E[E_t]}{\eta K},
\qquad
\Delta_F:=F(x^1)-F^\star.
\]
Using \eqref{eq:Et_split_new} together with
\eqref{eq:Et_a_over_etaK_unit}, \eqref{eq:Etb_over_etaK_new}, and \eqref{eq:Etc_total_over_etaK_new},
and then invoking Lemmas~\ref{lem:Bt_app_new}, \ref{lem:Rt_app_new}, and \ref{lem:Vt_app_new}, we obtain
\begin{align}
\mathcal G_T
&\le
\frac{\Delta_F}{T\eta K}
+\frac{5L\bar\rho^2}{2}(K\eta)
+\Big(2+\frac{1}{2K}\Big)\bar\rho^2\eta
+L^2\bar\rho^2\eta^3
+2\sigma^2\eta
+\Big(3-\frac1K\Big)L\bar\rho^2K\eta^2
+2\bar\rho\sigma\eta \notag\\
&\quad
+2\bar\rho\eta\left[
 \frac{8\sqrt{2}LK\bar\rho\eta}{p(1-\beta_2)}
 + \frac{4\sqrt{2}\sigma\sqrt{1-\beta_2}}{\sqrt{Kp}}
 + \frac{8\sigma}{p(1-\beta_2)T}
\right] \notag\\
&\quad
+2\bar\rho\left[\frac{8\sqrt{2}LK\bar\rho\eta}{p(1-\beta_2)}\right]
+2\bar\rho\left[
\frac{4\sigma}{\sqrt{N(1-\beta_2)T}}
+\frac{4\sigma\sqrt{1-\beta_2}}{\sqrt{KM}}
\right].
\label{eq:GT_master_fl_inst_new}
\end{align}
This generic bound will be specialized below to a well-tuned regime with fixed $\gamma=1-\beta_2\in(0,1]$ and to the problem-parameter-free regime.

\subsection{Proof of Theorem~\ref{thm:fl_welltuned}}
\label{app:fl_welltuned_proof}

\begin{proof}
Set $1-\beta_2=\gamma$ in \eqref{eq:GT_master_fl_inst_new}.
Since $\eta\le 1$, we have
\[
L^2\bar\rho^2\eta^3\le L^2\bar\rho^2\eta,
\qquad
\Big(3-\frac1K\Big)L\bar\rho^2K\eta^2\le 3L\bar\rho^2K\eta,
\qquad
\frac{16\sqrt2\,L\bar\rho^2K\eta^2}{p\gamma}\le \frac{16\sqrt2\,L\bar\rho^2K\eta}{p\gamma},
\]
and
\[
\frac{16\bar\rho\sigma\eta}{p\gamma T}\le \frac{16\bar\rho\sigma}{p\gamma T},
\qquad
2+\frac{1}{2K}\le \frac{5}{2}.
\]
Therefore,
\begin{align}
\mathcal G_T
&\le
\frac{\Delta_F}{T\eta K}
+A_{\mathrm{wt}}(\gamma)\,\eta
+\frac{16\bar\rho\sigma}{p\gamma T}
+\frac{8\bar\rho\sigma}{\sqrt{N\gamma T}}
+\frac{8\bar\rho\sigma\sqrt{\gamma}}{\sqrt{KM}}.
\label{eq:GT_welltuned_reduced}
\end{align}
Let
\[
a:=\frac{\Delta_F}{KT},
\qquad
b:=A_{\mathrm{wt}}(\gamma).
\]
By the choice $\eta=\min\{1,\sqrt{a/b}\}$, if $a\le b$, then
\[
\frac{a}{\eta}+b\eta=2\sqrt{ab}.
\]
If $a>b$, then $\eta=1$ and
\[
\frac{a}{\eta}+b\eta=a+b\le a+2\sqrt{ab}.
\]
Hence, in all cases,
\[
\frac{\Delta_F}{T\eta K}+A_{\mathrm{wt}}(\gamma)\eta
\le
\frac{\Delta_F}{KT}
+2\sqrt{\frac{A_{\mathrm{wt}}(\gamma)\Delta_F}{KT}}.
\]
Substituting this into \eqref{eq:GT_welltuned_reduced} proves \eqref{eq:rate_fl_welltuned}.
\end{proof}

\subsection{Proof of Theorem~\ref{thm:fl_final}}
\begin{proof}
Recall \eqref{eq:descent_core_new}:
\[
\eta K\|\nabla F(x^t)\|^{g,*}
\le
F(x^t)-F(x^{t+1})
+\frac{L}{2}(K\bar\rho\eta)^2
+E_t.
\]
Summing over $t=1,\dots,T$, taking expectation, and telescoping gives
\begin{align}
\mathcal G_T
:=\frac{1}{T}\sum_{t=1}^T \mathbb E\big[\|\nabla F(x^t)\|^{g,*}\big]
&\le
\frac{\Delta_F}{T\eta K}
+\frac{L}{2}\cdot\frac{(K\bar\rho\eta)^2}{\eta K}
+\frac{1}{T}\sum_{t=1}^T \frac{\mathbb E[E_t]}{\eta K} \notag\\
&=
\frac{\Delta_F}{T\eta K}
+\frac{L\bar\rho^2}{2}\,(K\eta)
+\frac{1}{T}\sum_{t=1}^T \frac{\mathbb E[E_t]}{\eta K},
\label{eq:GT_start_new}
\end{align}
where $\Delta_F:=F(x^1)-F^\star$.

\paragraph{Step 1: substitute the $E_t$ decomposition bound.}
Using \eqref{eq:Et_split_new} together with
\eqref{eq:Et_a_over_etaK_unit}, \eqref{eq:Etb_over_etaK_new}, and \eqref{eq:Etc_total_over_etaK_new}, we have for each $t$,
\begin{align}
\frac{\mathbb E[E_t]}{\eta K}
&\le
\Big(2+\frac{1}{2K}\Big)\bar\rho^2\eta
+(1-\beta_1)^2L^2\bar\rho^2\eta
+\frac{2(1-\beta_1)^2\sigma^2}{\eta} \notag\\
&\quad
+2\bar\rho(1-\beta_1)\sqrt{B_t}
+\Big(3-\frac1K\Big)\bar\rho^2L(1-\beta_1)(K\eta)
+2\bar\rho(1-\beta_1)\sigma \notag\\
&\quad
+2L\bar\rho^2(K\eta)
+2\bar\rho\sqrt{R_t}
+2\bar\rho\sqrt{V_t}.
\label{eq:Et_over_etaK_master_new}
\end{align}

\paragraph{Step 2: choose hyperparameters and simplify deterministic terms.}
Assume $T\ge K$ and $NK\le T$. Set
\begin{equation}
\eta := \frac{N^{1/4}}{K^{3/4}T^{3/4}},
\qquad
1-\beta_1 := \eta,
\qquad
1-\beta_2 := \sqrt{\frac{NK}{T+K}}.
\label{eq:param_choice_new}
\end{equation}
Note that $NK\le T$ implies
\begin{equation}
K\eta=\frac{(NK)^{1/4}}{T^{3/4}}\le \frac{1}{\sqrt T},
\qquad
\eta\le \frac{1}{\sqrt T}\le 1.
\label{eq:Keta_le_sqrtT}
\end{equation}
Plugging $1-\beta_1=\eta$ into \eqref{eq:Et_over_etaK_master_new} gives
\begin{align}
\frac{\mathbb E[E_t]}{\eta K}
&\le
\Big(2+\frac{1}{2K}\Big)\bar\rho^2\eta
+L^2\bar\rho^2\eta^3
+2\sigma^2\eta
+\Big(3-\frac1K\Big)L\bar\rho^2K\eta^2
+2\bar\rho\sigma\eta \notag\\
&\quad
+2\bar\rho\eta\sqrt{B_t}
+2L\bar\rho^2(K\eta)
+2\bar\rho\sqrt{R_t}
+2\bar\rho\sqrt{V_t}.
\label{eq:Et_over_etaK_simplified_newchoice}
\end{align}
Averaging over $t=1,\dots,T$ and substituting into \eqref{eq:GT_start_new} gives
\begin{align}
\mathcal G_T
&\le
\frac{\Delta_F}{T\eta K}
+\frac{L\bar\rho^2}{2}(K\eta)
+\Big(2+\frac{1}{2K}\Big)\bar\rho^2\eta
+L^2\bar\rho^2\eta^3
+2\sigma^2\eta
+\Big(3-\frac1K\Big)L\bar\rho^2K\eta^2
+2\bar\rho\sigma\eta \notag\\
&\quad
+2\bar\rho\eta\cdot \frac1T\sum_{t=1}^T\sqrt{B_t}
+2L\bar\rho^2(K\eta)
+2\bar\rho\cdot \frac1T\sum_{t=1}^T\sqrt{R_t}
+2\bar\rho\cdot \frac1T\sum_{t=1}^T\sqrt{V_t}.
\label{eq:GT_before_BRV_new}
\end{align}

\paragraph{Step 3: plug in Lemmas~\ref{lem:Bt_app_new}, \ref{lem:Rt_app_new}, and \ref{lem:Vt_app_new}.}
By Lemmas~\ref{lem:Bt_app_new}, \ref{lem:Rt_app_new}, and \ref{lem:Vt_app_new},
\begin{align}
\frac{1}{T}\sum_{t=1}^T \sqrt{B_t}
&\le
 \frac{8\sqrt{2}LK\bar\rho\eta}{p(1-\beta_2)}
 + \frac{4\sqrt{2}\sigma\sqrt{1-\beta_2}}{\sqrt{Kp}}
 + \frac{8\sigma}{p(1-\beta_2)T},
\label{eq:avg_sqrtBt_lemma_newchoice}\\
\frac{1}{T}\sum_{t=1}^T \sqrt{R_t}
&\le
\frac{8\sqrt{2}LK\bar\rho\eta}{p(1-\beta_2)},
\label{eq:avg_sqrtRt_lemma_newchoice}\\
\frac{1}{T}\sum_{t=1}^T \sqrt{V_t}
&\le
\frac{4\sigma}{\sqrt{N(1-\beta_2)T}}
+\frac{4\sigma\sqrt{1-\beta_2}}{\sqrt{KM}}.
\label{eq:avg_sqrtVt_lemma_newchoice}
\end{align}
Substituting these into \eqref{eq:GT_before_BRV_new} gives
\begin{align}
\mathcal G_T
&\le
\frac{\Delta_F}{T\eta K}
+\frac{5L\bar\rho^2}{2}(K\eta)
+\Big(2+\frac{1}{2K}\Big)\bar\rho^2\eta
+L^2\bar\rho^2\eta^3
+2\sigma^2\eta
+\Big(3-\frac1K\Big)L\bar\rho^2K\eta^2
+2\bar\rho\sigma\eta \notag\\
&\quad
+2\bar\rho\eta\left[
 \frac{8\sqrt{2}LK\bar\rho\eta}{p(1-\beta_2)}
 + \frac{4\sqrt{2}\sigma\sqrt{1-\beta_2}}{\sqrt{Kp}}
 + \frac{8\sigma}{p(1-\beta_2)T}
\right] \notag\\
&\quad
+2\bar\rho\left[\frac{8\sqrt{2}LK\bar\rho\eta}{p(1-\beta_2)}\right]
+2\bar\rho\left[
\frac{4\sigma}{\sqrt{N(1-\beta_2)T}}
+\frac{4\sigma\sqrt{1-\beta_2}}{\sqrt{KM}}
\right].
\label{eq:GT_expand_all_newchoice}
\end{align}

\paragraph{Step 4: evaluate the dominant terms under \eqref{eq:param_choice_new}.}
Using $T\ge K$ so that $T+K\le 2T$, we have
\[
\frac{1}{1-\beta_2}=\sqrt{\frac{T+K}{NK}}\le \sqrt{\frac{2T}{NK}},
\qquad
\sqrt{1-\beta_2}
=\left(\frac{NK}{T+K}\right)^{1/4}
\le \left(\frac{NK}{T}\right)^{1/4}.
\]
Also, by \eqref{eq:Keta_le_sqrtT},
\[
K\eta\le T^{-1/2}.
\]
For the drift contribution coming from $\sqrt{R_t}$,
\begin{align}
\frac{LK\eta}{p(1-\beta_2)}
&=
\frac{L}{p}\cdot \frac{(NK)^{1/4}}{T^{3/4}}\cdot \sqrt{\frac{T+K}{NK}}
\le
\frac{\sqrt2\,L}{p(NKT)^{1/4}}.
\label{eq:key_L_term_order_new}
\end{align}
For the averaged stochastic contribution coming from $\sqrt{V_t}$,
\begin{align}
\frac{\sigma}{\sqrt{N(1-\beta_2)T}}
&=
\sigma\sqrt{\frac{1}{NT}}\left(\frac{T+K}{NK}\right)^{1/4}
\le
\frac{2^{1/4}\sigma}{(NKT)^{1/4}},
\label{eq:key_sigma_term_order_1_new}\\
\frac{\sigma\sqrt{1-\beta_2}}{\sqrt{KM}}
&\le
\sigma\cdot \frac{(NK/T)^{1/4}}{\sqrt{KM}}
=
\frac{\sigma\sqrt p}{(NKT)^{1/4}}
\le
\frac{\sigma}{(NKT)^{1/4}}.
\label{eq:key_sigma_term_order_2_new}
\end{align}
Hence the $\sqrt{V_t}$ contribution produces a stochastic leading term of order $(NKT)^{-1/4}$ with constant independent of $M$.

\paragraph{Step 5: lower-order terms.}
The two dominant contributions are therefore
\[
\frac{\Delta_F}{T\eta K}=
\frac{\Delta_F}{(NKT)^{1/4}},
\qquad
2\bar\rho\cdot \frac{8\sqrt2 LK\bar\rho\eta}{p(1-\beta_2)}
\lesssim
\frac{L\bar\rho^2/p}{(NKT)^{1/4}},
\]
and, by \eqref{eq:key_sigma_term_order_1_new}--\eqref{eq:key_sigma_term_order_2_new},
\[
2\bar\rho\cdot \frac{1}{T}\sum_{t=1}^T\sqrt{V_t}
\lesssim
\frac{\sigma\bar\rho}{(NKT)^{1/4}}.
\]
Moreover,
\[
\frac{5L\bar\rho^2}{2}(K\eta)\le \frac{5L\bar\rho^2}{2\sqrt T},
\]
which yields the $T^{-1/2}$ term.
All remaining terms are lower-order in $(T,K)$ under the choice \eqref{eq:param_choice_new}.
Indeed,
\[
\eta = \frac{N^{1/4}}{K^{3/4}T^{3/4}},
\qquad
K\eta^2=\frac{N^{1/2}}{K^{1/2}T^{3/2}},
\qquad
\eta^3=\frac{N^{3/4}}{K^{9/4}T^{9/4}},
\]
In particular, the query-wise noise term is $2\bar\rho\sigma\eta$, which is a constant multiple of the first term of $R_{\mathrm{lo}}(T,K)$. It therefore enlarges only $C_0$, leaving $C_1$ and $C_2$ unchanged.
and the terms coming from $\eta\sum_t\sqrt{B_t}$ are also lower-order:
\[
\eta\cdot \frac{LK\eta}{p(1-\beta_2)}
\lesssim \frac{1}{KT},
\qquad
\eta\cdot \frac{\sigma\sqrt{1-\beta_2}}{\sqrt{Kp}}
\lesssim \eta,
\qquad
\eta\cdot \frac{\sigma}{p(1-\beta_2)T}
\lesssim \frac{1}{N^{1/4}K^{5/4}T^{5/4}},
\]
all of which are absorbed by $R_{\mathrm{lo}}(T,K)$ after enlarging the constant $C_0$.
Therefore, one may take
\begin{align}
C_1
&:=\frac{5L\bar\rho^2}{2},\\
C_2
&:=\Delta_F
+64\cdot \frac{L\bar\rho^2}{p}
+32\cdot \sigma\bar\rho,
\end{align}
and obtain
\begin{equation}
\mathcal G_T
\le
\frac{C_1}{\sqrt T}
+\frac{C_2}{(NKT)^{1/4}}
+C_0\,R_{\mathrm{lo}}(T,K),
\label{eq:GT_final_bound_newchoice}
\end{equation}
where the lower-order remainder can be taken explicitly as
\begin{align}
R_{\mathrm{lo}}(T,K)
:=
&\;
\frac{N^{1/4}}{K^{3/4}T^{3/4}}
+\frac{N^{1/2}}{K^{1/2}T^{3/2}}
+\frac{N^{3/4}}{K^{9/4}T^{9/4}}
+\frac{1}{N^{1/4}K^{5/4}T^{3/4}}
+\frac{1}{\sqrt{NKT}}.
\label{eq:Rlo_explicit_newchoice}
\end{align}
Here $C_0$ depends only on $(L,\bar\rho,\sigma,p)$ (and absolute numerical constants), and is independent of $(T,K)$.
This completes the proof.
\end{proof}

\subsection{Proof of Theorem~\ref{thm:fl_inexact_projection}}
\begin{proof}
We show that inexact projection only adds the perturbation term stated in Theorem~\ref{thm:fl_inexact_projection}; the complexity statement then follows by specializing this perturbation to the hyperparameter choices in Theorem~\ref{thm:fl_final} and the well-tuned guarantee in Theorem~\ref{thm:fl_welltuned}.

The proof is identical to the exact-projection proof except for the predict-and-correct estimate.
Because the inexact solver always returns feasible update variables, the domain bound \eqref{eq:D_bound_new} and the round-movement bound \eqref{eq:move_round_new} remain valid.
Consequently, the smoothness descent inequality \eqref{eq:descent_core_new}, the bounds for $E_t^{(b)}$ and $E_t^{(c)}$, and all control-variate recursions are unchanged. In particular, the corrected query-wise noise bound \eqref{eq:Etb_noise_norm_new} uses only feasibility and hence also applies to inexact projections; it invokes no independence between a projected update and its current noise.

The only affected term is $E_t^{(a)}$.
Let
\begin{align*}
\hat\varepsilon_i^{t,k}
&:=
\operatorname{res}_{\mathcal D}(\hat z_i^{t,k},\hat\Delta_i^{t,k-1}),
\qquad
\varepsilon_i^{t,k}
:=
\operatorname{res}_{\mathcal D}(z_i^{t,k},\Delta_i^{t,k-1}).
\end{align*}
Applying Lemma~\ref{lem:inexact_pc_from_single_call} with $u=u_t$ and averaging over active clients gives
\begin{align}
E_t^{(a)}
\le\;&
\frac{(\bar\rho\eta)^2}{2}
+
\frac{2}{N}\sum_{i\in S_t}\sum_{k=2}^K
\|g_i^{t,k}-g_i^{t,k-1}\|_2^2
+
2K(\bar\rho\eta)^2
\notag\\
&\quad+
\frac{5}{N}\sum_{i\in S_t}\sum_{k=1}^K
\left(\hat\varepsilon_i^{t,k}+\varepsilon_i^{t,k}\right).
\label{eq:inexact_Eta_bound_single_call}
\end{align}
Taking expectation and using Lemma~\ref{lem:gvar_app_new} as in the exact-projection proof yields the same collection of non-residual terms as in the proofs of Theorem~\ref{thm:fl_final} and the well-tuned guarantee in Theorem~\ref{thm:fl_welltuned}, up to universal numerical constants.
The extra contribution, after summing over $t=1,\ldots,T$ and normalizing by $T\eta K$, is
\begin{align}
\frac{5}{T\eta K}
\sum_{t=1}^T
\mathbb E\!\left[
\frac1N\sum_{i\in S_t}\sum_{k=1}^K
\left(\hat\varepsilon_i^{t,k}+\varepsilon_i^{t,k}\right)
\right]
=
\frac{5\overline{\mathfrak R}_T}{\eta K}.
\label{eq:inexact_residual_extra_term}
\end{align}
This proves that the exact-projection bounds acquire only the perturbation term $\mathcal O(\overline{\mathfrak R}_T/(\eta K))$, with all other constants enlarged only by universal factors.

It remains to check that this perturbation does not alter the main complexity exponents under the residual budget $\overline{\mathfrak R}_T=\mathcal O(T^{-1})$.
For the well-tuned regime in Theorem~\ref{thm:fl_welltuned}, the update-budget radius is
\[
\eta=\min\left\{1,\sqrt{\frac{\Delta_F}{A_{\rm wt}(\gamma)KT}}\right\}.
\]
If $\eta=1$, then \eqref{eq:inexact_residual_extra_term} is $\mathcal O((KT)^{-1})$ under $\overline{\mathfrak R}_T=\mathcal O(T^{-1})$.
Otherwise, the perturbation is $\mathcal O\!\left(\sqrt{A_{\rm wt}(\gamma)/(\Delta_F KT)}\right)$. For fixed positive $\Delta_F$, it has the same $(T,K)$ dependence as the term $\sqrt{A_{\rm wt}(\gamma)\Delta_F/(KT)}$ in the well-tuned bound; here $A_{\rm wt}(\gamma)$ retains its stated dependence on $K$.
Thus the well-tuned communication and sample-complexity exponents are unchanged.

For the problem-parameter-free regime of Theorem~\ref{thm:fl_final},
\[
\eta=\frac{N^{1/4}}{K^{3/4}T^{3/4}},
\qquad
T\eta K=(NKT)^{1/4}.
\]
Hence $\overline{\mathfrak R}_T=\mathcal O(T^{-1})$ implies
\[
\frac{\overline{\mathfrak R}_T}{\eta K}
=
\mathcal O\!\left(\frac{1}{T\eta K}\right)
=
\mathcal O\!\left((NKT)^{-1/4}\right),
\]
which has the same order as the second leading term in Theorem~\ref{thm:fl_final}.
Therefore the problem-parameter-free communication and sample-complexity exponents are unchanged as well.
\end{proof}

\section{Decentralized-learning analysis}
\label{app:dl-analysis}

\paragraph{Exact initialization and its cost.}
\label{app:dl-initialization}
All nodes start from the common model $x_{\mathrm{init}}$ and form $c_i^1=\nabla f_i(x_{\mathrm{init}},\xi_i^{-1})$ using one stochastic-gradient query each. To initialize $m_i^1=\bar c^1:=M^{-1}\sum_j c_j^1$ exactly, we use a one-time averaging primitive before the $T$ training rounds. On the static connected support graph of $W$, a reduce-and-broadcast on a precomputed breadth-first spanning tree of depth $h$ takes $2h$ communication rounds and transmits $2(M-1)$ parameter-sized vectors. This is separate from repeated $W$-mixing, which need not reach exact consensus in finitely many rounds. Under FTC, one $\tau$-round mixing block gives the exact average; the subsequent valid mixing sequence is indexed from $t=1$. Both procedures ensure $D_1=0$, and Assumption~\ref{assump:stoch} gives $B_1\le\sigma^2$ with one query per node. The overall counts are $T+2h$ or $T+\tau$ communication rounds and $KT+1$ gradient queries per node, hence $MKT+M$ in total. These additive initialization costs do not change the accuracy exponents for a fixed network.

\subsection{Formal DL guarantees}
\label{app:dl-formal-guarantees}

\begin{theorem}[NBCLT-DL under static gossip matrix]
\label{thm:dl_final_gap}
Suppose that Assumptions~\ref{assump:smooth}--\ref{assump:norm} hold, and that the mixing matrix $\mathbf W\in\mathbb{R}^{M\times M}$ is fixed, symmetric, and doubly stochastic.
Define $\mathbf J=\frac1M\mathbf 1\mathbf 1^\top$, and
$\lambda:=\|\mathbf W-\mathbf J\|_2\in[0,1)$.
Use the one-query consensual initialization and exact tree averaging described in Appendix~\ref{app:dl-initialization}. Run Algorithm~\ref{alg:NBCLT-DL} for $T$ training rounds with $K$ local updates per round, and choose
\begin{equation}
\eta=\frac{M^{1/4}}{K^{3/4}T^{3/4}},
\qquad
1-\beta_1=\eta,
\qquad
1-\beta_2=\sqrt{\frac{MK}{T+K}},
\label{eq:dl_param_choice_gap}
\end{equation}
Define $\bar x^t:=\frac1M\sum_{i=1}^M x_i^t$ and $\Delta_F:=F(\bar x^1)-F^\star$.
Then for all $T\ge MK$, we have
\begin{equation}
\frac{1}{T}\sum_{t=1}^T
\mathbb E\!\left[\|\nabla F(\bar x^t)\|^{g,*}\right]
\;\le\;
\frac{C_1}{\sqrt T}
+
\frac{C_2}{(MKT)^{1/4}}
+
C_0\,R_{\mathrm{lo}}(T,K),
\label{eq:dl_rate_final_gap}
\end{equation}
where
\begin{align*}
C_1
&=
L\bar\rho^2
\Bigg(
\frac{5}{2}
+\frac{2}{1-\lambda}
+\frac{32\sqrt3+4\sqrt{15}}{(1-\lambda)^2}
\Bigg)
+
\frac{2(\sqrt6+\sqrt3)\bar\rho\sigma\sqrt M}{1-\lambda},\\
C_2
&=
\Delta_F
+\Bigl(\frac{5}{2}+L^2\Bigr)\bar\rho^2
+2\sigma^2
+\frac{32\sqrt2\,L\bar\rho^2}{1-\lambda}
+2\sqrt2\,\bar\rho\sigma\sqrt M.
\end{align*}
The lower-order remainder terms are
\begin{equation}
R_{\mathrm{lo}}(T,K)
=
\frac{1}{K\sqrt{T}}
+\frac{1}{\sqrt{KT}}
+\frac{1}{K^{5/4}T^{3/4}}
+\frac{1}{KT}
+\frac{1}{T},
\end{equation}
and the constant $C_0$ may depend on
$(L,\sigma,\bar\rho,M,\lambda,\Delta_F)$
but is independent of $(T,K)$.
\end{theorem}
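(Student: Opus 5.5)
The plan is to run the FL argument on the averaged iterate $\bar x^t$ and add two consensus-type error sequences. Write $\bar x^{t+1}=\bar x^t+\frac1M\sum_i\sum_k\Delta_i^{t,k-1}$; this holds because $\mathbf W$ is doubly stochastic. Lemma~\ref{lem:smooth_app_new} then gives the analogue of \eqref{eq:descent_core_new} with $E_t=\frac1M\sum_{i}\sum_k\langle\nabla F(\bar x^t),\Delta_i^{t,k-1}-u_t\rangle$, where $u_t$ is the LMO point of Lemma~\ref{lem:support_app_new} at $\bar x^t$.

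The split of $E_t$ now uses $\nabla F(\bar x^t)=m_i^t+(\bar m^t-m_i^t)+\frac1M\sum_j(\nabla f_j(\bar x^t)-c_j^t)$. It relies on the tracker invariant $\bar m^t=\bar c^t$, which holds by induction from the mixing rule and exact initialization ($D_1=0$). This yields four pieces:
\begin{itemize}
\item $E^{(a)}$, bounded exactly by Lemma~\ref{lem:pc_app_new} and Lemma~\ref{lem:gvar_app_new};
\item $E^{(b)}$, handled as in Step~2 of the FL proof, with $\|\nabla f_i(x_i^{t,k-1})-c_i^t\|$ now also paying $L\|x_i^t-\bar x^t\|$;
\item $E^{(c)}$, the control mismatch;
\item a new term $2\bar\rho\eta K\cdot\frac1M\sum_i\|m_i^t-\bar m^t\|$.
\end{itemize}
For $E^{(c)}$ I would reuse the $r/s$ bias–noise split with $p=1$ (full participation, $N=M$), so Lemmas~\ref{lem:Rt_app_new} and~\ref{lem:Vt_app_new} apply verbatim, apart from extra drift terms involving $x_i^t-\bar x^t$.

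Next I would define $\Xi_t:=\frac1M\sum_i\mathbb E\|x_i^t-\bar x^t\|^2$ and $D_t:=\frac1M\sum_i\mathbb E\|m_i^t-\bar m^t\|^2$, and derive contractions from $\|(\mathbf W-\mathbf J)Y\|\le\lambda\|Y\|$. For the models, $x^{t+1}=\mathbf W(x^t+\Delta^t)$ gives, via Young's inequality with parameter $(1-\lambda)/2$,
\[
\sqrt{\Xi_{t+1}}\le\lambda\sqrt{\Xi_t}+\lambda K\bar\rho\eta ,
\]
hence $\sqrt{\Xi_t}\lesssim K\bar\rho\eta/(1-\lambda)$ deterministically, since $\Xi_1=0$. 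For the tracker, $D_{t+1}\le\frac{1+\lambda^2}{2}D_t+\frac{2\lambda^2}{1-\lambda^2}\frac1M\sum_i\mathbb E\|c_i^{t+1}-c_i^t\|^2$. Here $c_i^{t+1}-c_i^t=\gamma(\hat c_i^t-c_i^t)$, whose second moment is at most $\gamma^2\bigl(3B_t+3\sigma^2/K+3L^2(\text{drift})^2\bigr)$ with drift of order $K\bar\rho\eta+\sqrt{\Xi_t}$. Summing the geometric recursions and using Jensen would give
\[
\frac1T\sum_t\sqrt{D_t}\lesssim\frac{\gamma}{(1-\lambda)^2}\Bigl(\overline{\sqrt{B}}+\frac{\sigma}{\sqrt K}+\frac{LK\bar\rho\eta}{1-\lambda}\Bigr).
\]
A version of Lemma~\ref{lem:Bt_app_new} with $p=1$, with $x^{t-1}$ replaced by the local $x_i^{t-1}$ plus mixing and consensus errors, controls $\overline{\sqrt B}$.

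Finally I would substitute $\eta=M^{1/4}/(K^{3/4}T^{3/4})$, $1-\beta_1=\eta$, $\gamma=\sqrt{MK/(T+K)}$ and use $T\ge MK$, so that $K\eta\le T^{-1/2}$ and $\gamma\le1$. Each term is matched to the stated $C_1$ and $C_2$:
\begin{itemize}
\item $\Delta_F/(T\eta K)=\Delta_F/(MKT)^{1/4}$;
\item the consensus drift $L\sqrt{\Xi_t}\cdot 2\bar\rho$ and the $L/(1-\lambda)^2$ pieces scale as $K\eta\le T^{-1/2}$, which explains the $(1-\lambda)^{-1}$ and $(1-\lambda)^{-2}$ entries in $C_1$;
\item the tracker noise $\gamma\sigma/\sqrt K\cdot\bar\rho/(1-\lambda)$ is of order $\sqrt M/\sqrt T$, which gives the $\bar\rho\sigma\sqrt M/(1-\lambda)$ term in $C_1$;
\item $\sqrt{V}$ gives $\sigma/\sqrt{M\gamma T}$ plus $\sigma\sqrt\gamma/\sqrt{KM}$, which is $\lesssim\sigma\sqrt M(MKT)^{-1/4}$;
\item the $R_t$ bias term $LK\eta/\gamma\lesssim L/(MKT)^{1/4}$.
\end{itemize}
All remaining products are absorbed into $R_{\mathrm{lo}}$.

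I expect the main obstacle to be the tracker error $D_t$. Its driving term $\|c^{t+1}-c^t\|$ is only $O(\gamma)$, not $O(\eta)$, because it contains fresh noise and the tracking error $B_t$. The risk is that it does not shrink with $\eta$ and would dominate at rate $T^{-1/2}$ only if $\gamma\lesssim\sqrt{MK/T}$ is used carefully. So I would first check that the $D$ contribution is at most $\gamma\sigma/((1-\lambda)\sqrt K)=O(\sqrt{M/T})$, and that the coupled $B$–$D$–$\Xi$ system closes without circularity. I would do this by bounding $B_t$ first using only $\Xi_t$, which is deterministic, and then feeding the result into $D_t$.
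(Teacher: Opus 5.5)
Your proposal is correct and follows the paper's skeleton: descent on $\bar x^t$, Lemma~\ref{lem:pc_app_new} for $E^{(a)}$, $\lambda$-contraction of model and tracker disagreement, and bounding $B_t$ through the deterministic consensus bound before feeding it into $D_t$. Where you differ is the treatment of the global control mismatch.

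The paper measures control error against local stale targets $\nabla f_i(x_i^{t-1})$. It bounds $\|\nabla F(\bar x^t)-\bar m^t\|_2$ node by node, by $\frac1M\sum_j\|\nabla f_j(\bar x^t)-c_j^t\|_2\le\sqrt{B_t}+\cdots$. A single sequence $B_t$ then drives both $E^{(c)}$ and the tracker error, which keeps the bookkeeping short. The cost is that control-variate noise is never averaged across nodes. This is exactly the source of the $2\sqrt2\,\bar\rho\sigma\sqrt M$ in $C_2$, so that ``$(MKT)^{-1/4}$'' term is really $\bar\rho\sigma M^{1/4}(KT)^{-1/4}$.

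You instead do three things: state the invariant $\bar m^t=\bar c^t$, use targets at $\bar x^{t-1}$, and keep the norm outside the node average via the $r/s$ split. With targets at $\bar x^{t-1}$, the target drift $d_i^t$ is bounded by $LK\bar\rho\eta$ deterministically, and consensus error enters the control recursion only through $b_i^t$. Because nodes do not communicate during local steps, the $n_i^t$ are conditionally independent across $i$ given the start of round $t$. Hence $\mathbb E\|v^t\|_2^2$ gains a factor $1/M$, and the stochastic part of $E^{(c)}$ is $O(\bar\rho\sigma(MKT)^{-1/4})$ with no $\sqrt M$. The stated $C_2$ therefore holds a fortiori, at the cost of tracking $r$, $s$ and $B$ separately.

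Stating the invariant explicitly also makes this step sound where the paper's displayed chain is not. That chain bounds $\frac1M\sum_j\|c_j^t-m_j^t\|_2$ by $\sqrt{D_t}$, which fails in general: at $t=1$, $D_1=0$ while $m_j^1=\bar c^1\neq c_j^1$. Its conclusion survives only because $\nabla F(\bar x^t)-\bar m^t=\frac1M\sum_j(\nabla f_j(\bar x^t)-c_j^t)$, which is your invariant.

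Two adjustments are needed to land on the stated constants.

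First, the squared Young recursion for $D_t$, summed as you indicate, gives worse powers of $(1-\lambda)^{-1}$ than $C_1$. Your display has $(1-\lambda)^{-2}$ in front and a further $(1-\lambda)^{-1}$ on the drift. Use instead the root form $\sqrt{D_{t+1}}\le\lambda\sqrt{D_t}+\lambda\bigl(\frac1M\mathbb E\|\mathbf c^{t+1}-\mathbf c^t\|_F^2\bigr)^{1/2}$, obtained by Minkowski exactly as you already do for $\Xi_t$. This gives the $(1-\lambda)^{-1}$ noise coefficient and the $(1-\lambda)^{-2}$ drift coefficient of $C_1$.

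Second, Lemma~\ref{lem:Vt_app_new} taken verbatim with $N=M$ yields the coefficient $8\bar\rho\sigma(1+2^{1/4}M^{-1/2})$ on $(MKT)^{-1/4}$. For small $M$ this exceeds $2\sqrt2\,\bar\rho\sigma\sqrt M$. With $p=1$ the sampling term $\delta_t$ vanishes, and the direct recursion $V_{t+1}\le(1-\gamma)^2V_t+\gamma^2\sigma^2/(KM)$ gives the coefficient $2\bar\rho\sigma$ plus a lower-order transient.
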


\begin{theorem}[Convergence of NBCLT-DL under FTC]
\label{thm:dl_final_FTC}
Suppose Assumptions~\ref{assump:smooth}--\ref{assump:norm} hold.
Let $\{W^t\}_{t\ge 1}$ be doubly stochastic and satisfy the finite-time consensus (FTC) property~\eqref{eq:ftc}
with horizon $\tau\ge 1$.
Assume consensual initialization $x_i^1=\bar x^1$ for all $i$, and initialize trackers by
$c_i^1=\nabla f_i(x_i^1,\xi_i^{-1})$ using one query per node, followed by one $\tau$-round FTC block to obtain $m_i^1=M^{-1}\sum_{j=1}^M c_j^1$ exactly, as in Appendix~\ref{app:dl-initialization}.
Run Algorithm~\ref{alg:NBCLT-DL} for $T$ rounds with $K$ local steps and choose
\begin{equation}
\eta=\frac{M^{1/4}}{K^{3/4}T^{3/4}},
\qquad
1-\beta_1=\eta,
\qquad
1-\beta_2=\sqrt{\frac{MK}{T+K}}.
\label{eq:dl_param_choice_linearbeta2}
\end{equation}
Let $\Delta_F:=F(\bar x^1)-F^\star$, and let
$B_1:=\frac1M\sum_{i=1}^M \mathbb E\|c_i^1-\nabla f_i(x_i^1)\|_2^2$.
Under the stated initialization, $B_1\le \sigma^2$.
For any \(T \ge MK\), we have
\begin{equation}
\frac1T\sum_{t=1}^T \mathbb E\|\nabla F(\bar x^t)\|^{g,*}
\;\le\;
\frac{C_1}{\sqrt{T}}
+\frac{C_2}{(MKT)^{1/4}}
+ C_0 \mathcal R_{\mathrm{lo}},
\label{eq:dl_final_bound_linearbeta2}
\end{equation}
where
\begin{align}
C_1
&=
L\bar\rho^2
\Bigl(
\frac{5}{2}
+2\tau
+(32\sqrt3+4\sqrt{15})\tau\sqrt{1+\tau^2}
\Bigr)
+2(\sqrt6+\sqrt3)\tau\,\bar\rho\,\sigma\sqrt M, \notag\\
C_2
&=
\Delta_F
+\Bigl(\frac{5}{2}+L^2\Bigr)\bar\rho^2
+2\sigma^2
+32\sqrt2\,L\bar\rho^2\sqrt{1+\tau^2}
+2\sqrt2\,\bar\rho\sigma\sqrt M.
\label{thm3_c1_c2}
\end{align}
The lower-order remainder terms are
\begin{align}
\mathcal R_{\mathrm{lo}}
&:=
\frac{M^{1/4}}{K^{3/4}T^{3/4}}
\;+\;
\frac{\sqrt M}{\sqrt K\,T^{3/2}}
\;+\;
\frac{\sqrt M}{KT}
\;+\;
\frac{1}{M^{1/4}K^{5/4}T^{3/4}}
\;+\;
\frac{1}{\sqrt{MKT}},
\label{eq:Rlo_linearbeta2}
\end{align}
and the constant $C_0$ is independent of $(T,K,M)$.
\end{theorem}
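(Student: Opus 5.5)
The proof will follow the FL argument, applied to the averaged iterate $\bar x^t$, with the server average replaced by FTC mixing. Because each $\mathbf W^t$ is doubly stochastic, the average model evolves exactly as in FL with full participation ($N=M$, $p=1$):
\[
\bar x^{t+1}=\bar x^t+\frac1M\sum_{i}\sum_{k}\Delta_i^{t,k-1}.
\]
Similarly, the tracker average obeys $\bar m^{t+1}=\bar m^t+\bar c^{t+1}-\bar c^t$. Exact initialization gives $\bar m^t=\bar c^t$ for all $t$.

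\textbf{Descent step.} Applying Lemma~\ref{lem:smooth_app_new} to $\bar x^t$ and comparing with $u_t\in\argmin_{u\in\mathcal D}\langle\nabla F(\bar x^t),u\rangle$ reproduces \eqref{eq:descent_core_new}. The error term $E_t$ is split as in \eqref{eq:Et_split_new}, now with two extra consensus pieces.
\begin{itemize}
\item The tracker error $m_i^t-\bar m^t$, which replaces $m^t$ in $g_i^{t,k}$.
\item The model disagreement $x_i^t-\bar x^t$, which enters both through $\nabla f_i(x_i^{t,k-1})$ versus $\nabla f_i(\bar x^t)$ and through $c_i^t$, now anchored at $x_i^{t-1}$.
\end{itemize}
$E^{(a)}$ is handled verbatim by Lemmas~\ref{lem:pc_app_new} and~\ref{lem:gvar_app_new}, since these are local to each node and use only feasibility. $E^{(b)}$ and $E^{(c)}$ are bounded by Cauchy--Schwarz against the diameter $2\bar\rho\eta$, exactly as before. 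This costs an additive $2\bar\rho\,\mathbb E\bigl[\frac1M\sum_i\|m_i^t-\bar m^t\|_2\bigr]+2L\bar\rho\,\mathbb E\bigl[\frac1M\sum_i\|x_i^t-\bar x^t\|_2\bigr]$ after normalization by $\eta K$.

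\textbf{Control-variate recursions.} I would rerun Lemmas~\ref{lem:Bt_app_new}, \ref{lem:Rt_app_new} and~\ref{lem:Vt_app_new} with $p=1$ and $N=M$. The drift $d_i^t=\nabla f_i(x_i^t)-\nabla f_i(x_i^{t-1})$ is now bounded using $\|x_i^t-x_i^{t-1}\|_2\le\|x_i^t-\bar x^t\|_2+K\bar\rho\eta+\|x_i^{t-1}-\bar x^{t-1}\|_2$. The noise recursion for $V_t$ needs no sampling term, so the same $(MKT)^{-1/4}$ rates appear once the $\beta_2$ choice in \eqref{eq:dl_param_choice_linearbeta2} is plugged in. Steps 4--5 of the proof of Theorem~\ref{thm:fl_final} then transfer with $N\to M$, producing $C_2$ (with $p=1$) and the lower-order terms in $\mathcal R_{\rm lo}$.

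\textbf{Consensus errors (main obstacle).} The genuinely new work is bounding these errors under FTC, where there is no per-step contraction. For the models, write $x_i^{t+1}-\bar x^{t+1}$ as the mixed disagreement of $x_j^{t,K}$. Unroll over windows of length $\tau$: by \eqref{eq:ftc}, $(\mathbf W^{t-1}\cdots\mathbf W^{t-\tau})(\mathbf I-\mathbf J)=0$. Hence the disagreement at round $t$ depends only on the local increments of the last $\tau$ rounds. Since each product of doubly stochastic matrices is nonexpansive on $\mathbf I-\mathbf J$, this gives the pathwise bound $\|x_i^t-\bar x^t\|_2\lesssim\tau K\bar\rho\eta$. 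By \eqref{eq:Keta_le_sqrtT}-type bounds, this contributes $O(\tau L\bar\rho^2 K\eta)=O(\tau/\sqrt T)$ to $C_1$.

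For the trackers, the same unrolling bounds $m^t-\mathbf 1\bar m^t$ by the sum of the last $\tau$ increments $c^{s+1}-c^s$. I will control $\|c_i^{s+1}-c_i^s\|_2=(1-\beta_2)\|\hat c_i^s-c_i^s\|_2$ by $(1-\beta_2)\bigl(\sqrt{B_s}+\text{drift}+\sigma/\sqrt K\bigr)$. Summing over the window yields the $\tau\sqrt{1+\tau^2}$ factors in $C_1$. It also yields the $\tau\bar\rho\sigma\sqrt M$ term: the noise is counted per node without averaging, so multiplying by $1-\beta_2=\sqrt{MK/(T+K)}$ gives $\sqrt M/\sqrt T$ after dividing by $\sqrt K$.

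The delicate point is a circularity: $B_t$ depends on the disagreement through $d_i^t$, and the tracker error depends on $\sqrt{B_s}$. I would break it by first bounding $B_t$ using only the pathwise $\tau K\bar\rho\eta$ model disagreement, which is independent of the trackers. That closes the recursion for $B_t$, and the result then feeds into the tracker bound. Collecting terms under $T\ge MK$ gives \eqref{eq:dl_final_bound_linearbeta2}.
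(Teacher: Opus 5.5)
Your proposal is correct and follows the paper's skeleton. You use descent on $\bar x^t$, the $E^{(a)}/E^{(b)}/E^{(c)}$ split with $E^{(a)}$ handled node-locally, and $\tau$-window unrolling under FTC for two quantities: the model disagreement (as in Lemma~\ref{lem:dl_Ct_FTC}) and the tracker disagreement driven by $(1-\beta_2)(\hat c_i^s-c_i^s)$ (as in Lemma~\ref{lem:dl_Dt_FTC}). You also break the apparent circularity the same way, by bounding the model disagreement from feasibility alone.

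The one genuine difference is how you handle the averaged control error $\nabla F(\bar x^t)-\bar m^t$ in $E^{(c)}$. You use $\bar m^t=\bar c^t$ and port the FL bias/noise split, i.e.\ the $r_i^t,s_i^t$ recursions with $p=1$. As a result, the noise reaching the $(MKT)^{-1/4}$ term is averaged over nodes, and its coefficient does not grow with $M$. The paper instead bounds this term by the per-node $\sqrt{B_t}$. The noise part of that bound, $\sigma\sqrt{1-\beta_2}/\sqrt K$, is exactly what produces the $2\sqrt2\,\bar\rho\sigma\sqrt M$ in $C_2$.

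What each route buys:
\begin{itemize}
\item Your route gives the same rate, but with a differently shaped $C_2$ (and other numerical factors). It costs three recursions instead of one, since $B_t$ is still needed for $E^{(b)}$ and for the tracker drive.
\item The paper's cruder bound is shorter, and it is what produces the constants literally as stated.
\end{itemize}

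Your explicit identity $\bar m^t=\bar c^t$ is also the clean way through this term. The paper's written chain splits $\nabla F(\bar x^t)-\bar m^t$ node-wise and then bounds $\frac1M\sum_j\|c_j^t-m_j^t\|_2$ by $\sqrt{D_t}$. That step fails under heterogeneity: at $t=1$, $D_1=0$ but $c_j^1\neq\bar c^1$. The inequality the paper arrives at is nonetheless true, as your identity shows.
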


\subsection{Notation and preliminaries}
\label{app:dl-notation}

Let $F(x)=\frac1M\sum_{i=1}^M f_i(x)$.
Algorithm~\ref{alg:NBCLT-DL} maintains $(x_i^t,c_i^t,m_i^t)\in(\mathbb R^d)^3$ for each node $i\in[M]$.
Define the averages
\[
\bar x^t:=\frac1M\sum_{i=1}^M x_i^t,
\qquad
\bar c^t:=\frac1M\sum_{i=1}^M c_i^t,
\qquad
\bar m^t:=\frac1M\sum_{i=1}^M m_i^t.
\]

\paragraph{Local iterates.}
For convenience, set $x_i^0:=x_i^1$.
In round $t$, define $x_i^{t,0}:=x_i^t$ and for $k=1,\dots,K$,
\[
\Delta_i^{t,k-1}:=x_i^{t,k}-x_i^{t,k-1},
\qquad
\Delta_i^t:=\sum_{k=1}^K\Delta_i^{t,k-1}=x_i^{t,k}-x_i^{t,0}.
\]

\paragraph{Mixing and spectral gap.}
Assume $\mathbf W\in\mathbb R^{M\times M}$ is fixed, symmetric and doubly stochastic.
Let $\mathbf J:=\frac1M\mathbf 1\mathbf 1^\top$ and $\lambda:=\|\mathbf W-\mathbf J\|_2\in[0,1)$.
The spectral gap is $1-\lambda$.
Mixing is
\[
x_i^{t+1}=\sum_{j=1}^M w_{ij}x_j^{t,K},
\qquad
m_i^{t+1}=\sum_{j=1}^M w_{ij}\big(m_j^t+c_j^{t+1}-c_j^t\big).
\]
By double stochasticity, mixing preserves the average:
\begin{equation}
\bar x^{t+1}=\frac1M\sum_{i=1}^M x_i^{t+1}
=\frac1M\sum_{j=1}^M x_j^{t,K}
=\bar x^t+\frac1M\sum_{j=1}^M\Delta_j^t.
\label{eq:dl_avg_update}
\end{equation}
Similarly,
\begin{equation}
\bar m^{t+1}=\bar m^t+\bar c^{t+1}-\bar c^t.
\label{eq:dl_avg_tracker}
\end{equation}

\paragraph{Projection operator and domain bound.}
Let $\Proj(\cdot)$ be Euclidean projection onto $\mathcal D=\{\Delta:\|\Delta\|^g\le \eta\}$.
By Assumption~\ref{assump:norm}, for any $\Delta\in\mathcal D$,
\begin{equation}
\|\Delta\|_2\le \bar\rho\|\Delta\|^g\le \bar\rho\eta.
\label{eq:dl_D_bound}
\end{equation}
Hence $\|\Delta_i^{t,k-1}\|_2\le \bar\rho\eta$ and
\begin{equation}
\|\bar x^{t+1}-\bar x^t\|_2
\le \frac1M\sum_{i=1}^M\|\Delta_i^t\|_2
\le \frac1M\sum_{i=1}^M\sum_{k=1}^K\|\Delta_i^{t,k-1}\|_2
\le K\bar\rho\eta.
\label{eq:dl_move_round}
\end{equation}

\paragraph{Stochastic gradients.}
Assume conditional unbiasedness and bounded conditional variance as in Assumption~\ref{assump:stoch},
and conditional independence across gradient queries.

\paragraph{Error measures.}
We quantify the accuracy of the local control variates, the disagreement of the gradient trackers,
and the one-step model transition by
\begin{align}
B_t
&:=\frac1M\sum_{i=1}^M \mathbb E\big\|c_i^t-\nabla f_i(x_i^{t-1})\big\|_2^2,
\notag\\
D_t
&:=\frac1M\sum_{i=1}^M \mathbb E\big\|m_i^t-\bar m^t\big\|_2^2,
\notag\\
R_t
&:=\frac1M\sum_{i=1}^M \mathbb E\big\|x_i^t-x_i^{t-1}\big\|_2^2.
\label{eq:Bt_Dt_defs_dl}
\end{align}
where $\bar m^t:=\frac1M\sum_{i=1}^M m_i^t$. Note that, by the convention $x_i^0=x_i^1$,
this definition agrees with the usual initialization quantity $B_1$.

\subsection{Auxiliary lemmas}

\begin{lemma}[$L$-smoothness descent on $\bar x^t$]
\label{lem:dl_smooth}
If $F$ is $L$-smooth, then for all $t$,
\begin{align}
F(\bar x^{t+1})
&\le
F(\bar x^t)+\langle \nabla F(\bar x^t),\,\bar x^{t+1}-\bar x^t\rangle
+\frac{L}{2}\|\bar x^{t+1}-\bar x^t\|_2^2 \notag\\
&\le
F(\bar x^t)+\langle \nabla F(\bar x^t),\,\bar x^{t+1}-\bar x^t\rangle
+\frac{L}{2}(K\bar\rho\eta)^2.
\label{eq:dl_smooth}
\end{align}
\end{lemma}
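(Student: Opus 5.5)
The bound is the standard descent lemma for an $L$-smooth function, applied at the averaged iterate $\bar x^t$, and then combined with the movement bound \eqref{eq:dl_move_round}. Since each $f_i$ is $L$-smooth by Assumption~\ref{assump:smooth}, the average $F=\frac1M\sum_i f_i$ is also $L$-smooth: $\|\nabla F(x)-\nabla F(y)\|_2\le \frac1M\sum_i\|\nabla f_i(x)-\nabla f_i(y)\|_2\le L\|x-y\|_2$.

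For the first inequality, write $F(y)-F(x)-\langle\nabla F(x),y-x\rangle=\int_0^1\langle \nabla F(x+s(y-x))-\nabla F(x),\,y-x\rangle\,ds$. Cauchy--Schwarz and the Lipschitz gradient bound this integrand by $sL\|y-x\|_2^2$, and integrating gives $\frac L2\|y-x\|_2^2$. Taking $x=\bar x^t$ and $y=\bar x^{t+1}$ yields the first line of \eqref{eq:dl_smooth}.

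For the second inequality, the only algorithm-specific input is $\|\bar x^{t+1}-\bar x^t\|_2\le K\bar\rho\eta$, which is exactly \eqref{eq:dl_move_round}. That bound in turn relies on two earlier facts. First, double stochasticity of the mixing matrix preserves the average, which gives \eqref{eq:dl_avg_update}. Second, every local increment $\Delta_i^{t,k-1}$ lies in $\mathcal D$, so \eqref{eq:dl_D_bound} gives $\|\Delta_i^{t,k-1}\|_2\le\bar\rho\eta$.

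Squaring the movement bound and substituting it into the first line gives the second line. No step here is a real obstacle. The one point to check is that the averaging identity \eqref{eq:dl_avg_update} holds for the mixing matrix used in round $t$. This requires only double stochasticity of $\mathbf W^t$, so the lemma holds for both the static and the time-varying (FTC) mixing settings.
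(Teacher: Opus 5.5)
Your proposal is correct and takes the same approach the paper intends. The paper gives no separate proof of this lemma; it relies on the standard descent inequality for the $L$-smooth $F$ together with the movement bound \eqref{eq:dl_move_round}, which follows from the average-preserving identity \eqref{eq:dl_avg_update} and feasibility $\Delta_i^{t,k-1}\in\mathcal D$, exactly as you describe (and only stochasticity of the round-$t$ mixing matrix is used, so the lemma indeed covers both the static and FTC settings).
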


\begin{lemma}[Support function of $\mathcal D$]
\label{lem:dl_support}
Let $\|\cdot\|^{g,*}$ be the dual norm and define
\[
u_t\in\arg\min_{u\in\mathcal D}\langle \nabla F(\bar x^t),u\rangle.
\]
Then
\begin{equation}
\langle \nabla F(\bar x^t),u_t\rangle=-\eta\|\nabla F(\bar x^t)\|^{g,*}.
\label{eq:dl_support}
\end{equation}
\end{lemma}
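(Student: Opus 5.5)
The claim is Lemma~\ref{lem:dl_support}: $\langle \nabla F(\bar x^t),u_t\rangle=-\eta\|\nabla F(\bar x^t)\|^{g,*}$ for any minimizer $u_t$ of the linear function $u\mapsto\langle \nabla F(\bar x^t),u\rangle$ over $\mathcal D=\{u:\|u\|^g\le\eta\}$. This is the support-function identity for a norm ball, the same fact as Lemma~\ref{lem:support_app_new} in the FL analysis. I would prove it directly from the definition of the dual norm in Assumption~\ref{assump:norm}, with $v:=\nabla F(\bar x^t)$.

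\emph{Step 1: existence of a minimizer.} The set $\mathcal D$ is nonempty, closed, and bounded, since $\|u\|_2\le\bar\rho\eta$ on $\mathcal D$ by \eqref{eq:dl_D_bound}. Hence it is compact in $\mathbb R^d$. The linear map $u\mapsto\langle v,u\rangle$ is continuous, so a minimizer $u_t$ exists.

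\emph{Step 2: rescaling.} Use the symmetry $u\mapsto -u$ of the ball, which holds because $\|-u\|^g=\|u\|^g$. This gives
\[
\min_{u\in\mathcal D}\langle v,u\rangle=-\sup_{u\in\mathcal D}\langle v,u\rangle .
\]
Next substitute $u=\eta w$ with $\|w\|^g\le 1$, which is a bijection between $\mathcal D$ and the unit ball when $\eta>0$. This gives
\[
\sup_{u\in\mathcal D}\langle v,u\rangle=\eta\sup_{\|w\|^g\le1}\langle v,w\rangle=\eta\|v\|^{g,*}.
\]
The last equality is exactly the definition of the dual norm.

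\emph{Step 3: conclusion.} Combining the two steps, $\langle v,u_t\rangle=\min_{u\in\mathcal D}\langle v,u\rangle=-\eta\|v\|^{g,*}$. This holds for every minimizer $u_t$, since all minimizers attain the same minimum value.

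There is no real obstacle here. The only points to state explicitly are the symmetry of the norm ball and that $\eta>0$, which holds for the choice in \eqref{eq:dl_param_choice_gap}, so that the rescaling is a bijection.
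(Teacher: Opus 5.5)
Your proof is correct. The paper states this lemma (like its FL counterpart, Lemma~\ref{lem:support_app_new}) without proof, treating it as the standard support-function identity for a norm ball; your symmetry-plus-rescaling argument from the definition of $\|\cdot\|^{g,*}$ in Assumption~\ref{assump:norm} is exactly the omitted justification, and positivity of $\eta$ is not really needed, since for $\eta=0$ the set $\mathcal D=\{0\}$ and both sides of \eqref{eq:dl_support} vanish.
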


\begin{lemma}[Projection three-point inequality]
\label{lem:dl_proj3}
Let $\Proj(\cdot)$ be Euclidean projection onto a nonempty closed convex set $\mathcal D$.
Let $y=\Proj(x-g)$. Then for any $u\in\mathcal D$,
\[
\langle g, y-u\rangle
\le
\frac12\Big(\|u-x\|_2^2-\|u-y\|_2^2-\|y-x\|_2^2\Big).
\]
\end{lemma}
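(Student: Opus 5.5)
The statement is Lemma~\ref{lem:dl_proj3}, the projection three-point inequality. It is identical to Lemma~\ref{lem:proj_app_new} in the FL appendix, which I may assume. Still, I would give the short self-contained argument, because it needs only that $\mathcal D$ is nonempty, closed and convex.

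\emph{Step 1: variational inequality.} Write $z:=x-g$ and $y=\Proj(z)$. Since $\mathcal D$ is nonempty, closed and convex, the projection exists, is unique, and satisfies
\[
\langle z-y,\,u-y\rangle\le 0 \qquad \text{for all } u\in\mathcal D .
\]
To justify this, take any $u\in\mathcal D$ and $s\in(0,1]$. The point $y+s(u-y)$ lies in $\mathcal D$ by convexity. By minimality of $y$, the function $s\mapsto \tfrac12\|y+s(u-y)-z\|_2^2$ is minimized over $[0,1]$ at $s=0$, so its right derivative there, $\langle y-z,u-y\rangle$, is nonnegative.

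\emph{Step 2: rearrange.} Substituting $z=x-g$ gives $\langle x-y,u-y\rangle-\langle g,u-y\rangle\le 0$, that is,
\[
\langle g,\,y-u\rangle\le\langle x-y,\,u-y\rangle .
\]

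\emph{Step 3: polarization.} Expand $\|u-x\|_2^2=\|(u-y)-(x-y)\|_2^2$. This gives the identity
\[
2\langle x-y,\,u-y\rangle=\|u-y\|_2^2+\|y-x\|_2^2-\|u-x\|_2^2 .
\]
I must check the signs carefully here. The right-hand side is \emph{not} the expression $\|u-x\|_2^2-\|u-y\|_2^2-\|y-x\|_2^2$ appearing in the statement. Substituting it into Step 2 therefore yields
\[
\langle g,\,y-u\rangle\le\tfrac12\bigl(\|u-y\|_2^2+\|y-x\|_2^2-\|u-x\|_2^2\bigr),
\]
which is the wrong direction for telescoping.

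\emph{Step 4: correcting the sign, which is the main obstacle.} The mismatch means the inequality must be obtained by pairing the variational inequality with the term $\langle g,u-x\rangle$ handled differently. The standard mirror-descent form is
\[
\langle g,\,y-u\rangle\le\tfrac12\bigl(\|u-x\|_2^2-\|u-y\|_2^2-\|y-x\|_2^2\bigr).
\]
It follows from $\langle z-y,u-y\rangle\le 0$ rewritten as $\langle g,\,y-u\rangle\le\langle x-y,\,y-u\rangle$. This is legitimate because $\langle (x-g)-y,\,u-y\rangle\le 0$ gives $\langle g,\,u-y\rangle\ge\langle x-y,\,u-y\rangle$, i.e.\ $\langle g,\,y-u\rangle\le\langle x-y,\,y-u\rangle$.

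Polarization then uses
\[
2\langle x-y,\,y-u\rangle=\|x-u\|_2^2-\|x-y\|_2^2-\|y-u\|_2^2 ,
\]
which comes from expanding $x-u=(x-y)+(y-u)$. This is exactly the claimed right-hand side, and the proof is complete.

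In short, the whole argument is the projection optimality condition followed by one polarization identity. The only point requiring care is orienting the inner product as $\langle x-y,\,y-u\rangle$ so that the identity produces the telescoping form of the statement.
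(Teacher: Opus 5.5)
Your final argument, Step~1 followed by Step~4, is correct and takes the same route as the paper. It uses the projection variational inequality $\langle (x-g)-y,\,u-y\rangle\le 0$, rearranges it to $\langle g,\,y-u\rangle\le\langle x-y,\,y-u\rangle$, and then applies the identity $2\langle x-y,\,y-u\rangle=\|u-x\|_2^2-\|u-y\|_2^2-\|y-x\|_2^2$, which comes from $x-u=(x-y)+(y-u)$.

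Step~2, however, is simply wrong and should be deleted rather than left alongside Step~4. From $\langle x-y,\,u-y\rangle-\langle g,\,u-y\rangle\le 0$ one gets $\langle g,\,y-u\rangle\le-\langle x-y,\,u-y\rangle=\langle x-y,\,y-u\rangle$, not $\langle x-y,\,u-y\rangle$. The ``mismatch'' you found in Step~3 comes entirely from this sign slip. Your polarization identity there is correct. No term $\langle g,\,u-x\rangle$ needs to be ``handled differently''; it never appears.

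The Step~3 inequality is also not merely ``the wrong direction for telescoping''; it is false. Take $\mathcal D=[-1,1]$, $x=0$, $g=1/2$ (so $y=-1/2$) and $u=-1$. Then $\langle g,\,y-u\rangle=1/4>-1/4=\langle x-y,\,u-y\rangle$. As written, Steps~2 and~4 assert incompatible consequences of the same inequality.

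For comparison, the paper's proof of Lemma~\ref{lem:proj_app_new}, to which this lemma defers, makes exactly your Step~2 slip. It then uses $2\langle x-y,\,u-y\rangle=\|u-x\|_2^2-\|u-y\|_2^2-\|y-x\|_2^2$, which is also off by a sign. The two errors cancel, and your Step~4 is the correctly signed version of that argument.
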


\begin{lemma}[One-gradient predict-and-correct inequality]
\label{lem:dl_pc}
Fix $(t,i)$ and set $\hat\Delta_i^{t,-1}=0$.
For $k=1,\dots,K$, Algorithm~\ref{alg:NBCLT-DL} generates
\[
\hat\Delta_i^{t,k-1}=\Proj(\hat\Delta_i^{t,k-2}-g_i^{t,k}),
\qquad
\Delta_i^{t,k-1}=\Proj(\hat\Delta_i^{t,k-1}-g_i^{t,k}).
\]
Then for any $u\in\mathcal D$,
\begin{equation}
\sum_{k=1}^K \langle g_i^{t,k}, \Delta_i^{t,k-1}-u\rangle
\le
\frac{\|u\|_2^2}{2}
+\frac12\sum_{k=2}^K\|g_i^{t,k}-g_i^{t,k-1}\|_2^2
+2K(\bar\rho\eta)^2.
\label{eq:dl_pc}
\end{equation}
\end{lemma}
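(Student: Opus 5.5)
The statement is Lemma~\ref{lem:dl_pc}. It is word for word the FL inequality of Lemma~\ref{lem:pc_app_new}, so I would not build a new argument. The FL proof used only three things: the recursion for $(\hat\Delta_i^{t,k-1},\Delta_i^{t,k-1})$, the Euclidean projection properties of $\Proj$ onto the convex set $\mathcal D$, and the diameter bound \eqref{eq:dl_D_bound}. It never used the form of $g_i^{t,k}$ or any property of the server or mixing step. The plan is therefore to rerun that argument verbatim for a fixed node $i$ and round $t$ of Algorithm~\ref{alg:NBCLT-DL}.

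\textbf{Step 1: correction call.} Apply the three-point inequality of Lemma~\ref{lem:dl_proj3} to $\Delta_i^{t,k-1}=\Proj(\hat\Delta_i^{t,k-1}-g_i^{t,k})$ with $x=\hat\Delta_i^{t,k-1}$. Dropping the nonpositive term $-\|y-x\|_2^2/2$ gives
\[
\langle g_i^{t,k},\Delta_i^{t,k-1}-u\rangle\le\tfrac12\big(\|u-\hat\Delta_i^{t,k-1}\|_2^2-\|u-\Delta_i^{t,k-1}\|_2^2\big).
\]

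\textbf{Step 2: link the prediction to the previous correction.} For $k\ge2$, both $\hat\Delta_i^{t,k-1}$ and $\Delta_i^{t,k-2}$ are projections of $\hat\Delta_i^{t,k-2}$ shifted by $g_i^{t,k}$ and $g_i^{t,k-1}$, respectively. Nonexpansiveness of $\Proj$ then yields
\[
\|\hat\Delta_i^{t,k-1}-\Delta_i^{t,k-2}\|_2\le\|g_i^{t,k}-g_i^{t,k-1}\|_2.
\]
Consequently, $\|u-\hat\Delta_i^{t,k-1}\|_2^2\le 2\|u-\Delta_i^{t,k-2}\|_2^2+2\|g_i^{t,k}-g_i^{t,k-1}\|_2^2$.

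\textbf{Step 3: $k=1$ term.} Since $\hat\Delta_i^{t,0}=\Proj(-g_i^{t,1})\in\mathcal D$, it is bounded via \eqref{eq:dl_D_bound}.

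\textbf{Step 4: sum over $k$.} Summing Steps 1--2 over $k=1,\dots,K$ produces the same distance telescoping as in the FL proof. The leftover positive distance terms $\tfrac12\|u-\Delta_i^{t,j}\|_2^2$, for $j\le K-2$, are each at most $\tfrac12(2\bar\rho\eta)^2$. The boundary term is handled by $\|u-\hat\Delta_i^{t,0}\|_2^2\le\|u\|_2^2+(2\bar\rho\eta)^2$, and the final negative term is discarded. Together these give the $\|u\|_2^2/2$ and $2K(\bar\rho\eta)^2$ terms in \eqref{eq:dl_pc}.

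\textbf{Main obstacle.} The only delicate point is constant bookkeeping. The factor $2$ from Young's inequality in Step 2 would naively give $\sum_k\|g_i^{t,k}-g_i^{t,k-1}\|_2^2$ with coefficient $1$, not the stated $\tfrac12$. Likewise, the stray $\tfrac12(2\bar\rho\eta)^2$ from $\hat\Delta_i^{t,0}$ must be absorbed into $2K(\bar\rho\eta)^2$.

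Since the DL statement copies the FL constants exactly, I would simply import the FL computation. If the $\tfrac12$ coefficient does not survive, I would expect the stated bound to hold only up to a universal factor; the downstream DL rates are insensitive to such a factor. I would flag this rather than invent a tighter Young split.
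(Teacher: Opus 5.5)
Your route is the one the paper takes (the DL lemma is deferred verbatim to the proof of Lemma~\ref{lem:pc_app_new}), and your distance bookkeeping in Steps~3--4 correctly produces $\tfrac12\|u\|_2^2+2K(\bar\rho\eta)^2$. Your suspicion about the variation term is right, though, and it is a real gap. Chaining the correction-call inequality through $\Delta_i^{t,k-2}$ with nonexpansiveness and $(a+b)^2\le 2a^2+2b^2$ gives $\sum_{k=2}^K\|g_i^{t,k}-g_i^{t,k-1}\|_2^2$ with coefficient $1$. That is exactly what the paper's own display \eqref{eq:sum_expansion} shows before its final sentence writes $\tfrac12$. No reweighting in your Step~2 repairs this. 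With Young weights $(1+\epsilon,1+\epsilon^{-1})$ the variation coefficient is $\tfrac12(1+\epsilon^{-1})>\tfrac12$, while the leftover distance terms grow like $\epsilon K(\bar\rho\eta)^2$. So importing the FL computation imports a bookkeeping slip. As written, your proposal proves only the weaker inequality with coefficient $1$, not \eqref{eq:dl_pc}. The stated bound is nevertheless true exactly, not merely up to a universal factor.

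The missing idea is to compare through the prediction iterate $\hat\Delta_i^{t,k-1}$ rather than the previous correction $\Delta_i^{t,k-2}$:
\[
\langle g_i^{t,k},\Delta_i^{t,k-1}-u\rangle=\langle g_i^{t,k},\hat\Delta_i^{t,k-1}-u\rangle+\langle g_i^{t,k},\Delta_i^{t,k-1}-\hat\Delta_i^{t,k-1}\rangle .
\]
\begin{itemize}
\item First term: apply Lemma~\ref{lem:dl_proj3} to the prediction call with $x=\hat\Delta_i^{t,k-2}$. This bounds the term by $\tfrac12\bigl(\|u-\hat\Delta_i^{t,k-2}\|_2^2-\|u-\hat\Delta_i^{t,k-1}\|_2^2\bigr)$, which telescopes over $k$ to at most $\tfrac12\|u-\hat\Delta_i^{t,-1}\|_2^2=\tfrac12\|u\|_2^2$. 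No variation term appears because the prediction uses the same gradient $g_i^{t,k}$ whose loss is evaluated at its output.
\item Second term: apply Lemma~\ref{lem:dl_proj3} to the correction call with the feasible reference point $u=\hat\Delta_i^{t,k-1}$. This gives $\langle g_i^{t,k},\Delta_i^{t,k-1}-\hat\Delta_i^{t,k-1}\rangle\le-\|\Delta_i^{t,k-1}-\hat\Delta_i^{t,k-1}\|_2^2\le 0$.
\end{itemize}
Summing, $\sum_{k=1}^K\langle g_i^{t,k},\Delta_i^{t,k-1}-u\rangle\le\tfrac12\|u\|_2^2$. This implies \eqref{eq:dl_pc} (and its FL counterpart) with room to spare, and it needs neither nonexpansiveness nor the diameter bound.
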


\begin{lemma}[Bounding gradient variation]
\label{lem:dl_gvar}
In Algorithm~\ref{alg:NBCLT-DL}, for any $(t,i,k\ge2)$,
\begin{equation}
\mathbb E\|g_i^{t,k}-g_i^{t,k-1}\|_2^2
\le
2(1-\beta_1)^2L^2\,\mathbb E\|x_i^{t,k-1}-x_i^{t,k-2}\|_2^2
+4(1-\beta_1)^2\sigma^2.
\label{eq:dl_gvar}
\end{equation}
\end{lemma}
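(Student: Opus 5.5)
The statement to prove is Lemma~\ref{lem:dl_gvar}, the DL analogue of Lemma~\ref{lem:gvar_app_new}. I would follow the same route. The key structural observation is that within round $t$, node $i$'s control variate $c_i^t$ and local tracker $m_i^t$ are fixed. They are updated only after the $K$ local steps, in the tracker and mixing lines of Algorithm~\ref{alg:NBCLT-DL}. Hence for $k\ge 2$ the affine offsets cancel:
\[
g_i^{t,k}-g_i^{t,k-1}=(1-\beta_1)\big(\nabla f_i(x_i^{t,k-1},\xi_i^{t,k-1})-\nabla f_i(x_i^{t,k-2},\xi_i^{t,k-2})\big).
\]
The tracker $m_i^t$ being node-dependent, rather than a common $m^t$ as in FL, is irrelevant here. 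Only its constancy over $k$ matters.

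Next I would introduce the noise variables $\zeta_i^{t,k-1}:=\nabla f_i(x_i^{t,k-1},\xi_i^{t,k-1})-\nabla f_i(x_i^{t,k-1})$ and $\zeta_i^{t,k-2}$ similarly. Then I write the difference as a deterministic gradient difference plus $\zeta_i^{t,k-1}-\zeta_i^{t,k-2}$. Applying $\|a+b\|^2\le 2\|a\|^2+2\|b\|^2$ splits the squared norm into two parts, each with prefactor $2(1-\beta_1)^2$. The first part is bounded pathwise by $L^2\|x_i^{t,k-1}-x_i^{t,k-2}\|_2^2$ using Assumption~\ref{assump:smooth}.

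For the noise part, I would use the filtration $\mathcal F_{t,k}$ defined just before drawing $\xi_i^{t,k-1}$, exactly as in the FL setting. Under it, $\zeta_i^{t,k-2}$ is measurable and $\mathbb E[\|\zeta_i^{t,k-1}\|_2^2\mid\mathcal F_{t,k}]\le\sigma^2$ by Assumption~\ref{assump:stoch}. One more application of the elementary inequality gives $\mathbb E\|\zeta_i^{t,k-1}-\zeta_i^{t,k-2}\|_2^2\le 2\sigma^2+2\sigma^2=4\sigma^2$. Taking total expectations and combining the two parts yields \eqref{eq:dl_gvar}.

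There is no real obstacle. The only point requiring care is to check that the DL filtration is well defined across nodes, since queries at different nodes are made in parallel. Conditioning on everything revealed before node $i$'s $(k-1)$-th query, and using conditional independence of queries, suffices. A sharper bound, $2\sigma^2$ via the martingale cross-term cancellation, is available but not needed.
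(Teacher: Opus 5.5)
Your proposal is correct and follows essentially the same route as the paper, which proves this lemma by reusing its FL counterpart (Lemma~\ref{lem:gvar_app_new}) verbatim. That route cancels the round-constant offsets $c_i^t$ and $m_i^t$, splits the squared norm via $\|a+b\|_2^2\le 2\|a\|_2^2+2\|b\|_2^2$ into a smoothness term and a noise term, and conditions on $\mathcal F_{t,k}$ to obtain the $4\sigma^2$ bound. Your side remark is also correct: conditional unbiasedness makes the cross term vanish, which gives $2\sigma^2$, although the paper does not use this sharper bound.
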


\paragraph{Proofs omitted.}
The proofs of Lemmas~\ref{lem:dl_proj3}, \ref{lem:dl_pc}, and \ref{lem:dl_gvar} are identical to their federated-learning counterparts
(Lemmas~\ref{lem:proj_app_new}, \ref{lem:pc_app_new}, and \ref{lem:gvar_app_new}, respectively), and are therefore omitted.

\begin{lemma}[Static consensus bound for $C_t$]\label{lem:dl_Ct_simple}
Let
\[
C_t:=\frac1M\sum_{i=1}^M \mathbb E\|x_i^t-\bar x^t\|_2^2,
\qquad
\bar x^t:=\frac1M\sum_{i=1}^M x_i^t.
\]
Assume the mixing matrix $\mathbf W$ is fixed, symmetric, doubly stochastic, and satisfies
$\lambda:=\|\mathbf W-\mathbf J\|_2<1$ where $\mathbf J:=\frac1M\mathbf 1\mathbf 1^\top$.
Assume also that each round satisfies the per-node movement bound
\[
\|x_i^{t,k}-x_i^t\|_2 \le K\bar\rho\eta,\qquad \forall i\in[M].
\]
Then for all $t\ge 1$,
\[
\sqrt{C_{t+1}}
\le
\lambda \sqrt{C_t}+\lambda K\bar\rho\eta,
\]
and consequently,
\[
\sqrt{C_t}
\le
\lambda^{t-1}\sqrt{C_1}+\frac{\lambda}{1-\lambda}K\bar\rho\eta
\le
\sqrt{C_1}+\frac{K\bar\rho\eta}{1-\lambda}.
\]
In particular, if the initialization is consensual (e.g., $x_i^1=x^1$ for all $i$ so that $C_1=0$), then
\[
\sqrt{C_t}\le \frac{\lambda}{1-\lambda}K\bar\rho\eta \le \frac{K\bar\rho\eta}{1-\lambda}.
\]
\end{lemma}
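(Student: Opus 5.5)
We work in stacked form. Write $\mathbf X^t\in\mathbb R^{M\times d}$ for the matrix whose $i$-th row is $x_i^t$, and $\mathbf X^{t,K}$ for the matrix of end-of-round local iterates. The mixing step in Algorithm~\ref{alg:NBCLT-DL} reads $\mathbf X^{t+1}=\mathbf W\mathbf X^{t,K}$. Since $\mathbf W$ is doubly stochastic, $\mathbf J\mathbf W=\mathbf W\mathbf J=\mathbf J$, and therefore
\[
(\mathbf I-\mathbf J)\mathbf X^{t+1}=(\mathbf W-\mathbf J)\mathbf X^{t,K}=(\mathbf W-\mathbf J)(\mathbf I-\mathbf J)\mathbf X^{t,K},
\]
using $(\mathbf W-\mathbf J)\mathbf J=0$. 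Note that $MC_t=\mathbb E\|(\mathbf I-\mathbf J)\mathbf X^t\|_F^2$, where $\|\cdot\|_F$ is the Frobenius norm, since its square is $\sum_i\|x_i^t-\bar x^t\|_2^2$. Applying $\|\mathbf A\mathbf B\|_F\le\|\mathbf A\|_2\|\mathbf B\|_F$ gives the pathwise bound
\[
\|(\mathbf I-\mathbf J)\mathbf X^{t+1}\|_F\le\lambda\,\|(\mathbf I-\mathbf J)\mathbf X^{t,K}\|_F .
\]

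Next we split the end-of-round iterates as $\mathbf X^{t,K}=\mathbf X^t+\mathbf D^t$, where $\mathbf D^t$ has rows $x_i^{t,K}-x_i^t$. The triangle inequality for the Frobenius norm and $\|\mathbf I-\mathbf J\|_2\le1$ (it is an orthogonal projector) give
\[
\|(\mathbf I-\mathbf J)\mathbf X^{t,K}\|_F\le\|(\mathbf I-\mathbf J)\mathbf X^t\|_F+\|\mathbf D^t\|_F .
\]
The assumed per-node movement bound gives $\|\mathbf D^t\|_F\le\sqrt M\,K\bar\rho\eta$ deterministically. This bound holds by \eqref{eq:dl_D_bound} and the triangle inequality over the $K$ local steps. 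Combining, we obtain the pathwise inequality
\[
\|(\mathbf I-\mathbf J)\mathbf X^{t+1}\|_F\le\lambda\|(\mathbf I-\mathbf J)\mathbf X^t\|_F+\lambda\sqrt M K\bar\rho\eta .
\]

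The only delicate step is passing to expectations, since $\sqrt{C_t}$ is the square root of an expectation. We square the inequality and take expectations. For the cross term we use Cauchy--Schwarz, $\mathbb E\|\cdot\|_F\le\sqrt{\mathbb E\|\cdot\|_F^2}$; the movement term is deterministic. This yields
\[
MC_{t+1}\le\bigl(\lambda\sqrt{MC_t}+\lambda\sqrt M K\bar\rho\eta\bigr)^2 .
\]
Equivalently, this is Minkowski's inequality in $L^2$ of the probability space. Taking square roots and dividing by $\sqrt M$ gives the one-step recursion $\sqrt{C_{t+1}}\le\lambda\sqrt{C_t}+\lambda K\bar\rho\eta$.

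Unrolling this recursion gives
\[
\sqrt{C_t}\le\lambda^{t-1}\sqrt{C_1}+\lambda K\bar\rho\eta\sum_{s\ge0}\lambda^s=\lambda^{t-1}\sqrt{C_1}+\frac{\lambda}{1-\lambda}K\bar\rho\eta .
\]
The weaker form follows from $\lambda<1$, so that $\lambda^{t-1}\le1$ and $\lambda\le1$. The consensual case is $C_1=0$.
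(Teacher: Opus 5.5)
Your proof is correct and follows essentially the same route as the paper: stacking the iterates, using $(\mathbf I-\mathbf J)\mathbf W=\mathbf W-\mathbf J$ to contract the disagreement by $\lambda$, splitting off the local-movement matrix bounded by $\sqrt M K\bar\rho\eta$, passing to $L^2$ norms via Minkowski, and unrolling the recursion. The only cosmetic difference is the order of steps: you first combine everything into one pathwise inequality and then take expectations, whereas the paper passes to $\sqrt{\mathbb E\|\cdot\|_F^2}$ separately after the contraction step and again after the splitting step.
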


\begin{proof}
Stack the node variables row-wise into the matrix
$\mathbf X^t\in\mathbb R^{M\times d}$ whose $i$-th row is $(x_i^t)^\top$.
Define the averaging projector $\mathbf J:=\frac1M\mathbf 1\mathbf 1^\top$
and the centered (disagreement) component
\[
\tilde{\mathbf X}^t := (\mathbf I-\mathbf J)\mathbf X^t.
\]
Then
\[
C_t=\frac1M\,\mathbb E\|\tilde{\mathbf X}^t\|_F^2.
\]

\paragraph{Step 1: relate $\tilde{\mathbf X}^{t+1}$ to $\tilde{\mathbf X}^{t,K}$.}
In the DL algorithm, after $K$ local updates, mixing gives
\[
\mathbf X^{t+1}=\mathbf W\,\mathbf X^{t,K}.
\]
Using double stochasticity, we have $\mathbf J\mathbf W=\mathbf W\mathbf J=\mathbf J$,
hence $(\mathbf I-\mathbf J)\mathbf W=\mathbf W-\mathbf J$. Therefore,
\[
\tilde{\mathbf X}^{t+1}
=(\mathbf I-\mathbf J)\mathbf X^{t+1}
=(\mathbf I-\mathbf J)\mathbf W\mathbf X^{t,K}
=(\mathbf W-\mathbf J)\mathbf X^{t,K}
=(\mathbf W-\mathbf J)\tilde{\mathbf X}^{t,K},
\]
where the last equality uses $(\mathbf W-\mathbf J)\mathbf J=\mathbf 0$.

\paragraph{Step 2: contract under mixing.}
By definition of $\lambda=\|\mathbf W-\mathbf J\|_2$ and the fact that
$\|\mathbf A\mathbf Z\|_F\le \|\mathbf A\|_2\|\mathbf Z\|_F$, we obtain
\[
\|\tilde{\mathbf X}^{t+1}\|_F
=
\|(\mathbf W-\mathbf J)\tilde{\mathbf X}^{t,K}\|_F
\le
\|\mathbf W-\mathbf J\|_2\cdot \|\tilde{\mathbf X}^{t,K}\|_F
\le
\lambda \|\tilde{\mathbf X}^{t,K}\|_F.
\]
Taking expectation and using Jensen's inequality,
\[
\sqrt{\mathbb E\|\tilde{\mathbf X}^{t+1}\|_F^2}
\le
\lambda \sqrt{\mathbb E\|\tilde{\mathbf X}^{t,K}\|_F^2}.
\]
Dividing by $\sqrt M$ yields
\begin{equation}
\sqrt{C_{t+1}}
\le
\lambda \sqrt{\frac1M\mathbb E\|\tilde{\mathbf X}^{t,K}\|_F^2}.
\label{eq:C_rec_step1}
\end{equation}

\paragraph{Step 3: bound the injected disagreement from local steps.}
Write the local-update increment matrix
$\mathbf \Delta^t:=\mathbf X^{t,K}-\mathbf X^t$ (row $i$ equals $(x_i^{t,k}-x_i^t)^\top$).
Then
\[
\tilde{\mathbf X}^{t,K}
=
(\mathbf I-\mathbf J)\mathbf X^{t,K}
=
(\mathbf I-\mathbf J)(\mathbf X^t+\mathbf \Delta^t)
=
\tilde{\mathbf X}^t+(\mathbf I-\mathbf J)\mathbf \Delta^t.
\]
Hence by triangle inequality and $\|\mathbf I-\mathbf J\|_2\le 1$,
\[
\|\tilde{\mathbf X}^{t,K}\|_F
\le
\|\tilde{\mathbf X}^t\|_F+\|(\mathbf I-\mathbf J)\mathbf \Delta^t\|_F
\le
\|\tilde{\mathbf X}^t\|_F+\|\mathbf \Delta^t\|_F.
\]
Moreover, using the assumed per-node movement bound,
\[
\|\mathbf \Delta^t\|_F^2
=
\sum_{i=1}^M\|x_i^{t,k}-x_i^t\|_2^2
\le
\sum_{i=1}^M (K\bar\rho\eta)^2
=
M(K\bar\rho\eta)^2,
\]
so $\|\mathbf \Delta^t\|_F\le \sqrt M\,K\bar\rho\eta$.
Taking expectation and again using Jensen gives
\[
\sqrt{\mathbb E\|\tilde{\mathbf X}^{t,K}\|_F^2}
\le
\sqrt{\mathbb E\|\tilde{\mathbf X}^{t}\|_F^2}
+\sqrt{\mathbb E\|\mathbf \Delta^t\|_F^2}
\le
\sqrt{\mathbb E\|\tilde{\mathbf X}^{t}\|_F^2}
+\sqrt M\,K\bar\rho\eta.
\]
Divide by $\sqrt M$ to obtain
\begin{equation}
\sqrt{\frac1M\mathbb E\|\tilde{\mathbf X}^{t,K}\|_F^2}
\le
\sqrt{C_t}+K\bar\rho\eta.
\label{eq:C_rec_step2}
\end{equation}

\paragraph{Step 4: combine and unroll.}
Plugging \eqref{eq:C_rec_step2} into \eqref{eq:C_rec_step1} yields
\[
\sqrt{C_{t+1}}
\le
\lambda\big(\sqrt{C_t}+K\bar\rho\eta\big)
=
\lambda\sqrt{C_t}+\lambda K\bar\rho\eta.
\]
Unrolling this recursion gives
\[
\sqrt{C_t}
\le
\lambda^{t-1}\sqrt{C_1}
+\lambda K\bar\rho\eta\sum_{s=0}^{t-2}\lambda^s
=
\lambda^{t-1}\sqrt{C_1}+\frac{\lambda(1-\lambda^{t-1})}{1-\lambda}K\bar\rho\eta
\le
\lambda^{t-1}\sqrt{C_1}+\frac{\lambda}{1-\lambda}K\bar\rho\eta.
\]
Finally, since $\lambda^{t-1}\le 1$ and $\lambda\le 1$, we also have
\[
\sqrt{C_t}\le \sqrt{C_1}+\frac{K\bar\rho\eta}{1-\lambda}.
\]
This completes the proof.
\end{proof}

\begin{lemma}[One-step model transition under static gossip]
\label{lem:dl_Rt_static}
Under the assumptions of Lemma~\ref{lem:dl_Ct_simple}, for all $t\ge 2$,
\begin{equation}
R_t
=\frac1M\sum_{i=1}^M \mathbb E\|x_i^t-x_i^{t-1}\|_2^2
\le
\frac{9(K\bar\rho\eta)^2}{(1-\lambda)^2}.
\label{eq:dl_Rt_static}
\end{equation}
\end{lemma}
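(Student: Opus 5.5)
We bound $x_i^t-x_i^{t-1}$ by splitting it into a consensus part and an average-movement part, and then apply Lemma~\ref{lem:dl_Ct_simple}. For $t\ge 2$ we write
\[
x_i^t-x_i^{t-1}
=(x_i^t-\bar x^t)+(\bar x^t-\bar x^{t-1})-(x_i^{t-1}-\bar x^{t-1}).
\]
Stacking the node vectors row-wise as in the proof of Lemma~\ref{lem:dl_Ct_simple}, the triangle inequality in the norm $\sqrt{\frac1M\mathbb E\|\cdot\|_F^2}$ gives
\[
\sqrt{R_t}\le \sqrt{C_t}+\sqrt{\mathbb E\|\bar x^t-\bar x^{t-1}\|_2^2}+\sqrt{C_{t-1}}.
\]
This is Minkowski's inequality applied to the $M\times d$ stacked matrices.

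Each term is then controlled by an earlier bound. The middle term is at most $K\bar\rho\eta$, pathwise, by the average-movement bound \eqref{eq:dl_move_round}. For the consensus terms, the initialization is consensual ($x_i^1=x_{\mathrm{init}}$ for all $i$), so $C_1=0$. Lemma~\ref{lem:dl_Ct_simple} then gives $\sqrt{C_s}\le \frac{\lambda}{1-\lambda}K\bar\rho\eta\le\frac{K\bar\rho\eta}{1-\lambda}$ for every $s\ge1$, which covers both $s=t$ and $s=t-1$. Summing the three pieces,
\[
\sqrt{R_t}\le K\bar\rho\eta\Big(1+\frac{2}{1-\lambda}\Big)\le \frac{3K\bar\rho\eta}{1-\lambda},
\]
where the last step uses $1\le \frac{1}{1-\lambda}$. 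Squaring yields \eqref{eq:dl_Rt_static}.

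The only point needing care is the hypothesis of Lemma~\ref{lem:dl_Ct_simple}, namely the per-node movement bound $\|x_i^{t,K}-x_i^t\|_2\le K\bar\rho\eta$. This follows from the triangle inequality over the $K$ local steps, since each step satisfies $\Delta_i^{t,k-1}\in\mathcal D$ and hence $\|\Delta_i^{t,k-1}\|_2\le\bar\rho\eta$ by \eqref{eq:dl_D_bound}. Beyond that, the argument only needs the Minkowski step to be done on the stacked Frobenius norms rather than per node, so that the expectations combine correctly.
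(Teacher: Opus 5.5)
Your proof is correct and follows the paper's argument. It uses the same three-term split $x_i^t-x_i^{t-1}=(x_i^t-\bar x^t)+(\bar x^t-\bar x^{t-1})+(\bar x^{t-1}-x_i^{t-1})$, with Lemma~\ref{lem:dl_Ct_simple} under consensual initialization and \eqref{eq:dl_move_round} controlling the pieces; the only cosmetic difference is that the paper uses $\|a+b+c\|_2^2\le 3(\|a\|_2^2+\|b\|_2^2+\|c\|_2^2)$ to get $R_t\le 3C_t+3C_{t-1}+3(K\bar\rho\eta)^2$ instead of Minkowski on $\sqrt{R_t}$, and both routes give the same constant $9$.
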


\begin{proof}
For each $i$, write
\[
x_i^t-x_i^{t-1}
=
(x_i^t-\bar x^t)+(\bar x^t-\bar x^{t-1})+(\bar x^{t-1}-x_i^{t-1}).
\]
Using $\|a+b+c\|_2^2\le 3(\|a\|_2^2+\|b\|_2^2+\|c\|_2^2)$, averaging over $i$, and taking expectation yield
\[
R_t
\le
3C_t+3C_{t-1}+3\,\mathbb E\|\bar x^t-\bar x^{t-1}\|_2^2.
\]
By \eqref{eq:dl_move_round},
\[
\mathbb E\|\bar x^t-\bar x^{t-1}\|_2^2\le (K\bar\rho\eta)^2.
\]
Moreover, Lemma~\ref{lem:dl_Ct_simple} with consensual initialization gives
\[
C_t\le \frac{(K\bar\rho\eta)^2}{(1-\lambda)^2},
\qquad
C_{t-1}\le \frac{(K\bar\rho\eta)^2}{(1-\lambda)^2}.
\]
Since $(1-\lambda)^2\le 1$, we conclude
\[
R_t
\le
6\frac{(K\bar\rho\eta)^2}{(1-\lambda)^2}+3(K\bar\rho\eta)^2
\le
\frac{9(K\bar\rho\eta)^2}{(1-\lambda)^2}.
\]
This proves \eqref{eq:dl_Rt_static}.
\end{proof}

\subsection{Descent decomposition}

Define $u_t\in\arg\min_{u\in\mathcal D}\langle \nabla F(\bar x^t),u\rangle$.
Using Lemma~\ref{lem:dl_smooth} and \eqref{eq:dl_avg_update},
\begin{align}
F(\bar x^{t+1})
&\le
F(\bar x^t)
+\Big\langle \nabla F(\bar x^t),\,\frac1M\sum_{i=1}^M\sum_{k=1}^K \Delta_i^{t,k-1}\Big\rangle
+\frac{L}{2}(K\bar\rho\eta)^2\notag\\
&=
F(\bar x^t)
+K\langle \nabla F(\bar x^t),u_t\rangle
+\underbrace{\frac1M\sum_{i=1}^M\sum_{k=1}^K\langle \nabla F(\bar x^t),\Delta_i^{t,k-1}-u_t\rangle}_{=:E_t}
+\frac{L}{2}(K\bar\rho\eta)^2.
\label{eq:dl_descent_start}
\end{align}
By Lemma~\ref{lem:dl_support}, $K\langle \nabla F(\bar x^t),u_t\rangle=-\eta K\|\nabla F(\bar x^t)\|^{g,*}$.
Rearranging gives
\begin{equation}
\eta K\|\nabla F(\bar x^t)\|^{g,*}
\le
F(\bar x^t)-F(\bar x^{t+1})
+\frac{L}{2}(K\bar\rho\eta)^2
+E_t.
\label{eq:dl_descent_core}
\end{equation}

\subsection{Bounding \texorpdfstring{$E_t$}{E\_t} via predict-and-correct}

Recall the local signal
\[
g_i^{t,k}=(1-\beta_1)\big(\nabla f_i(x_i^{t,k-1},\xi_i^{t,k-1})-c_i^t\big)+m_i^t.
\]
For each $(i,k)$,
\begin{align*}
\langle \nabla F(\bar x^t),\Delta_i^{t,k-1}-u_t\rangle
&=
\langle m_i^t,\Delta_i^{t,k-1}-u_t\rangle
+\langle \nabla F(\bar x^t)-m_i^t,\Delta_i^{t,k-1}-u_t\rangle\\
&=
\langle g_i^{t,k},\Delta_i^{t,k-1}-u_t\rangle
-(1-\beta_1)\langle \nabla f_i(x_i^{t,k-1},\xi_i^{t,k-1})-c_i^t,\Delta_i^{t,k-1}-u_t\rangle\\
&\quad
+\langle \nabla F(\bar x^t)-m_i^t,\Delta_i^{t,k-1}-u_t\rangle.
\end{align*}
Summing over $i\in[M],k\in[K]$ and dividing by $M$ yields
\[
E_t=E_t^{(a)}+E_t^{(b)}+E_t^{(c)},
\]
where
\begin{align}
E_t^{(a)}
&:=\frac{1}{M}\sum_{i=1}^M\sum_{k=1}^K\langle g_i^{t,k},\Delta_i^{t,k-1}-u_t\rangle,\label{eq:dl_Ea_def}\\
E_t^{(b)}
&:=-(1-\beta_1)\cdot\frac{1}{M}\sum_{i=1}^M\sum_{k=1}^K
\langle \nabla f_i(x_i^{t,k-1},\xi_i^{t,k-1})-c_i^t,\Delta_i^{t,k-1}-u_t\rangle,\label{eq:dl_Eb_def}\\
E_t^{(c)}
&:=\frac{1}{M}\sum_{i=1}^M\sum_{k=1}^K
\langle \nabla F(\bar x^t)-m_i^t,\Delta_i^{t,k-1}-u_t\rangle.\label{eq:dl_Ec_def}
\end{align}

\paragraph{Step 1: bound $E_t^{(a)}$ (predict-and-correct).}
We apply the one-gradient predict-and-correct inequality in Lemma~\ref{lem:dl_pc} to each local sequence
$\{\Delta_i^{t,k-1}\}_{k=1}^K$ with the reference point chosen as $u=u_t$.
This yields, for each node $i$,
an upper bound on the cumulative inner products
$\sum_{k=1}^K \langle g_i^{t,k},\Delta_i^{t,k-1}-u_t\rangle$
in terms of the radius of the domain $\mathcal D$ and the temporal variation of the local gradient signals
$\{g_i^{t,k}\}_{k=1}^K$.

Averaging over all nodes and using $\|u_t\|_2\le \bar\rho\eta$ together with the diameter bound
$\|\Delta_i^{t,k-1}\|_2\le \bar\rho\eta$, we obtain the deterministic radius contribution of order
$\bar\rho^2\eta$.
Moreover, the gradient-variation term
$\sum_{k=2}^K\|g_i^{t,k}-g_i^{t,k-1}\|_2^2$
is controlled by Lemma~\ref{lem:dl_gvar}, and the local iterate difference
$\|x_i^{t,k-1}-x_i^{t,k-2}\|_2$ appearing there is bounded by
$\|\Delta_i^{t,k-2}\|_2\le \bar\rho\eta$ due to the projection constraint.

Collecting these bounds and summing over $i=1,\ldots,M$, we obtain
\begin{equation}
\frac{\mathbb E[E_t^{(a)}]}{\eta K}
\le
\Big(2+\frac{1}{2K}\Big)\bar\rho^2\eta
+(1-\beta_1)^2L^2\bar\rho^2\eta
+\frac{2(1-\beta_1)^2\sigma^2}{\eta}.
\label{eq:dl_Ea_over}
\end{equation}

\paragraph{Step 2: bound $E_t^{(b)}$ (tracking + noise).}
Define the gradient noise $\zeta_i^{t,k-1}:=\nabla f_i(x_i^{t,k-1},\xi_i^{t,k-1})-\nabla f_i(x_i^{t,k-1})$.
Using $\|\Delta_i^{t,k-1}-u_t\|_2\le 2\bar\rho\eta$ and $\mathbb E\|\zeta_i^{t,k-1}\|_2\le\sigma$, apply Cauchy--Schwarz to each query exactly as in \eqref{eq:Etb_noise_norm_new}, now with full participation. No martingale cancellation of the noise-dependent projected multipliers is used. We obtain
\begin{align}
\frac{\mathbb E[E_t^{(b)}]}{\eta K}
&\le
\frac{2\bar\rho(1-\beta_1)}{MK}
\sum_{i=1}^M\sum_{k=1}^K
\mathbb E\|\nabla f_i(x_i^{t,k-1})-c_i^t\|_2
+2\bar\rho(1-\beta_1)\sigma.
\label{eq:dl_Eb_start_new}
\end{align}
Insert the stale target $\nabla f_i(x_i^{t-1})$:
\[
\|\nabla f_i(x_i^{t,k-1})-c_i^t\|_2
\le
\|\nabla f_i(x_i^{t-1})-c_i^t\|_2
+
L\|x_i^{t,k-1}-x_i^{t-1}\|_2.
\]
Moreover,
\[
\|x_i^{t,k-1}-x_i^{t-1}\|_2
\le
\|x_i^{t}-x_i^{t-1}\|_2+(k-1)\bar\rho\eta.
\]
Therefore,
\begin{align*}
\frac{1}{MK}
\sum_{i=1}^M\sum_{k=1}^K
\mathbb E\|\nabla f_i(x_i^{t,k-1})-c_i^t\|_2
&\le
\frac1M\sum_{i=1}^M \mathbb E\|\nabla f_i(x_i^{t-1})-c_i^t\|_2
+L\cdot \frac1M\sum_{i=1}^M \mathbb E\|x_i^{t}-x_i^{t-1}\|_2 \\
&\qquad
+L\bar\rho\eta\cdot \frac{K-1}{2}.
\end{align*}
Using Jensen and the definitions of $B_t$ and $R_t$,
\[
\frac1M\sum_{i=1}^M \mathbb E\|\nabla f_i(x_i^{t-1})-c_i^t\|_2
\le \sqrt{B_t},
\qquad
\frac1M\sum_{i=1}^M \mathbb E\|x_i^{t}-x_i^{t-1}\|_2
\le \sqrt{R_t}.
\]
Plugging these estimates into \eqref{eq:dl_Eb_start_new} yields
\begin{equation}
\frac{\mathbb E[E_t^{(b)}]}{\eta K}
\le
2\bar\rho(1-\beta_1)\sqrt{B_t}
+2L\bar\rho(1-\beta_1)\sqrt{R_t}
+\bar\rho^2L(1-\beta_1)\eta(K-1)
+2\bar\rho(1-\beta_1)\sigma.
\label{eq:dl_Eb_over}
\end{equation}

\paragraph{Step 3: bound $E_t^{(c)}$ (tracker mismatch).}
By \eqref{eq:dl_D_bound}, $\|\Delta_i^{t,k-1}-u_t\|_2\le 2\bar\rho\eta$. Hence
\[
|E_t^{(c)}|
\le
\frac{1}{M}\sum_{i=1}^M\sum_{k=1}^K
\|\nabla F(\bar x^t)-m_i^t\|_2\cdot 2\bar\rho\eta
=
2\bar\rho\eta K \cdot \frac1M\sum_{i=1}^M \|\nabla F(\bar x^t)-m_i^t\|_2.
\]
Dividing by $\eta K$ and taking expectation gives
\begin{equation}
\frac{\mathbb E[E_t^{(c)}]}{\eta K}
\le
2\bar\rho\,
\mathbb E\Big[\frac1M\sum_{i=1}^M\|\nabla F(\bar x^t)-m_i^t\|_2\Big].
\label{eq:dl_Ec_over_start}
\end{equation}
Insert and subtract $\bar m^t$:
\[
\|\nabla F(\bar x^t)-m_i^t\|_2
\le
\|\nabla F(\bar x^t)-\bar m^t\|_2 + \|\bar m^t-m_i^t\|_2.
\]
For the disagreement term, Jensen yields
\[
\mathbb E\Big[\frac1M\sum_{i=1}^M\|\bar m^t-m_i^t\|_2\Big]\le \sqrt{D_t}.
\]
For the bias term, using
$\nabla F(\bar x^t)=\frac1M\sum_{j=1}^M\nabla f_j(\bar x^t)$ and Jensen,
\[
\|\nabla F(\bar x^t)-\bar m^t\|_2
=
\Big\|\frac1M\sum_{j=1}^M\big(\nabla f_j(\bar x^t)-m_j^t\big)\Big\|_2
\le
\frac1M\sum_{j=1}^M\|\nabla f_j(\bar x^t)-m_j^t\|_2.
\]
Now insert and subtract $c_j^t$ and then the stale target $\nabla f_j(x_j^{t-1})$:
\begin{align*}
\frac1M\sum_{j=1}^M\|\nabla f_j(\bar x^t)-m_j^t\|_2
&\le
\frac1M\sum_{j=1}^M\|\nabla f_j(\bar x^t)-c_j^t\|_2
+\frac1M\sum_{j=1}^M\|c_j^t-m_j^t\|_2 \\
&\le
\frac1M\sum_{j=1}^M\|\nabla f_j(x_j^{t-1})-c_j^t\|_2
+\frac{L}{M}\sum_{j=1}^M\|\bar x^t-x_j^{t-1}\|_2
+\sqrt{D_t}.
\end{align*}
Using Jensen, the definition of $B_t$, and
\[
\|\bar x^t-x_j^{t-1}\|_2
\le
\|\bar x^t-\bar x^{t-1}\|_2+\|x_j^{t-1}-\bar x^{t-1}\|_2,
\]
we obtain
\[
\|\nabla F(\bar x^t)-\bar m^t\|_2
\le
\sqrt{B_t}+L\|\bar x^t-\bar x^{t-1}\|_2+L\sqrt{C_{t-1}}+\sqrt{D_t}.
\]
Finally, by \eqref{eq:dl_move_round}, $\|\bar x^t-\bar x^{t-1}\|_2\le K\bar\rho\eta$. Therefore,
\begin{equation}
\frac{\mathbb E[E_t^{(c)}]}{\eta K}
\le
2\bar\rho\big(\sqrt{D_t}+\sqrt{B_t}\big)
+2L\bar\rho^2K\eta
+2L\bar\rho\sqrt{C_{t-1}}.
\label{eq:dl_Ec_over_final_simpler}
\end{equation}

\subsection{Control-variates recursion}

\begin{lemma}[Control recursion in the decentralized setting]
\label{lem:dl_Bt_explicit}
Assume each $f_i$ is $L$-smooth and Assumption~\ref{assump:stoch} holds.
Define
\[
B_t:=\frac1M\sum_{i=1}^M \mathbb E\big\|c_i^t-\nabla f_i(x_i^{t-1})\big\|_2^2 .
\]
Assume the network mixing matrix $\mathbf W$ is fixed, symmetric and doubly stochastic,
and $\|\mathbf W-\mathbf J\|_2=\lambda<1$.
Then there exist absolute constants $c_B,c_B'>0$ such that for all $t\ge1$,
\begin{equation}
B_{t+1}
\le
\Big(1-\frac{1-\beta_2}{8}\Big) B_t
\;+
\;
c_B\frac{L^2(K\bar\rho\eta)^2}{(1-\beta_2)(1-\lambda)^2}
\;+
\;
\frac{8(1-\beta_2)^2\sigma^2}{K}.
\label{eq:dl_B_rec_explicit}
\end{equation}
Consequently, for any $T\ge1$,
\begin{align}
\frac{1}{T}\sum_{t=1}^T \sqrt{B_t}
\le\;
&c_B'\frac{LK\bar\rho\eta}{(1-\beta_2)(1-\lambda)}
+\frac{8\,\sigma\sqrt{1-\beta_2}}{\sqrt K}
+\frac{16\,\sqrt{B_1}}{(1-\beta_2)T}.
\label{eq:dl_avg_sqrtB_explicit}
\end{align}
If moreover the one-batch initialization is used (so $B_1\le\sigma^2$), then
\begin{equation}
\frac{1}{T}\sum_{t=1}^T \sqrt{B_t}
\le
c_B'\frac{LK\bar\rho\eta}{(1-\beta_2)(1-\lambda)}
+\frac{8\,\sigma\sqrt{1-\beta_2}}{\sqrt K}
+\frac{16\,\sigma}{(1-\beta_2)T}.
\label{eq:dl_avg_sqrtB_explicit_B1}
\end{equation}
\end{lemma}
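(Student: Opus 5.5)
The plan follows the FL proof of Lemma~\ref{lem:Bt_app_new} almost line by line, with full participation ($I_i^t\equiv1$, $p=1$). The FL notion of round movement $\|x^t-x^{t-1}\|_2$ is replaced by the node-wise transition $\|x_i^t-x_i^{t-1}\|_2$, which Lemma~\ref{lem:dl_Rt_static} controls. Write $\gamma:=1-\beta_2$, $e_i^t:=c_i^t-\nabla f_i(x_i^{t-1})$ and $d_i^t:=\nabla f_i(x_i^t)-\nabla f_i(x_i^{t-1})$. The update $c_i^{t+1}=\beta_2c_i^t+\gamma\hat c_i^t$ gives
\[
e_i^{t+1}=\beta_2 e_i^t+\gamma b_i^t+\gamma n_i^t-d_i^t,
\]
where $b_i^t:=\frac1K\sum_k(\nabla f_i(x_i^{t,k-1})-\nabla f_i(x_i^{t-1}))$ and $n_i^t:=\frac1K\sum_k\zeta_i^{t,k-1}$. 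Conditioned on $\mathcal F_{t,1}$, the terms $e_i^t$ and $d_i^t$ are measurable (since $x_i^t$ is fixed at the start of round $t$). The noise satisfies $\mathbb E[n_i^t\mid\mathcal F_{t,1}]=0$ and $\mathbb E\|n_i^t\|_2^2\le\sigma^2/K$. Hence the cross term with $n_i^t$ vanishes, and I expect the bound to be driven mainly by the deterministic parts.

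Next I would expand the square with Young's inequality. Grouping the deterministic terms $\beta_2e_i^t+\gamma b_i^t-d_i^t$ and using $\|a+b\|^2\le(1+\gamma/2)\|a\|^2+(1+2/\gamma)\|b\|^2$ should give
\[
\mathbb E\|e_i^{t+1}\|_2^2\le\Big(1-\frac{\gamma}{8}\Big)\mathbb E\|e_i^t\|_2^2+O(\gamma)\,\mathbb E\|b_i^t\|_2^2+O(\gamma^{-1})\,\mathbb E\|d_i^t\|_2^2+\frac{8\gamma^2\sigma^2}{K}.
\]
The constant $1/8$ leaves room for slack. The movement terms are bounded as follows.
\begin{itemize}
\item $\|d_i^t\|_2^2\le L^2\|x_i^t-x_i^{t-1}\|_2^2$. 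After averaging over $i$, this is at most $L^2R_t\le 9L^2(K\bar\rho\eta)^2/(1-\lambda)^2$ by Lemma~\ref{lem:dl_Rt_static}, for $t\ge2$. For $t=1$, $R_1=0$ by the convention $x_i^0=x_i^1$.
\item $\|x_i^{t,k-1}-x_i^{t-1}\|_2\le\|x_i^t-x_i^{t-1}\|_2+K\bar\rho\eta$, so that $\frac1M\sum_i\mathbb E\|b_i^t\|_2^2\le 2L^2R_t+2L^2(K\bar\rho\eta)^2=O\big(L^2(K\bar\rho\eta)^2/(1-\lambda)^2\big)$.
\end{itemize}
Using $\gamma\le1$, the $b$-term is also absorbed into $c_B L^2(K\bar\rho\eta)^2/(\gamma(1-\lambda)^2)$. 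Averaging over $i$ then yields \eqref{eq:dl_B_rec_explicit}.

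For the averaged bound, I would unroll as in Step~6 of Lemma~\ref{lem:Bt_app_new}, with $\alpha=\gamma/8$ and $C=c_BL^2(K\bar\rho\eta)^2/(\gamma(1-\lambda)^2)+8\gamma^2\sigma^2/K$. This gives
\[
\frac1T\sum_t\sqrt{B_t}\le\sqrt{C/\alpha}+\frac{2\sqrt{B_1}}{\alpha T}.
\]
Here $\sqrt{C/\alpha}\le\sqrt{8c_B}\,LK\bar\rho\eta/(\gamma(1-\lambda))+8\sigma\sqrt{\gamma}/\sqrt K$ and $2/\alpha=16/\gamma$, which produce exactly the three terms of \eqref{eq:dl_avg_sqrtB_explicit}. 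Substituting $B_1\le\sigma^2$ gives \eqref{eq:dl_avg_sqrtB_explicit_B1}.

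The main obstacle I anticipate is a measurability subtlety in the noise step. The centering of $n_i^t$ needs $e_i^t$, $d_i^t$ and $b_i^t$ handled carefully: $b_i^t$ depends on the local trajectory, which in turn depends on earlier noise in the same round. I would therefore not separate $b_i^t$ from $n_i^t$ via orthogonality. Instead, I would bound $\|\gamma b_i^t+\gamma n_i^t\|^2\le 2\gamma^2\|b_i^t\|^2+2\gamma^2\|n_i^t\|^2$ and keep only the exact cancellation of the cross term between $n_i^t$ and the $\mathcal F_{t,1}$-measurable quantity $\beta_2e_i^t-d_i^t$. The remaining cross term between $\beta_2 e_i^t-d_i^t$ and $\gamma b_i^t$ is handled by Young's inequality, at the cost of a factor $\gamma$ that is absorbed into $c_B$.
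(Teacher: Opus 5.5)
Your proposal is correct and follows essentially the same route as the paper: the same error dynamics $e_i^{t+1}=\beta_2e_i^t+(1-\beta_2)(b_i^t+n_i^t)-d_i^t$, Young's inequality to obtain the $(1-\frac{1-\beta_2}{8})$ contraction, Lemma~\ref{lem:dl_Rt_static} to bound the $b_i^t$ and $d_i^t$ terms, and the same geometric unrolling with $\alpha=(1-\beta_2)/8$. Your version is slightly more careful than the paper's on two points: you cancel $n_i^t$ only against the $\mathcal F_{t,1}$-measurable part $\beta_2e_i^t-d_i^t$ (not against $b_i^t$, which depends on earlier noise within the round), and you note explicitly that $R_1=0$ covers the case $t=1$, which Lemma~\ref{lem:dl_Rt_static} does not state.
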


\begin{proof}
Let $h_i^t:=\nabla f_i(x_i^{t-1})$ and $e_i^t:=c_i^t-h_i^t$. Then
$B_t=\frac1M\sum_{i=1}^M\mathbb E\|e_i^t\|_2^2$.
Recall the control update
\[
c_i^{t+1}=c_i^t+(1-\beta_2)(\hat c_i^t-c_i^t),
\qquad
\hat c_i^t=\frac1K\sum_{k=1}^K \nabla f_i(x_i^{t,k-1},\xi_i^{t,k-1}).
\]
Subtract $h_i^{t+1}=\nabla f_i(x_i^{t})$ to obtain
\begin{equation}
e_i^{t+1}
=
\beta_2 e_i^t+(1-\beta_2)(\hat c_i^t-h_i^t)-d_i^t,
\qquad
d_i^t:=h_i^{t+1}-h_i^t=\nabla f_i(x_i^t)-\nabla f_i(x_i^{t-1}).
\label{eq:dl_e_dyn_explicit}
\end{equation}

\paragraph{Step 1: decompose $\hat c_i^t-h_i^t$.}
Write
\[
\hat c_i^t-h_i^t=b_i^t+n_i^t,
\]
where
\[
b_i^t:=\frac1K\sum_{k=1}^K\big(\nabla f_i(x_i^{t,k-1})-\nabla f_i(x_i^{t-1})\big),
\qquad
n_i^t:=\frac1K\sum_{k=1}^K\zeta_i^{t,k-1}.
\]
By Assumption~\ref{assump:stoch} and independence across queries,
\begin{equation}
\mathbb E[n_i^t\mid \mathcal F_t]=0,
\qquad
\mathbb E\|n_i^t\|_2^2\le \frac{\sigma^2}{K}.
\label{eq:dl_n_bound_explicit}
\end{equation}
Moreover, by $L$-smoothness,
\[
\|b_i^t\|_2^2
\le
\frac1K\sum_{k=1}^K L^2\|x_i^{t,k-1}-x_i^{t-1}\|_2^2.
\]
Using
\[
\|x_i^{t,k-1}-x_i^{t-1}\|_2
\le
\|x_i^{t}-x_i^{t-1}\|_2+(k-1)\bar\rho\eta,
\]
and $\|u+v\|_2^2\le 2\|u\|_2^2+2\|v\|_2^2$, we obtain
\[
\|b_i^t\|_2^2
\le
2L^2\|x_i^{t}-x_i^{t-1}\|_2^2+2L^2(K\bar\rho\eta)^2.
\]
Averaging over $i$ and using Lemma~\ref{lem:dl_Rt_static} gives
\begin{equation}
\frac1M\sum_{i=1}^M \mathbb E\|b_i^t\|_2^2
\le
2L^2R_t+2L^2(K\bar\rho\eta)^2
\le
\frac{20L^2(K\bar\rho\eta)^2}{(1-\lambda)^2}.
\label{eq:dl_b_bound_explicit}
\end{equation}
Likewise, by $L$-smoothness and Lemma~\ref{lem:dl_Rt_static},
\begin{equation}
\frac1M\sum_{i=1}^M \mathbb E\|d_i^t\|_2^2
\le
L^2R_t
\le
\frac{9L^2(K\bar\rho\eta)^2}{(1-\lambda)^2}.
\label{eq:dl_d_bound_explicit}
\end{equation}

\paragraph{Step 2: one-step recursion for $\mathbb E\|e_i^{t+1}\|^2$.}
Condition on the history up to the beginning of round $t$. Since $d_i^t$ is measurable with respect to this history and $\mathbb E[n_i^t\mid\mathcal F_t]=0$, the cross term involving $n_i^t$ vanishes. Applying Young's inequality twice to \eqref{eq:dl_e_dyn_explicit}, and using $\beta_2^2\le 1$, yields the safe bound
\begin{equation}
\mathbb E\|e_i^{t+1}\|_2^2
\le
\Big(1-\frac{1-\beta_2}{8}\Big)\mathbb E\|e_i^t\|_2^2
+c_1(1-\beta_2)\,\mathbb E\|b_i^t\|_2^2
+\frac{c_2}{1-\beta_2}\,\mathbb E\|d_i^t\|_2^2
+\frac{8(1-\beta_2)^2\sigma^2}{K},
\label{eq:dl_pre_rec_explicit}
\end{equation}
for some absolute constants $c_1,c_2>0$.
Averaging \eqref{eq:dl_pre_rec_explicit} over $i=1,\dots,M$ and using \eqref{eq:dl_b_bound_explicit}--\eqref{eq:dl_d_bound_explicit} gives \eqref{eq:dl_B_rec_explicit} after enlarging the numerical constant to $c_B$.

\paragraph{Step 3: average $\sqrt{B_t}$.}
Let $\alpha:=\frac{1-\beta_2}{8}$ and
\[
C:=c_B\frac{L^2(K\bar\rho\eta)^2}{(1-\beta_2)(1-\lambda)^2}+\frac{8(1-\beta_2)^2\sigma^2}{K}.
\]
Then \eqref{eq:dl_B_rec_explicit} is $B_{t+1}\le (1-\alpha)B_t+C$.
Unrolling gives $B_t\le (1-\alpha)^{t-1}B_1+\frac{C}{\alpha}$.
Using $\sqrt{x+y}\le \sqrt x+\sqrt y$,
\[
\sqrt{B_t}\le (1-\alpha)^{\frac{t-1}{2}}\sqrt{B_1}+\sqrt{\frac{C}{\alpha}}.
\]
Averaging over $t=1,\dots,T$ and using $\sum_{t=1}^T(\sqrt{1-\alpha})^{t-1}\le \frac{2}{\alpha}$ yields
\[
\frac{1}{T}\sum_{t=1}^T\sqrt{B_t}
\le
\sqrt{\frac{C}{\alpha}}+\frac{2\sqrt{B_1}}{\alpha T}.
\]
Since $\alpha=(1-\beta_2)/8$, the tail term is bounded by $\frac{16\sqrt{B_1}}{(1-\beta_2)T}$.
Moreover,
\[
\sqrt{\frac{C}{\alpha}}
\le
c_B'\frac{LK\bar\rho\eta}{(1-\beta_2)(1-\lambda)}
+\frac{8\sigma\sqrt{1-\beta_2}}{\sqrt K}
\]
for a suitable absolute constant $c_B'>0$.
This proves \eqref{eq:dl_avg_sqrtB_explicit}. The one-batch initialization version follows from $B_1\le \sigma^2$.
\end{proof}

\subsection{Network tracker recursion}
\begin{lemma}[Average $\sqrt{D_t}$ bound with explicit spectral gap]\label{lem:dl_Dt}
Assume $\mathbf W$ is fixed, symmetric and doubly stochastic with
$\lambda:=\|\mathbf W-\mathbf J\|_2\in[0,1)$.
Recall
\[
D_t:=\frac1M\sum_{i=1}^M \mathbb E\|m_i^t-\bar m^t\|_2^2
=\frac{1}{M}\,\mathbb E\|\tilde{\mathbf m}^t\|_F^2,
\qquad
\tilde{\mathbf m}^t:=(\mathbf I-\mathbf J)\mathbf m^t.
\]
Then there exists an absolute constant $c_D>0$ such that for any $T\ge 1$,
\begin{align}
\frac1T\sum_{t=1}^T\sqrt{D_t}
\;\le\;
\frac{c_D}{1-\lambda}\,(1-\beta_2)\cdot \frac1T\sum_{t=1}^T\sqrt{B_t}
\;+
\;\frac{c_D}{1-\lambda}\,(1-\beta_2)\Big(\frac{LK\bar\rho\eta}{1-\lambda}+\frac{\sigma}{\sqrt K}\Big)
\;+
\;\frac{2\sqrt{D_1}}{(1-\lambda)T}.
\label{eq:dl_avg_sqrtD_new}
\end{align}
Moreover, under the initialization
$m_i^1=\frac1M\sum_{j=1}^M c_j^1$ for all $i$, we have $D_1=0$ and hence the last term vanishes.
\end{lemma}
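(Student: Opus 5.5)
The plan is to work with the stacked tracker matrix $\mathbf m^t\in\mathbb R^{M\times d}$ and its centered part $\tilde{\mathbf m}^t=(\mathbf I-\mathbf J)\mathbf m^t$, in the same way as Lemma~\ref{lem:dl_Ct_simple} handles $\tilde{\mathbf X}^t$. Writing the tracker update as $\mathbf m^{t+1}=\mathbf W(\mathbf m^t+\mathbf c^{t+1}-\mathbf c^t)$ and using $(\mathbf I-\mathbf J)\mathbf W=(\mathbf W-\mathbf J)(\mathbf I-\mathbf J)$ gives
\[
\tilde{\mathbf m}^{t+1}=(\mathbf W-\mathbf J)\bigl(\tilde{\mathbf m}^t+(\mathbf I-\mathbf J)(\mathbf c^{t+1}-\mathbf c^t)\bigr).
\]
Taking Frobenius norms, the $L^2(\mathbb P)$ triangle inequality (Minkowski, as in Lemma~\ref{lem:dl_Ct_simple}), and $\|\mathbf I-\mathbf J\|_2\le1$ then yield
\[
\sqrt{D_{t+1}}\le\lambda\sqrt{D_t}+\lambda\sqrt{\tfrac1M\textstyle\sum_i\mathbb E\|c_i^{t+1}-c_i^t\|_2^2}.
\]

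Next I would bound the control increment. Since $c_i^{t+1}-c_i^t=(1-\beta_2)(\hat c_i^t-c_i^t)$, and with the notation of Lemma~\ref{lem:dl_Bt_explicit}, we have $\hat c_i^t-c_i^t=b_i^t+n_i^t-e_i^t$. Minkowski together with \eqref{eq:dl_b_bound_explicit} and \eqref{eq:dl_n_bound_explicit} gives
\[
\sqrt{\tfrac1M\textstyle\sum_i\mathbb E\|c_i^{t+1}-c_i^t\|^2}\le(1-\beta_2)\Bigl(\sqrt{B_t}+\tfrac{\sqrt{20}LK\bar\rho\eta}{1-\lambda}+\tfrac{\sigma}{\sqrt K}\Bigr).
\]
So the recursion takes the form $a_{t+1}\le\lambda a_t+\lambda(1-\beta_2)(\sqrt{B_t}+\kappa)$, where $a_t=\sqrt{D_t}$ and $\kappa:=\sqrt{20}LK\bar\rho\eta/(1-\lambda)+\sigma/\sqrt K$.

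Finally, I would sum the recursion over $t=1,\dots,T$. Using $\sum_{t=1}^T a_{t+1}\ge\sum_{t=1}^T a_t-a_1$ gives
\[
(1-\lambda)\sum_{t=1}^T a_t\le a_1+(1-\beta_2)\sum_{t=1}^T(\sqrt{B_t}+\kappa).
\]
Dividing by $(1-\lambda)T$ produces \eqref{eq:dl_avg_sqrtD_new} with $c_D=\sqrt{20}$, and the final term $a_1/((1-\lambda)T)$ is within the stated factor $2$. Under the exact-average initialization every $m_i^1$ equals $\bar c^1$, so $\tilde{\mathbf m}^1=0$ and $D_1=0$.

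The main point requiring care is that $B_t$ and $R_t$ enter through square roots of expectations rather than expectations of norms. This means every step has to use Minkowski in $L^2(\mathbb P)$ and avoid Young's inequality, so that the recursion stays linear in $\sqrt{D_t}$ with contraction factor $\lambda$. Two further checks are needed. First, Lemma~\ref{lem:dl_Rt_static} supplies \eqref{eq:dl_b_bound_explicit} only for $t\ge2$; for $t=1$ the bound holds trivially because $x_i^0=x_i^1$ gives $R_1=0$. Second, the averaged noise $n_i^t$ needs only its second moment here, so no cross-node independence is used.
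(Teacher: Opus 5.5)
Your proof is correct and proves the lemma as stated, with explicit $c_D=\sqrt{20}$. Your tail coefficient is $\sqrt{D_1}/((1-\lambda)T)$, so the stated factor $2$ is slack. Your route differs from the paper's proof of this lemma, and the difference matters for the $\lambda$-dependence.

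The paper squares the tracker recursion and applies a weighted Young inequality with $\theta=(1-\lambda^2)/(2\lambda^2)$. Together with $\|u+v+w\|_2^2\le 3(\cdots)$, this gives $D_{t+1}\le\frac{1+\lambda^2}{2}D_t+\frac{2\lambda^2}{1-\lambda^2}(1-\beta_2)^2\bigl(3B_t+\cdots\bigr)$. The paper then unrolls, takes square roots termwise by subadditivity, and uses $1/(1-\sqrt q)\le 4/(1-\lambda)$ with $q=(1+\lambda^2)/2$.

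If the constants in that route are tracked, two factors multiply. The Young coefficient contributes $\sqrt{2\lambda^2/(1-\lambda^2)}\asymp(1-\lambda)^{-1/2}$, and the geometric sum contributes another $4/(1-\lambda)$. The result is a factor $(1-\lambda)^{-3/2}$ in front of $(1-\beta_2)\frac1T\sum_t\sqrt{B_t}$, not the claimed $(1-\lambda)^{-1}$, and a tail constant of $4$ rather than $2$. The paper's final ``combining'' step glosses over this.

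Your linear recursion $\sqrt{D_{t+1}}\le\lambda\sqrt{D_t}+\lambda(1-\beta_2)(\sqrt{B_t}+\kappa)$ comes from Minkowski in $L^2(\mathbb P)$ without ever squaring. This is exactly what avoids the loss, and it is the point you flag in your last paragraph. It is also the route the paper itself switches to in Step 2 of the proof of Theorem~\ref{thm:dl_final_gap} (recursion \eqref{eq:dl_t6_D_root_rec}), whose constants are the ones that actually enter $C_1$. Your three-term Minkowski split gives slightly better constants there: $(1,\sqrt{20},1)$ versus $(\sqrt3,2\sqrt{15},\sqrt3)$.

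Summing the recursion rather than unrolling it is equivalent bookkeeping. Your remark that $R_1=0$ covers the $t=1$ case of \eqref{eq:dl_b_bound_explicit} fills a small point the paper leaves implicit.
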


\begin{proof}
Stack trackers and control variates row-wise as matrices
$\mathbf m^t,\mathbf c^t\in\mathbb R^{M\times d}$ whose $i$-th rows are $(m_i^t)^\top$ and $(c_i^t)^\top$.
Define the centered components
\[
\tilde{\mathbf m}^t := (\mathbf I-\mathbf J)\mathbf m^t,
\qquad
\tilde{\mathbf c}^t := (\mathbf I-\mathbf J)\mathbf c^t,
\]
so that $D_t=\frac{1}{M}\mathbb E\|\tilde{\mathbf m}^t\|_F^2$.

\paragraph{Step 1: a contraction recursion for $\tilde{\mathbf m}^t$.}
The tracker update is
\[
\mathbf m^{t+1}=\mathbf W\big(\mathbf m^t+\mathbf c^{t+1}-\mathbf c^t\big).
\]
Using $\mathbf J\mathbf W=\mathbf W\mathbf J=\mathbf J$, we have
$(\mathbf I-\mathbf J)\mathbf W=\mathbf W-\mathbf J$ and hence
\[
\tilde{\mathbf m}^{t+1}
=(\mathbf W-\mathbf J)\big(\tilde{\mathbf m}^t+\tilde{\mathbf c}^{t+1}-\tilde{\mathbf c}^t\big).
\]
Therefore
\[
\|\tilde{\mathbf m}^{t+1}\|_F
\le
\lambda\,\big\|\tilde{\mathbf m}^t+\tilde{\mathbf c}^{t+1}-\tilde{\mathbf c}^t\big\|_F.
\]
Square both sides and apply the weighted inequality
$\|A+B\|_F^2\le (1+\theta)\|A\|_F^2+(1+\theta^{-1})\|B\|_F^2$ with
$\theta:=\frac{1-\lambda^2}{2\lambda^2}$ (for $\lambda>0$; the case $\lambda=0$ is trivial), giving
\begin{align}
\|\tilde{\mathbf m}^{t+1}\|_F^2
&\le
\frac{1+\lambda^2}{2}\,\|\tilde{\mathbf m}^t\|_F^2
+\frac{2\lambda^2}{1-\lambda^2}\,\|\tilde{\mathbf c}^{t+1}-\tilde{\mathbf c}^t\|_F^2.
\label{eq:Dt_rec_matrix}
\end{align}
Taking expectation and dividing by $M$ yields
\begin{equation}
D_{t+1}
\le
\frac{1+\lambda^2}{2}\,D_t
+\frac{2\lambda^2}{1-\lambda^2}\cdot \frac{1}{M}\mathbb E\|\tilde{\mathbf c}^{t+1}-\tilde{\mathbf c}^t\|_F^2.
\label{eq:Dt_rec_scalar}
\end{equation}

\paragraph{Step 2: bound the driving term by $B_t$ and local drift/noise.}
Since $\|\mathbf I-\mathbf J\|_2\le 1$,
\[
\|\tilde{\mathbf c}^{t+1}-\tilde{\mathbf c}^t\|_F
\le
\|\mathbf c^{t+1}-\mathbf c^t\|_F.
\]
Moreover, with full participation the control update is
$c_i^{t+1}-c_i^t=(1-\beta_2)(\hat c_i^t-c_i^t)$, hence
\[
\|\mathbf c^{t+1}-\mathbf c^t\|_F^2
=(1-\beta_2)^2\sum_{i=1}^M\|\hat c_i^t-c_i^t\|_2^2.
\]
Therefore
\begin{equation}
\frac{1}{M}\mathbb E\|\tilde{\mathbf c}^{t+1}-\tilde{\mathbf c}^t\|_F^2
\le
(1-\beta_2)^2\cdot \frac1M\sum_{i=1}^M\mathbb E\|\hat c_i^t-c_i^t\|_2^2.
\label{eq:c_drive_reduce}
\end{equation}

Decompose
\[
\hat c_i^t-c_i^t
=
\underbrace{\big(\nabla f_i(x_i^{t-1})-c_i^t\big)}_{=:a_i^t}
+
\underbrace{\frac1K\sum_{k=1}^K\big(\nabla f_i(x_i^{t,k-1})-\nabla f_i(x_i^{t-1})\big)}_{=:b_i^t}
+
\underbrace{\frac1K\sum_{k=1}^K\zeta_i^{t,k-1}}_{=:n_i^t}.
\]
Using $\|u+v+w\|_2^2\le 3(\|u\|_2^2+\|v\|_2^2+\|w\|_2^2)$, Jensen, Assumption~\ref{assump:stoch}, and the same estimate as in \eqref{eq:dl_b_bound_explicit}, we obtain
\[
\mathbb E\|n_i^t\|_2^2\le \frac{\sigma^2}{K},
\qquad
\frac1M\sum_{i=1}^M\mathbb E\|b_i^t\|_2^2
\le
\frac{20L^2(K\bar\rho\eta)^2}{(1-\lambda)^2}.
\]
Finally, by definition,
$\frac1M\sum_i\mathbb E\|a_i^t\|_2^2 = B_t$.
Hence
\begin{equation}
\frac1M\sum_{i=1}^M\mathbb E\|\hat c_i^t-c_i^t\|_2^2
\le
3B_t+c_3\frac{L^2(K\bar\rho\eta)^2}{(1-\lambda)^2}+\frac{3\sigma^2}{K}
\label{eq:hatc_minus_c_bound}
\end{equation}
for some absolute constant $c_3>0$.
Combining \eqref{eq:Dt_rec_scalar}--\eqref{eq:c_drive_reduce}--\eqref{eq:hatc_minus_c_bound} yields
\begin{align}
D_{t+1}
&\le
\frac{1+\lambda^2}{2}\,D_t
+\frac{2\lambda^2}{1-\lambda^2}\,(1-\beta_2)^2
\Big(B_t+c_4\frac{L^2(K\bar\rho\eta)^2}{(1-\lambda)^2}+\frac{\sigma^2}{K}\Big)
\label{eq:Dt_rec_final}
\end{align}
for some absolute constant $c_4>0$.

\paragraph{Step 3: average $\sqrt{D_t}$.}
Let $q:=\frac{1+\lambda^2}{2}\in(0,1)$ and
\[
V_t:=
\frac{2\lambda^2}{1-\lambda^2}\,(1-\beta_2)^2
\Big(B_t+c_4\frac{L^2(K\bar\rho\eta)^2}{(1-\lambda)^2}+\frac{\sigma^2}{K}\Big).
\]
Then \eqref{eq:Dt_rec_final} is $D_{t+1}\le qD_t+V_t$.
Unrolling gives
$D_t\le q^{t-1}D_1+\sum_{s=1}^{t-1}q^{t-1-s}V_s$.
Using $\sqrt{x+y}\le\sqrt x+\sqrt y$ and $\sum_{r=0}^{\infty}(\sqrt q)^r=\frac{1}{1-\sqrt q}$, we obtain
\[
\frac1T\sum_{t=1}^T\sqrt{D_t}
\le
\frac{\sqrt{D_1}}{T}\sum_{t=1}^T(\sqrt q)^{t-1}
+\frac{1}{T}\sum_{s=1}^{T-1}\sqrt{V_s}\sum_{r=0}^{T-1-s}(\sqrt q)^r.
\]
Since $1-\sqrt q\ge \frac{1-q}{2}=\frac{1-\lambda^2}{4}\ge \frac{1-\lambda}{4}$, we have
$\frac{1}{1-\sqrt q}\le \frac{4}{1-\lambda}$.
Also,
\[
\sqrt{V_s}
\le
\frac{c_D}{1-\lambda}(1-\beta_2)
\Big(\sqrt{B_s}+\frac{LK\bar\rho\eta}{1-\lambda}+\frac{\sigma}{\sqrt K}\Big)
\]
for a suitable absolute constant $c_D>0$.
Combining the last two displays yields \eqref{eq:dl_avg_sqrtD_new}.

\paragraph{Step 4: $D_1=0$ under the stated initialization.}
If $m_i^1=\frac1M\sum_{j=1}^M c_j^1$ for all $i$, then all $m_i^1$ are equal to the same vector, hence
$m_i^1-\bar m^1=0$ for each $i$ and thus $D_1=0$.
\end{proof}

\subsection{Proof of Theorem~\ref{thm:dl_final_gap}}
\label{app:proof_dl_final_gap_main}

\begin{proof}
Let
\[
\Delta_F:=F(\bar x^1)-F^\star,
\qquad
\alpha:=1-\beta_2.
\]
Summing the descent core inequality \eqref{eq:dl_descent_core} over $t=1,\ldots,T$,
taking expectation, and telescoping yields
\begin{align}
\frac1T\sum_{t=1}^T \mathbb E\|\nabla F(\bar x^t)\|^{g,*}
\le
\frac{\Delta_F}{T\eta K}
+\frac{L}{2}\,K\bar\rho^2\eta
+\frac1T\sum_{t=1}^T \frac{\mathbb E[E_t]}{\eta K}.
\label{eq:dl_main_start_new}
\end{align}
Recall $E_t=E_t^{(a)}+E_t^{(b)}+E_t^{(c)}$.
Using \eqref{eq:dl_Ea_over}, \eqref{eq:dl_Eb_over}, and \eqref{eq:dl_Ec_over_final_simpler}, we have for each $t$
\begin{align}
\frac{\mathbb E[E_t]}{\eta K}
\le\;&
\Bigl(2+\frac{1}{2K}\Bigr)\bar\rho^2\eta
+(1-\beta_1)^2L^2\bar\rho^2\eta
+\frac{2(1-\beta_1)^2\sigma^2}{\eta}
\notag\\
&\quad
+2\bar\rho(1-\beta_1)\sqrt{B_t}
+2\bar\rho\sqrt{B_t}
\notag\\
&\quad
+2L\bar\rho(1-\beta_1)\sqrt{R_t}
+\bar\rho^2L(1-\beta_1)\eta(K-1)
+2\bar\rho(1-\beta_1)\sigma
\notag\\
&\quad
+2\bar\rho\sqrt{D_t}
+2L\bar\rho^2K\eta
+2L\bar\rho\sqrt{C_{t-1}}.
\label{eq:dl_E_over_full_new}
\end{align}
By Lemma~\ref{lem:dl_Ct_simple} and Lemma~\ref{lem:dl_Rt_static},
\[
\sqrt{C_{t-1}}\le \frac{K\bar\rho\eta}{1-\lambda},
\qquad
\sqrt{R_t}\le \frac{3K\bar\rho\eta}{1-\lambda}.
\]
Substituting these bounds into \eqref{eq:dl_E_over_full_new}, averaging over $t=1,\ldots,T$, and inserting into \eqref{eq:dl_main_start_new} gives
\begin{align}
\frac1T\sum_{t=1}^T \mathbb E\|\nabla F(\bar x^t)\|^{g,*}
\le\;&
\frac{\Delta_F}{T\eta K}
+\frac{L}{2}K\bar\rho^2\eta
+\Bigl(2+\frac{1}{2K}\Bigr)\bar\rho^2\eta
+(1-\beta_1)^2L^2\bar\rho^2\eta
+\frac{2(1-\beta_1)^2\sigma^2}{\eta}
\notag\\
&\quad
+2\bar\rho(1-\beta_1)\cdot \frac1T\sum_{t=1}^T\sqrt{B_t}
+2\bar\rho\cdot \frac1T\sum_{t=1}^T\sqrt{B_t}
\notag\\
&\quad
+\frac{6L\bar\rho^2(1-\beta_1)}{1-\lambda}K\eta
+\bar\rho^2L(1-\beta_1)\eta(K-1)
+2\bar\rho(1-\beta_1)\sigma
\notag\\
&\quad
+2\bar\rho\cdot \frac1T\sum_{t=1}^T\sqrt{D_t}
+2L\bar\rho^2K\eta
+\frac{2L\bar\rho^2}{1-\lambda}K\eta.
\label{eq:dl_main_before_BD_new}
\end{align}

We next make the $B_t$ and $D_t$ contributions fully explicit.

\paragraph{Step 1: an explicit average bound for $\frac1T\sum_{t=1}^T\sqrt{B_t}$.}
Starting from \eqref{eq:dl_e_dyn_explicit}, write
\[
e_i^{t+1}=(1-\alpha)e_i^t+\alpha b_i^t+\alpha n_i^t-d_i^t,
\qquad
\alpha:=1-\beta_2,
\]
where, conditionally on the history up to round $t$, $d_i^t$ is measurable,
$\mathbb E[n_i^t\mid\mathcal F_t]=0$, and
$\mathbb E\|n_i^t\|_2^2\le \sigma^2/K$ by \eqref{eq:dl_n_bound_explicit}.
Hence the cross term involving $n_i^t$ vanishes and
\[
\mathbb E\bigl[\|e_i^{t+1}\|_2^2\mid\mathcal F_t\bigr]
=
\bigl\|(1-\alpha)e_i^t+\alpha b_i^t-d_i^t\bigr\|_2^2
+\alpha^2\mathbb E\bigl[\|n_i^t\|_2^2\mid\mathcal F_t\bigr].
\]
Applying the weighted Young inequality
$\|u+v\|_2^2\le (1+\gamma)\|u\|_2^2+(1+\gamma^{-1})\|v\|_2^2$
with $u=(1-\alpha)e_i^t$, $v=\alpha b_i^t-d_i^t$, and
$\gamma=\alpha/(2(1-\alpha))$ (the case $\alpha=1$ is trivial), we get
\[
\bigl\|(1-\alpha)e_i^t+\alpha b_i^t-d_i^t\bigr\|_2^2
\le
\Bigl(1-\frac{\alpha}{2}\Bigr)\|e_i^t\|_2^2
+\frac{2}{\alpha}\|\alpha b_i^t-d_i^t\|_2^2.
\]
Using $\|u-v\|_2^2\le 2\|u\|_2^2+2\|v\|_2^2$, we therefore obtain
\begin{equation}
\mathbb E\|e_i^{t+1}\|_2^2
\le
\Bigl(1-\frac{\alpha}{2}\Bigr)\mathbb E\|e_i^t\|_2^2
+4\alpha\,\mathbb E\|b_i^t\|_2^2
+\frac{4}{\alpha}\,\mathbb E\|d_i^t\|_2^2
+\frac{\alpha^2\sigma^2}{K}.
\label{eq:dl_t6_B_rec_node}
\end{equation}
Averaging over $i$ and using \eqref{eq:dl_b_bound_explicit} and \eqref{eq:dl_d_bound_explicit} yields
\begin{equation}
B_{t+1}
\le
\Bigl(1-\frac{\alpha}{2}\Bigr)B_t
+\frac{128L^2(K\bar\rho\eta)^2}{\alpha(1-\lambda)^2}
+\frac{\alpha^2\sigma^2}{K}.
\label{eq:dl_t6_B_rec_avg}
\end{equation}
Let
\[
\delta:=\frac{\alpha}{2},
\qquad
U_B:=\frac{128L^2(K\bar\rho\eta)^2}{\alpha(1-\lambda)^2}+\frac{\alpha^2\sigma^2}{K}.
\]
Then \eqref{eq:dl_t6_B_rec_avg} implies
$B_t\le (1-\delta)^{t-1}B_1+U_B/\delta$.
Using $\sqrt{x+y}\le \sqrt x+\sqrt y$ and
$\sum_{t=1}^T(\sqrt{1-\delta})^{t-1}\le 2/\delta$, and recalling $B_1\le \sigma^2$ under the one-batch initialization,
we obtain
\begin{align}
\frac1T\sum_{t=1}^T\sqrt{B_t}
&\le
\sqrt{\frac{U_B}{\delta}}+\frac{2\sqrt{B_1}}{\delta T}
\notag\\
&\le
\frac{16LK\bar\rho\eta}{\alpha(1-\lambda)}
+\frac{\sqrt2\,\sigma\sqrt\alpha}{\sqrt K}
+\frac{4\sigma}{\alpha T}.
\label{eq:dl_use_B_new}
\end{align}

\paragraph{Step 2: an explicit average bound for $\frac1T\sum_{t=1}^T\sqrt{D_t}$.}
Let $\Delta\tilde{\mathbf c}^t:=\tilde{\mathbf c}^{t+1}-\tilde{\mathbf c}^t$.
From the tracker recursion in the proof of Lemma~\ref{lem:dl_Dt},
\[
\tilde{\mathbf m}^{t+1}=(\mathbf W-\mathbf J)(\tilde{\mathbf m}^t+\Delta\tilde{\mathbf c}^t).
\]
Hence, using $\|\mathbf W-\mathbf J\|_2=\lambda$ and Minkowski's inequality in $L_2$,
\[
\sqrt{D_{t+1}}
\le
\lambda\sqrt{D_t}
+\frac{\lambda}{\sqrt M}\Bigl(\mathbb E\|\Delta\tilde{\mathbf c}^t\|_F^2\Bigr)^{1/2}.
\]
Repeating the decomposition used in the proof of Lemma~\ref{lem:dl_Dt}, and using \eqref{eq:c_drive_reduce} together with \eqref{eq:dl_b_bound_explicit}, we have
\[
\frac1M\mathbb E\|\Delta\tilde{\mathbf c}^t\|_F^2
\le
\alpha^2\Bigl(3B_t+\frac{60L^2(K\bar\rho\eta)^2}{(1-\lambda)^2}+\frac{3\sigma^2}{K}\Bigr).
\]
Substituting this estimate into the previous display and using $\lambda\le 1$ gives
\begin{align}
\sqrt{D_{t+1}}
&\le
\lambda\sqrt{D_t}
+\alpha\sqrt{3}\,\sqrt{B_t}
+\frac{2\sqrt{15}\,\alpha LK\bar\rho\eta}{1-\lambda}
+\frac{\sqrt3\,\alpha\sigma}{\sqrt K}.
\label{eq:dl_t6_D_root_rec}
\end{align}
Under the stated initialization, $D_1=0$.
Unrolling \eqref{eq:dl_t6_D_root_rec}, averaging over $t=1,\ldots,T$, and using
$\sum_{r=0}^{\infty}\lambda^r=(1-\lambda)^{-1}$, we obtain
\begin{align}
\frac1T\sum_{t=1}^T\sqrt{D_t}
&\le
\frac{\sqrt3\,\alpha}{1-\lambda}\cdot \frac1T\sum_{t=1}^T\sqrt{B_t}
+\frac{2\sqrt{15}\,\alpha LK\bar\rho\eta}{(1-\lambda)^2}
+\frac{\sqrt3\,\alpha\sigma}{(1-\lambda)\sqrt K}.
\label{eq:dl_t6_D_preavg}
\end{align}
Substituting \eqref{eq:dl_use_B_new} into \eqref{eq:dl_t6_D_preavg} and using $\alpha\le 1$ yields
\begin{align}
\frac1T\sum_{t=1}^T\sqrt{D_t}
\le
&\frac{(16\sqrt3+2\sqrt{15})LK\bar\rho\eta}{(1-\lambda)^2}
+\frac{(\sqrt6+\sqrt3)\sigma\alpha}{(1-\lambda)\sqrt K}
+\frac{4\sqrt3\,\sigma}{(1-\lambda)T}.
\label{eq:dl_use_D_new}
\end{align}

Substituting \eqref{eq:dl_use_B_new} and \eqref{eq:dl_use_D_new} into \eqref{eq:dl_main_before_BD_new}, we obtain
\begin{align}
\frac1T\sum_{t=1}^T \mathbb E\|\nabla F(\bar x^t)\|^{g,*}
\le\;&
\frac{\Delta_F}{T\eta K}
+\frac{L}{2}K\bar\rho^2\eta
+\Bigl(2+\frac{1}{2K}\Bigr)\bar\rho^2\eta
+(1-\beta_1)^2L^2\bar\rho^2\eta
+\frac{2(1-\beta_1)^2\sigma^2}{\eta}
\notag\\
&\quad
+2\bar\rho(1-\beta_1)\Bigl[
\frac{16LK\bar\rho\eta}{\alpha(1-\lambda)}
+\frac{\sqrt2\,\sigma\sqrt\alpha}{\sqrt K}
+\frac{4\sigma}{\alpha T}
\Bigr]
\notag\\
&\quad
+2\bar\rho\Bigl[
\frac{16LK\bar\rho\eta}{\alpha(1-\lambda)}
+\frac{\sqrt2\,\sigma\sqrt\alpha}{\sqrt K}
+\frac{4\sigma}{\alpha T}
\Bigr]
+\frac{6L\bar\rho^2(1-\beta_1)}{1-\lambda}K\eta
\notag\\
&\quad
+\bar\rho^2L(1-\beta_1)\eta(K-1)
+2\bar\rho(1-\beta_1)\sigma
+\frac{2\bar\rho(16\sqrt3+2\sqrt{15})LK\bar\rho\eta}{(1-\lambda)^2}
\notag\\
&\quad
+\frac{2\bar\rho(\sqrt6+\sqrt3)\sigma\alpha}{(1-\lambda)\sqrt K}
+\frac{8\sqrt3\,\bar\rho\sigma}{(1-\lambda)T}
+2L\bar\rho^2K\eta
+\frac{2L\bar\rho^2}{1-\lambda}K\eta.
\label{eq:dl_final_preplug_new}
\end{align}

We now substitute the hyperparameter choice
\[
\eta=\frac{M^{1/4}}{K^{3/4}T^{3/4}},
\qquad
1-\beta_1=\eta,
\qquad
\alpha=1-\beta_2=\sqrt{\frac{MK}{T+K}},
\qquad
\text{and assume } T\ge MK.
\]
Since $T\ge MK$ and $M\ge 1$, we also have $K\le T$ and hence $T+K\le 2T$.
Moreover,
\begin{align}
K\eta &= \frac{M^{1/4}K^{1/4}}{T^{3/4}} \le \frac{1}{\sqrt T},
\qquad
\eta = \frac{1}{(MKT)^{1/4}}\sqrt{\frac{M}{KT}} \le \frac{1}{(MKT)^{1/4}},
\label{eq:dl_param_aux_bounds}\\
\frac{K\eta}{\alpha}
&=
\frac{\sqrt{T+K}}{M^{1/4}K^{1/4}T^{3/4}}
\le
\frac{\sqrt2}{(MKT)^{1/4}},
\qquad
\frac{\sqrt\alpha}{\sqrt K}
=
\frac{M^{1/4}}{K^{1/4}(T+K)^{1/4}}
\le
\frac{\sqrt M}{(MKT)^{1/4}},
\notag\\
\frac{\alpha}{\sqrt K}
&=
\sqrt{\frac{M}{T+K}}
\le
\sqrt{\frac{M}{T}},
\qquad
\frac{1}{\alpha T}
=
\frac{\sqrt{T+K}}{\sqrt{MK}\,T}
\le
\frac{\sqrt2}{\sqrt{MKT}},
\notag\\
K\eta^2
&=
\frac{\sqrt M}{\sqrt K\,T^{3/2}},
\qquad
\frac{K\eta^2}{\alpha}
=
\frac{\sqrt{T+K}}{K T^{3/2}}
\le
\frac{\sqrt2}{KT},
\notag\\
\frac{\eta\sqrt\alpha}{\sqrt K}
&=
\frac{\sqrt M}{K\,T^{3/4}(T+K)^{1/4}}
\le
\frac{\sqrt M}{KT},
\qquad
\frac{\eta}{\alpha T}
=
\frac{\sqrt{T+K}}{M^{1/4}K^{5/4}T^{7/4}}
\le
\frac{\sqrt2}{M^{1/4}K^{5/4}T^{5/4}},
\notag\\
\eta
&=
\frac{M^{1/4}}{K^{3/4}T^{3/4}}
\le \frac{1}{K\sqrt T}.
\notag
\end{align}
Also,
\begin{equation}
\frac{\Delta_F}{T\eta K} = \frac{\Delta_F}{(MKT)^{1/4}}.
\label{eq:dl_gap_term_simp}
\end{equation}
Using \eqref{eq:dl_param_aux_bounds} and \eqref{eq:dl_gap_term_simp} in \eqref{eq:dl_final_preplug_new}, and bounding
$2+\frac{1}{2K}\le \frac52$, we arrive at the explicit estimate
\begin{align}
\frac1T\sum_{t=1}^T \mathbb E\|\nabla F(\bar x^t)\|^{g,*}
\le\;&
\Biggl[
\frac{L\bar\rho^2}{2}
+2L\bar\rho^2
+\frac{2L\bar\rho^2}{1-\lambda}
+\frac{(32\sqrt3+4\sqrt{15})L\bar\rho^2}{(1-\lambda)^2}
+\frac{2(\sqrt6+\sqrt3)\bar\rho\sigma\sqrt M}{1-\lambda}
\Biggr]\frac1{\sqrt T}
\notag\\
&\quad
+\Biggl[
\Delta_F
+\frac52\bar\rho^2
+L^2\bar\rho^2
+2\sigma^2
+\frac{32\sqrt2\,L\bar\rho^2}{1-\lambda}
+2\sqrt2\,\bar\rho\sigma\sqrt M
\Biggr]\frac1{(MKT)^{1/4}}
\notag\\
&\quad
+\frac{6L\bar\rho^2\sqrt M}{(1-\lambda)\sqrt K\,T^{3/2}}
+\frac{L\bar\rho^2\sqrt M}{\sqrt K\,T^{3/2}}
\notag\\
&\quad
+\frac{32\sqrt2\,L\bar\rho^2}{(1-\lambda)KT}
+\frac{2\sqrt2\,\bar\rho\sigma\sqrt M}{KT}
\notag\\
&\quad
+\frac{8\sqrt2\,\bar\rho\sigma}{M^{1/4}K^{5/4}T^{5/4}}
+\frac{2\bar\rho\sigma M^{1/4}}{K^{3/4}T^{3/4}}
\notag\\
&\quad
+\frac{8\sqrt2\,\bar\rho\sigma}{\sqrt{MKT}}
+\frac{8\sqrt3\,\bar\rho\sigma}{(1-\lambda)T}.
\label{eq:dl_final_explicit_bound_new}
\end{align}
The first line is of order $T^{-1/2}$ and the second is of order $(MKT)^{-1/4}$. In particular, the corrected query-wise noise contribution is $2\bar\rho\sigma\eta\le 2\bar\rho\sigma/(K\sqrt T)$ under $MK\le T$, and is absorbed by the first term of $R_{\mathrm{lo}}(T,K)$. The remaining terms are also dominated by that remainder. Thus the stated $C_1$ and $C_2$ are unchanged, and choosing $C_0$ large enough completes the proof.
\end{proof}

\paragraph{Network assumption (finite-time consensus).}
Let $\{\mathbf W^t\}_{t\ge 1}$ be a sequence of doubly stochastic mixing matrices.
Assume it satisfies the finite-time consensus (FTC) property: there exists an integer $\tau\ge 1$ such that
for any $t$,
\begin{equation}
\mathbf W^{t+\tau-1}\mathbf W^{t+\tau-2}\cdots \mathbf W^{t} = \mathbf J,
\qquad
\mathbf J:=\frac1M\mathbf 1\mathbf 1^\top .
\label{eq:ftc_app}
\end{equation}
We refer to $\tau$ as the FTC horizon.
\begin{lemma}[FTC consensus bound for $C_t$]
\label{lem:dl_Ct_FTC}
Let
\[
C_t:=\frac1M\sum_{i=1}^M \mathbb E\|x_i^t-\bar x^t\|_2^2,
\qquad
\bar x^t:=\frac1M\sum_{i=1}^M x_i^t.
\]
Assume consensual initialization $x_i^1=x^1$ for all $i$, and assume each round satisfies
\[
\|x_i^{t,k}-x_i^t\|_2\le K\bar\rho\eta,\qquad \forall i\in[M],~t\ge1.
\]
Under the FTC property \eqref{eq:ftc}, for all $t\ge 1$,
\begin{equation}
\sqrt{C_t}\le \tau\,K\bar\rho\eta,
\qquad\text{and hence}\qquad
C_t\le \tau^2(K\bar\rho\eta)^2.
\label{eq:Ct_FTC_bound}
\end{equation}
\end{lemma}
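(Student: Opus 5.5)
We will work in the stacked-matrix notation of the proof of Lemma~\ref{lem:dl_Ct_simple}. Let $\mathbf X^t\in\mathbb R^{M\times d}$ have rows $(x_i^t)^\top$, let $\mathbf\Delta^t:=\mathbf X^{t,K}-\mathbf X^t$, and let $\tilde{\mathbf X}^t:=(\mathbf I-\mathbf J)\mathbf X^t$, so that $C_t=\frac1M\mathbb E\|\tilde{\mathbf X}^t\|_F^2$. The mixing step $\mathbf X^{t+1}=\mathbf W^t(\mathbf X^t+\mathbf\Delta^t)$ is unrolled from the consensual start. Since $\tilde{\mathbf X}^1=0$ (equivalently, $\mathbf X^1=\mathbf J\mathbf X^1$), we get
\[
\mathbf X^{t}=\mathbf W^{t-1}\cdots\mathbf W^{1}\mathbf X^1+\sum_{s=1}^{t-1}\mathbf W^{t-1}\cdots\mathbf W^{s}\mathbf\Delta^s .
\]
Every doubly stochastic product fixes $\mathbf J\mathbf X^1$, so applying $(\mathbf I-\mathbf J)$ kills the first term. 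This leaves
\[
\tilde{\mathbf X}^t=\sum_{s=1}^{t-1}(\mathbf I-\mathbf J)\,\mathbf W^{t-1}\cdots\mathbf W^{s}\,\mathbf\Delta^s .
\]

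The key step uses FTC to truncate this sum. Fix $s$ with $t-s\ge\tau$. The product $\mathbf W^{t-1}\cdots\mathbf W^{s}$ contains the consecutive block $\mathbf W^{s+\tau-1}\cdots\mathbf W^{s}=\mathbf J$ on its right. Hence the full product is $(\mathbf W^{t-1}\cdots\mathbf W^{s+\tau})\mathbf J=\mathbf J$, using $\mathbf W\mathbf J=\mathbf J$ for doubly stochastic $\mathbf W$. Then $(\mathbf I-\mathbf J)\mathbf J=0$, so that term vanishes. Only the indices $s\in\{\max(1,t-\tau+1),\dots,t-1\}$ survive, which is at most $\tau-1\le\tau$ terms.

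For each surviving term, we bound $\|(\mathbf I-\mathbf J)\mathbf W^{t-1}\cdots\mathbf W^{s}\|_2\le1$. This holds because each doubly stochastic matrix has spectral norm at most $1$ (by Birkhoff's theorem, or because it is a convex combination of permutations), and $\|\mathbf I-\mathbf J\|_2\le1$. The per-node movement hypothesis gives $\|\mathbf\Delta^s\|_F\le\sqrt M\,K\bar\rho\eta$ pathwise. The triangle inequality then yields
\[
\|\tilde{\mathbf X}^t\|_F\le\tau\sqrt M\,K\bar\rho\eta
\]
almost surely. Squaring, taking expectation and dividing by $M$ gives $C_t\le\tau^2(K\bar\rho\eta)^2$, and hence $\sqrt{C_t}\le\tau K\bar\rho\eta$.

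The main point to check carefully is the indexing convention of the FTC block relative to the algorithm's round index. In particular, we must confirm that the product over rounds $s,\dots,t-1$ indeed contains a full $\tau$-block whenever $t-s\ge\tau$, given that \eqref{eq:ftc} holds for every starting time $t$ (after the initialization block is re-indexed, as in Appendix~\ref{app:dl-initialization}). Since the property is assumed for all $t$, this is immediate. The bound is then deterministic, so no Jensen step is needed.
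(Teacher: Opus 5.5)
Your proof is correct and follows essentially the same route as the paper. Both unroll the mixing recursion $\mathbf X^{s+1}=\mathbf W^s(\mathbf X^s+\mathbf\Delta^s)$, use the FTC block to annihilate (after applying $\mathbf I-\mathbf J$) every contribution older than $\tau$ rounds, and bound the at most $\tau$ surviving terms via the triangle inequality, $\|\mathbf W^s\|_2\le 1$, and the per-node movement bound. The differences favor your version. Unrolling from the consensual start, rather than over a $\tau$-window from an arbitrary time, covers the first $\tau$ rounds rigorously, whereas the paper only asserts the bound for those ``intermediate times'' informally. And noting that the bound holds pathwise avoids the paper's final ``Jensen'' step, which is written in the wrong direction there but is harmless for exactly that reason.
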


\begin{proof}
Stack the node models row-wise into $\mathbf X^t\in\mathbb R^{M\times d}$ whose $i$-th row is $(x_i^t)^\top$.
Let $\tilde{\mathbf X}^t:=(\mathbf I-\mathbf J)\mathbf X^t$, so
\[
C_t=\frac{1}{M}\,\mathbb E\|\tilde{\mathbf X}^t\|_F^2.
\]

\paragraph{Step 1: expand $\mathbf X^{t+\tau}$ over $\tau$ rounds.}
The DL mixing step is $\mathbf X^{s+1}=\mathbf W^s\mathbf X^{s,K}$.
Introduce the local-step increment $\mathbf\Delta^s:=\mathbf X^{s,K}-\mathbf X^s$.
Then $\mathbf X^{s,K}=\mathbf X^s+\mathbf\Delta^s$ and
\[
\mathbf X^{s+1}=\mathbf W^s(\mathbf X^s+\mathbf\Delta^s).
\]
Unrolling $\tau$ steps from time $t$ gives
\begin{align*}
\mathbf X^{t+\tau}
&=
\mathbf W^{t+\tau-1}\cdots \mathbf W^{t}\mathbf X^{t}
+\sum_{r=0}^{\tau-1}\mathbf W^{t+\tau-1}\cdots \mathbf W^{t+r}\mathbf\Delta^{t+r}.
\end{align*}
Apply $(\mathbf I-\mathbf J)$ to both sides. Using $(\mathbf I-\mathbf J)\mathbf W^s=(\mathbf I-\mathbf J)\mathbf W^s(\mathbf I-\mathbf J)$
(which holds since $\mathbf W^s$ is doubly stochastic and $\mathbf J\mathbf W^s=\mathbf W^s\mathbf J=\mathbf J$), we obtain
\[
\tilde{\mathbf X}^{t+\tau}
=
(\mathbf I-\mathbf J)\mathbf W^{t+\tau-1}\cdots \mathbf W^{t}\tilde{\mathbf X}^{t}
+
\sum_{r=0}^{\tau-1}(\mathbf I-\mathbf J)\mathbf W^{t+\tau-1}\cdots \mathbf W^{t+r}\mathbf\Delta^{t+r}.
\]
By FTC, $\mathbf W^{t+\tau-1}\cdots \mathbf W^{t}=\mathbf J$, hence
$(\mathbf I-\mathbf J)\mathbf W^{t+\tau-1}\cdots \mathbf W^{t}=\mathbf 0$ and the first term vanishes:
\[
\tilde{\mathbf X}^{t+\tau}
=
\sum_{r=0}^{\tau-1}(\mathbf I-\mathbf J)\mathbf W^{t+\tau-1}\cdots \mathbf W^{t+r}\mathbf\Delta^{t+r}.
\]

\paragraph{Step 2: bound the RHS by triangle inequality.}
Since $\|(\mathbf I-\mathbf J)\|_2\le 1$ and $\|\mathbf W^s\|_2\le 1$ for doubly stochastic $\mathbf W^s$,
\[
\|\tilde{\mathbf X}^{t+\tau}\|_F
\le
\sum_{r=0}^{\tau-1}\|\mathbf\Delta^{t+r}\|_F.
\]
Using the per-node movement bound,
\[
\|\mathbf\Delta^{t+r}\|_F^2=\sum_{i=1}^M\|x_i^{t+r,K}-x_i^{t+r}\|_2^2
\le M(K\bar\rho\eta)^2,
\quad\Rightarrow\quad
\|\mathbf\Delta^{t+r}\|_F\le \sqrt{M}\,K\bar\rho\eta.
\]
Therefore,
\[
\|\tilde{\mathbf X}^{t+\tau}\|_F\le \tau\sqrt{M}\,K\bar\rho\eta.
\]
Divide by $\sqrt{M}$ and take expectation:
\[
\sqrt{C_{t+\tau}}=\frac{1}{\sqrt{M}}\sqrt{\mathbb E\|\tilde{\mathbf X}^{t+\tau}\|_F^2}
\le
\frac{1}{\sqrt{M}}\cdot \mathbb E\|\tilde{\mathbf X}^{t+\tau}\|_F
\le
\tau K\bar\rho\eta,
\]
where we used Jensen in the second inequality. This yields \eqref{eq:Ct_FTC_bound} for indices of the form $t+\tau$.
For intermediate times, the same bound holds because each round injects at most $K\bar\rho\eta$ movement per node,
so the worst-case disagreement over any $\tau$-window is still bounded by $\tau K\bar\rho\eta$.
\end{proof}

\begin{lemma}[One-step model transition under FTC]
\label{lem:dl_Rt_FTC}
Under the assumptions of Lemma~\ref{lem:dl_Ct_FTC}, for all $t\ge 2$,
\begin{equation}
R_t
=\frac1M\sum_{i=1}^M \mathbb E\|x_i^t-x_i^{t-1}\|_2^2
\le
9(1+\tau^2)(K\bar\rho\eta)^2.
\label{eq:dl_Rt_FTC}
\end{equation}
\end{lemma}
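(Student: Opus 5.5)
We follow the proof of Lemma~\ref{lem:dl_Rt_static}, substituting the FTC consensus bound of Lemma~\ref{lem:dl_Ct_FTC} for the static spectral-gap bound. Fix $t\ge 2$ and split each node's one-round displacement into disagreement and average-motion parts:
\[
x_i^t-x_i^{t-1}=(x_i^t-\bar x^t)+(\bar x^t-\bar x^{t-1})+(\bar x^{t-1}-x_i^{t-1}).
\]
Apply $\|a+b+c\|_2^2\le 3(\|a\|_2^2+\|b\|_2^2+\|c\|_2^2)$, average over $i\in[M]$, and take expectations. This gives
\[
R_t\le 3C_t+3C_{t-1}+3\,\mathbb E\|\bar x^t-\bar x^{t-1}\|_2^2 .
\]

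Next we bound each term. The average-motion term is at most $(K\bar\rho\eta)^2$ by \eqref{eq:dl_move_round}. That bound relies only on feasibility of the local increments and on double stochasticity of each $\mathbf W^{t}$, so it holds for time-varying mixing as well. For the disagreement terms we invoke Lemma~\ref{lem:dl_Ct_FTC}, which gives $C_s\le \tau^2(K\bar\rho\eta)^2$ for every $s\ge 1$. Since $t\ge 2$, both $s=t$ and $s=t-1$ are admissible indices. Combining the three bounds yields
\[
R_t\le 6\tau^2(K\bar\rho\eta)^2+3(K\bar\rho\eta)^2\le 9(1+\tau^2)(K\bar\rho\eta)^2,
\]
which is \eqref{eq:dl_Rt_FTC} with room to spare.

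There is no real obstacle in this argument, since it is a direct substitution into the static proof. The only point to check is that Lemma~\ref{lem:dl_Ct_FTC} covers all indices, including intermediate times that are not of the form $t+\tau$. If that part of the lemma is considered loose, we would argue directly: for $s\le\tau$, unroll from the consensual initialization with $\tilde{\mathbf X}^1=0$. For $s>\tau$, unroll over the last $\tau$ rounds using FTC, as in Step 1 of that proof. In both cases $\|\tilde{\mathbf X}^s\|_F$ is a sum of at most $\tau$ injected increments, each with Frobenius norm at most $\sqrt M K\bar\rho\eta$, because $\|\mathbf W^r\|_2\le 1$ and $\|\mathbf I-\mathbf J\|_2\le 1$. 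This gives $C_s\le\tau^2(K\bar\rho\eta)^2$ uniformly in $s$, which is exactly what the argument above needs.
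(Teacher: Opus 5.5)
Your proposal is correct and follows the paper's proof essentially verbatim: the same three-term split, the bound $R_t\le 3C_t+3C_{t-1}+3\,\mathbb E\|\bar x^t-\bar x^{t-1}\|_2^2$, and substitution of \eqref{eq:dl_move_round} and Lemma~\ref{lem:dl_Ct_FTC} to get $(6\tau^2+3)(K\bar\rho\eta)^2\le 9(1+\tau^2)(K\bar\rho\eta)^2$. Your extra argument for intermediate indices is a worthwhile tightening of a step the paper only sketches in Lemma~\ref{lem:dl_Ct_FTC}: for $s\le\tau$ you use a shorter window starting from the consensual initialization, and for $s>\tau$ a full FTC window, so that $C_s\le\tau^2(K\bar\rho\eta)^2$ holds for every $s$. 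Like the paper, this relies on $\|\mathbf W^r\|_2\le 1$, i.e.\ on nonnegative doubly stochastic gossip matrices.
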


\begin{proof}
For each $i$, write
\[
x_i^t-x_i^{t-1}
=
(x_i^t-\bar x^t)+(\bar x^t-\bar x^{t-1})+(\bar x^{t-1}-x_i^{t-1}).
\]
Using $\|a+b+c\|_2^2\le 3(\|a\|_2^2+\|b\|_2^2+\|c\|_2^2)$, averaging over $i$, and taking expectation yield
\[
R_t
\le
3C_t+3C_{t-1}+3\,\mathbb E\|\bar x^t-\bar x^{t-1}\|_2^2.
\]
By \eqref{eq:dl_move_round},
\[
\mathbb E\|\bar x^t-\bar x^{t-1}\|_2^2\le (K\bar\rho\eta)^2.
\]
Moreover, Lemma~\ref{lem:dl_Ct_FTC} gives
\[
C_t\le \tau^2(K\bar\rho\eta)^2,
\qquad
C_{t-1}\le \tau^2(K\bar\rho\eta)^2.
\]
Hence
\[
R_t
\le
(6\tau^2+3)(K\bar\rho\eta)^2
\le
9(1+\tau^2)(K\bar\rho\eta)^2.
\]
This proves \eqref{eq:dl_Rt_FTC}.
\end{proof}

\subsection{Control-variates recursion under FTC}

\begin{lemma}[$B_t$ recursion driven by consensus in DL]
\label{lem:dl_Bt_rec_FTC}
Define
\[
B_t:=\frac1M\sum_{i=1}^M \mathbb E\big\|c_i^t-\nabla f_i(x_i^{t-1})\big\|_2^2,
\qquad
C_t:=\frac1M\sum_{i=1}^M \mathbb E\|x_i^t-\bar x^t\|_2^2,
\quad
\bar x^t:=\frac1M\sum_{i=1}^M x_i^t.
\]
Assume each $f_i$ is $L$-smooth and the stochastic gradients satisfy
$\mathbb E\|\nabla f_i(x,\xi)-\nabla f_i(x)\|_2^2\le \sigma^2$.
Assume the local drift bound
$\|x_i^{t,k}-x_i^t\|_2\le K\bar\rho\eta$ holds for all $i,t$.
Then there exists an absolute constant $c_{B,\tau}>0$ such that
for all $t\ge 1$,
\begin{equation}
B_{t+1}
\le
\Big(1-\frac{1-\beta_2}{8}\Big)B_t
+
c_{B,\tau}\frac{L^2(1+\tau^2)(K\bar\rho\eta)^2}{1-\beta_2}
+
\frac{8(1-\beta_2)^2\sigma^2}{K}.
\label{eq:dl_Bt_rec_FTC}
\end{equation}
\end{lemma}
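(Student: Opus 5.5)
The proof follows the static-gossip argument of Lemma~\ref{lem:dl_Bt_explicit} line by line. The only change is that the spectral-gap bound on $R_t$ (Lemma~\ref{lem:dl_Rt_static}) is replaced by its FTC counterpart, Lemma~\ref{lem:dl_Rt_FTC}.

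Set $h_i^t:=\nabla f_i(x_i^{t-1})$, $e_i^t:=c_i^t-h_i^t$ and $d_i^t:=\nabla f_i(x_i^t)-\nabla f_i(x_i^{t-1})$. Write $\alpha:=1-\beta_2$. Full participation gives $c_i^{t+1}=\beta_2c_i^t+\alpha\hat c_i^t$, so
\[
e_i^{t+1}=(1-\alpha)e_i^t+\alpha b_i^t+\alpha n_i^t-d_i^t .
\]
Here $b_i^t:=\frac1K\sum_k(\nabla f_i(x_i^{t,k-1})-\nabla f_i(x_i^{t-1}))$ and $n_i^t:=\frac1K\sum_k\zeta_i^{t,k-1}$. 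The term $n_i^t$ is a martingale average, so $\mathbb E[n_i^t\mid\mathcal F_t]=0$ and $\mathbb E\|n_i^t\|_2^2\le\sigma^2/K$. The quantities $e_i^t$ and $d_i^t$ are fixed before round $t$'s queries. Note that $x_i^t$ is produced by the previous mixing step, not by round $t$'s local steps. Hence the cross term with $n_i^t$ vanishes after conditioning. This gives
\[
\mathbb E\|e_i^{t+1}\|_2^2=\mathbb E\|(1-\alpha)e_i^t+\alpha b_i^t-d_i^t\|_2^2+\alpha^2\mathbb E\|n_i^t\|_2^2 .
\]

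Next I apply weighted Young with parameter $\alpha/(2(1-\alpha))$, as in Step~1 of the proof of Theorem~\ref{thm:dl_final_gap}. Combined with $\|u-v\|^2\le2\|u\|^2+2\|v\|^2$, this yields
\[
\mathbb E\|e_i^{t+1}\|_2^2\le(1-\tfrac{\alpha}{2})\mathbb E\|e_i^t\|_2^2+4\alpha\,\mathbb E\|b_i^t\|_2^2+\tfrac4\alpha\mathbb E\|d_i^t\|_2^2+\tfrac{\alpha^2\sigma^2}{K}.
\]
The factor $1-\alpha/2$ is stronger than the stated $1-\alpha/8$. The noise coefficient $1$ is likewise below the stated $8$. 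So the contraction and noise terms already match the target.

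It remains to bound the drift terms by the FTC geometry. Smoothness gives $\frac1M\sum_i\mathbb E\|d_i^t\|^2\le L^2R_t$. Next, $\|x_i^{t,k-1}-x_i^{t-1}\|\le\|x_i^t-x_i^{t-1}\|+(k-1)\bar\rho\eta$, which gives $\frac1M\sum_i\mathbb E\|b_i^t\|^2\le 2L^2R_t+2L^2(K\bar\rho\eta)^2$. Lemma~\ref{lem:dl_Rt_FTC} gives $R_t\le 9(1+\tau^2)(K\bar\rho\eta)^2$ for $t\ge2$. For $t=1$ we use $R_1=0$, by the convention $x_i^0=x_i^1$.

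Substituting, both drift contributions are at most a numerical multiple of $L^2(1+\tau^2)(K\bar\rho\eta)^2$. The $b$-term carries the factor $\alpha$, and I bound it by $1/\alpha$ using $\alpha\le1$. The $d$-term carries $1/\alpha$ directly. Averaging over $i$ then produces \eqref{eq:dl_Bt_rec_FTC} with, for example, $c_{B,\tau}=4\cdot20+4\cdot9=116$.

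The main obstacle is that Lemma~\ref{lem:dl_Rt_FTC} depends on Lemma~\ref{lem:dl_Ct_FTC}. For times not of the form $t+\tau$, that lemma's proof is only sketched. I would verify the bound $\sqrt{C_s}\le\tau K\bar\rho\eta$ for all $s$ directly.
\begin{itemize}
\item For $s\le\tau$, unroll from the consensual start ($\tilde{\mathbf X}^1=0$). This gives at most $s-1<\tau$ injected increments, each of Frobenius norm at most $\sqrt M K\bar\rho\eta$.
\item For $s>\tau$, unroll only the last $\tau$ rounds. FTC annihilates the older disagreement, leaving $\tau$ increments.
\end{itemize}
Once this is in place, everything else is the routine bookkeeping above.
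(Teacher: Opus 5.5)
Your route matches the paper's. You use the same error recursion $e_i^{t+1}=(1-\alpha)e_i^t+\alpha b_i^t+\alpha n_i^t-d_i^t$ and the same weighted Young step; the paper's lemma proof only points back to the static argument, whose explicit form is Step~1 of the proof of Theorem~\ref{thm:dl_final_gap}. You also make the same substitution of Lemma~\ref{lem:dl_Rt_FTC} for the drift terms.

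One step is false as written, namely the identity
\[
\mathbb E\|e_i^{t+1}\|_2^2=\mathbb E\|(1-\alpha)e_i^t+\alpha b_i^t-d_i^t\|_2^2+\alpha^2\mathbb E\|n_i^t\|_2^2 .
\]
You correctly note that $e_i^t$ and $d_i^t$ are fixed before round $t$'s queries, but $b_i^t$ is not. For $k\ge2$ the local iterate $x_i^{t,k-1}$ depends on $\xi_i^{t,0},\dots,\xi_i^{t,k-2}$ through the surrogates $g_i^{t,k}$. So $\nabla f_i(x_i^{t,k-1})$ is correlated with $\zeta_i^{t,j-1}$ for $j<k$, and $\mathbb E\langle b_i^t,n_i^t\rangle\neq0$ in general. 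The paper's sketch asserts the same cancellation.

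The repair is one line and fits inside the slack of the statement. Only the cross term with $(1-\alpha)e_i^t-d_i^t$ vanishes, by the tower property. For the remaining one,
\[
2\alpha^2\,\mathbb E\langle b_i^t,n_i^t\rangle\le\alpha^2\mathbb E\|b_i^t\|_2^2+\alpha^2\mathbb E\|n_i^t\|_2^2 .
\]
Using $\alpha\le1$, this gives
\[
\mathbb E\|e_i^{t+1}\|_2^2\le\bigl(1-\tfrac{\alpha}{2}\bigr)\mathbb E\|e_i^t\|_2^2+5\alpha\,\mathbb E\|b_i^t\|_2^2+\tfrac4\alpha\mathbb E\|d_i^t\|_2^2+\tfrac{2\alpha^2\sigma^2}{K} .
\]
The noise coefficient $2$ is still below $8$, and your constant becomes $c_{B,\tau}=5\cdot 20+4\cdot 9=136$.

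The rest checks out. This includes the bounds $\frac1M\sum_i\mathbb E\|d_i^t\|_2^2\le L^2R_t$ and $\frac1M\sum_i\mathbb E\|b_i^t\|_2^2\le 2L^2R_t+2L^2(K\bar\rho\eta)^2$, the case $R_1=0$, and absorbing the $\alpha$-weighted term into the $1/\alpha$ term.

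Your direct verification of $\sqrt{C_s}\le\tau K\bar\rho\eta$ for every $s$ is also correct. You unroll from $\tilde{\mathbf X}^1=0$ when $s\le\tau$, and for $s>\tau$ you unroll the last $\tau$ rounds and let FTC annihilate $\tilde{\mathbf X}^{s-\tau}$. This supplies the intermediate-time step that Lemma~\ref{lem:dl_Ct_FTC} only sketches. It implicitly uses $\|\mathbf W^s\|_2\le1$, which holds for nonnegative doubly stochastic mixing matrices.
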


\begin{proof}
Let $h_i^t:=\nabla f_i(x_i^{t-1})$ and $e_i^t:=c_i^t-h_i^t$.
As in the static case,
\[
e_i^{t+1}
=
\beta_2 e_i^t+(1-\beta_2)(\hat c_i^t-h_i^t)-d_i^t,
\qquad
d_i^t:=\nabla f_i(x_i^{t})-\nabla f_i(x_i^{t-1}).
\]
Decompose $\hat c_i^t-h_i^t=b_i^t+n_i^t$ with
\[
b_i^t:=\frac1K\sum_{k=1}^K\big(\nabla f_i(x_i^{t,k-1})-\nabla f_i(x_i^{t-1})\big),
\qquad
n_i^t:=\frac1K\sum_{k=1}^K\zeta_i^{t,k-1}.
\]
Exactly as in the repaired static proof, conditioning on the history at the beginning of round $t$ removes the cross term involving $n_i^t$, and Young's inequality yields a recursion of the form
\[
\mathbb E\|e_i^{t+1}\|_2^2
\le
\Big(1-\frac{1-\beta_2}{8}\Big)\mathbb E\|e_i^t\|_2^2
+c_1(1-\beta_2)\,\mathbb E\|b_i^t\|_2^2
+\frac{c_2}{1-\beta_2}\,\mathbb E\|d_i^t\|_2^2
+\frac{8(1-\beta_2)^2\sigma^2}{K}
\]
for absolute constants $c_1,c_2>0$.
By $L$-smoothness and Lemma~\ref{lem:dl_Rt_FTC},
\[
\frac1M\sum_{i=1}^M \mathbb E\|b_i^t\|_2^2
\le
2L^2R_t+2L^2(K\bar\rho\eta)^2
\le
c_3L^2(1+\tau^2)(K\bar\rho\eta)^2,
\]
and
\[
\frac1M\sum_{i=1}^M \mathbb E\|d_i^t\|_2^2
\le
L^2R_t
\le
9L^2(1+\tau^2)(K\bar\rho\eta)^2.
\]
Averaging over $i$ and enlarging the numerical constant prove \eqref{eq:dl_Bt_rec_FTC}.
\end{proof}

\begin{lemma}[Average bound on $\frac1T\sum_{t=1}^T\sqrt{B_t}$ in DL under FTC]
\label{lem:dl_avg_sqrtBt_FTC}
Assume \eqref{eq:dl_Bt_rec_FTC} holds. Then there exists an absolute constant $c_{B,\tau}'>0$ such that for any $T\ge 1$,
\begin{equation}
\frac1T\sum_{t=1}^T \sqrt{B_t}
\le
c_{B,\tau}'\frac{L K\bar\rho\eta}{1-\beta_2}\sqrt{1+\tau^2}
+
\frac{8\sigma\sqrt{1-\beta_2}}{\sqrt K}
+
\frac{16\sqrt{B_1}}{(1-\beta_2)T}.
\label{eq:dl_avg_sqrtBt_FTC}
\end{equation}
\end{lemma}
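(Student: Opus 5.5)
The plan is to treat \eqref{eq:dl_Bt_rec_FTC} as a scalar linear recursion $B_{t+1}\le(1-\alpha)B_t+C$ and reproduce Step~6 of Lemma~\ref{lem:Bt_app_new}, equivalently Step~3 of Lemma~\ref{lem:dl_Bt_explicit}, with the FTC constant in place of the spectral-gap constant. Set
\[
\alpha:=\frac{1-\beta_2}{8},
\qquad
C:=c_{B,\tau}\frac{L^2(1+\tau^2)(K\bar\rho\eta)^2}{1-\beta_2}+\frac{8(1-\beta_2)^2\sigma^2}{K}.
\]
Since $\alpha\in(0,1/8]$, unrolling gives $B_t\le(1-\alpha)^{t-1}B_1+C\sum_{s\ge0}(1-\alpha)^s\le(1-\alpha)^{t-1}B_1+C/\alpha$ for every $t\ge1$.

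The next step is to take square roots using $\sqrt{x+y}\le\sqrt x+\sqrt y$, which gives $\sqrt{B_t}\le(1-\alpha)^{(t-1)/2}\sqrt{B_1}+\sqrt{C/\alpha}$. Averaging over $t=1,\dots,T$ then requires the geometric sum bound
\[
\sum_{t\ge1}(\sqrt{1-\alpha})^{t-1}=\frac{1}{1-\sqrt{1-\alpha}}\le\frac{2}{\alpha},
\]
which holds because $1-\sqrt{1-\alpha}\ge\alpha/2$. The transient contribution is therefore at most $2\sqrt{B_1}/(\alpha T)=16\sqrt{B_1}/((1-\beta_2)T)$, which is exactly the last term of \eqref{eq:dl_avg_sqrtBt_FTC}.

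It then remains to split $\sqrt{C/\alpha}$ using $\sqrt{a+b}\le\sqrt a+\sqrt b$. The drift piece is
\[
\sqrt{\frac{8c_{B,\tau}L^2(1+\tau^2)(K\bar\rho\eta)^2}{(1-\beta_2)^2}}
=\sqrt{8c_{B,\tau}}\,\frac{LK\bar\rho\eta}{1-\beta_2}\sqrt{1+\tau^2},
\]
so one may take $c_{B,\tau}':=\sqrt{8c_{B,\tau}}$. The noise piece is $\sqrt{64(1-\beta_2)\sigma^2/K}=8\sigma\sqrt{1-\beta_2}/\sqrt K$, which matches the middle term of \eqref{eq:dl_avg_sqrtBt_FTC}.

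No step is a genuine obstacle. The only point needing care is that the stated recursion must hold for all $t\ge1$, including $t=1$, where Lemma~\ref{lem:dl_Rt_FTC} is only stated for $t\ge2$. With consensual initialization and the convention $x_i^0=x_i^1$, however, $R_1=0$, so the bound used inside Lemma~\ref{lem:dl_Bt_rec_FTC} holds trivially there. Given that, the argument above is a routine verification.
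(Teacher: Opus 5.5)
Your proposal is correct and follows the paper's proof essentially line by line. Like the paper, you unroll $B_{t+1}\le(1-\alpha)B_t+C$ with $\alpha=(1-\beta_2)/8$, take square roots, bound the geometric transient by $2\sqrt{B_1}/(\alpha T)$, and split $\sqrt{C/\alpha}$ into the drift and noise pieces; your explicit $c_{B,\tau}'=\sqrt{8c_{B,\tau}}$ is the constant the paper leaves implicit, and your $t=1$ remark is correct but unnecessary, since the lemma takes \eqref{eq:dl_Bt_rec_FTC} as a hypothesis.
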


\begin{proof}
Let $\alpha:=\frac{1-\beta_2}{8}$ and
\[
C:=c_{B,\tau}\frac{L^2(1+\tau^2)(K\bar\rho\eta)^2}{1-\beta_2}
+
\frac{8(1-\beta_2)^2\sigma^2}{K}.
\]
Then \eqref{eq:dl_Bt_rec_FTC} is $B_{t+1}\le (1-\alpha)B_t+C$.
Unrolling gives $B_t\le (1-\alpha)^{t-1}B_1+\frac{C}{\alpha}$.
Using $\sqrt{x+y}\le \sqrt x+\sqrt y$ and averaging over $t$ yield
\[
\frac1T\sum_{t=1}^T\sqrt{B_t}
\le
\sqrt{\frac{C}{\alpha}}+\frac{16\sqrt{B_1}}{(1-\beta_2)T}.
\]
Since $\alpha=(1-\beta_2)/8$, the first term is bounded by
\[
\sqrt{\frac{C}{\alpha}}
\le
c_{B,\tau}'\frac{L K\bar\rho\eta}{1-\beta_2}\sqrt{1+\tau^2}
+
\frac{8\sigma\sqrt{1-\beta_2}}{\sqrt K}
\]
for a suitable absolute constant $c_{B,\tau}'>0$.
This proves \eqref{eq:dl_avg_sqrtBt_FTC}.
\end{proof}

\subsubsection{FTC version of the tracker-disagreement recursion}

\begin{lemma}[Average $\sqrt{D_t}$ bound under FTC ]
\label{lem:dl_Dt_FTC}
Assume the time-varying mixing matrices $\{\mathbf W^t\}_{t\ge 1}$ are doubly stochastic
(with nonnegative entries as in standard gossip),
and satisfy the finite-time consensus (FTC) condition~\eqref{eq:ftc} with horizon $\tau$.
Recall
\[
D_t:=\frac1M\sum_{i=1}^M \mathbb E\|m_i^t-\bar m^t\|_2^2,
\qquad
\bar m^t:=\frac1M\sum_{i=1}^M m_i^t .
\]
Then there exists an absolute constant $c_{D,\tau}>0$ such that for any $T\ge 1$,
\begin{equation}
\frac1T\sum_{t=1}^T\sqrt{D_t}
\;\le\;
 c_{D,\tau}\,\tau\,(1-\beta_2)\cdot \frac1T\sum_{t=1}^T\sqrt{B_t}
\;+
\;c_{D,\tau}\,\tau\,(1-\beta_2)\Big(LK\bar\rho\eta\sqrt{1+\tau^2}+\frac{\sigma}{\sqrt K}\Big)
\;+
\;\frac{2\tau\sqrt{D_1}}{T}.
\label{eq:dl_avg_sqrtD_FTC_tight}
\end{equation}
Moreover, under the initialization $m_i^1=\frac1M\sum_{j=1}^M c_j^1$ for all $i$, we have $D_1=0$.
\end{lemma}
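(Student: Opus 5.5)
The plan is to mirror the static proof of Lemma~\ref{lem:dl_Dt}, but to replace the one-step spectral contraction by a $\tau$-step block argument based on the FTC identity \eqref{eq:ftc}. Stack the trackers and control variates row-wise, and write $\tilde{\mathbf m}^t:=(\mathbf I-\mathbf J)\mathbf m^t$ and $\Delta\tilde{\mathbf c}^t:=\tilde{\mathbf c}^{t+1}-\tilde{\mathbf c}^t$. Double stochasticity of each $\mathbf W^s$ gives $(\mathbf I-\mathbf J)\mathbf W^s=(\mathbf I-\mathbf J)\mathbf W^s(\mathbf I-\mathbf J)$. Hence
\[
\tilde{\mathbf m}^{s+1}=(\mathbf I-\mathbf J)\mathbf W^s\bigl(\tilde{\mathbf m}^s+\Delta\tilde{\mathbf c}^s\bigr).
\]
Unrolling this recursion from time $t$ to $t+\tau$, exactly as in Step 1 of Lemma~\ref{lem:dl_Ct_FTC}, the FTC identity kills the $\tilde{\mathbf m}^t$ term. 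This leaves $\tilde{\mathbf m}^{t+\tau}$ as a sum of at most $\tau$ products applied to $\Delta\tilde{\mathbf c}^{t+r}$. Nonnegative doubly stochastic matrices satisfy $\|\mathbf W^s\|_2\le1$, and this is where the nonnegativity hypothesis is used. Therefore
\[
\|\tilde{\mathbf m}^{t+\tau}\|_F\le\sum_{r=0}^{\tau-1}\|\Delta\tilde{\mathbf c}^{t+r}\|_F .
\]
More generally, for every $t$ the disagreement $\tilde{\mathbf m}^t$ is bounded by the increments injected during the last $\min\{\tau,t-1\}$ rounds, plus a term in $\tilde{\mathbf m}^1$ that is present only when $t\le\tau$. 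Taking $L_2$ norms with Minkowski and dividing by $\sqrt M$ yields
\[
\sqrt{D_t}\le \mathbf 1\{t\le\tau\}\sqrt{D_1}+\sum_{r=1}^{\tau}\frac{1}{\sqrt M}\bigl(\mathbb E\|\Delta\tilde{\mathbf c}^{t-r}\|_F^2\bigr)^{1/2},
\]
where terms with nonpositive index are dropped.

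Next, I bound the driving term exactly as in Step 2 of Lemma~\ref{lem:dl_Dt}. Using $\|\mathbf I-\mathbf J\|_2\le1$ and full participation gives $c_i^{s+1}-c_i^s=(1-\beta_2)(\hat c_i^s-c_i^s)$. Decompose $\hat c_i^s-c_i^s=a_i^s+b_i^s+n_i^s$, with $a_i^s$ the tracking error, $b_i^s$ the local drift and $n_i^s$ the averaged noise. The three pieces are controlled as follows:
\begin{itemize}
\item the tracking error contributes $B_s$;
\item the noise satisfies $\mathbb E\|n_i^s\|^2\le\sigma^2/K$;
\item the drift is bounded by $2L^2R_s+2L^2(K\bar\rho\eta)^2\le c\,L^2(1+\tau^2)(K\bar\rho\eta)^2$ via Lemma~\ref{lem:dl_Rt_FTC}.
\end{itemize}
Taking square roots then gives
\[
\frac{1}{\sqrt M}\bigl(\mathbb E\|\Delta\tilde{\mathbf c}^s\|_F^2\bigr)^{1/2}\le c(1-\beta_2)\Bigl(\sqrt{B_s}+LK\bar\rho\eta\sqrt{1+\tau^2}+\sigma/\sqrt K\Bigr).
\]

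Finally, average over $t=1,\dots,T$. Each index $s$ appears in at most $\tau$ of the windows, so the window sum contributes at most a factor $\tau$ times $\frac1T\sum_s(\cdot)$. The initial term contributes at most $\tau\sqrt{D_1}/T\le 2\tau\sqrt{D_1}/T$. Combining these gives \eqref{eq:dl_avg_sqrtD_FTC_tight}. The claim $D_1=0$ under consensual tracker initialization is identical to Step 4 of Lemma~\ref{lem:dl_Dt}.

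The main obstacle is the indexing at the start, $t\le\tau$, where no full FTC block is available and $\tilde{\mathbf m}^1$ survives. I would handle it by the crude bound $\|\tilde{\mathbf m}^t\|_F\le\|\tilde{\mathbf m}^1\|_F+\sum_{s<t}\|\Delta\tilde{\mathbf c}^s\|_F$, which again uses $\|\mathbf W^s\|_2\le1$. I would also need to check that the FTC block used in initialization does not shift the valid mixing sequence, which the paper's convention of indexing the valid sequence from $t=1$ ensures.
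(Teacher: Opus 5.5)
Your proposal is correct and follows the paper's proof essentially step for step: the FTC $\tau$-window unrolling that annihilates $\tilde{\mathbf m}^t$, the bound on each control increment through the $a_i^s+b_i^s+n_i^s$ decomposition combined with Lemma~\ref{lem:dl_Rt_FTC}, and the ``each index lies in at most $\tau$ windows'' averaging. Two points the paper leaves implicit are made precise in your version: the treatment of the first $\tau$ rounds, which produces the $\mathbf 1\{t\le\tau\}\sqrt{D_1}$ term, and the observation that nonnegativity is what yields $\|\mathbf W^s\|_2\le 1$.
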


\begin{proof}
Stack trackers row-wise into $\mathbf m^t\in\mathbb R^{M\times d}$ and define
\[
\mathbf J:=\frac1M\mathbf 1\mathbf 1^\top,\qquad
\tilde{\mathbf m}^t:=(\mathbf I-\mathbf J)\mathbf m^t,\qquad
\tilde{\mathbf c}^t:=(\mathbf I-\mathbf J)\mathbf c^t.
\]
Then $D_t=\frac{1}{M}\mathbb E\|\tilde{\mathbf m}^t\|_F^2$.
The tracker update is
\[
\mathbf m^{t+1}=\mathbf W^t\big(\mathbf m^t+\mathbf c^{t+1}-\mathbf c^t\big).
\]
Applying $(\mathbf I-\mathbf J)$ and using $\mathbf J\mathbf W^t=\mathbf J$ gives
\begin{equation}
\tilde{\mathbf m}^{t+1}
=(\mathbf I-\mathbf J)\mathbf W^t\big(\tilde{\mathbf m}^t+\tilde{\mathbf c}^{t+1}-\tilde{\mathbf c}^t\big).
\label{eq:FTC_m_tilde_one_step}
\end{equation}
Exactly as in the current proof, FTC implies the $\tau$-window bound
\begin{equation}
\sqrt{D_{t+\tau}}
\le
\sum_{r=0}^{\tau-1}\sqrt{\frac1M\mathbb E\|\mathbf c^{t+r+1}-\mathbf c^{t+r}\|_F^2}.
\label{eq:FTC_sqrtD_window}
\end{equation}
It remains to bound each control difference. Since $c_i^{s+1}-c_i^s=(1-\beta_2)(\hat c_i^s-c_i^s)$,
\[
\frac1M\mathbb E\|\mathbf c^{s+1}-\mathbf c^{s}\|_F^2
=
(1-\beta_2)^2\cdot \frac1M\sum_{i=1}^M\mathbb E\|\hat c_i^s-c_i^s\|_2^2.
\]
Now decompose
\[
\hat c_i^s-c_i^s
=
\underbrace{\big(\nabla f_i(x_i^{s-1})-c_i^s\big)}_{=:a_i^s}
+
\underbrace{\frac1K\sum_{k=1}^K(\nabla f_i(x_i^{s,k-1})-\nabla f_i(x_i^{s-1}))}_{=:b_i^s}
+
\underbrace{\frac1K\sum_{k=1}^K\zeta_i^{s,k-1}}_{=:n_i^s}.
\]
Using $\|u+v+w\|_2^2\le 3(\|u\|_2^2+\|v\|_2^2+\|w\|_2^2)$, Jensen, Assumption~\ref{assump:stoch}, and Lemma~\ref{lem:dl_Rt_FTC}, we obtain
\[
\frac1M\sum_{i=1}^M\mathbb E\|\hat c_i^s-c_i^s\|_2^2
\le
3B_s+c_5L^2(1+\tau^2)(K\bar\rho\eta)^2+\frac{3\sigma^2}{K}
\]
for an absolute constant $c_5>0$.
Therefore,
\begin{equation}
\sqrt{\frac1M\mathbb E\|\mathbf c^{s+1}-\mathbf c^{s}\|_F^2}
\le
c_{D,\tau}(1-\beta_2)\Big(\sqrt{B_s}+LK\bar\rho\eta\sqrt{1+\tau^2}+\frac{\sigma}{\sqrt K}\Big)
\label{eq:FTC_cdiff_sqrt_bound_tight}
\end{equation}
for a suitable absolute constant $c_{D,\tau}>0$.
Plugging \eqref{eq:FTC_cdiff_sqrt_bound_tight} into \eqref{eq:FTC_sqrtD_window}, summing over $t$, and using that each index appears in at most $\tau$ windows give \eqref{eq:dl_avg_sqrtD_FTC_tight}. The last statement follows exactly as in the current proof.
\end{proof}

\subsection{Proof of Theorem~\ref{thm:dl_final_FTC}}
\begin{proof}
Summing the descent core inequality \eqref{eq:dl_descent_core} over $t=1,\ldots,T$,
taking expectation, and telescoping yields
\begin{align}
\frac1T\sum_{t=1}^T \mathbb E\|\nabla F(\bar x^t)\|^{g,*}
\le\;&
\frac{\Delta_F}{T\eta K}
+\frac{L}{2}\,K\bar\rho^2\eta
+\frac1T\sum_{t=1}^T \frac{\mathbb E[E_t]}{\eta K}.
\label{eq:dl_t7_main_start_new}
\end{align}
Recall $E_t=E_t^{(a)}+E_t^{(b)}+E_t^{(c)}$.
Using \eqref{eq:dl_Ea_over}, \eqref{eq:dl_Eb_over}, and \eqref{eq:dl_Ec_over_final_simpler}, we have for each $t$
\begin{align}
\frac{\mathbb E[E_t]}{\eta K}
\le\;&
\Bigl(2+\frac{1}{2K}\Bigr)\bar\rho^2\eta
+(1-\beta_1)^2L^2\bar\rho^2\eta
+\frac{2(1-\beta_1)^2\sigma^2}{\eta}
\notag\\
&\quad
+2\bar\rho(1-\beta_1)\sqrt{B_t}
+2\bar\rho\sqrt{B_t}
\notag\\
&\quad
+2L\bar\rho(1-\beta_1)\sqrt{R_t}
+\bar\rho^2L(1-\beta_1)\eta(K-1)
+2\bar\rho(1-\beta_1)\sigma
\notag\\
&\quad
+2\bar\rho\sqrt{D_t}
+2L\bar\rho^2K\eta
+2L\bar\rho\sqrt{C_{t-1}}.
\label{eq:dl_t7_E_over_new}
\end{align}
By Lemma~\ref{lem:dl_Ct_FTC} and Lemma~\ref{lem:dl_Rt_FTC},
\[
\sqrt{C_{t-1}}\le \tau K\bar\rho\eta,
\qquad
\sqrt{R_t}\le 3\sqrt{1+\tau^2}\,K\bar\rho\eta.
\]
Substituting these bounds into \eqref{eq:dl_t7_E_over_new}, averaging over $t=1,\ldots,T$, and inserting into \eqref{eq:dl_t7_main_start_new} gives
\begin{align}
\frac1T\sum_{t=1}^T \mathbb E\|\nabla F(\bar x^t)\|^{g,*}
\le\;&
\frac{\Delta_F}{T\eta K}
+\frac{L}{2}K\bar\rho^2\eta
+\Bigl(2+\frac{1}{2K}\Bigr)\bar\rho^2\eta
+(1-\beta_1)^2L^2\bar\rho^2\eta
+\frac{2(1-\beta_1)^2\sigma^2}{\eta}
\notag\\
&\quad
+2\bar\rho(1-\beta_1)\cdot \frac1T\sum_{t=1}^T\sqrt{B_t}
+2\bar\rho\cdot \frac1T\sum_{t=1}^T\sqrt{B_t}
\notag\\
&\quad
+6L\bar\rho^2(1-\beta_1)\sqrt{1+\tau^2}\,K\eta
+\bar\rho^2L(1-\beta_1)\eta(K-1)
+2\bar\rho(1-\beta_1)\sigma
\notag\\
&\quad
+2\bar\rho\cdot \frac1T\sum_{t=1}^T\sqrt{D_t}
+2L\bar\rho^2K\eta
+2\tau L\bar\rho^2K\eta.
\label{eq:dl_t7_main_before_BD_new}
\end{align}

We next make the $B_t$ and $D_t$ contributions fully explicit.

\paragraph{Step 1: an explicit average bound for $\frac1T\sum_{t=1}^T\sqrt{B_t}$.}
Starting from \eqref{eq:dl_e_dyn_explicit}, write
\[
e_i^{t+1}=(1-\alpha)e_i^t+\alpha b_i^t+\alpha n_i^t-d_i^t,
\qquad
\alpha:=1-\beta_2,
\]
where, conditionally on the history up to round $t$, $d_i^t$ is measurable,
$\mathbb E[n_i^t\mid\mathcal F_t]=0$, and
$\mathbb E\|n_i^t\|_2^2\le \sigma^2/K$ by \eqref{eq:dl_n_bound_explicit}.
Hence the cross term involving $n_i^t$ vanishes and
\[
\mathbb E\bigl[\|e_i^{t+1}\|_2^2\mid\mathcal F_t\bigr]
=
\bigl\|(1-\alpha)e_i^t+\alpha b_i^t-d_i^t\bigr\|_2^2
+\alpha^2\mathbb E\bigl[\|n_i^t\|_2^2\mid\mathcal F_t\bigr].
\]
Applying the weighted Young inequality
$\|u+v\|_2^2\le (1+\gamma)\|u\|_2^2+(1+\gamma^{-1})\|v\|_2^2$
with $u=(1-\alpha)e_i^t$, $v=\alpha b_i^t-d_i^t$, and
$\gamma=\alpha/(2(1-\alpha))$ (the case $\alpha=1$ is trivial), we get
\[
\bigl\|(1-\alpha)e_i^t+\alpha b_i^t-d_i^t\bigr\|_2^2
\le
\Bigl(1-\frac{\alpha}{2}\Bigr)\|e_i^t\|_2^2
+\frac{2}{\alpha}\|\alpha b_i^t-d_i^t\|_2^2.
\]
Using $\|u-v\|_2^2\le 2\|u\|_2^2+2\|v\|_2^2$, we therefore obtain
\begin{equation}
\mathbb E\|e_i^{t+1}\|_2^2
\le
\Bigl(1-\frac{\alpha}{2}\Bigr)\mathbb E\|e_i^t\|_2^2
+4\alpha\,\mathbb E\|b_i^t\|_2^2
+\frac{4}{\alpha}\,\mathbb E\|d_i^t\|_2^2
+\frac{\alpha^2\sigma^2}{K}.
\label{eq:dl_t7_B_rec_node}
\end{equation}
By $L$-smoothness and Lemma~\ref{lem:dl_Rt_FTC},
\begin{equation}
\frac1M\sum_{i=1}^M \mathbb E\|b_i^t\|_2^2
\le
2L^2R_t+2L^2(K\bar\rho\eta)^2
\le
20L^2(1+\tau^2)(K\bar\rho\eta)^2,
\label{eq:dl_t7_b_bound_explicit}
\end{equation}
and
\begin{equation}
\frac1M\sum_{i=1}^M \mathbb E\|d_i^t\|_2^2
\le
L^2R_t
\le
9L^2(1+\tau^2)(K\bar\rho\eta)^2.
\label{eq:dl_t7_d_bound_explicit}
\end{equation}
Averaging \eqref{eq:dl_t7_B_rec_node} over $i$ and using \eqref{eq:dl_t7_b_bound_explicit} and \eqref{eq:dl_t7_d_bound_explicit} yield
\begin{equation}
B_{t+1}
\le
\Bigl(1-\frac{\alpha}{2}\Bigr)B_t
+\frac{128L^2(1+\tau^2)(K\bar\rho\eta)^2}{\alpha}
+\frac{\alpha^2\sigma^2}{K},
\label{eq:dl_t7_B_rec_avg}
\end{equation}
where we used $\alpha\le 1$ to absorb the term proportional to $\alpha$ into the $\alpha^{-1}$ term.
Let
\[
\delta:=\frac{\alpha}{2},
\qquad
U_B:=\frac{128L^2(1+\tau^2)(K\bar\rho\eta)^2}{\alpha}+\frac{\alpha^2\sigma^2}{K}.
\]
Then \eqref{eq:dl_t7_B_rec_avg} implies
$B_t\le (1-\delta)^{t-1}B_1+U_B/\delta$.
Using $\sqrt{x+y}\le \sqrt x+\sqrt y$ and
$\sum_{t=1}^T(\sqrt{1-\delta})^{t-1}\le 2/\delta$, and recalling that the initialization in the theorem statement implies $B_1\le \sigma^2$, we obtain
\begin{align}
\frac1T\sum_{t=1}^T\sqrt{B_t}
&\le
\sqrt{\frac{U_B}{\delta}}+\frac{2\sqrt{B_1}}{\delta T}
\notag\\
&\le
\frac{16LK\bar\rho\eta\sqrt{1+\tau^2}}{\alpha}
+\frac{\sqrt2\,\sigma\sqrt\alpha}{\sqrt K}
+\frac{4\sigma}{\alpha T}.
\label{eq:dl_t7_use_B_new}
\end{align}

\paragraph{Step 2: an explicit average bound for $\frac1T\sum_{t=1}^T\sqrt{D_t}$.}
Exactly as in the proof of Lemma~\ref{lem:dl_Dt_FTC}, the same $\tau$-window unrolling (with shorter initial windows when $t\le \tau$) and the initialization $D_1=0$ imply that for every $t\ge 1$,
\[
\sqrt{D_t}
\le
\sum_{r=\max\{1,t-\tau\}}^{t-1}
\sqrt{\frac1M\mathbb E\|\mathbf c^{r+1}-\mathbf c^{r}\|_F^2},
\]
where the sum is interpreted as $0$ when $t=1$.
Since $c_i^{s+1}-c_i^s=\alpha(\hat c_i^s-c_i^s)$,
\[
\frac1M\mathbb E\|\mathbf c^{s+1}-\mathbf c^{s}\|_F^2
=
\alpha^2\cdot \frac1M\sum_{i=1}^M\mathbb E\|\hat c_i^s-c_i^s\|_2^2.
\]
Now decompose
\[
\hat c_i^s-c_i^s
=
\underbrace{\big(\nabla f_i(x_i^{s-1})-c_i^s\big)}_{=:a_i^s}
+
\underbrace{\frac1K\sum_{k=1}^K(\nabla f_i(x_i^{s,k-1})-\nabla f_i(x_i^{s-1}))}_{=:b_i^s}
+
\underbrace{\frac1K\sum_{k=1}^K\zeta_i^{s,k-1}}_{=:n_i^s}.
\]
Using $\|u+v+w\|_2^2\le 3(\|u\|_2^2+\|v\|_2^2+\|w\|_2^2)$, Jensen, Assumption~\ref{assump:stoch}, and \eqref{eq:dl_t7_b_bound_explicit}, we obtain
\[
\frac1M\sum_{i=1}^M\mathbb E\|\hat c_i^s-c_i^s\|_2^2
\le
3B_s+60L^2(1+\tau^2)(K\bar\rho\eta)^2+\frac{3\sigma^2}{K}.
\]
Therefore,
\begin{equation}
\sqrt{\frac1M\mathbb E\|\mathbf c^{s+1}-\mathbf c^{s}\|_F^2}
\le
\alpha\Bigl(
\sqrt3\,\sqrt{B_s}
+2\sqrt{15}\,LK\bar\rho\eta\sqrt{1+\tau^2}
+\frac{\sqrt3\,\sigma}{\sqrt K}
\Bigr).
\label{eq:dl_t7_cdiff_sqrt}
\end{equation}
Summing the FTC window bound over $t=1,\ldots,T$ and using that each index appears in at most $\tau$ windows yields
\begin{align}
\frac1T\sum_{t=1}^T\sqrt{D_t}
&\le
\sqrt3\,\tau\alpha\cdot \frac1T\sum_{t=1}^T\sqrt{B_t}
+2\sqrt{15}\,\tau\alpha LK\bar\rho\eta\sqrt{1+\tau^2}
+\frac{\sqrt3\,\tau\alpha\sigma}{\sqrt K}.
\label{eq:dl_t7_D_preavg}
\end{align}
Substituting \eqref{eq:dl_t7_use_B_new} into \eqref{eq:dl_t7_D_preavg} and using $\alpha\le 1$ yields
\begin{align}
\frac1T\sum_{t=1}^T\sqrt{D_t}
\le\;&
(16\sqrt3+2\sqrt{15})\tau LK\bar\rho\eta\sqrt{1+\tau^2}
+\frac{(\sqrt6+\sqrt3)\tau\alpha\sigma}{\sqrt K}
+\frac{4\sqrt3\,\tau\sigma}{T}.
\label{eq:dl_t7_use_D_new}
\end{align}

Substituting \eqref{eq:dl_t7_use_B_new} and \eqref{eq:dl_t7_use_D_new} into \eqref{eq:dl_t7_main_before_BD_new}, we obtain
\begin{align}
\frac1T\sum_{t=1}^T \mathbb E\|\nabla F(\bar x^t)\|^{g,*}
\le\;&
\frac{\Delta_F}{T\eta K}
+\frac{L}{2}K\bar\rho^2\eta
+\Bigl(2+\frac{1}{2K}\Bigr)\bar\rho^2\eta
+(1-\beta_1)^2L^2\bar\rho^2\eta
+\frac{2(1-\beta_1)^2\sigma^2}{\eta}
\notag\\
&\quad
+2\bar\rho(1-\beta_1)\Bigl[
\frac{16LK\bar\rho\eta\sqrt{1+\tau^2}}{\alpha}
+\frac{\sqrt2\,\sigma\sqrt\alpha}{\sqrt K}
+\frac{4\sigma}{\alpha T}
\Bigr]
\notag\\
&\quad
+2\bar\rho\Bigl[
\frac{16LK\bar\rho\eta\sqrt{1+\tau^2}}{\alpha}
+\frac{\sqrt2\,\sigma\sqrt\alpha}{\sqrt K}
+\frac{4\sigma}{\alpha T}
\Bigr]
+6L\bar\rho^2(1-\beta_1)\sqrt{1+\tau^2}\,K\eta
\notag\\
&\quad
+\bar\rho^2L(1-\beta_1)\eta(K-1)
+2\bar\rho(1-\beta_1)\sigma
\notag\\
&\quad
+2\bar\rho(16\sqrt3+2\sqrt{15})\tau LK\bar\rho\eta\sqrt{1+\tau^2}
\notag\\
&\quad
+\frac{2\bar\rho(\sqrt6+\sqrt3)\tau\alpha\sigma}{\sqrt K}
+\frac{8\sqrt3\,\bar\rho\tau\sigma}{T}
\notag\\
&\quad
+2L\bar\rho^2K\eta
+2\tau L\bar\rho^2K\eta.
\label{eq:dl_t7_final_preplug_new}
\end{align}

We now substitute the hyperparameter choice
\[
\eta=\frac{M^{1/4}}{K^{3/4}T^{3/4}},
\qquad
1-\beta_1=\eta,
\qquad
\alpha=1-\beta_2=\sqrt{\frac{MK}{T+K}},
\qquad
\text{and assume } T\ge MK.
\]
Since $T\ge MK$ and $M\ge 1$, we also have $K\le T$ and hence $T+K\le 2T$.
Moreover,
\begin{align}
K\eta &= \frac{M^{1/4}K^{1/4}}{T^{3/4}} \le \frac{1}{\sqrt T},
\qquad
\eta = \frac{1}{(MKT)^{1/4}}\sqrt{\frac{M}{KT}} \le \frac{1}{(MKT)^{1/4}},
\label{eq:dl_t7_param_aux_bounds}\\
\frac{K\eta}{\alpha}
&=
\frac{\sqrt{T+K}}{M^{1/4}K^{1/4}T^{3/4}}
\le
\frac{\sqrt2}{(MKT)^{1/4}},
\qquad
\frac{\sqrt\alpha}{\sqrt K}
=
\frac{M^{1/4}}{K^{1/4}(T+K)^{1/4}}
\le
\frac{\sqrt M}{(MKT)^{1/4}},
\notag\\
\frac{\alpha}{\sqrt K}
&=
\sqrt{\frac{M}{T+K}}
\le
\sqrt{\frac{M}{T}},
\qquad
\frac{1}{\alpha T}
=
\frac{\sqrt{T+K}}{\sqrt{MK}\,T}
\le
\frac{\sqrt2}{\sqrt{MKT}},
\notag\\
K\eta^2
&=
\frac{\sqrt M}{\sqrt K\,T^{3/2}},
\qquad
\frac{K\eta^2}{\alpha}
=
\frac{\sqrt{T+K}}{K T^{3/2}}
\le
\frac{\sqrt2}{KT},
\notag\\
\frac{\eta\sqrt\alpha}{\sqrt K}
&=
\frac{\sqrt M}{K\,T^{3/4}(T+K)^{1/4}}
\le
\frac{\sqrt M}{KT},
\qquad
\frac{\eta}{\alpha T}
=
\frac{\sqrt{T+K}}{M^{1/4}K^{5/4}T^{7/4}}
\le
\frac{\sqrt2}{M^{1/4}K^{5/4}T^{5/4}},
\notag\\
\eta
&=
\frac{M^{1/4}}{K^{3/4}T^{3/4}}
\le \frac{1}{K\sqrt T}.
\notag
\end{align}
Also,
\begin{equation}
\frac{\Delta_F}{T\eta K} = \frac{\Delta_F}{(MKT)^{1/4}}.
\label{eq:dl_t7_gap_term_simp}
\end{equation}
Using \eqref{eq:dl_t7_param_aux_bounds} and \eqref{eq:dl_t7_gap_term_simp} in \eqref{eq:dl_t7_final_preplug_new}, and bounding
$2+\frac{1}{2K}\le \frac52$, we arrive at the explicit estimate
\begin{align}
\frac1T\sum_{t=1}^T \mathbb E\|\nabla F(\bar x^t)\|^{g,*}
\le\; &\left[
\begin{aligned}
&\frac{L\bar\rho^2}{2}
+2L\bar\rho^2
+2\tau L\bar\rho^2 \\
&\quad +(32\sqrt3+4\sqrt{15})\tau L\bar\rho^2\sqrt{1+\tau^2}
+2(\sqrt6+\sqrt3)\tau\bar\rho\sigma\sqrt M
\end{aligned}
\right]\frac1{\sqrt T}
\notag\\
&\quad
+\left[
\begin{aligned}
&\Delta_F
+\frac52\bar\rho^2
+L^2\bar\rho^2
+2\sigma^2 \\
&\quad +32\sqrt2\,L\bar\rho^2\sqrt{1+\tau^2}
+2\sqrt2\,\bar\rho\sigma\sqrt M
\end{aligned}
\right]\frac1{(MKT)^{1/4}}
\notag\\
&\quad
+\frac{6L\bar\rho^2\sqrt{1+\tau^2}\sqrt M}{\sqrt K\,T^{3/2}}
+\frac{L\bar\rho^2\sqrt M}{\sqrt K\,T^{3/2}}
\notag\\
&\quad
+\frac{32\sqrt2\,L\bar\rho^2\sqrt{1+\tau^2}}{KT}
+\frac{2\sqrt2\,\bar\rho\sigma\sqrt M}{KT}
\notag\\
&\quad
+\frac{8\sqrt2\,\bar\rho\sigma}{M^{1/4}K^{5/4}T^{5/4}}
+\frac{2\bar\rho\sigma M^{1/4}}{K^{3/4}T^{3/4}}
\notag\\
&\quad
+\frac{8\sqrt2\,\bar\rho\sigma}{\sqrt{MKT}}
+\frac{8\sqrt3\,\tau\bar\rho\sigma}{T}.
\label{eq:dl_t7_final_explicit_bound_new}
\end{align}
The first line is of order $T^{-1/2}$ and the second is of order $(MKT)^{-1/4}$. The corrected query-wise noise contribution is $2\bar\rho\sigma\eta$, which is explicitly retained through the first term of $\mathcal R_{\mathrm{lo}}$. Its coefficient is independent of $(T,K,M)$, and $\eta\le 1/(K\sqrt T)$ under $MK\le T$. The remaining terms are dominated by the other remainder terms. Hence the stated $C_1$ and $C_2$ are unchanged, and a sufficiently large $C_0$, with the stated parameter dependence, completes the proof.
\end{proof}

\section{PolyGram-RCClip: Degree--Residual Certificates for Spectral Clipping}
\label{app:polygram_clip}

This appendix develops the SVD-free spectral clipping oracle used for matrix blocks with spectral-norm geometry. The key point is that a Gram-polynomial map can be made feasible for the spectral-norm ball and can be assigned a residual certificate compatible with Theorem~\ref{thm:fl_inexact_projection}.

\subsection{Setup and notation}

For a matrix $Z\in\mathbb R^{m\times n}$ and a clipping radius $\tau>0$, define the spectral-norm ball $\mathcal B_\tau:=\{Y\in\mathbb R^{m\times n}:\|Y\|_2\le \tau\}$. If $\|Z\|_2\le\tau$, the oracle returns $Y=Z$ and the residual is zero. We therefore consider the nontrivial case $0<\tau<\|Z\|_2\le s$, where $s$ is any safe upper scale. Set $\alpha:=\tau/s\in(0,1)$ and define the exact clipping transfer in the Gram variable by
\[
\phi_\alpha(\lambda):=\min\left\{1,\frac{\alpha}{\sqrt\lambda}\right\},
\qquad
\phi_\alpha(0):=1,
\qquad \lambda\in[0,1].
\]
For a degree-$d$ polynomial $p_d$, PolyGram-RCClip uses the map
\[
Y_{d,s}=Zp_d(Z^\top Z/s^2) \quad (m\ge n),
\qquad
Y_{d,s}=p_d(ZZ^\top/s^2)Z \quad (m<n).
\]
If $Z=U\operatorname{diag}(\sigma_i)V^\top$ and $\lambda_i=\sigma_i^2/s^2$, both forms preserve the singular vectors and transform singular values as
\[
\widetilde\sigma_i=\sigma_i p_d(\lambda_i).
\]
For any feasible output $Y\in\mathcal B_\tau$, the single-call variational residual is
\[
\operatorname{res}_{\mathcal B_\tau}(Z,Y)
:=
\left[\sup_{U\in\mathcal B_\tau}\langle Z-Y,U-Y\rangle_F\right]_+.
\]
Using duality between the spectral and nuclear norms,
\[
\operatorname{res}_{\mathcal B_\tau}(Z,Y)
=
\left[\tau\|Z-Y\|_* - \langle Z-Y,Y\rangle_F\right]_+.
\]

\subsection{Existence and degree--scale residual bound}
\label{app:polygram_existence}

Let $\mathcal P^{\rm feas}_{d,\alpha}$ be the class of degree-$d$ polynomials satisfying
\[
 p_d(\lambda)\ge0,
 \qquad
 \sqrt\lambda\,p_d(\lambda)\le\alpha,
 \qquad \forall \lambda\in[0,1],
\]
and define the optimal feasible fitting radius
\[
\delta_d^\star(\alpha)
:=
\inf_{p\in\mathcal P^{\rm feas}_{d,\alpha}}
\|p-\phi_\alpha\|_{\infty,[0,1]}.
\]

\begin{proposition}[Residual controlled by feasible fitting radius]
\label{prop:polygram_residual_bound}
Let $p_d\in\mathcal P^{\rm feas}_{d,\alpha}$ and define $Y_{d,s}$ by the Gram-polynomial map above. If $\|p_d-\phi_\alpha\|_{\infty,[0,1]}\le\delta$ with $0\le\delta\le1$, then $Y_{d,s}\in\mathcal B_\tau$ and
\begin{align}
\operatorname{res}_{\mathcal B_\tau}(Z,Y_{d,s})
\le
\delta\left(\tau\|Z\|_*+2\|Z\|_F^2\right).
\label{eq:polygram_residual_bound}
\end{align}
Consequently, with $q=\min\{m,n\}$,
\begin{align}
\operatorname{res}_{\mathcal B_\tau}(Z,Y_{d,s})
\le
\delta\left(\tau\sqrt q\|Z\|_F+2\|Z\|_F^2\right).
\label{eq:polygram_residual_bound_frob}
\end{align}
\end{proposition}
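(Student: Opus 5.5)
The plan is to diagonalize everything in the singular basis of $Z$ and reduce both feasibility and the residual to scalar statements about the transformed singular values. Write $Z=U\operatorname{diag}(\sigma_i)V^\top$, $\lambda_i=\sigma_i^2/s^2\in[0,1]$ (using $\|Z\|_2\le s$), so that $Y_{d,s}=U\operatorname{diag}(\widetilde\sigma_i)V^\top$ with $\widetilde\sigma_i=\sigma_ip_d(\lambda_i)$.

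\emph{Feasibility.} Since $p_d\ge0$, we have $\widetilde\sigma_i\ge0$. Moreover $\widetilde\sigma_i=s\sqrt{\lambda_i}\,p_d(\lambda_i)\le s\alpha=\tau$. Hence $\|Y_{d,s}\|_2=\max_i\widetilde\sigma_i\le\tau$, and $Y_{d,s}\in\mathcal B_\tau$.

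\emph{Residual.} Use the closed form $\operatorname{res}=[\tau\|Z-Y\|_*-\langle Z-Y,Y\rangle_F]_+$. Because $Z$ and $Y=Y_{d,s}$ share singular vectors, $Z-Y=U\operatorname{diag}(\sigma_i-\widetilde\sigma_i)V^\top$. This gives
\[
\|Z-Y\|_*=\sum_i|\sigma_i-\widetilde\sigma_i|,\qquad
\langle Z-Y,Y\rangle_F=\sum_i(\sigma_i-\widetilde\sigma_i)\widetilde\sigma_i .
\]
The residual is therefore bounded by $\sum_i r_i$, where
\[
r_i:=\tau|\sigma_i-\widetilde\sigma_i|-(\sigma_i-\widetilde\sigma_i)\widetilde\sigma_i .
\]
Now compare against the exact clip $\sigma_i^\star=\sigma_i\phi_\alpha(\lambda_i)=\min\{\sigma_i,\tau\}$. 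For this exact clip the scalar residual vanishes: either $\sigma_i^\star=\sigma_i$, or $\sigma_i^\star=\tau$ and then $\tau(\sigma_i-\tau)-(\sigma_i-\tau)\tau=0$. The fitting bound gives
\[
|\widetilde\sigma_i-\sigma_i^\star|=\sigma_i|p_d(\lambda_i)-\phi_\alpha(\lambda_i)|\le\delta\sigma_i .
\]
The task is then to bound the change of $r_i$ under this perturbation. Write $e_i=\widetilde\sigma_i-\sigma_i^\star$. The first term changes by at most $\tau|e_i|\le\tau\delta\sigma_i$. For the second term, expand
\[
(\sigma_i-\widetilde\sigma_i)\widetilde\sigma_i-(\sigma_i-\sigma_i^\star)\sigma_i^\star=e_i(\sigma_i-2\sigma_i^\star-e_i).
\]
Its absolute value is at most $\delta\sigma_i\cdot 2\sigma_i$, using $0\le\sigma_i^\star\le\sigma_i$, $0\le\widetilde\sigma_i$, and $\delta\le1$ to keep $|\sigma_i-\sigma_i^\star-\widetilde\sigma_i|\le\sigma_i$; this is the reason for the hypothesis $\delta\le1$.

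\emph{Summation.} Summing over $i$ gives
\[
\sum_i r_i\le\delta\Big(\tau\sum_i\sigma_i+2\sum_i\sigma_i^2\Big)=\delta\big(\tau\|Z\|_*+2\|Z\|_F^2\big),
\]
which is \eqref{eq:polygram_residual_bound}. Taking the positive part preserves the bound because the right side is nonnegative. Finally, \eqref{eq:polygram_residual_bound_frob} follows from $\|Z\|_*\le\sqrt q\,\|Z\|_F$, which is Cauchy--Schwarz over at most $q$ singular values.

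The main obstacle is the careful sign and range bookkeeping in the bound on $e_i(\sigma_i-2\sigma_i^\star-e_i)$; specifically, checking that the factor $|\sigma_i-\sigma_i^\star-\widetilde\sigma_i|\le\sigma_i$ holds in both regimes $\sigma_i\le\tau$ and $\sigma_i>\tau$, which is where $\delta\le1$ is needed. I would handle it by treating the two regimes of $\phi_\alpha$ explicitly. The degenerate case $\lambda_i=0$ is trivial because both $r_i$ and the bound vanish there.
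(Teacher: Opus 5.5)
Your proof is correct, and it takes a genuinely different route from the paper's. The paper never compares against the exact clip. It sets $q_i=p_d(\lambda_i)$ and bounds each index's contribution $\tau\sigma_i|1-q_i|-\sigma_i^2q_i(1-q_i)$ directly, in two cases:
\begin{itemize}
\item For $\sigma_i\le\tau$, it uses $|1-q_i|\le\delta$ and $q_i\le2$ to get $\delta\tau\sigma_i+2\delta\sigma_i^2$.
\item For $\sigma_i>\tau$, it uses the one-sided feasibility $q_i\le\tau/\sigma_i$ to factor the contribution as $\sigma_i^2(1-q_i)\bigl(\phi_\alpha(\lambda_i)-q_i\bigr)$. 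This gives the sharper bound $\delta\sigma_i^2$, with no $\tau\sigma_i$ term.
\end{itemize}
You instead treat the scalar residual as a function of the output singular value. It vanishes at $\sigma_i^\star=\min\{\sigma_i,\tau\}$, and its variation on the admissible range is at most $(\tau+2\sigma_i)|e_i|$, so one estimate covers both regimes.

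Your approach buys uniformity and separates the roles of the hypotheses. The residual estimate uses only the fitting accuracy $|e_i|\le\delta\sigma_i$ with $\delta\le1$, and feasibility is needed only to place $Y_{d,s}$ in $\mathcal B_\tau$. The paper's case split buys a better per-index constant in the clipped regime and shows where the constraint $\sqrt\lambda\,p_d\le\alpha$ actually helps. The final stated bound is the same either way.

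One small correction: your intermediate claim $|\sigma_i-\sigma_i^\star-\widetilde\sigma_i|\le\sigma_i$ is false in the unclipped regime. When $\sigma_i\le\tau$, this quantity equals $\widetilde\sigma_i=\sigma_i p_d(\lambda_i)$. Feasibility allows $p_d(\lambda_i)>1$ there; for example, near $\lambda=0$ the constraint $\sqrt\lambda\,p_d\le\alpha$ is essentially vacuous. So the quantity can reach $(1+\delta)\sigma_i$. The correct bound is $(1+\delta)\sigma_i\le2\sigma_i$, which is exactly the factor $2$ you actually use, and this is where $\delta\le1$ enters. In the clipped regime your bound $\le\sigma_i$ does hold. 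The conclusion is unaffected.
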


\begin{proof}
Let $q_i=p_d(\lambda_i)$. Feasibility gives $\sigma_iq_i=s\sqrt{\lambda_i}p_d(\lambda_i)\le s\alpha=\tau$, so $Y_{d,s}\in\mathcal B_\tau$. The support-function form of the residual gives
\[
\|Z-Y_{d,s}\|_*=
\sum_i \sigma_i|1-q_i|,
\qquad
\langle Z-Y_{d,s},Y_{d,s}\rangle_F=
\sum_i \sigma_i^2 q_i(1-q_i).
\]
If $\sigma_i\le\tau$, then $\phi_\alpha(\lambda_i)=1$, so $|1-q_i|\le\delta$. Since $q_i\ge0$ and $\delta\le1$, also $q_i\le2$, and the contribution of this index is at most $\delta\tau\sigma_i+2\delta\sigma_i^2$. If $\sigma_i>\tau$, then $\phi_\alpha(\lambda_i)=\tau/\sigma_i$. Feasibility implies $q_i\le\tau/\sigma_i$, so $q_i=\tau/\sigma_i-e_i$ for some $e_i\in[0,\delta]$. Hence $\tau-\sigma_iq_i=\sigma_ie_i$ and $1-q_i\ge0$, so this index contributes at most $\delta\sigma_i^2$. Summing over $i$ proves \eqref{eq:polygram_residual_bound}, and \eqref{eq:polygram_residual_bound_frob} follows from $\|Z\|_*\le\sqrt q\|Z\|_F$.
\end{proof}

\begin{lemma}[Jackson direct theorem]
\label{lem:jackson_direct}
For a continuous function $f$ on $[0,1]$, define the best degree-$d$ uniform approximation error and the ordinary modulus of continuity by
\[
E_d(f;[0,1])
:=
\inf_{\deg q\le d}\|f-q\|_{\infty,[0,1]},
\qquad
\omega(f,t;[0,1])
:=
\sup_{\substack{x,y\in[0,1]\\ |x-y|\le t}}
|f(x)-f(y)|.
\]
There exists a universal constant $C_J^{\rm Jac}>0$ such that, for every $d\ge1$,
\[
E_d(f;[0,1])
\le
C_J^{\rm Jac}\,\omega(f,1/d;[0,1]).
\]
Consequently, if $f$ is $L_f$-Lipschitz on $[0,1]$, then $E_d(f;[0,1])\le C_J^{\rm Jac}L_f/d$. This is the classical Jackson direct theorem for algebraic polynomial approximation~\citep{jackson1912approximation,devore1993constructive,ditzian1987moduli}.
\end{lemma}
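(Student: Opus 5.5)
This is the classical Jackson theorem, so in the paper it suffices to cite~\citep{jackson1912approximation,devore1993constructive}. For completeness, we would prove it by reducing it to trigonometric approximation and smoothing with the Jackson kernel.

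\emph{Step 1: reduction to an even periodic function.}
\begin{itemize}
\item Substitute $x=(1+\cos\theta)/2$ and set $g(\theta):=f((1+\cos\theta)/2)$. Then $g$ is continuous, $2\pi$-periodic and even.
\item Since $|\cos a-\cos b|\le|a-b|$, the moduli satisfy $\omega(g,t)\le\omega(f,t/2;[0,1])\le\omega(f,t;[0,1])$.
\item Any even trigonometric polynomial of degree $\le d$ is a cosine polynomial $\sum_{j\le d}a_j\cos(j\theta)$. Because $\cos(j\theta)=T_j(\cos\theta)$, it equals $q((1+\cos\theta)/2)$ for an algebraic polynomial $q$ of degree $\le d$.
\end{itemize}
Hence it suffices to find an even trigonometric polynomial $P$ of degree $\le d$ with $\|g-P\|_\infty\le C\,\omega(g,1/d)$.

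\emph{Step 2: Jackson-kernel smoothing.}
\begin{itemize}
\item Take $n\ge1$ with $2n-2\le d$, and define $K_n(u)=c_n\bigl(\sin(nu/2)/\sin(u/2)\bigr)^4$, normalized so that $\frac1{2\pi}\int_{-\pi}^{\pi}K_n=1$.
\item $K_n$ is a nonnegative, even trigonometric polynomial of degree $2n-2$.
\item Set $P:=g*K_n$. Since $g$ and $K_n$ are both even, $P$ is an even trigonometric polynomial of degree $\le d$.
\end{itemize}
Using nonnegativity and the normalization of $K_n$, together with the modulus inequality $\omega(g,\lambda t)\le(1+\lambda)\omega(g,t)$, we get
\[
|g(\theta)-P(\theta)|\le\frac1{2\pi}\int_{-\pi}^{\pi}\omega(g,|u|)K_n(u)\,du\le\omega(g,1/n)\Bigl(1+\frac{n}{2\pi}\int_{-\pi}^{\pi}|u|K_n(u)\,du\Bigr).
\]

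\emph{Step 3: moment bound (main obstacle).} The key estimate is $\frac1{2\pi}\int_{-\pi}^{\pi}|u|K_n(u)\,du\le C/n$. We would prove it in two parts.
\begin{itemize}
\item Normalization constant: bound $c_n^{-1}\gtrsim n^3$ from below by restricting the integral to $|u|\le1/n$, where $\sin(nu/2)/\sin(u/2)\gtrsim n$.
\item Numerator: use $|\sin(nu/2)|\le\min\{1,n|u|/2\}$ and $|\sin(u/2)|\ge|u|/\pi$ on $[-\pi,\pi]$. Splitting at $|u|=1/n$, this gives
\[
\int_{-\pi}^{\pi}|u|\Bigl(\frac{\sin(nu/2)}{\sin(u/2)}\Bigr)^4du\lesssim\int_0^{1/n}n^4u\,du+\int_{1/n}^{\pi}u^{-3}\,du\lesssim n^2 .
\]
\end{itemize}
Multiplying by $c_n\lesssim n^{-3}$ gives the $C/n$ moment bound.

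\emph{Step 4: adjusting the degree index.} Steps 2--3 give $E_d\le C\,\omega(f,1/n)$, and since $n\asymp d$ we get $\omega(f,1/n)\le(1+d/n)\,\omega(f,1/d)\le C'\omega(f,1/d)$. Small $d$ is handled the same way: if $d\le 2$, take $n=1$ and the constant approximant $f(1/2)$. This yields the universal constant $C_J^{\rm Jac}$.

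\emph{Lipschitz corollary.} If $f$ is $L_f$-Lipschitz, then $\omega(f,1/d)\le L_f/d$, which gives $E_d(f;[0,1])\le C_J^{\rm Jac}L_f/d$.
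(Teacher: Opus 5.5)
Your proposal is correct and consistent with the paper, which gives no proof of this lemma and simply invokes the classical Jackson theorem by citation. Your argument is the standard textbook proof from the cited references: the substitution $x=(1+\cos\theta)/2$, smoothing with the Jackson kernel $K_n$ of degree $2n-2$, and the moment bound $\frac{1}{2\pi}\int|u|K_n\lesssim 1/n$. It therefore only makes the lemma self-contained, and the separate treatment of $d\le 2$ is unnecessary, since $n=\lfloor d/2\rfloor+1$ already covers every $d\ge1$.
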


\begin{theorem}[Degree--scale residual certificate]
\label{thm:polygram_degree_residual}
There exists a universal constant $C_J>0$ such that the following holds. In the nontrivial case $0<\tau<\|Z\|_2\le s$, let $\alpha=\tau/s$ and
\[
 d_0(\alpha):=\left\lceil \max\left\{1,\frac{C_J}{\alpha^3}\right\}\right\rceil.
\]
For every $d\ge d_0(\alpha)$, there exists a degree-$d$ polynomial $p_d$ satisfying $0\le p_d(\lambda)\le\phi_\alpha(\lambda)$ for all $\lambda\in[0,1]$ and
\[
\|p_d-\phi_\alpha\|_{\infty,[0,1]}
\le
\frac{C_J}{\alpha^2d}
=
\frac{C_Js^2}{\tau^2d}.
\]
Consequently, there exists a feasible degree-$d$ Gram-polynomial clipping map satisfying
\begin{align}
\operatorname{res}_{\mathcal B_\tau}(Z,Y_{d,s})
\le
\frac{C_Js^2}{\tau^2d}
\left(\tau\|Z\|_*+2\|Z\|_F^2\right).
\label{eq:polygram_degree_residual}
\end{align}
\end{theorem}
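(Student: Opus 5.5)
Since Proposition~\ref{prop:polygram_residual_bound} converts any feasible fitting radius $\delta\le 1$ into the residual bound \eqref{eq:polygram_residual_bound}, the whole task is to construct, for each $d\ge d_0(\alpha)$, a polynomial $p_d$ with $0\le p_d\le\phi_\alpha$ on $[0,1]$ and $\|p_d-\phi_\alpha\|_\infty\le C_J/(\alpha^2 d)$. Note that $p_d\le\phi_\alpha$ implies $\sqrt\lambda\,p_d(\lambda)\le\alpha$, so such a $p_d$ lies in $\mathcal P^{\rm feas}_{d,\alpha}$. The bound \eqref{eq:polygram_degree_residual} then follows by substituting $\delta=C_J/(\alpha^2d)$ and $\alpha=\tau/s$. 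We need $\delta\le1$, and this holds once $d\ge C_J/\alpha^2$, which is implied by $d\ge d_0(\alpha)$ because $\alpha<1$.

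\textbf{Approximating $\phi_\alpha$.} On $[0,1]$, $\phi_\alpha$ is continuous. It equals $1$ on $[0,\alpha^2]$ and $\alpha\lambda^{-1/2}$ on $[\alpha^2,1]$, with derivative magnitude $\tfrac{\alpha}{2}\lambda^{-3/2}\le \tfrac{1}{2\alpha^2}$. Hence $\phi_\alpha$ is Lipschitz with constant $L_f=1/(2\alpha^2)$. Lemma~\ref{lem:jackson_direct} gives some $q_d$ of degree $\le d$ with $\|q_d-\phi_\alpha\|_\infty\le \varepsilon_d:=C_J^{\rm Jac}/(2\alpha^2 d)$. This unconstrained approximant may violate $0\le q_d\le\phi_\alpha$ by at most $\varepsilon_d$ in either direction.

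\textbf{Enforcing the one-sided constraint.} This is the main obstacle, because a constant shift alone does not work. Shifting down, $q_d-\varepsilon_d\le\phi_\alpha$ fixes the upper constraint but may go negative where $\phi_\alpha$ is small. However, $\phi_\alpha\ge\alpha$ on $[0,1]$, so $q_d-\varepsilon_d\ge \phi_\alpha-2\varepsilon_d\ge\alpha-2\varepsilon_d$. This is nonnegative once $\varepsilon_d\le\alpha/2$, i.e.\ $d\ge C_J^{\rm Jac}/\alpha^3$. That is exactly where the threshold $d_0(\alpha)\sim C_J/\alpha^3$ comes from. Define $p_d:=q_d-\varepsilon_d$. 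Then $0\le p_d\le\phi_\alpha$ and $\|p_d-\phi_\alpha\|_\infty\le 2\varepsilon_d=C_J^{\rm Jac}/(\alpha^2 d)$.

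\textbf{Fixing the constant.} Take $C_J:=\max\{C_J^{\rm Jac},1\}$. Then $d\ge d_0(\alpha)$ gives both nonnegativity and $\delta=2\varepsilon_d\le C_J/(\alpha^2d)\le\alpha\le1$, so Proposition~\ref{prop:polygram_residual_bound} applies.

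A subtle point to double-check is whether "degree-$d$" means degree at most $d$. The Jackson approximant has degree $\le d$, and the LP class in the paper allows degree $\le d$, so this is consistent.
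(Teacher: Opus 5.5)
Your proof is correct and follows essentially the same route as the paper: the Lipschitz bound $1/(2\alpha^2)$ on $\phi_\alpha$, the Jackson approximant $q_d$, the downward shift by $\varepsilon_d$, nonnegativity from $\phi_\alpha\ge\alpha$ once $d\ge C_J/\alpha^3$, and then Proposition~\ref{prop:polygram_residual_bound}. Your explicit choice $C_J=\max\{C_J^{\rm Jac},1\}$ and your check that $\delta\le\alpha\le1$ spell out a step the paper leaves implicit when it invokes the proposition.
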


\begin{proof}
The clipping transfer $\phi_\alpha$ is constant on $[0,\alpha^2]$ and has derivative $-\alpha/(2\lambda^{3/2})$ on $(\alpha^2,1]$. Hence $\phi_\alpha$ is Lipschitz on $[0,1]$ with $\operatorname{Lip}(\phi_\alpha)\le1/(2\alpha^2)$. Applying Lemma~\ref{lem:jackson_direct} with $f=\phi_\alpha$, and absorbing universal numerical factors into $C_J$, gives a degree-$d$ polynomial $q_d$ such that
\[
\|q_d-\phi_\alpha\|_{\infty,[0,1]}
\le
\frac{C_J}{2\alpha^2d}
=:
\varepsilon_d.
\]
Define the shifted polynomial $p_d:=q_d-\varepsilon_d$. Then $p_d\le\phi_\alpha$ pointwise and
\[
\|p_d-\phi_\alpha\|_{\infty,[0,1]}
\le
2\varepsilon_d
=
\frac{C_J}{\alpha^2d}.
\]
Since $\phi_\alpha(\lambda)\ge\alpha$ for all $\lambda\in[0,1]$, the condition $d\ge d_0(\alpha)$ implies $2\varepsilon_d\le\alpha$ and therefore $p_d(\lambda)\ge\phi_\alpha(\lambda)-2\varepsilon_d\ge0$. Thus $0\le p_d\le\phi_\alpha$, which also implies $\sqrt\lambda p_d(\lambda)\le\sqrt\lambda\phi_\alpha(\lambda)\le\alpha$. Proposition~\ref{prop:polygram_residual_bound} then yields \eqref{eq:polygram_degree_residual}.
\end{proof}

\paragraph{Cheap scales.}
The bound is valid for every $s\ge\|Z\|_2$. Thus a tighter scale improves the certificate through the factor $s^2/\tau^2$. The deterministic choice $s=\|Z\|_F$ is always valid and gives
\[
\operatorname{res}_{\mathcal B_\tau}(Z,Y_{d,\|Z\|_F})
\le
\frac{C_J\|Z\|_F^2}{\tau^2d}
\left(\tau\|Z\|_*+2\|Z\|_F^2\right).
\]
Using $\|Z\|_*\le\sqrt q\|Z\|_F$ and $\tau<\|Z\|_2\le\|Z\|_F$ yields the Frobenius-only certificate
\[
\operatorname{res}_{\mathcal B_\tau}(Z,Y_{d,\|Z\|_F})
\le
\frac{C_J(\sqrt q+2)}{d}\frac{\|Z\|_F^4}{\tau^2}.
\]

\subsection{Coefficient design as a semi-infinite LP}
\label{app:polygram_lp}

After scaling by $s$, all singular values enter through $\lambda=\sigma^2/s^2\in[0,1]$, and the exact clipping transfer is determined only by the scalar $\alpha=\tau/s$. Thus, for a fixed degree $d$, coefficient design is a one-dimensional family indexed by $\alpha$, rather than by $Z,\tau,s$ separately. Write $p_{d,\alpha}(\lambda)=\sum_{j=0}^d c_j(\alpha)T_j(2\lambda-1)$ in a Chebyshev basis. The continuous coefficient-design problem is
\begingroup\small
\begin{equation}
\begin{alignedat}{3}
\min_{c_0,\ldots,c_d,\delta}\quad &\delta\qquad\\[-0.2em]
\text{s.t.}\quad
& p_{d,\alpha}(\lambda)\ge0, &\qquad& \sqrt\lambda\,p_{d,\alpha}(\lambda)\le\alpha, &\qquad& \forall\lambda\in[0,1],\\[-0.1em]
& p_{d,\alpha}(\lambda)-\phi_\alpha(\lambda)\le\delta, && \phi_\alpha(\lambda)-p_{d,\alpha}(\lambda)\le\delta, && \forall\lambda\in[0,1],\\[-0.1em]
& \delta\ge0.&&&&
\end{alignedat}
\label{eq:polygram_feasible_lp}
\end{equation}
\endgroup
The value of this semi-infinite LP is $\delta_d^\star(\alpha)$, and Theorem~\ref{thm:polygram_degree_residual} gives a degree--scale upper bound on this value.

\subsection{Table-based PolyGram-RCClip oracle}
\label{app:polygram_table_oracle}

Because the LP in Appendix~\ref{app:polygram_lp} is indexed only by $\alpha$, the runtime oracle does not solve an LP. We precompute a table
$\mathcal T=\{(\alpha_j,c_0^{(j)},\ldots,c_d^{(j)})\}_{j=1}^J$.
In the experiments, $d=20$ and $J=1000$, so the table contains only $J(d+1)=21{,}000$ scalar coefficients.

Algorithm~\ref{alg:polygram_rcclip} gives the practical fixed-degree runtime oracle used in the experiments. The safe scale is chosen deterministically. Let $G_0=Z^\top Z$ if $m\ge n$ and $G_0=ZZ^\top$ otherwise. We use $s_{\rm safe}:=\min\{\|Z\|_F,\sqrt{\|Z\|_1\|Z\|_\infty},\sqrt{\|G_0\|_\infty}\}$ and $s:=1.0001\,s_{\rm safe}$, where $\|Z\|_1$ is the maximum absolute column sum and $\|Z\|_\infty$ is the maximum absolute row sum. Each term in the minimum upper bounds $\|Z\|_2$, hence $s\ge\|Z\|_2$. The identity-return check uses the strict bound $s_{\rm safe}\le\tau$, in which case $Z\in\mathcal B_\tau$.

The scale bound alone does not certify the entire oracle. The residual theorem additionally requires continuous-interval feasibility and fitting-error bounds for the selected polynomial. Floor lookup and low-$\alpha$ rescaling preserve feasibility in exact arithmetic when the table entries are feasible on the full interval, but their approximation errors must be included in the residual budget. A discretized LP and floating-point evaluation require additional verification for a rigorous certificate. We do not claim that the experimental fixed choice $d=20$ satisfies the accumulated budget of Theorem~\ref{thm:fl_inexact_projection} for arbitrary horizons.
\begin{algorithm}[H]
   \caption{Table-based PolyGram-RCClip for one matrix block}
   \label{alg:polygram_rcclip}
\small
\begin{algorithmic}[1]
   \STATE {\bfseries Input:} matrix $Z\in\mathbb R^{m\times n}$, radius $\tau$, safe scale $s\ge\|Z\|_2$, coefficient table $\mathcal T=\{(\alpha_j,c^{(j)})\}$, degree $d$.
   \IF{$s\le\tau$} \STATE \textbf{return} $Z$ \ENDIF
   \STATE Set $\alpha_{\rm clip}\leftarrow \tau/s$ and let $\alpha_{\min}$ be the smallest grid value in $\mathcal T$.
   \IF{$\alpha_{\rm clip}<\alpha_{\min}$}
      \STATE Load coefficients for $\alpha_{\min}$ and set $r\leftarrow \alpha_{\rm clip}/\alpha_{\min}$.
   \ELSE
      \STATE Choose the largest grid value $\tilde\alpha\le\alpha_{\rm clip}$, load its coefficients, and set $r\leftarrow1$.
   \ENDIF
   \STATE Write the selected coefficients as $c_0,\ldots,c_d$ in the basis $T_j(2\lambda-1)$.
   \IF{$m\ge n$}
      \STATE Form $\widetilde G\leftarrow 2Z^\top Z/s^2-I_n$ and set $V_0\leftarrow Z$, $Y\leftarrow c_0V_0$.
      \IF{$d\ge1$}
         \STATE $V_1\leftarrow V_0\widetilde G$, $Y\leftarrow Y+c_1V_1$.
         \FOR{$j=1,\ldots,d-1$}
            \STATE $V_{j+1}\leftarrow 2V_j\widetilde G-V_{j-1}$; \quad $Y\leftarrow Y+c_{j+1}V_{j+1}$.
         \ENDFOR
      \ENDIF
   \ELSE
      \STATE Form $\widetilde H\leftarrow 2ZZ^\top/s^2-I_m$ and set $V_0\leftarrow Z$, $Y\leftarrow c_0V_0$.
      \IF{$d\ge1$}
         \STATE $V_1\leftarrow \widetilde H V_0$, $Y\leftarrow Y+c_1V_1$.
         \FOR{$j=1,\ldots,d-1$}
            \STATE $V_{j+1}\leftarrow 2\widetilde H V_j-V_{j-1}$; \quad $Y\leftarrow Y+c_{j+1}V_{j+1}$.
         \ENDFOR
      \ENDIF
   \ENDIF
   \STATE \textbf{return} $rY$.
\end{algorithmic}
\end{algorithm}
After the smaller Gram matrix is formed, Algorithm~\ref{alg:polygram_rcclip} uses exactly $d$ sequential Gram-action multiplications to evaluate a degree-$d$ transfer. Thus the table degree is both the approximation degree in Theorem~\ref{thm:polygram_degree_residual} and the main compute knob of the oracle.

\section{Additional preliminaries: uSCG template}
\label{app:uscg_template}
For completeness, we recall the Unconstrained Stochastic Conditional Gradient (uSCG) template~\citep{pethick2025training}, which provides the LMO-based abstraction used in Section~\ref{subsec:ball} to motivate norm-ball update geometries.

\begin{algorithm}[H]
   \caption{Unconstrained Stochastic Conditional Gradient (uSCG)~\citep{pethick2025training}}
   \label{alg:uscg}
\begin{algorithmic}
   \STATE {\bfseries Input:} Initial iterate $x^1 \in \mathbb{R}^d$, momentum factor $\beta \in [0,1]$, constraint set $\mathcal{D}$.
   \STATE Initialize momentum buffer $m^{-1}=\mathbf{0}$.
   \FOR{$t=1$ {\bfseries to} $T$}
   \STATE Compute stochastic gradient $g^t=\nabla f(x^t,\xi_t)$.
   \STATE Update momentum:
   \[
   m^t=(1-\beta)m^{t-1}+\beta g^t.
   \]
   \STATE Compute update direction via Linear Minimization Oracle (LMO):
   \[
   \Delta^t\in\argmin_{\Delta\in\mathcal D}\langle m^t,\Delta\rangle.
   \]
   \STATE Update parameters: $x^{t+1}=x^t+\Delta^t$.
   \ENDFOR
\end{algorithmic}
\end{algorithm}

\section{Experimental details and additional results}
\label{app:experiments}

\subsection{Model architecture}
\label{app:exp_model}
We use a CIFAR-style ResNet-18 for $32\times32$ inputs. The stem is a bias-free $3\times3$ convolution from 3 to 64 channels, followed by GroupNorm with 8 groups and a GELU activation; there is no ImageNet-style large stem or max-pooling layer. Each residual block contains two bias-free $3\times3$ convolutions, GroupNorm after each convolution, GELU after the first normalization, and GELU after the residual addition. Downsampling blocks use a bias-free $1\times1$ convolution followed by GroupNorm. The four stages have widths $64,128,256,512$, with two basic blocks per stage and strides $1,2,2,2$. The network ends with adaptive average pooling and a bias-free linear classifier $\mathbb R^{512}\to\mathbb R^{10}$. All GroupNorm layers keep the trainable scale parameter but remove the bias parameter. In the main NBCLT experiments, we use Scion initialization.

\subsection{Baselines and tuning}
\label{app:exp_baselines}
For each Dirichlet parameter, we tune each baseline over its own hyperparameter grid and select the best run by average test accuracy over the last 100 communication rounds. All baselines and NBCLT variants use the shared weight-decay grid $\mathrm{wd}\in\{0,10^{-3}\}$. We use $\mathrm{lr}$ for client-side local learning rates, and $\eta$ or $\eta_g$ for server-side learning rates following the notation of the corresponding methods. Momentum parameters are denoted by $\beta$, or by $(\beta_1,\beta_2)$ when the method uses two such parameters. For FedMuon~\citep{takezawa2025fedmuon}, Muon and SGD layers have separate stepsizes, and $\alpha$ denotes the momentum parameter; our implementation uses exact orthogonalization for LMO-based layers rather than a Newton--Schulz approximation.

\begin{table}[H]
\centering
\small
\setlength{\tabcolsep}{4pt}
\renewcommand{\arraystretch}{1.12}
\caption{Hyperparameter grids for CIFAR-10 FL experiments. All methods additionally sweep $\mathrm{wd}\in\{0,10^{-3}\}$.}
\label{tab:cifar10_tuning_grids}
\begin{tabular}{p{0.16\linewidth}p{0.78\linewidth}}
\toprule
Method & Grid \\
\midrule
FedAvg &
$\mathrm{lr}\in\{10^{-3},3\!\times\!10^{-3},10^{-2},3\!\times\!10^{-2},10^{-1}\}$; client momentum $\in\{0.9,0.95\}$. \\
SCAFFOLD-M &
local $\mathrm{lr}\in\{10^{-3},10^{-2},3\!\times\!10^{-2},10^{-1}\}$; server $\eta_g\in\{1,0.1\}$; $\beta\in\{0.9,0.95\}$. \\
FedAMS &
local $\mathrm{lr}\in\{10^{-3},10^{-2},3\!\times\!10^{-2},10^{-1}\}$; server $\eta\in\{1,0.1\}$; $\beta_1=0.9$; $\beta_2\in\{0.99,0.999\}$; $\epsilon=10^{-6}$. \\
FedMuon &
Muon stepsize $\in\{10^{-3},10^{-4}\}$; SGD stepsize $\in\{10^{-2},10^{-1}\}$; $\alpha\in\{0.05,0.1\}$. \\
PAdaMFed &
local $\mathrm{lr}\in\{10^{-2},3\!\times\!10^{-2},10^{-1}\}$; server $\eta_g\in\{1,0.1\}$; $\beta\in\{0.9,0.95\}$. \\
NBCLT tuned &
$\eta\in\{0.003,0.01\}$; $\beta_1\in\{0.9,0.95\}$; $\beta_2\in\{0.75,0.9,0.95\}$. \\
\bottomrule
\end{tabular}
\end{table}

For NBCLT-PF, we do not tune $\eta,\beta_1,\beta_2$; we use the theorem-suggested values reported in Section~\ref{sec:experiments} and only apply the shared weight-decay grid.

\subsection{PolyGram-RCClip implementation}
\label{app:exp_polygram_impl}
The exact-SVD implementation projects each spectral matrix block by clipping singular values to the block radius. PolyGram-RCClip replaces this training-time SVD by the table-based Gram-polynomial oracle in Appendix~\ref{app:polygram_clip}. For a matrix block $Z$ and radius $\tau$, the reported PolyGram-RCClip curves use the certified safe-min scale from Appendix~\ref{app:polygram_table_oracle}: with $G_0=Z^\top Z$ if $m\ge n$ and $G_0=ZZ^\top$ otherwise,
\[
 s=1.0001\cdot
 \min\left\{\|Z\|_F,\sqrt{\|Z\|_1\|Z\|_\infty},\sqrt{\|G_0\|_\infty}\right\}.
\]
The clipping ratio $\alpha_{\rm clip}=\tau/s$ is distinct from the Dirichlet parameter $\alpha_{\rm dir}$. We use a precomputed Chebyshev coefficient table over $\alpha_{\rm clip}\in[10^{-3},0.999]$ with degree $d=20$ and 1000 grid points; training-time calls use floor lookup and evaluate either $Zp_d(Z^\top Z/s^2)$ or $p_d(ZZ^\top/s^2)Z$, choosing the smaller Gram matrix. For the hardest split $\alpha_{\rm dir}=0.1$, we also ran additional experiments with different polynomial degrees $d$ while keeping the same certified scale rule.

\subsection{Norm geometry and block scales}
\label{app:exp_norm_geometry}
NBCLT uses block-wise norm-ball projection with layer-dependent radii $\tau_\ell=\eta r_\ell$. In the CIFAR-10 ResNet-18 experiments, $3\times3$ convolutions and $1\times1$ downsampling convolutions use spectral-norm clipping, while GroupNorm weights and the final classifier head use $\ell_2$ clipping. For a convolutional kernel $\Delta\in\mathbb R^{C_{\rm out}\times C_{\rm in}\times k\times k}$, we unfold it to $Z\in\mathbb R^{C_{\rm out}\times (C_{\rm in}k^2)}$ and use
\[
 r_{\rm conv,spec}=\sqrt{\frac{C_{\rm out}}{C_{\rm in}k^2}},
 \qquad
 \tau_{\rm conv}=\eta\sqrt{\frac{C_{\rm out}}{C_{\rm in}k^2}}.
\]
For a linear head $W\in\mathbb R^{d_{\rm out}\times d_{\rm in}}$ with $\ell_2$ geometry, we use $r_{{\rm linear},\ell_2}=1/\sqrt{d_{\rm out}d_{\rm in}}$; for a one-dimensional normalization weight of dimension $d$, we use $r_{{\rm norm},\ell_2}=1/\sqrt d$.

\subsection{Additional curves and projector ablations}
\label{app:exp_additional_results}
Figure~\ref{fig:cifar10-fl-full9} expands the main comparison by showing the five tuned baseline curves together with the four NBCLT-FL variants. The degree-$d$ projector ablation for the hardest split $\alpha_{\rm dir}=0.1$ is shown in the main text as Figure~\ref{fig:cifar10-degree-ablation}; it uses the same certified safe-min scale rule described in Appendix~\ref{app:polygram_table_oracle}.

\begin{figure}[H]
\centering
\includegraphics[width=\linewidth]{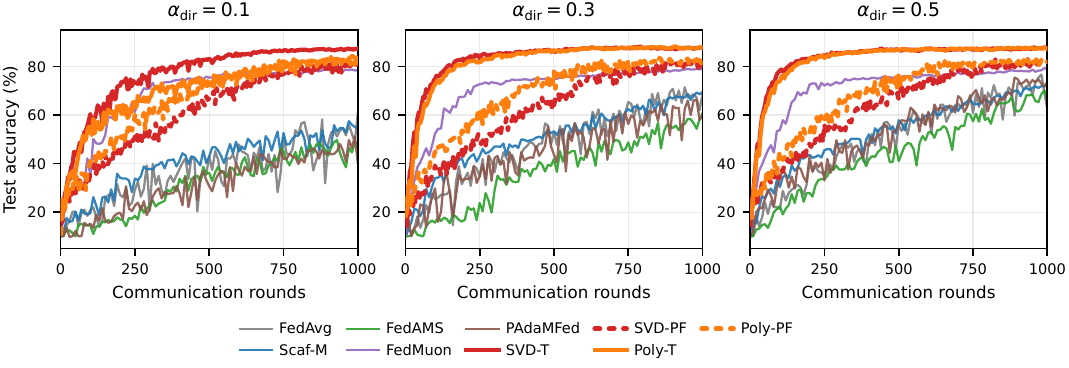}
\caption{Full CIFAR-10 federated-learning comparison under Dirichlet label-skew partitions.
Each panel shows all five tuned baselines--FedAvg, SCAFFOLD-M, FedAMS, FedMuon, and PAdaMFed--together with the four NBCLT-FL variants.
The main text reports the compressed comparison in which these five baselines are represented by the strongest tuned baseline curve.}
\label{fig:cifar10-fl-full9}
\end{figure}

\section{Limitations, Compute, and Societal Considerations}
\label{app:limitations}

\paragraph{Scale and comparison to Muon.}
The NBCLT update rule and the PolyGram-RCClip projector are not specific to federated aggregation and could in principle be instantiated as a centralized optimizer for large-scale neural-network training. This makes the method conceptually close in scope to Muon-type optimizers, while differing in the mathematical primitive: NBCLT clips tentative updates by projection, whereas Muon-type methods select LMO/orthogonalized directions. Our FL experiments include FedMuon as a federated Muon-type baseline and show favorable performance in that setting, but they do not replace a direct large-scale centralized comparison with Muon on language-model pretraining or other large neural-network benchmarks. Such comparisons are an important future direction.

\paragraph{Empirical scope.}
The experiments evaluate CIFAR-10 federated learning with Dirichlet label-skew partitions and a CIFAR-style ResNet-18. The best run for each method is selected by average test accuracy over the final 100 communication rounds. Appendix~\ref{app:experiments} provides the training setup, hyperparameter grids, and implementation details.

\paragraph{Choosing geometry at scale.}
NBCLT allows different update geometries for different parameter blocks, but the best norm choice for very large models is not yet understood. In particular, it remains open whether clipping-based flexible geometry can outperform LMO-based Muon geometry on large-scale language-model training, and which layerwise or blockwise norm choices are most effective in that regime. The current theory treats the chosen norm as fixed and analyzes the resulting local-training dynamics; automatically selecting or adapting the geometry is beyond the scope of this work.

\paragraph{Projector certification.}
The PolyGram-RCClip certificate requires a safe scale $s\ge\|Z\|_2$, a feasible polynomial with a certified fitting error, and an accumulated residual budget. The experiments use the deterministic safe-min scale in Appendix~\ref{app:polygram_table_oracle}, but the fixed-degree table implementation is a practical approximation; we do not certify its accumulated residual for arbitrary horizons. Heuristic scales without an upper-bound guarantee, table discretization, and floating-point effects also require additional verification.

\paragraph{Compute and assets.}
The CIFAR-10 experiments were run on a workstation with four NVIDIA RTX 4090 GPUs. We use the public CIFAR-10 dataset and standard open-source deep-learning libraries; no new dataset or human-subject data are introduced. NBCLT is a general optimization method for distributed training. It may reduce communication and make distributed training more efficient, but it does not by itself provide privacy, security, fairness, or robustness guarantees. In particular, federated learning should not be interpreted as formal privacy protection unless combined with mechanisms such as differential privacy or secure aggregation.

\end{document}